\documentclass[twoside,11pt]{article}

\usepackage[abbrvbib,preprint]{jmlr2e}
\usepackage{xcolor}
\usepackage{amsmath,amsfonts}
\usepackage{mathrsfs}
\usepackage{bm}
\usepackage{caption}
\usepackage{subcaption}

\renewcommand{\d}{\;\mathrm{d}}
\newcommand{\di}{\mathrm{d}}
\newcommand{\mat}[1]{\mathbf{#1}}

\newcommand{\E}{\mathbb{E}}
\newcommand{\R}{\mathbb{R}}
\newcommand{\Var}{\operatorname{Var}}
\newcommand{\argmin}{\operatorname*{argmin}}

\newcommand{\fs}{\mathcal Z}
\newcommand{\xspace}{\mathcal X}
\newcommand{\yspace}{\mathcal Y}
\newcommand{\hs}{\mathcal H}
\newcommand{\as}{\mathcal U}
\newcommand{\LR}{\varrho}
\newcommand{\bx}{\mat{x}}
\newcommand{\bX}{\mat{X}}

\newcommand{\cN}{\mathcal N}
\newcommand{\cX}{\mathcal X}

\newcommand{\pS}{p_S}

\newcommand{\op}{o_P}
\newcommand{\Op}{O_P}
\newcommand{\dto}{\xrightarrow{\;d\;}}

\newtheorem{assumption}{Assumption}

\makeatletter
\def\ps@arxivtitle{%
  \let\@mkboth\@gobbletwo
  \def\@oddhead{}%
  \def\@evenhead{}%
  \def\@oddfoot{\hfill\small\rm\thepage\hfill}%
  \def\@evenfoot{\hfill\small\rm\thepage\hfill}}
\makeatother

\ShortHeadings{DA via Doubly-Anchored Distributionally Robust Optimization}{Kepplinger and Vidyashankar}
\firstpageno{1}

\begin{document}

\title{Domain Adaptation with Target Information via Doubly-Anchored Distributionally Robust Optimization}

\author{%
    \name David Kepplinger \email dkepplin@gmu.edu \\
    \addr Department of Statistics\\
    George Mason University\\
    Fairfax, VA 22030, USA
    \AND
    \name Anand N.\ Vidyashankar \email avidyash@gmu.edu \\
    \addr Department of Statistics\\
    George Mason University\\
    Fairfax, VA 22030, USA}

\maketitle
\thispagestyle{arxivtitle}

\begin{abstract}
Domain Adaptation (DA) often lacks worst-case guarantees, while Distributionally Robust Optimization (DRO) based only on source data centers its ambiguity set at the source law and ignores available target structure.
To bridge this gap, we introduce a doubly-anchored DRO framework whose ambiguity set is the intersection of $\phi$-divergence balls centered at the source law and a source-completed target reference law, the latter pairing the target covariate law with the source conditional law.
We derive dual-induced adversarial bridge geometries for symmetric and asymmetric divergence pairings, notably introducing a Kullback--Leibler/squared-Hellinger (KL/HD) bridge.
This asymmetric formulation yields a Lambert-$W$ geometry in which source-side exponential risk tilting and target-side Hellinger stabilization enter through distinct terms, attenuating, but not bounding, the effect of large likelihood ratios.
Furthermore, without imposing covariate shift, we establish finite-sample generalization bounds for the minimizer of a structural, loss-agnostic density-bridge risk under bounded-overlap conditions; these bounds do not apply directly to the loss-aware DRO min--max estimator.
We translate our framework into a bridge-weighted Nadaraya--Watson estimator, proving uniform consistency for the source regression function and pointwise asymptotic normality, with target recovery when the source and target regression functions coincide, as under covariate shift.
Finally, an empirical evaluation on a domain-shifted Fashion-MNIST dataset illustrates the finite-sample stability of the asymmetric KL/HD bridge under severe synthetic target-covariate corruption.
\end{abstract}

\begin{keywords}
  unsupervised domain adaptation,
  transfer learning, %
  distributionally robust optimization, %
  $\phi$-divergence, %
  robustness, %
  density ratio estimation, %
  non-parametric regression, %
  covariate shift
\end{keywords}

\section{Introduction}
The deployment of machine learning models in high-stakes environments is frequently bottlenecked by distribution shift.
While classical Domain Adaptation (DA) techniques seek to align feature representations across domains, they often lack worst-case robustness guarantees.
Conversely, standard Distributionally Robust Optimization (DRO) typically anchors its ambiguity set at the empirical source distribution \citep[e.g.,][]{BenTal2013,HuHong2012,MohajerinEsfahani2018,duchi2021}.
The direct DRO envelope bounds the target risk whenever the target law lies in that source-centered set.

In the source-supervised, target-unlabeled setting considered here, source-anchored DRO thus fails to exploit the structural and distributional information available through the target covariate law.
To fill this gap, we introduce Robust Domain Adaptation via a doubly-anchored ambiguity set, a framework that directly integrates the available target-domain structure into the robust optimization geometry.

Conceptually, our approach stems from the typically asymmetric nature of data collection.
We observe labeled data from the source domain and unlabeled covariates from the target domain, with each sample subject to distinct practical limitations.
The labeled source data, drawn from $P_S$, are typically abundant and of high quality, but the source distribution can differ substantially from the target distribution.
The target covariates, drawn from the marginal $P_T^X$, on the other hand, may be sampled with lower quality, higher noise, and fewer observations \citep{Cai2021}.
To borrow information from the high-quality source domain without imposing a conditional-invariance restriction on the relation between $P_S$ and $P_T$, we define population ambiguity balls around the source law and the source-completed target reference law, as well as regularized empirical counterparts for implementation.
The population source anchor is the full labeled source law, whereas the target anchor combines the target covariate law with the source conditional law as the available supervised center for the unobserved target conditional.
We assume that the true target distribution is ``close''---in the sense defined below---to both population anchors.
At the population level, the two balls control departures from the source law and the source-completed target law.
Sampling and regularization errors in their empirical counterparts must be handled separately through empirical radius calibration.
Rather than merely transferring a fitted classifier or anchoring our robust radius solely around the source, we anchor the ambiguity set at both reference laws to protect simultaneously against source uncertainty and target unreliability.

In our framework, the source and target play distinct, complementary roles.
The source provides the foundational learning signal, equipping the model with labeled predictive information, a supervised loss geometry, and an uncertainty model for plausible labeled distributions.
The target acts as a structural regularizer, providing a compatibility constraint and a statistical description of the target covariate population.
Crucially, this target anchor can restrict which source-induced worst cases are admissible, ensuring that source-supervised information is transferred only through target-compatible distributions.

A fundamental requirement for our doubly-anchored bridge is adequate overlap between the source and target covariate laws.
We assume that $P_S^X$ and $P_T^X$ are mutually absolutely continuous and that the corresponding regularized empirical anchors admit densities on a common working region.
This prevents target-relevant covariate regions from having zero source support and ensures the existence of the population density ratio
$\LR(x)=p_T^X(x)/p_S^X(x)$.
However, while overlap makes transfer possible, the raw density ratio is highly susceptible to estimation error.
We therefore impose a controlled discrepancy through the DRO bridge to make this transfer useful and robust.
The finite-sample bounds additionally require uniform bridge-weight regularity; for the untrimmed bridge maps, this condition is verified under bounded likelihood-ratio overlap and hence does not cover an unbounded likelihood ratio.

We achieve this controlled discrepancy by defining the doubly-anchored ambiguity set as the intersection of two $\phi$-divergence balls.
Because target labels are unavailable, the target reference law combines $P_T^X$ with the source conditional distribution $P_S^{Y\mid X}$.
This borrowed conditional is the center for the target-side ambiguity set, not an assumption that $P_S^{Y\mid X}=P_T^{Y\mid X}$.
Knowledge transfer is enabled by assuming that the true target distribution is contained in the resulting doubly-anchored ambiguity set; in other words, it is both source-plausible and compatible with the available target information.

The radii of these two balls control the size of the doubly-anchored ambiguity set.
A smaller source radius $\rho_S$ restricts admissible laws more tightly around the source anchor, while a smaller target radius $\rho_T$ restricts them more tightly around the target reference anchor.
They must nevertheless remain sufficiently large for the true target law to satisfy Assumption~\ref{ass:target-feasibility}.
In the dual formulation, these constraints are represented through Lagrange multipliers rather than relaxed.
Importantly, the optimal dual adversarial bridges derived in this work are indexed by a common mixing parameter $\alpha$, with the endpoint $\alpha=0$ yielding source-dominated bridges (no target-information is considered) and $\alpha\uparrow1$ converging towards loss-tilted importance-weighting.
Appendix~\ref{sec:radius-indexed-selection} relates the radii to the admissible structural bridge parameters and provides sufficient target-radius and finite-sample conditions under which every minimizer of the deterministic risk-bound envelope lies in $(0,1)$ along the symmetric paths.
Under target feasibility, $\rho_S$ does not truncate those paths.
Appendix~\ref{sec:radius-indexed-selection} also distinguishes shrinking $\rho_T$, which forces the conditional-shift discrepancy to vanish under target feasibility, from sending $\alpha\uparrow1$, which moves the structural bridge toward $P_T^\circ$.

In this work we focus on the Kullback--Leibler (KL) divergence and squared Hellinger distance (HD) to construct the two ambiguity sets.
Our choice is motivated by both their classical theoretical foundations and their distinct operational behaviors.
Crucially, HD has established robustness properties under model misspecification and outliers \citep{Beran1977,Basu1998,lindsay1994}.
Its square-root geometry also moderates the effect of unstable density ratios that frequently destabilize standard plug-in estimators when the empirical target anchor is unreliable.
At the joint-law level, this distinction also matters under support mismatch: if, on a set of target covariates of positive probability, $P_T^{Y\mid X}$ assigns positive mass to a measurable label set to which $P_S^{Y\mid X}$ assigns zero mass, then $D_{\mathrm{KL}}(P_T\Vert P_S)=\infty$, whereas the extended squared Hellinger divergence remains finite.
Consequently, HD/HD can remain feasible in settings excluded by a source-side KL ball.
Using these divergence pairings, we uncover specific, analytically tractable bridge geometries.
The loss-agnostic structural bridges associated with all three pairings can be parameterized by the single mixing parameter $\alpha$, streamlining both theoretical analysis and algorithmic implementation.
The asymmetric KL/HD pairing, which employs KL for the source and HD for the target, gives rise to a distinct Lambert-$W$ bridge geometry.
In this formulation, the source KL divergence drives an exponential tilting toward adversarial risk, while the target HD imposes a robust square-root stabilization.
This asymmetric configuration is motivated by settings in which the source domain is trustworthy but the target covariate sample is susceptible to corruption or noise.

Crucially, our doubly-anchored framework does not require the conditional-invariance assumptions that dominate the classical DA literature.
Standard techniques typically formulate adaptation by assuming either covariate shift, where the conditional distribution of labels given features remains invariant across domains, $P_S(Y\mid X)=P_T(Y\mid X)$, or label shift, where the conditional distribution of features given labels is invariant, $P_S(X\mid Y)=P_T(X\mid Y)$ \citep{Zhang2013}.
While these assumptions are mathematically convenient for deriving importance weights \citep{Shimodaira2000,huang2006,Sugiyama2008,Cortes2010,Lipton2018}, they are strong, idealized factorizations that are unverifiable in practice, particularly in the absence of labeled data in the target domain.
If the true domain shift violates these structural assumptions, classical DA methods can become unstable and their theoretical bounds may no longer reflect the true generalization error.

In contrast, our approach relaxes the conditional invariance restriction.
Because target labels are unavailable, the source conditional law supplies the supervised center of the target reference anchor, while departures of the true target conditional from that center are controlled through the divergence radii.
We require overlap of the source and target covariate laws and feasibility of the true target law within the resulting doubly-anchored ambiguity set.
Because target labels are unavailable, this feasibility requirement is a modeling condition rather than a consequence that can be verified from the observed target sample alone.
Rather than assuming a specific, uncheckable mechanism of shift, our framework concedes that the exact localized shift is unknown and instead optimizes against the worst-case plausible shift governed by the intersection of the divergence balls.
This framework replaces conditional-invariance equalities with overlap and a divergence-based feasibility model for the unknown shift, providing a controlled robust formulation for transfer under complex distribution shifts.

Within this doubly-anchored framework, our specific contributions are threefold.
First, we derive dual-induced bridge geometries for three distinct combinations of the KL divergence and squared Hellinger distance.
Under the stated attainment and multiplier conditions, these yield closed-form KKT representations of optimal anchor-dominated adversarial densities.
Source-side KL makes the anchor-dominated and full problems coincide, whereas the possible singular component in the HD/HD problem is treated separately.
These representations show how the target constraint, when active, restricts which source-induced worst cases are admissible.
We also introduce an asymmetric KL/HD bridge that yields a Lambert-$W$ optimization geometry, in which exponential risk tilting of the source domain and square-root stabilization from the target domain enter through distinct terms.

Second, under uniform bridge-weight regularity, we establish finite-sample target-risk bounds for hypotheses minimizing the empirical structural-bridge risk.
For this purpose, we introduce a structural, loss-agnostic density bridge induced by the same source and target divergence geometries.
The resulting empirical-process bound separates the complexity of the weighted loss class, the effective sample size induced by the bridge, and the error from estimating the density bridge.

Third, we translate our theoretical framework to a practical bridge-weighted Nadaraya--Watson (NW) non-parametric estimator in the source-supervised, target-unlabeled setting.
We provide a comprehensive asymptotic analysis of the bridge weights and the resulting estimator, establishing uniform consistency and pointwise asymptotic normality.
Finally, a domain-shifted classification task on the Fashion-MNIST dataset \citep{Xiao2017} illustrates the finite-sample stability of the asymmetric bridge under the imposed target corruption.

The results below have distinct scopes.
The duality and KKT representations concern the anchor-dominated loss-aware problem, which coincides with the full problem for source-side KL, while the population target-risk envelope concerns the full ambiguity set.
The finite-sample empirical-process guarantees concern only the loss-agnostic structural bridge under bounded overlap and weight regularity, not the loss-aware min--max estimator.
The damped fixed-point procedure used in the application is an exploratory computational refinement, not an exact empirical DRO solver.
Finally, target feasibility implies only that $\rho_S$ does not truncate the symmetric structural bridge paths; the source constraint may still tighten the full ambiguity set and, when active, alter the loss-aware adversarial optimizer.

\subsection{Example: a no-covariate toy model}\label{sec:toy-example}
To build intuition for how the doubly-anchored ambiguity set enables knowledge transfer before introducing covariate geometries, we first consider a simpler regime in which labels are observed in both domains and there are no covariates.
This example is not the target-unlabeled setting studied in the remainder of the paper and is an analogy rather than a specialization of the doubly-anchored optimization problem; it isolates the source-borrowing tradeoff in its simplest form.
For $D\in\{S,T\}$, let $\mu_D=\E_{P_D}[Y]$ and $\sigma_D^2=\Var_{P_D}(Y)$.
We consider the problem of estimating the target mean $\mu_T$; related source--target and multi-source weighting arguments appear in Theorem~3 of \citet{ben-david2010} and in \citet{Mansour2009a}.
Suppose $(Y_1^S,\dots,Y_n^S)$ and $(Y_1^T,\dots,Y_m^T)$ are mutually independent i.i.d.\ samples from the source and target domains, respectively.
From these samples we compute the empirical means $\bar Y_S=n^{-1}\sum_{i=1}^nY_i^S$ and $\bar Y_T=m^{-1}\sum_{i=1}^mY_i^T$, with respective variances $v_S=\sigma_S^2/n$ and $v_T=\sigma_T^2/m$.
Among estimators $\widehat\mu_\alpha=(1-\alpha)\bar Y_S+\alpha\bar Y_T$, $\alpha\in[0,1]$, suppose that the transfer uncertainty between the true domain means is bounded by $|\mu_T-\mu_S|\leq\Delta$.
For fixed $v_S$ and $v_T$, the worst-case mean squared error over the admissible mean gap $d=\mu_T-\mu_S$ is
\[
    \sup_{|d|\leq\Delta}
    \left\{(1-\alpha)^2(v_S+d^2)+\alpha^2v_T\right\}
    =
    (1-\alpha)^2(v_S+\Delta^2)+\alpha^2v_T,
\]
and, provided $v_S+v_T+\Delta^2>0$, is minimized by the oracle target weight $\alpha^\star=(v_S+\Delta^2)/(v_T+v_S+\Delta^2)$.
The adapted predictor is then the convex combination
$\hat\mu_{\mathrm{br}}=(1-\alpha^\star)\bar Y_S+\alpha^\star\bar Y_T$.
This exposes the core mechanism of adaptation: drawing upon source data reduces estimation variance, while the source--target mean gap, bounded by $\Delta$, introduces bias.

To connect this transfer uncertainty to a divergence radius, consider the case where both domains are Gaussian with shared variance $\sigma^2$.
Then
\[
    D_{\mathrm{KL}}(P_T\Vert P_S)
    =\frac{(\mu_T-\mu_S)^2}{2\sigma^2}.
\]
Thus $D_{\mathrm{KL}}(P_T\Vert P_S)\leq\rho$ implies $(\mu_T-\mu_S)^2\leq2\sigma^2\rho$.
Taking the tight radius-induced uncertainty bound $\Delta^2=2\sigma^2\rho$ in the preceding worst-case calculation yields
\[
    \alpha^\star=\frac{v_S+2\sigma^2\rho}{v_T+v_S+2\sigma^2\rho}.
\]
This formulation shows how the divergence radius enters the bias--variance geometry.
In this no-covariate mean problem, the distributional uncertainty is summarized by the scalar bridge weight $\alpha^\star$, which is determined by the two sample-mean variances and the radius-induced mean-gap bound.
If the divergence radius $\rho=0$, the domains are assumed identical and the estimator pools the two samples according to their relative precision, placing greater weight on the source whenever the source estimate is more precise.
Conversely, as $\rho\to\infty$, the source domain is uninformative for estimating $\mu_T$, and hence $\alpha^\star\to1$, so that the estimator collapses to the target-only baseline.

Analyzing the asymptotic scaling of this optimal weight reveals that if the mean-gap bound $\Delta^2$ remains fixed and strictly positive as the sample sizes grow, the target data eventually dominate and the source weight vanishes.
Hence, source borrowing can help in finite samples but disappears asymptotically at first order.
Related finite-sample source-borrowing regimes are studied for the confidence-thresholding estimator in \citet{Cai2024}, while interactions between ambiguity-radius and sample-size scaling in DRO are studied by \citet{blanchet2023}.
Non-degenerate source--target weighting occurs when $v_S+\Delta^2\asymp v_T$; when $v_S=O(v_T)$, this includes local-gap regimes satisfying $\Delta^2\asymp\sigma_T^2/m$.
This fundamental dynamic, balancing source sample precision against the divergence radius of the ambiguity set, motivates the localized likelihood-ratio weights used when we reintroduce covariates in non-parametric regression.

\subsection{Setup and assumptions}\label{sec:setup-and-assumptions}

To formalize our framework, let $\xspace$ be a Borel subset of $\R^d$, let $\yspace$ be a standard Borel space, and equip the data space $\fs=\xspace\times\yspace$ with the product $\sigma$-field.
Let $\hs$ be a class of measurable maps $h\colon\xspace\to\yspace$, and let $\ell\colon\yspace\times\yspace\to\R_+$ be a measurable non-negative loss.
Let $P_S$ and $P_T$ denote the true source and target laws on $\fs$. For $z=(x,y)\in\fs$ and $h\in\hs$, write
\[
    L_h(z)=\ell\{h(x),y\}.
\]
We observe mutually independent samples
$\{Z_i^S=(X_i^S,Y_i^S):1\leq i\leq n\}$ drawn i.i.d.\ from $P_S$ and
$\{X_j^T:1\leq j\leq m\}$ drawn i.i.d.\ from $P_T^X$, so that the source
sample is labeled and the target sample contains only covariates.
Write
\[
    P_D(\mathrm{d} x,\mathrm{d} y)=P_D^X(\mathrm{d} x)P_D^{Y\mid X}(\mathrm{d} y\mid x),
    \qquad D\in\{S,T\}.
\]
Because the target conditional law is not directly observed, we define the source-completed target reference law
\begin{equation*}
    P_T^\circ(\mathrm{d} x, \mathrm{d} y)
    =P_T^X(\mathrm{d} x)P_S^{Y\mid X}(\mathrm{d} y\mid x).
\end{equation*}
The law $P_T^\circ$ uses the target covariate law and the available source conditional law, and will be used as the center of the target-side ambiguity set defined below.
It does not impose the covariate-shift condition $P_T^{Y\mid X}=P_S^{Y\mid X}$; departures of the true target conditional from this center are permitted through the divergence radius.

Let $\hat P_S$ and $\hat P_T^\circ$ denote regularized empirical anchors constructed from the two samples:
\[
    \hat P_S(\mathrm{d} x,\mathrm{d} y)
    =\hat P_S^X(\mathrm{d} x)\hat P_S^{Y\mid X}(\mathrm{d} y\mid x),
    \qquad
    \hat P_T^\circ(\mathrm{d} x,\mathrm{d} y)
    =\hat P_T^X(\mathrm{d} x)\hat P_S^{Y\mid X}(\mathrm{d} y\mid x).
\]
Throughout, an empirical anchor means a regularized distributional estimate admitting a density on the working region, rather than the raw atomic empirical measure; kernel density estimation provides one construction used later in the paper.
Although $\hat P_T^\circ$ is written as a joint reference law, its density ratio relative to $\hat P_S$ depends only on the source and target covariate marginals.

Whenever density notation is used for a pair of anchors, their densities are taken with respect to a common dominating measure.
In particular, let $\mu$ be a common dominating measure for $P_S$ and $P_T^\circ$, and let $\mu_X$ be a common dominating measure for $P_S^X$ and $P_T^X$; write $p_S,p_T^\circ$ and $p_S^X,p_T^X$ for the corresponding densities.

Let $\mathcal P(\fs)$ denote the probability measures on $\fs$.
For a proper lower-semicontinuous convex divergence generator
$\phi\colon[0,\infty)\to[0,\infty]$ satisfying $\phi(1)=0$,
define its recession slope by
\[
    \phi^\infty(1)
    =
    \lim_{t\to\infty}\frac{\phi(t)}{t}
    \in[0,\infty].
\]
If
\[
    Q=Q_P^{a}+Q_P^{s}
\]
is the Lebesgue decomposition of $Q$ relative to $P$, the extended
$\phi$-divergence is
\begin{equation}\label{eqn:extended-phi-divergence}
    D_\phi(Q\Vert P)
    =
    \int_\fs
    \phi\left(\frac{\mathrm{d} Q_P^{a}}{\mathrm{d} P}\right)\d P
    +
    \phi^\infty(1)\,Q_P^{s}(\fs),
\end{equation}
with the convention $0\cdot\infty=0$.
Thus, the divergence is defined even when $Q\not\ll P$.

For two anchor laws $A_S$ and $A_T$, divergence generators $\phi_S$ and $\phi_T$, and finite radii $\rho_S,\rho_T\in[0,\infty)$, define the full doubly-anchored
ambiguity set
\begin{equation*}
\as(A_S,A_T)
=
\left\{
Q\in\mathcal P(\fs):
D_{\phi_S}(Q\Vert A_S)\leq\rho_S \;\wedge\;
D_{\phi_T}(Q\Vert A_T)\leq\rho_T
\right\},
\end{equation*}
and its anchor-dominated subclass
\begin{equation}\label{eqn:dominated-two-anchor-ambiguity-set}
\as_{\mathrm{ac}}(A_S,A_T)
=
\left\{
Q\in\as(A_S,A_T):
Q\ll A_S
\right\}.
\end{equation}
When $A_S$ and $A_T$ are mutually absolutely continuous, the condition
$Q\ll A_S$ in~\eqref{eqn:dominated-two-anchor-ambiguity-set} is
equivalent to $Q\ll A_T$.
Here and below, an anchor is a reference probability law serving as the
center of a divergence constraint; the term describes its role in the
ambiguity-set geometry rather than asserting that the reference law equals
the unknown target law. Accordingly, in the mutually absolutely continuous
setting studied here, ``anchor-dominated'' means that a candidate law is
absolutely continuous with respect to either---and hence both---anchor laws,
not that one anchor dominates the other.

The population and empirical ambiguity sets are obtained by taking
\(
    (A_S,A_T)
    =
    (P_S,P_T^\circ)
\) and
\(
    (A_S,A_T)
    =
    (\hat P_S,\hat P_T^\circ),
\)
respectively. For KL and squared HD, we use
\[
    \phi_{\mathrm{KL}}(t)=t\log t-t+1,
    \qquad
    \phi_{\mathrm{HD}}(t)=(\sqrt t-1)^2,
\]
with the convention $0\log0=0$. Their recession slopes are
\[
    \phi_{\mathrm{KL}}^\infty(1)=\infty,
    \qquad
    \phi_{\mathrm{HD}}^\infty(1)=1.
\]
Consequently, finite source-side KL divergence forces
$Q\ll A_S$, whereas finite squared Hellinger divergence permits a
singular component at finite divergence cost.

To evaluate the worst-case robust risk, we impose the following structural assumptions.

\begin{assumption}[Bounded loss]\label{ass:bounded-loss}
The loss function is bounded such that $0\leq L_h(z)\leq M$ for some
$0<M<\infty$, for all hypotheses $h\in\hs$ and all data points $z\in\fs$.
\end{assumption}

\begin{assumption}[Anchor equivalence and overlap]
\label{ass:common-domination}
The population anchor laws $P_S$ and $P_T^\circ$ are mutually
absolutely continuous on $\fs$.
In the target-unlabeled construction
\[
    P_T^\circ(\mathrm{d} x,\mathrm{d} y)
    =
    P_T^X(\mathrm{d} x)P_S^{Y\mid X}(\mathrm{d} y\mid x),
\]
this follows from mutual absolute continuity of $P_S^X$ and $P_T^X$
on $\xspace$, because the two joint anchors share the
conditional kernel $P_S^{Y\mid X}$.

Consequently,
\[
    \LR(z)
    =
    \frac{\di P_T^\circ}{\di P_S}(x,y)
    =
    \frac{\di P_T^X}{\di P_S^X}(x)
    =
    \frac{p_T^X(x)}{p_S^X(x)}
\]
is well defined $P_S$-almost surely.
\end{assumption}

\begin{assumption}[Feasibility]\label{ass:target-feasibility}
The true target law belongs to the full population doubly-anchored
ambiguity set:
\[
    P_T\in\as(P_S,P_T^\circ),
\]
or equivalently, $D_{\phi_S}(P_T\Vert P_S)\leq\rho_S$ and $D_{\phi_T}(P_T\Vert P_T^\circ)\leq\rho_T$, where the divergences are understood in the extended sense of
\eqref{eqn:extended-phi-divergence}.
\end{assumption}

Empirical radius calibration for the regularized anchors incorporates their sampling and regularization errors.

\section{Related literature}

Distributionally Robust Domain Adaptation (DRDA) has emerged as a principled approach to counter the vulnerability of classical DA to unexpected target shifts.
For example, \citet{Awad2023} use Maximum Mean Discrepancy (MMD) \citep{gretton2012} to construct a single ambiguity ball centered at a reweighted empirical source law, with a radius chosen to cover the relevant source-reweighted and target laws with high probability.
Other recent works use Optimal Transport (OT) to align source and target distributions or representations \citep[e.g.,][]{courty2017,Balaji2020}.
In multi-source Unsupervised Domain Adaptation, other frameworks construct uncertainty classes from mixtures of source conditional laws or their estimates, together with target-covariate information \citep{Wang2026,Kim2026,Guo2025}.

These approaches use different discrepancy geometries and impose correspondingly different restrictions on the relation between the source and target laws.
Classical domain-adaptation bounds are formulated through domain discrepancies or importance weights \citep[e.g.,][]{ben-david2010,Mansour2009,Cortes2010,Cortes2019}, while single-ball robust-learning formulations use one ambiguity-set metric \citep[e.g.,][]{Lee2018,Sinha2018,Awad2023}.
In the reweighting scheme of \citet{Awad2023}, the density-ratio weights are constrained by an imposed bound $B$, and both the prescribed radius and the resulting risk bound scale with $B$.
For covariate-shift regression, \citet{Ma2023} instead truncate the density ratio to ensure boundedness.
To reduce the sensitivity of standard DRO to outliers, \citet{Zhai2021} introduce the ``DORO'' risk, which filters out a small fraction of high-loss observations during training.

Related constructions based on intersecting two uncertainty regions have appeared previously.
In supervised linear-regression domain adaptation, \citet{Taskesen2021} define a moment-based ambiguity class through the intersection of two regions centered at the empirical source and target mean--covariance pairs, using either the same KL-type or the same Wasserstein-type divergence on both constraints.
In contextual optimization under covariate shift, \citet{Wang2024} instead intersect two Wasserstein balls centered at nonparametric and parametric estimators.
Relative to these constructions, our framework intersects two $\phi$-divergence balls on the space of full joint laws.
The first is centered at the source law $P_S$, and the second at the target-reference law $P_T^\circ$ which combines the target covariate marginal with the source conditional law.
The two constraints may use either a common divergence generator, as in the KL/KL and HD/HD pairings, or different generators, as in the KL/HD pairing.
To our knowledge, this full joint-law construction centered at $P_S$ and $P_T^\circ$ has not previously been studied in domain adaptation.
For the KL/HD pairing, in the attained regime, the pointwise KKT equation yields a Lambert-$W$ representation in which source-side exponential loss tilting and target-side Hellinger attenuation enter through distinct terms.
Unlike density-ratio truncation \citep{Ma2023} or the uniformly bounded target-induced weighting of \citet{Yamamoto2026}, the KL/HD density map remains unbounded but attenuates large likelihood ratios by a logarithmic factor.

\section{Optimal adversarial bridges}

Based on the doubly-anchored ambiguity set
$\as(P_S,P_T^\circ)$, our goal is to minimize the most adverse
target-compatible risk. The corresponding robust minimax value is
\begin{equation*}
    \inf_{h\in\hs}
    \sup_{Q\in\as(P_S,P_T^\circ)}
    \E_{Z\sim Q}[L_h(Z)].
\end{equation*}
Whenever this infimum is attained, we denote a minimizer by $h^\star$.
In the target-unlabeled setting of this paper, the target reference law is
$P_T^\circ=P_T^X\otimes P_S^{Y\mid X}$, and hence its likelihood ratio
relative to the source law depends only on the covariate:
\begin{equation*}
    \LR(z)
    :=\frac{\d P_T^\circ}{\d P_S}(z)
    =\frac{\d P_T^X}{\d P_S^X}(x)
    =\frac{p_T^X(x)}{p_S^X(x)}.
\end{equation*}

To solve the infinite-dimensional inner problem and subsequently derive
finite-sample generalization bounds, we proceed in two steps. First, we
establish a general dual representation below and then, in
Section~\ref{sec:bridge-geometries}, derive the optimal loss-aware adversarial
density for each of the three divergence pairings. The resulting formulas show
how the target constraint modifies the source-side adversarial tilt. Second, in
Section~\ref{sec:loss-agnostic-density-bridges}, we retain the density geometry
induced by the same divergence pairings while removing the localized loss
term. The resulting structural bridge laws serve as hypothesis-independent intermediate
reference measures for the risk bounds and non-parametric estimators developed
in the remainder of the paper.

\subsection{General dual representation}\label{sec:general-dual-representation}

Fix $h\in\hs$ and define the full population robust risk,
\begin{equation*}
    \Psi_{S,T}(h)
    =
    \sup_{Q\in\as(P_S,P_T^\circ)}
    \E_Q[L_h(Z)].
\end{equation*}
We also define the anchor-dominated value
\begin{equation*}
    \Psi_{S,T}^{\mathrm{ac}}(h)
    =
    \sup_{Q\in\as_{\mathrm{ac}}(P_S,P_T^\circ)}
    \E_Q[L_h(Z)].
\end{equation*}

The density-ratio dual is first derived for
$\Psi_{S,T}^{\mathrm{ac}}(h)$.
Under Assumption~\ref{ass:common-domination}, the two anchors
$P_S$ and $P_T^\circ$ are mutually absolutely continuous.
For every
$Q\in\as_{\mathrm{ac}}(P_S,P_T^\circ)$, put
\begin{equation} \label{eqn:r-and-lr-under-anchor-equivalence}
    r=\frac{\d Q}{\d P_S},
    \,
    \LR=\frac{\d P_T^\circ}{\d P_S}
    \;\Rightarrow
    \frac{\d Q}{\d P_T^\circ}
    =
    \frac{r}{\LR},
\end{equation}
and the two divergence constraints can be expressed under the source law:
\begin{align}
    D_{\phi_S}(Q\Vert P_S)
    &=
    \E_{P_S}[\phi_S(r)],
    \label{eqn:source-divergence-under-source}
    \\
    D_{\phi_T}(Q\Vert P_T^\circ)
    &=
    \E_{P_S}\left[
        \LR\,\phi_T\left(\frac{r}{\LR}\right)
    \right].
    \label{eqn:target-divergence-under-source}
\end{align}
Consequently,
\begin{equation}\label{eqn:primal-density-ratio-problem}
\begin{aligned}
    \Psi_{S,T}^{\mathrm{ac}}(h)
    =
    \sup_{r\geq0}\quad
    &\E_{P_S}[rL_h]\\
    \text{subject to}\quad
    &\E_{P_S}[r]=1,\\
    &\E_{P_S}[\phi_S(r)]\leq\rho_S,\\
    &\E_{P_S}\left[
       \LR\,\phi_T\left(\frac{r}{\LR}\right)
     \right]\leq\rho_T.
\end{aligned}
\end{equation}

If the source-side divergence is KL, then
$\phi_{\mathrm{KL}}^\infty(1)=\infty$ and every law in the full
ambiguity set is source-dominated. Hence
\begin{equation}\label{eqn:full-ac-values-coincide-kl-source}
    \Psi_{S,T}(h)
    =
    \Psi_{S,T}^{\mathrm{ac}}(h),
\end{equation}
for both the KL/KL and KL/HD pairings.
For HD/HD, the full ambiguity set may contain singular laws at finite divergence cost.
The relation between the full extended-divergence value and the dominated problem, together with the associated attainment question, is treated separately below.
Lemma~\ref{lem:full-vs-ac-risk-hdhd} in Appendix~\ref{app:full-vs-ac-riks} further quantifies the gap between $\Psi_{S,T}$ and $\Psi_{S,T}^{\mathrm{ac}}$ for the HD/HD pairing.
The density-ratio problem in \eqref{eqn:primal-density-ratio-problem} is posed on the positive cone
\[
    \mathscr R
    :=
    L_+^1(P_S)
    =
    \left\{
      r\in L^1(P_S):r\geq0\quad P_S\text{-almost surely}
    \right\}.
\]
This cone is closed and convex. We use the extended-value convention
$\phi_S(t)=\phi_T(t)=+\infty$ for $t<0$. On $L^1(P_S)$, define the
extended-real convex integral functionals
\[
    \mathcal G_S(r)
    =
    \E_{P_S}[\phi_S(r)],
    \qquad
    \mathcal G_T(r)
    =
    \E_{P_S}\left[
      \LR\,\phi_T\left(\frac{r}{\LR}\right)
    \right].
\]
Their common effective domain is
\[
    \mathscr C
    :=
    \operatorname{dom}\mathcal G_S
    \cap
    \operatorname{dom}\mathcal G_T
    \subseteq \mathscr R,
\]
and the affine normalization map
\[
    \mathcal A(r)=\E_{P_S}[r]
\]
is continuous on $L^1(P_S)$.

The following result gives the dual representation underlying all three bridge geometries studied below.

\begin{theorem}[General dual representation on the dominated class]
\label{thm:general-dual-representation}
Suppose Assumptions~\ref{ass:bounded-loss} and
\ref{ass:common-domination} hold. Assume that
$\phi_S,\phi_T\colon[0,\infty)\to[0,\infty]$ are proper
lower-semicontinuous convex functions satisfying
$\phi_S(1)=\phi_T(1)=0$, and that they are differentiable and strictly convex
on $(0,\infty)$. Suppose further that the anchor-dominated density
problem is strictly feasible:
there exists a measurable $r_0>0$, $P_S$-almost surely, such that
\begin{equation}\label{eqn:dual-slater-condition}
    \E_{P_S}[r_0]=1,
    \qquad
    \E_{P_S}[\phi_S(r_0)]<\rho_S,
    \qquad
    \E_{P_S}\left[
       \LR\,\phi_T\left(\frac{r_0}{\LR}\right)
    \right]<\rho_T.
\end{equation}
For $a\in\R$, $u>0$, and $\lambda_S,\lambda_T\geq0$, define the
extended-real pointwise conjugate
\begin{equation*}
\Gamma_{\lambda_S,\lambda_T}(a;u)
=
\sup_{t\geq0}
\left\{
    at-\lambda_S\phi_S(t)
    -\lambda_Tu\,\phi_T\left(\frac{t}{u}\right)
\right\}
\in\R\cup\{+\infty\}.
\end{equation*}
Here and below, a term with zero multiplier is taken to be zero, including
$0\cdot(+\infty)=0$.
For a fixed hypothesis $h\in\hs$, define the ordinary constrained-domain
dual function
\begin{equation*}
\begin{aligned}
d_h^{\mathscr C}(\lambda_S,\lambda_T,\eta)
={}&
\lambda_S\rho_S+\lambda_T\rho_T+\eta\\
&+
\sup_{r\in\mathscr C}
\left[
  \E_{P_S}\{r(L_h-\eta)\}
  -\lambda_S\mathcal G_S(r)
  -\lambda_T\mathcal G_T(r)
\right].
\end{aligned}
\end{equation*}
Also let
\begin{equation*}
\begin{aligned}
\mathfrak D_h
=
\Bigg\{
(\lambda_S,\lambda_T,\eta)
\in\R_+^2\times\R:\;
\E_{P_S}\left[
  \Gamma_{\lambda_S,\lambda_T}
  \{L_h(Z)-\eta;\LR(Z)\}
\right]
<\infty
\Bigg\}.
\end{aligned}
\end{equation*}
The effective-domain restriction is necessary because the pointwise
conjugate need not be finite for every multiplier triple. For example,
\[
    \Gamma_{0,0}(a;u)=+\infty
    \qquad\text{when }a>0.
\]
For the HD/HD pairing, if $\lambda_S+\lambda_T>0$, the conjugate is finite
precisely when
\[
    a<\lambda_S+\lambda_T.
\]
At equality, the remaining positive square-root term still makes the
pointwise supremum infinite.
Then, the robust risk admits the two equivalent dual representations
\begin{equation}\label{eqn:general-two-anchor-dual}
\begin{aligned}
\Psi_{S,T}^{\mathrm{ac}}(h)
    & =
    \min_{\lambda_S,\lambda_T\geq0,\,\eta\in\R}
    d_h^{\mathscr C}(\lambda_S,\lambda_T,\eta)\\
    & =
    \min_{(\lambda_S,\lambda_T,\eta)\in\mathfrak D_h}
    \Bigg\{
        \lambda_S\rho_S+\lambda_T\rho_T+\eta
        +\E_{P_S}\left[
          \Gamma_{\lambda_S,\lambda_T}
          \{L_h(Z)-\eta;\LR(Z)\}
        \right]
    \Bigg\},
\end{aligned}
\end{equation}
and they attain their minimum at the same multiplier triples.

Suppose additionally that the primal supremum is attained at $r_h^\star$, and
let $(\lambda_S^\star,\lambda_T^\star,\eta^\star)$ be any minimizer of
$d_h^{\mathscr C}$.
Then the complementary-slackness relations are
\begin{align}
    \lambda_S^\star
    \left\{
      \E_{P_S}[\phi_S(r_h^\star)]-\rho_S
    \right\}
    &=0,
    \label{eqn:kkt-source-slackness}
    \\
    \lambda_T^\star
    \left\{
      \E_{P_S}\left[
        \LR\,\phi_T\left(\frac{r_h^\star}{\LR}\right)
      \right]-\rho_T
    \right\}
    &=0.
    \label{eqn:kkt-target-slackness}
\end{align}
Moreover, if $\lambda_S^\star+\lambda_T^\star>0$ and
$r_h^\star>0$, $P_S$-almost surely, then it is the unique
pointwise maximizer and satisfies
\begin{equation}\label{eqn:general-pointwise-kkt}
    L_h(z)-\eta^\star
    =
    \lambda_S^\star\phi_S'\{r_h^\star(z)\}
    +
    \lambda_T^\star
    \phi_T'\left\{
       \frac{r_h^\star(z)}{\LR(z)}
    \right\}
\end{equation}
for $P_S$-almost every $z$.
\end{theorem}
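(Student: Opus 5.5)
The plan is to treat \eqref{eqn:primal-density-ratio-problem} as a convex program on $L^1(P_S)$ and apply Lagrangian duality with a Slater point. The objective $r\mapsto\E_{P_S}[rL_h]$ is linear and continuous on $L^1(P_S)$, since $L_h$ is bounded by Assumption~\ref{ass:bounded-loss}. The functionals $\mathcal G_S,\mathcal G_T$ are convex, and $\mathcal A$ is affine and continuous. By Assumption~\ref{ass:common-domination} and the identities \eqref{eqn:source-divergence-under-source}--\eqref{eqn:target-divergence-under-source}, the feasible set of \eqref{eqn:primal-density-ratio-problem} is exactly $\{r\in\mathscr C:\mathcal A(r)=1,\ \mathcal G_S(r)\le\rho_S,\ \mathcal G_T(r)\le\rho_T\}$, and its value is $\Psi^{\mathrm{ac}}_{S,T}(h)$, which is at most $M<\infty$.

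The first step is the standard convex-duality theorem for inequality constraints together with one affine equality constraint (Luenberger's Lagrange duality in vector spaces). I would apply it to the value function
\[
v(\varepsilon_S,\varepsilon_T,\delta)=\sup\{\E_{P_S}[rL_h]:r\in\mathscr C,\ \mathcal G_S(r)\le\rho_S+\varepsilon_S,\ \mathcal G_T(r)\le\rho_T+\varepsilon_T,\ \mathcal A(r)=1+\delta\}.
\]
This function is concave. I would show it is finite near $0$ using boundedness and the point $r_0$ of \eqref{eqn:dual-slater-condition}. To perturb $\delta$, I would use the rescaled points $(1+\delta)r_0$, which remain strictly feasible for small $|\delta|$. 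Strict feasibility holds in the $\varepsilon$-directions, but one still needs $\phi$ to be finite and continuous along $(1+\delta)r_0$. I would handle this by monotone or dominated convergence, using convexity and the fact that $\phi_S(r_0)$ and $\LR\,\phi_T(r_0/\LR)$ are integrable. Then $v$ is bounded above near $0$, so it has a supergradient at $0$. That supergradient gives multipliers $(\lambda_S,\lambda_T,\eta)$ with $\lambda\ge0$ and no duality gap, with the dual minimum attained. This yields the first equality in \eqref{eqn:general-two-anchor-dual}. I expect this step to be the main technical obstacle, because the equality constraint is not interior-type.

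The second step passes from $d^{\mathscr C}_h$ to the pointwise conjugate by interchanging the supremum and the integral, as in Rockafellar's theorem for integral functionals on decomposable spaces. The space $L^1(P_S)$ is decomposable, and the integrand $(z,t)\mapsto t(L_h-\eta)-\lambda_S\phi_S(t)-\lambda_T\LR\,\phi_T(t/\LR)$ is a normal integrand by lower semicontinuity. Restricting to $\mathscr C$ does not change the supremum whenever the value is finite: if either multiplier is positive, points outside $\mathscr C$ give $-\infty$; when a multiplier is zero, I would approximate using truncations of $r$ mixed with $r_0$. This gives
\[
\sup_{r\in\mathscr C}[\cdots]=\E_{P_S}[\Gamma_{\lambda_S,\lambda_T}(L_h-\eta;\LR)].
\]
Both sides equal $+\infty$ exactly off $\mathfrak D_h$, so the two minimizations agree and share their minimizers.

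For the KKT part, I take an attained primal $r^\star_h$ and a dual minimizer. Weak duality gives
\[
\E[r^\star_h L_h]\le \E[r^\star_h L_h]+\lambda_S(\rho_S-\mathcal G_S(r^\star_h))+\lambda_T(\rho_T-\mathcal G_T(r^\star_h))\le d^{\mathscr C}_h.
\]
Zero duality gap makes these equalities, and since each bracketed term is nonnegative, \eqref{eqn:kkt-source-slackness} and \eqref{eqn:kkt-target-slackness} follow. The same chain of equalities shows that $r^\star_h$ attains the inner supremum. By the integral interchange, $r^\star_h(z)$ then maximizes the pointwise integrand for $P_S$-a.e.\ $z$. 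When $\lambda_S^\star+\lambda_T^\star>0$, the integrand is strictly concave in $t$ on $(0,\infty)$, so the maximizer is unique. If in addition $r_h^\star>0$, it is an interior maximizer of a differentiable concave function, so the first-order condition \eqref{eqn:general-pointwise-kkt} holds there.
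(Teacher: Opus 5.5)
Your architecture matches the paper's proof: a value function in $(\varepsilon_S,\varepsilon_T,\delta)$, supergradient multipliers, an interchange to $\Gamma_{\lambda_S,\lambda_T}$, and KKT from the zero-gap chain. However, the step you flagged as the main obstacle fails as proposed.

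Rescaling to $(1+\delta)r_0$ needs $\mathcal G_S\{(1+\delta)r_0\}$ and $\mathcal G_T\{(1+\delta)r_0\}$ to be finite and continuous at $\delta=0$. For the generators the theorem admits, this does not follow from integrability of $\phi_S(r_0)$ and $\LR\,\phi_T(r_0/\LR)$. Convexity only bounds $\phi\{(1+\delta)r_0\}$ by its values at $(1\pm\delta_0)r_0$, whose integrability is exactly what is in question, so there is no dominating function.

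Concretely, $\phi(t)=e^{t^2}-e-2e(t-1)$ is nonnegative, smooth and strictly convex with $\phi(1)=0$. If $\E_{P_S}[r_0]=1$ and $P_S(r_0>s)\asymp s^{-3}e^{-s^2}$ for large $s$, then $\E_{P_S}[\phi(r_0)]<\infty$ but $\E_{P_S}[\phi\{(1+\delta)r_0\}]=\infty$ for every $\delta>0$. Generators with $\phi(0)=+\infty$ and fast blow-up at the origin, such as $e^{1/t}-e+e(t-1)$, break $\delta<0$ in the same way. Then $(1+\delta)r_0\notin\mathscr C$, and you have not shown that the origin is interior to the effective domain of the value function, which the supergradient step requires.

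For KL and squared Hellinger the rescaling is harmless, since $\phi(ct)\le C_c\{\phi(t)+t+1\}$ for $c$ near $1$. So your argument covers the pairings used later, but not the theorem as stated. The fix, which is what the paper does, is to perturb locally. Pick $B$ with $P_S(B)>0$ on which $r_0$ and $\LR$ lie in a compact subset of $(0,\infty)$, and set $r_\delta=r_0+\delta\mathbf 1_B/P_S(B)$. The integrands are unchanged off $B$. On $B$ their arguments stay in a compact subinterval of $(0,\infty)$ where the generators are finite and continuous. Hence $\mathcal G_j(r_\delta)\to\mathcal G_j(r_0)$ by bounded convergence, and the strict slacks persist for small $|\delta|$.

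Your second step takes a different but legitimate route: Rockafellar's interchange theorem on the decomposable space $L^1(P_S)$, followed by a passage to $\mathscr C$. The paper instead builds maximizing sequences directly inside $\mathscr C$. It takes pointwise near-maximizers from a countable grid of positive rationals, keeps them only where the target perspective term is bounded, reverts to $r_0$ elsewhere, and concludes by monotone convergence. That avoids the normal-integrand verification and handles finite and infinite values at once.

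If you keep your route, fix the case split. Points outside $\mathscr C$ give $-\infty$ only when \emph{both} multipliers are positive. With $\lambda_S>0=\lambda_T$, any $r\in\operatorname{dom}\mathcal G_S\setminus\operatorname{dom}\mathcal G_T$ has a finite Lagrangian value. So your truncation-and-mixing patch is needed for every triple with a zero multiplier. It must also show that the restricted supremum is $+\infty$ off $\mathfrak D_h$; otherwise a minimizer of $d_h^{\mathscr C}$ could lie outside $\mathfrak D_h$, and the shared-minimizer claim would not follow. Your KKT argument agrees with the paper's.
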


\begin{proof}
We first reduce the anchor-dominated optimization problem to an
optimization over source-density ratios.
By definition of
$\as_{\mathrm{ac}}(P_S,P_T^\circ)$, every feasible law satisfies
$Q\ll P_S$.
Assumption~\ref{ass:common-domination} also gives $P_T^\circ\sim P_S$, and $r,\LR$ are as in~\eqref{eqn:r-and-lr-under-anchor-equivalence}.
The probability constraint is $\E_{P_S}[r]=1$, and \eqref{eqn:source-divergence-under-source} follows from the fact that the singular term in~\eqref{eqn:extended-phi-divergence} vanishes for $Q\ll P_S$.
Changing measure from $P_T^\circ$ to $P_S$ gives
\[
    D_{\phi_T}(Q\Vert P_T^\circ)
    =
    \int_\fs
    \phi_T\left(\frac{r}{\LR}\right)\d P_T^\circ
    =
    \int_\fs
    \LR\,\phi_T\left(\frac{r}{\LR}\right)\d P_S,
\]
which proves~\eqref{eqn:target-divergence-under-source} and the
density-ratio representation~\eqref{eqn:primal-density-ratio-problem}.
For multipliers $\lambda_S,\lambda_T\geq0$ and $\eta\in\R$, the Lagrangian of
\eqref{eqn:primal-density-ratio-problem} is
\begin{equation}\label{eqn:general-two-anchor-lagrangian}
\begin{aligned}
    \mathcal L_h(r,\lambda_S,\lambda_T,\eta)
    ={}&
    \lambda_S\rho_S+\lambda_T\rho_T+\eta\\
    &+\E_{P_S}\Bigg[
        r(L_h-\eta)
        -\lambda_S\phi_S(r)
        -\lambda_T\LR\,
          \phi_T\left(\frac{r}{\LR}\right)
      \Bigg].
\end{aligned}
\end{equation}
Here the final constant $+\eta$ comes from the normalization term
$-\eta\{\E_{P_S}[r]-1\}$. For every feasible $r$ and every
$\lambda_S,\lambda_T\geq0$, each term
$\lambda_j\{\rho_j-\mathcal G_j(r)\}$ is nonnegative.
It follows that the infimum of the dual objective is an upper bound on the
primal value.

The cone $\mathscr R=L_+^1(P_S)$ is a closed convex subset of
$L^1(P_S)$, and the normalization map
$\mathcal A(r)=\E_{P_S}[r]$ is continuous and affine.
The functionals $\mathcal G_S$ and $\mathcal G_T$ are proper, convex,
and lower semicontinuous on $L^1(P_S)$. Properness follows from the
strictly feasible density $r_0$. For lower semicontinuity, consider
$r_n\to r$ in $L^1(P_S)$, choose a subsequence realizing the relevant
limit inferior, and then pass to a further subsequence converging to $r$
almost surely. Lower semicontinuity of the nonnegative convex integrands
and Fatou's lemma give
\[
    \mathcal G_S(r)
    \leq
    \liminf_n\mathcal G_S(r_n),
    \qquad
    \mathcal G_T(r)
    \leq
    \liminf_n\mathcal G_T(r_n).
\]

To include the affine normalization in the perturbation argument, define
\begin{equation*}
\begin{aligned}
V_h(u_S,u_T,v)
=
\sup\Big\{&\E_{P_S}[rL_h]:r\in\mathscr C,\ 
\mathcal G_S(r)\leq\rho_S+u_S,\ 
\mathcal G_T(r)\leq\rho_T+u_T,\\
&\mathcal A(r)=1+v\Big\},
\end{aligned}
\end{equation*}
with the supremum of the empty set equal to $-\infty$.
Put $\sigma_j=\rho_j-\mathcal G_j(r_0)>0$, $j\in\{S,T\}$.
Since $r_0$ and $\LR$ are positive and finite almost surely, there is a
measurable set $B$ of positive $P_S$-measure on which both functions are
bounded above and away from zero. For sufficiently small $|v|$, put
\[
    r_v=r_0+\frac{v}{P_S(B)}\mathbf 1_B.
\]
Then $r_v>0$, $\mathcal A(r_v)=1+v$, and
$\mathcal G_j(r_v)\to\mathcal G_j(r_0)$ as $v\to0$.
Indeed, the relevant integrands are unchanged off $B$ and, on $B$, their
arguments remain in compact subsets of $(0,\infty)$, where the generators
are continuous. The strict slacks $\sigma_S,\sigma_T$ therefore imply that
$V_h$ is finite on a neighborhood of the origin. On that neighborhood it is
concave and satisfies
\[
    0\leq V_h(u_S,u_T,v)\leq M(1+v).
\]
Thus $V_h$ is locally bounded above, and the perturbation-duality theorem
\citep[Theorem~17(a)]{Rockafellar1974} applies.

For completeness, let
$(\lambda_S^\star,\lambda_T^\star,\eta^\star)$ be a supergradient of
$V_h$ at the origin. Monotonicity of $V_h$ in its first two coordinates
gives $\lambda_S^\star,\lambda_T^\star\geq0$. Evaluating the supergradient
inequality at
\[
\left(
\mathcal G_S(r)-\rho_S,
\mathcal G_T(r)-\rho_T,
\mathcal A(r)-1
\right),
\qquad r\in\mathscr C,
\]
and then taking the supremum over $r$ gives
\[
    d_h^{\mathscr C}
    (\lambda_S^\star,\lambda_T^\star,\eta^\star)
    \leq V_h(0,0,0).
\]
Weak duality gives the reverse inequality. Hence there is no duality gap and
the constrained-domain dual is attained.
Related $\phi$-divergence DRO duality arguments are developed by
\citet{Ben-Tal1987} and \citet{Shapiro2017}. Hence
\[
    \Psi_{S,T}^{\mathrm{ac}}(h)
    =
    \min_{\lambda_S,\lambda_T\geq0,\,\eta\in\R}
    d_h^{\mathscr C}(\lambda_S,\lambda_T,\eta).
\]

For fixed $(\lambda_S,\lambda_T,\eta)$, define
\[
\mathfrak l_{\lambda_S,\lambda_T,\eta}(z,t)
=
\begin{cases}
t\{L_h(z)-\eta\}
-\lambda_S\phi_S(t)
-\lambda_T\LR(z)
 \phi_T\!\left(\dfrac{t}{\LR(z)}\right),
& t\geq0,\\[6pt]
-\infty,
& t<0.
\end{cases}
\]
We now reduce the supremum over $\mathscr C$ to the pointwise problem by
a direct localization argument. Let
\[
b(z)
=
\mathfrak l_{\lambda_S,\lambda_T,\eta}\{z,r_0(z)\}.
\]
Boundedness of $L_h$ and~\eqref{eqn:dual-slater-condition} give
$b\in L^1(P_S)$. The function
$t\mapsto\mathfrak l_{\lambda_S,\lambda_T,\eta}(z,t)$ is concave and
continuous on $(0,\infty)$. Its value at zero, whenever finite, is
approached from the right by concavity and upper semicontinuity. Hence, if
$\{q_j:j\geq1\}$ enumerates $\mathbb Q\cap(0,\infty)$, then
\[
\Gamma_{\lambda_S,\lambda_T}
\{L_h(z)-\eta;\LR(z)\}
=
\sup_{j\geq1}
\mathfrak l_{\lambda_S,\lambda_T,\eta}(z,q_j).
\]
In particular, the pointwise supremum is measurable, dominates $b$, and
has a well-defined expectation in $\mathbb R\cup\{+\infty\}$.

For $N\geq1$, put
\[
g_N(z)
=
\max_{1\leq j\leq N}
\left[
\mathfrak l_{\lambda_S,\lambda_T,\eta}(z,q_j)-b(z)
\right]_+.
\]
On $\{g_N>0\}$, let $j_N(z)$ be the smallest index attaining this finite
maximum and put $s_N(z)=q_{j_N(z)}$; on $\{g_N=0\}$, put
$s_N(z)=r_0(z)$. These functions are measurable. For $K\geq1$, let
\[
B_{N,K}
=
\left\{
\max_{1\leq j\leq N}
\LR\phi_T(q_j/\LR)
\leq K
\right\},
\qquad
r_{N,K}
=
s_N\mathbf 1_{B_{N,K}}+r_0\mathbf 1_{B_{N,K}^c}.
\]
Because each $q_j>0$, differentiability of the generators on
$(0,\infty)$ gives $\phi_S(q_j)<\infty$ and
$\LR\phi_T(q_j/\LR)<\infty$ almost surely. Thus
$B_{N,K}\uparrow\fs$, up to a null set, as $K\to\infty$, and
\[
\mathcal G_S(r_{N,K})
\leq
\mathcal G_S(r_0)+\max_{1\leq j\leq N}\phi_S(q_j)<\infty,
\qquad
\mathcal G_T(r_{N,K})
\leq
\mathcal G_T(r_0)+K<\infty.
\]
Therefore $r_{N,K}\in L^1(P_S)$ and, by the displayed bounds,
$r_{N,K}\in\mathscr C$. Moreover,
\[
\E_{P_S}\left[
\mathfrak l_{\lambda_S,\lambda_T,\eta}\{Z,r_{N,K}(Z)\}
\right]
=
\E_{P_S}[b(Z)]
+\E_{P_S}\left[g_N(Z)\mathbf 1_{B_{N,K}}(Z)\right].
\]
First let $K\to\infty$ and then $N\to\infty$. Since
\[
g_N\uparrow
\Gamma_{\lambda_S,\lambda_T}
\{L_h-\eta;\LR\}-b,
\]
the monotone-convergence theorem, together with
$\mathfrak l_{\lambda_S,\lambda_T,\eta}\{z,r(z)\}
\leq
\Gamma_{\lambda_S,\lambda_T}\{L_h(z)-\eta;\LR(z)\}$ for every
$r\in\mathscr C$, gives
\begin{equation*}
\begin{aligned}
\sup_{r\in\mathscr C}
\E_{P_S}\left[
\mathfrak l_{\lambda_S,\lambda_T,\eta}\{Z,r(Z)\}
\right]
&=
\E_{P_S}\left[
\Gamma_{\lambda_S,\lambda_T}
\{L_h(Z)-\eta;\LR(Z)\}
\right].
\end{aligned}
\end{equation*}
For multiplier triples outside $\mathfrak D_h$, the right-hand side is
$+\infty$ and cannot minimize the dual objective. The minimum may therefore
be restricted to the effective domain $\mathfrak D_h$; every minimizer of
$d_h^{\mathscr C}$ belongs to this domain and also minimizes the pointwise
representation.
Substituting this identity into the dual objective proves
\eqref{eqn:general-two-anchor-dual}.

It remains to establish the KKT conclusions. Let $r_h^\star$ attain the
primal supremum and let
$(\lambda_S^\star,\lambda_T^\star,\eta^\star)$ minimize
$d_h^{\mathscr C}$. Feasibility and strong duality give
\[
\begin{aligned}
\Psi_{S,T}^{\mathrm{ac}}(h)
&=d_h^{\mathscr C}(\lambda_S^\star,\lambda_T^\star,\eta^\star)\\
&\geq
\mathcal L_h(r_h^\star,\lambda_S^\star,\lambda_T^\star,\eta^\star)\\
&=
\Psi_{S,T}^{\mathrm{ac}}(h)
+\lambda_S^\star\{\rho_S-\mathcal G_S(r_h^\star)\}
+\lambda_T^\star\{\rho_T-\mathcal G_T(r_h^\star)\}\\
&\geq\Psi_{S,T}^{\mathrm{ac}}(h).
\end{aligned}
\]
Equality throughout proves~\eqref{eqn:kkt-source-slackness} and
\eqref{eqn:kkt-target-slackness}, and also shows that $r_h^\star$
maximizes the Lagrangian over $\mathscr C$.

For the pointwise conclusion, use the preceding identity at the optimal
multiplier triple. The equality between the inner supremum and the value at
$r_h^\star$ gives
\[
\E_{P_S}\Bigg[
\Gamma_{\lambda_S^\star,\lambda_T^\star}
\{L_h(Z)-\eta^\star;\LR(Z)\}
-
\mathfrak l_{\lambda_S^\star,\lambda_T^\star,\eta^\star}
\{Z,r_h^\star(Z)\}
\Bigg]
=0.
\]
The integrand is nonnegative, so $r_h^\star$ is a pointwise maximizer
almost surely. If $\lambda_S^\star+\lambda_T^\star>0$, strict convexity
of the active penalty makes the pointwise Lagrangian integrand strictly
concave. At the positive optimizer, differentiating with respect to $r$
gives
\[
    0
    =
    L_h-\eta^\star
    -\lambda_S^\star\phi_S'(r_h^\star)
    -\lambda_T^\star
      \phi_T'\left(\frac{r_h^\star}{\LR}\right),
\]
which is~\eqref{eqn:general-pointwise-kkt}. The same strict convexity yields
pointwise uniqueness.
\end{proof}

\begin{remark}[Role of the target information]
Theorem~\ref{thm:general-dual-representation} separates the general convex
duality argument from the geometry-specific calculations below. The target
reference law enters the pointwise dual problem through the likelihood ratio
$\LR$, while $\rho_T$ enters the outer dual objective and thereby affects the
optimal target-side multiplier. In the target-unlabeled setting,
$\LR(z)=p_T^X(x)/p_S^X(x)$, so the target sample contributes only through its
observable covariate marginal, while source labels supply the supervised loss
information.
\end{remark}

\begin{remark}[Boundary radii and inactive constraints]
Theorem~\ref{thm:general-dual-representation} is stated under strict
feasibility so that strong duality and the KKT relations follow directly.
An inactive constraint has zero optimal multiplier and yields the corresponding
single-anchor pointwise equation. A zero radius is different: for nonnegative
divergences it is incompatible with strict feasibility and lies outside the
theorem. Such a case must be treated directly or through a justified limit of
positive-radius problems.
\end{remark}

\subsection{Dual-induced bridge geometries}\label{sec:bridge-geometries}

We now solve the common pointwise KKT equation
\eqref{eqn:general-pointwise-kkt} for the three divergence pairings used in
this paper. Let
\[
    \phi_{\mathrm{KL}}(t)=t\log t-t+1,
    \qquad
    \phi_{\mathrm{HD}}(t)=(\sqrt t-1)^2,
    \qquad t\geq0.
\]
The parameter $\alpha$ below is the normalized target-side dual weight. For
the two symmetric pairings,
$\alpha=\lambda_T/(\lambda_S+\lambda_T)$, whereas for the asymmetric KL/HD
pairing,
$\alpha=\lambda_T/(2\lambda_S+\lambda_T)$. Thus the precise normalization
is geometry-specific, even though in every case $\alpha$ records the relative
source--target dual balance.

\begin{proposition}[Dual-induced adversarial bridges]
\label{prop:loss-adversarial-bridges}
Fix $h\in\hs$ under the conditions of
Theorem~\ref{thm:general-dual-representation}, suppose that the primal
supremum is attained at $r_h^\star$, and let
$(\lambda_S^\star,\lambda_T^\star,\eta^\star)$ be any minimizer of
$d_h^{\mathscr C}$.
Set $q_h^\star=r_h^\star p_S$. This is an optimal anchor-dominated
adversarial density. For the KL/KL and KL/HD pairings, it also solves the
full extended-divergence problem by
\eqref{eqn:full-ac-values-coincide-kl-source}.
For the multiplier regimes described below, it has the following form.
\begin{enumerate}
\item For the KL/KL pairing, let
\[
    \nu=\lambda_S^\star+\lambda_T^\star>0,
    \qquad
    \alpha=\frac{\lambda_T^\star}{\lambda_S^\star+\lambda_T^\star}.
\]
Then
\begin{equation}\label{eqn:adv-bridge-klkl}
    q_{h,\mathrm{KL}}^\star(z)
    =
    \frac{
      \exp\{L_h(z)/\nu\}
      p_S(z)^{1-\alpha}
      p_T^\circ(z)^\alpha
    }{
      \displaystyle
      \int_{\fs}
      \exp\{L_h(u)/\nu\}
      p_S(u)^{1-\alpha}
      p_T^\circ(u)^\alpha\,\d\mu(u)
    }.
\end{equation}

\item For the HD/HD pairing, suppose
\[
    \lambda_+
    =\lambda_S^\star+\lambda_T^\star>0,
    \qquad
    \alpha
    =\frac{\lambda_T^\star}{\lambda_+},
    \qquad
    \nu
    =
    \lambda_++\eta^\star.
\]
Assume that this optimizer lies in the uniformly separated regime
\[
    \nu>
    \operatorname*{ess\,sup}_{P_S}L_h.
\]
Then
\begin{equation}\label{eqn:adv-bridge-hdhd}
    q_{h,\mathrm{HD}}^\star(z)
    =
    \frac{
      \displaystyle
      \frac{
      \left\{(1-\alpha)\sqrt{p_S(z)}
             +\alpha\sqrt{p_T^\circ(z)}\right\}^{2}
      }{\{\nu-L_h(z)\}^{2}}
    }{
      \displaystyle
      \int_{\fs}
      \frac{
      \left\{(1-\alpha)\sqrt{p_S(u)}
             +\alpha\sqrt{p_T^\circ(u)}\right\}^{2}
      }{\{\nu-L_h(u)\}^{2}}\,\d\mu(u)
    }.
\end{equation}

\item For the KL/HD pairing, suppose $\lambda_S^\star>0$ and put
\begin{equation}\label{eqn:klhd-dual-parameters}
    \nu=2\lambda_S^\star,
    \qquad
    \alpha
    =\frac{\lambda_T^\star}{2\lambda_S^\star+\lambda_T^\star}
    \in[0,1),
    \qquad
    \beta_\alpha(z)
    =\frac{\alpha}{1-\alpha}\sqrt{\LR(z)}.
\end{equation}
Then
\begin{equation}\label{eqn:adv-bridge-klhd}
    q_{h,\mathrm{KH}}^\star(z)
    =
    p_S(z)
    \left[
      \frac{
        \beta_\alpha(z)
      }{
        W_0\left(
          \beta_\alpha(z)
          \exp\{-L_h(z)/\nu-\gamma\}
        \right)
      }
    \right]^2,
\end{equation}
where
$\gamma=-(\eta^\star+\lambda_T^\star)/\nu$ is the scalar dual offset;
equivalently, it is the unique value for which
$\int q_{h,\mathrm{KH}}^\star\d\mu=1$. At $\alpha=0$, the ratio in
\eqref{eqn:adv-bridge-klhd} is understood through its continuous limit.
The function $W_0$ denotes the principal branch of the Lambert-$W$ function.
\end{enumerate}
\end{proposition}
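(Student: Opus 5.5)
The plan is to read each bridge off the pointwise KKT equation~\eqref{eqn:general-pointwise-kkt} of Theorem~\ref{thm:general-dual-representation}, solve it for $r_h^\star$ in closed form, and then use the constraint $\E_{P_S}[r_h^\star]=1$ to identify the normalizing constant. That $q_h^\star=r_h^\star p_S$ is an optimal anchor-dominated density is immediate from attainment. For KL/KL and KL/HD it also solves the full extended-divergence problem by~\eqref{eqn:full-ac-values-coincide-kl-source}. The one hypothesis of the theorem's pointwise conclusion that is not given to us is $r_h^\star>0$ $P_S$-a.s. I would first obtain it from the pointwise-maximizer property established in the proof of Theorem~\ref{thm:general-dual-representation}. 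In each pairing the active penalty has right derivative $-\infty$ at $t=0$: $\phi_{\mathrm{KL}}'(t)=\log t$ and $\phi_{\mathrm{HD}}'(t)=1-t^{-1/2}$. The multiplier conditions ($\nu>0$, $\lambda_+>0$, $\lambda_S^\star>0$) ensure this term is actually present. Hence the concave integrand $\mathfrak l$ has slope $+\infty$ at $0^+$, so any pointwise maximizer is interior, and the theorem then gives both~\eqref{eqn:general-pointwise-kkt} and uniqueness.

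For KL/KL, \eqref{eqn:general-pointwise-kkt} becomes $L_h-\eta^\star=\lambda_S^\star\log r+\lambda_T^\star\log(r/\LR)$. This gives $\nu\log r=L_h-\eta^\star+\lambda_T^\star\log\LR$, that is, $r=e^{-\eta^\star/\nu}e^{L_h/\nu}\LR^{\alpha}$. Multiplying by $p_S$ and using $\LR=p_T^\circ/p_S$ yields the numerator of~\eqref{eqn:adv-bridge-klkl}. Normalization forces $e^{-\eta^\star/\nu}$ to equal the reciprocal of the displayed integral.

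For HD/HD, the equation reads $L_h-\eta^\star=\lambda_+-(\lambda_S^\star+\lambda_T^\star\sqrt{\LR})/\sqrt r$, so $\sqrt r\,(\nu-L_h)=\lambda_+\{(1-\alpha)+\alpha\sqrt{\LR}\}$. The separation hypothesis $\nu>\operatorname{ess\,sup}L_h$ makes the left factor strictly positive, so we can divide. We obtain $q=\lambda_+^2\{(1-\alpha)\sqrt{p_S}+\alpha\sqrt{p_T^\circ}\}^2/(\nu-L_h)^2$. Normalization identifies $\lambda_+^2$ with the reciprocal of the integral in~\eqref{eqn:adv-bridge-hdhd}, which is finite precisely because it equals $\lambda_+^{-2}$.

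For KL/HD, I would substitute $s=\sqrt r$ into $\lambda_S^\star\log r+\lambda_T^\star\{1-\sqrt{\LR/r}\}=L_h-\eta^\star$. Dividing by $\nu=2\lambda_S^\star$ and noting $\lambda_T^\star/\nu=\alpha/(1-\alpha)$ gives
\[
\log s-\beta_\alpha/s=L_h/\nu+\gamma,\qquad \gamma=-(\eta^\star+\lambda_T^\star)/\nu .
\]
Setting $w=\beta_\alpha/s>0$ turns this into $we^{w}=\beta_\alpha\exp\{-L_h/\nu-\gamma\}$. Since $w>0$, it is given by the principal branch $w=W_0(\cdot)$, which is the unique positive solution. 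Then $r=(\beta_\alpha/w)^2$, which is~\eqref{eqn:adv-bridge-klhd}. For $\alpha=0$ one uses $W_0(x)\sim x$ as $x\downarrow0$, so the ratio tends to $\exp\{L_h/\nu+\gamma\}$. This recovers the pure source-KL tilt, consistent with the equation $\log s=L_h/\nu+\gamma$ obtained directly.

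The main obstacle is the uniqueness claim for $\gamma$ as the normalizing offset. I would show that $\gamma\mapsto\E_{P_S}[(\beta_\alpha/W_0(\beta_\alpha e^{-L_h/\nu-\gamma}))^2]$ is continuous and strictly increasing, since $W_0$ is increasing and the argument decreases in $\gamma$. The monotone and dominated convergence theorems then apply, using bounded $L_h$, and the map tends to $0$ and $\infty$ at the two ends. The true $\gamma$ from the dual attains the value $1$ by the primal normalization, so strict monotonicity makes it the unique solution.
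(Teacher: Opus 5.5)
Your proposal is correct and follows the paper's proof essentially step for step. It gets interiority from the infinite right slope of the active penalty at $t=0$ and solves the pointwise KKT equation in each pairing, with normalization fixing the constant. For KL/HD it uses the substitution $v=\beta_\alpha/s$ leading to $W_0$, and obtains uniqueness of $\gamma$ from strict monotonicity of $\E_{P_S}[r_\gamma]$, with the attained optimizer supplying a normalizing value.

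There are only two differences, both minor. First, the paper gets that monotonicity from the derivative $\partial r_\gamma/\partial\gamma=2r_\gamma/(1+W_\gamma)>0$, whereas you argue by monotonicity of the composition. Second, your continuity and limit argument for the existence of a normalizing $\gamma$ is not needed, since attainment already supplies one.
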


\begin{proof}
Before applying the pointwise KKT equation
\eqref{eqn:general-pointwise-kkt}, we verify its interior condition.
The zero-gap argument in the proof of
Theorem~\ref{thm:general-dual-representation} first shows that
$r_h^\star$ is a pointwise maximizer almost surely. In each multiplier
regime stated in the proposition, the scalar Lagrangian integrand is strictly
concave on $(0,\infty)$ and its right derivative tends to $+\infty$ as
$t\downarrow0$. The divergent positive terms in the three cases are,
respectively, $-\nu\log t$,
$(\lambda_S^\star+\lambda_T^\star\sqrt\LR)/\sqrt t$, and
$-\lambda_S^\star\log t+\lambda_T^\star\sqrt{\LR(z)/t}$.
Hence its maximizer is positive, and the pointwise KKT
equation applies. We now use it separately for the three divergence pairings.

For the KL/KL bridge,
$\phi_{\mathrm{KL}}'(t)=\log t$, and hence
\[
\begin{aligned}
    L_h(z)-\eta^\star
    =
    \lambda_S^\star\log r(z)
    +\lambda_T^\star
      \log\left\{\frac{r(z)}{\LR(z)}\right\}
    =
    (\lambda_S^\star+\lambda_T^\star)\log r(z)
    -\lambda_T^\star\log\LR(z).
\end{aligned}
\]
With $\nu=\lambda_S^\star+\lambda_T^\star$ and
$\alpha=\lambda_T^\star/\nu$, this gives
\[
    r(z)
    =
    \exp\{-\eta^\star/\nu\}
    \exp\{L_h(z)/\nu\}\LR(z)^\alpha.
\]
The normalization constraint $\E_{P_S}[r]=1$ determines the multiplicative
constant. Multiplying by $p_S$ and using
$p_S\LR^\alpha=p_S^{1-\alpha}(p_T^\circ)^\alpha$ proves
\eqref{eqn:adv-bridge-klkl}.

For the HD/HD bridge, put
\[
    B(z)=\lambda_S^\star+\lambda_T^\star\sqrt{\LR(z)}.
\]
Then
$\phi_{\mathrm{HD}}'(t)=1-t^{-1/2}$. Therefore,
\begin{align*}
    L_h(z)-\eta^\star
    =
    \lambda_S^\star
      \left\{1-\frac{1}{\sqrt{r(z)}}\right\}
    +
    \lambda_T^\star
      \left\{1-
      \sqrt{\frac{\LR(z)}{r(z)}}\right\}
    =
    \lambda_+
    -
    \frac{B(z)}{\sqrt{r(z)}}.
\end{align*}
Thus, with
$\nu=\lambda_++\eta^\star$,
\[
    \sqrt{r(z)}
    =
    \frac{B(z)}{\nu-L_h(z)}.
\]
Indeed, writing the scalar pointwise objective in terms of $s=\sqrt t$
gives
\[
    -\{\nu-L_h(z)\}s^2+2B(z)s
    -\lambda_S^\star-\lambda_T^\star\LR(z).
\]
Consequently, the pointwise conjugate is finite precisely when
$L_h(z)<\nu$, and then equals
\[
    \frac{B(z)^2}{\nu-L_h(z)}
    -\lambda_S^\star-\lambda_T^\star\LR(z).
\]
Thus the exact dual-domain conditions are
\[
    L_h<\nu\quad P_S\text{-almost surely},
    \qquad
    \E_{P_S}\left[\frac{B^2}{\nu-L_h}\right]<\infty.
\]
The uniform separation assumed in the proposition is sufficient for both
conditions because $\E_{P_S}[B^2]<\infty$. Since
$B=\lambda_+\{(1-\alpha)+\alpha\sqrt\LR\}$, normalization gives
\[
    1
    =
    \lambda_+^2\E_{P_S}\left[
      \frac{\{(1-\alpha)+\alpha\sqrt\LR\}^2}
           {(\nu-L_h)^2}
    \right].
\]
Multiplying by $p_S$ now gives
\eqref{eqn:adv-bridge-hdhd}.

Finally, for the KL/HD bridge, the pointwise KKT equation is
\begin{equation}\label{eqn:proof-klhd-bridge-foc}
    L_h(z)-\eta^\star
    =
    \lambda_S^\star\log r(z)
    +
    \lambda_T^\star
    \left\{1-
      \sqrt{\frac{\LR(z)}{r(z)}}
    \right\}.
\end{equation}
Put $s(z)=\sqrt{r(z)}$ and $\nu=2\lambda_S^\star$. Rearranging
\eqref{eqn:proof-klhd-bridge-foc} yields
\begin{equation}\label{eqn:klhd-lambert-equation}
    -\log s(z)
    +
    \frac{\lambda_T^\star}{2\lambda_S^\star}
    \frac{\sqrt{\LR(z)}}{s(z)}
    =
    -\left\{\frac{L_h(z)}{\nu}+\gamma\right\},
\end{equation}
where
\[
    \gamma
    =
    -\frac{\eta^\star+\lambda_T^\star}{2\lambda_S^\star}
\]
is a scalar. Under the parameterization in
\eqref{eqn:klhd-dual-parameters},
$\lambda_T^\star/(2\lambda_S^\star)=\alpha/(1-\alpha)$, and hence the
second term on the left-hand side of~\eqref{eqn:klhd-lambert-equation} is
$\beta_\alpha(z)/s(z)$. If
$A_h(z)=L_h(z)/\nu+\gamma$, then
\[
    -\log s(z)+\frac{\beta_\alpha(z)}{s(z)}=-A_h(z).
\]
Setting $v(z)=\beta_\alpha(z)/s(z)$ gives
$v(z)e^{v(z)}=\beta_\alpha(z)e^{-A_h(z)}$. Therefore,
\[
    v(z)
    =
    W_0\!\left(
      \beta_\alpha(z)e^{-A_h(z)}
    \right),
    \qquad
    s(z)
    =
    \frac{\beta_\alpha(z)}{
      W_0\{\beta_\alpha(z)e^{-A_h(z)}\}},
\]
which proves~\eqref{eqn:adv-bridge-klhd} after multiplying by $p_S$.
For fixed interior $(\alpha,\nu)$, put
\[
    W_\gamma(z)
    =
    W_0\!\left(
      \beta_\alpha(z)
      e^{-L_h(z)/\nu-\gamma}
    \right).
\]
Since
\[
    r_\gamma(z)
    =
    \exp\left\{
      \frac{2L_h(z)}{\nu}
      +2\gamma
      +2W_\gamma(z)
    \right\},
    \qquad
    \frac{\partial W_\gamma(z)}{\partial\gamma}
    =
    -\frac{W_\gamma(z)}{1+W_\gamma(z)},
\]
we have
\[
    \frac{\partial r_\gamma(z)}{\partial\gamma}
    =
    \frac{2r_\gamma(z)}{1+W_\gamma(z)}
    >0.
\]
Thus $\E_{P_S}[r_\gamma]$ is strictly increasing wherever finite.
The assumed attained optimizer supplies a normalizing value, and strict
monotonicity makes that value unique. When $\alpha=0$, the limit
$\beta/W_0(\beta e^{-A})\to e^A$ recovers the source-only KL tilt.
\end{proof}

\begin{remark}[Attainment and interior regimes]
For the KL/KL and KL/HD pairings, the source-side KL ball is uniformly
integrable and weakly compact in $L^1(P_S)$ by the
de la Vall\'ee--Poussin and Dunford--Pettis criteria; since
$L_h\in L^\infty(P_S)$, the dominated primal supremum is attained.
The KL/HD formula above applies to the source-KL-active regime
$\lambda_S^\star>0$ and represents a genuinely two-anchor balance when also
$\lambda_T^\star>0$. When $\lambda_S^\star=0$, the pointwise optimality
equation has the target-anchor Hellinger form; the source constraint remains
part of the primal feasible set.
For HD/HD, squared Hellinger divergence has finite recession slope, so
the ambiguity set need not be uniformly integrable or weakly compact, and
primal attainment is not automatic. The proposition therefore states the
formula for an attained optimizer in the uniformly separated regime
$\nu>\operatorname*{ess\,sup}_{P_S}L_h$. Boundary solutions must instead be
checked using the pointwise and integrability conditions in the proof.
\end{remark}

\begin{remark}[Interpretation of the mixing parameter $\alpha$]
The parameter $\alpha$ acts as a continuous source--target dual-balance
parameter, although its normalization is geometry-specific. At $\alpha=0$,
the target-side multiplier vanishes and the pointwise density has the
source-anchor form. For the symmetric pairings, $\alpha=1$ gives the
corresponding target-anchor form. The other constraint remains part of the
primal feasible set at either endpoint. For KL/HD, the Lambert-$W$ formula is
derived under $\lambda_S>0$, and its $\alpha=1$ endpoint is treated through the
target-anchor Hellinger pointwise equation rather than by substitution into
\eqref{eqn:adv-bridge-klhd}. Knowledge transfer occurs at interior dual
balances.
\end{remark}

\begin{remark}[Interpretation of the geometries]
In the KL/KL setting, the ambiguity geometry forms a log-linear density bridge, while the supervised loss produces the classical exponential tilt toward high-loss regions \citep{Ben-Tal1987,HuHong2012}.
With HD at both anchors, the densities enter through their square roots,
while the Hellinger geometry produces the loss tilt $(\nu-L_h)^{-2}$,
rather than the exponential tilt arising under KL.
After removing the loss tilt and normalizing, the KL/KL structural bridge is
the familiar exponential, or $e$-geodesic, density path used in path sampling
\citep{Amari2016,GelmanMeng1998}. The normalized HD/HD path is the radial
normalization of the chord between the two square-root densities; it traces
the corresponding Fisher--Rao arc, although $\alpha$ is not its
constant-speed parameter \citep{Amari2016}.

The KL/HD Lambert-$W$ bridge is geometrically different because the two penalty geometries are coupled in the same pointwise equation.
The likelihood ratio enters through $\beta_\alpha$, while the loss $L_h/\nu$ and the mass-normalizing dual offset $\gamma$ enter the Lambert-$W$ argument.
Since $W_0(x)$ grows logarithmically \citep{Corless1996}, the KL/HD density map remains unbounded but attenuates large likelihood ratios by a logarithmic factor.
More precisely, for a fixed interior structural parameter its loss-agnostic form is of order $u/(\log u)^2$ as $u\to\infty$, whereas the HD/HD map grows asymptotically linearly.
The KL/HD bridge therefore provides a geometry-induced alternative to explicit truncation \citep{Ma2023}.
\end{remark}

Our doubly-anchored bridge geometries also connect with likelihood-ratio
transformations used in covariate-shift adaptation and DRDA. The raw
likelihood ratio, corresponding to the target endpoint of the loss-agnostic
symmetric bridges introduced below, is widely used to reweight estimators
\citep[e.g.,][]{Shimodaira2000,huang2006,Sugiyama2008,Nguyen2010}.
The parameter $\alpha$ in the KL/KL bridge corresponds to the flattening
parameter in covariate-shift weighted maximum likelihood
\citep{Shimodaira2000,Sugiyama2007}, but here it follows from the relative dual influence
of the two ambiguity constraints. The bounded interpolated ratio in RuLSIF
\citep{yamada2013}, the bounded weights in \citet{Yamamoto2026}, and the
truncated or clipped ratios in \citet{Ma2023,Cortes2010,Fang2020} all temper
likelihood-ratio instability through different transformations. Our HD/HD
and KL/HD maps are generally not bounded, but their forms are induced directly
by the selected pair of divergence geometries, which permits a theoretical
comparison of how source tilting and target stabilization interact.

\subsection{Loss-agnostic structural bridge weights}
\label{sec:loss-agnostic-density-bridges}

The dual-induced anchor-dominated density $q_h^\star$ in
Proposition~\ref{prop:loss-adversarial-bridges} depends jointly on the
likelihood ratio $\LR$ and the localized prediction loss $L_h$. For the KL/KL
and HD/HD pairings, this dependence separates into a likelihood-ratio factor
and a loss-only tilt. For KL/HD, the likelihood ratio and loss remain coupled
inside the Lambert-$W$ term, so no analogous multiplicative separation holds.
For the generalization analysis, we therefore remove the loss term from each
pointwise density formula and normalize the resulting likelihood-ratio
transformation. This defines a hypothesis-independent, loss-agnostic
structural bridge. It is a structural reference family rather than the
adversarial optimizer for a nonconstant loss.

All three structural families below are indexed by a single scalar parameter
$\alpha$. For KL/HD, $\alpha$ is the effective structural coordinate obtained
after absorbing the scalar offset $\gamma$, as shown below. In the generic
notation used later, $\bm\theta=(\alpha)$.

For $B\in\{\mathrm{KL},\mathrm{HD},\mathrm{KH}\}$, denoting the KL/KL,
HD/HD, and KL/HD pairings, respectively, let
\begin{equation}\label{eqn:normalized-structural-bridge-weight}
    w_{\bm\theta}^B(z)
    =g_{\bm\theta}^B\{\LR(z)\}
\end{equation}
be an unnormalized structural bridge weight. The corresponding bridge law is
\begin{equation}\label{eqn:normalized-structural-bridge-law}
    \frac{\d P_{\bm\theta}^B}{\d P_S}(z)
    =
    \frac{
      w_{\bm\theta}^B(z)
    }{
      \E_{P_S}[w_{\bm\theta}^B(Z)]
    }.
\end{equation}
The three bridge maps are
\begin{align}
    \text{\textbf{KL/KL bridge:}}\quad
    g_\alpha^{\mathrm{KL}}(u)
    &=u^\alpha,
    &&\alpha\in[0,1],
    \label{eqn:bridge-weight-kl}
    \\
    \text{\textbf{HD/HD bridge:}}\quad
    g_\alpha^{\mathrm{HD}}(u)
    &=\left\{(1-\alpha)+\alpha\sqrt u\right\}^{2},
    &&\alpha\in[0,1],
    \label{eqn:bridge-weight-hd}
    \\
    \text{\textbf{KL/HD bridge:}}\quad
    g_{\alpha}^{\mathrm{KH}}(u)
    &=
    \left[
      \frac{
        \beta_\alpha(u)
      }{
        W_0\{\beta_\alpha(u)\}
      }
    \right]^2,
    &&\alpha\in[0,1),
    \label{eqn:bridge-weight-klhd}
\end{align}
where
\begin{equation*}
    \beta_\alpha(u)
    =
    \frac{\alpha}{1-\alpha}\sqrt u.
\end{equation*}
For the KL/KL map, we use its source-endpoint extension
$g_0^{\mathrm{KL}}(u)=1$ for every $u\geq0$, including
$g_0^{\mathrm{KL}}(0)=1$.
Whenever $\beta_\alpha(u)=0$, the quotient in
\eqref{eqn:bridge-weight-klhd} is understood by continuity. Equivalently,
\begin{equation*}
    g_\alpha^{\mathrm{KH}}(u)
    =
    \exp[2W_0\{\beta_\alpha(u)\}],
\end{equation*}
so $g_\alpha^{\mathrm{KH}}(0)=1$ for every $\alpha<1$ and
$g_0^{\mathrm{KH}}(u)=1$ for every $u\geq0$. After normalization in
\eqref{eqn:normalized-structural-bridge-law}, the latter is the source law.
Although the raw KL/HD map is defined only for $\alpha<1$, its scale-normalized
representative satisfies
$g_\alpha^{\mathrm{KH}}(u)/g_\alpha^{\mathrm{KH}}(1)\to u$ as
$\alpha\uparrow1$ for each $u\geq0$. On the compact likelihood-ratio ranges
used in the finite-sample analysis, this convergence is uniform. Hence its
normalized structural law converges to $P_T^\circ$ at the target endpoint.

For the two symmetric pairings, these maps are obtained from the pointwise
density forms in Proposition~\ref{prop:loss-adversarial-bridges} by setting the
loss tilt equal to one, up to an irrelevant positive scale. For KL/HD,
removing the localized loss term from the Lambert-$W$ argument leaves the
scalar offset $\gamma$. In the exact loss-aware representation, this offset is
determined by normalization. In the normalized structural family, however,
it is not an additional shape parameter. Indeed, put
$a_\alpha=\alpha/(1-\alpha)$ and, for $\alpha\in(0,1)$ and $\gamma\in\R$,
define
\begin{equation}\label{eqn:kh-effective-alpha}
    \bar\alpha
    =
    \frac{\alpha e^{-\gamma}}
         {1-\alpha+\alpha e^{-\gamma}}.
\end{equation}
Then $a_{\bar\alpha}=a_\alpha e^{-\gamma}$ and
\begin{equation}\label{eqn:kh-structural-gauge-equivalence}
    \left[
      \frac{a_\alpha\sqrt u}
           {W_0\{a_\alpha e^{-\gamma}\sqrt u\}}
    \right]^2
    =e^{2\gamma}g_{\bar\alpha}^{\mathrm{KH}}(u).
\end{equation}
The factor $e^{2\gamma}$ cancels in the normalized bridge law. Thus the
normalized structural law indexed by $(\alpha,\gamma)$ is the same law as the
canonical structural bridge indexed by $\bar\alpha$. We therefore set
$\gamma=0$ in the structural family and henceforth use $\alpha$ for this
effective coordinate.

For fixed $h$ and $\nu>0$, the same gauge reduction applies to the normalized
loss-aware response. With $\bar\alpha$ as in
\eqref{eqn:kh-effective-alpha},
\begin{equation*}
\left[
  \frac{a_\alpha\sqrt{\LR(z)}}
  {W_0\!\left(a_\alpha\sqrt{\LR(z)}
  \exp\{-L_h(z)/\nu-\gamma\}\right)}
\right]^2
=
e^{2\gamma}
\left[
  \frac{a_{\bar\alpha}\sqrt{\LR(z)}}
  {W_0\!\left(a_{\bar\alpha}\sqrt{\LR(z)}
  \exp\{-L_h(z)/\nu\}\right)}
\right]^2.
\end{equation*}
The common factor again disappears after normalization. Hence, for fixed $h$
and $\nu$, the normalized $\gamma=0$ family is an effective-coordinate
representation of the full normalized KL/HD response family. The original dual
balance represented by a given effective coordinate is determined jointly with
the normalizing offset and therefore depends on the current $h$ and $\nu$.
The resulting structural family is a hypothesis-independent reference path
induced by the corresponding divergence pairing, not an additional ambiguity
set or worst-case problem.

The normalizers are finite and positive. For KL/HD,
$e^{W_0(x)}\leq1+x$ gives
$g_\alpha^{\mathrm{KH}}(u)\leq2+2a_\alpha^2u$; since
$\E_{P_S}[\LR]=1$, this proves the claim for every $\alpha<1$. The symmetric
cases are immediate under Assumption~\ref{ass:common-domination}.

The scale of $w_{\bm\theta}^B$ is immaterial both for the normalized law in
\eqref{eqn:normalized-structural-bridge-law} and for the bridge-weighted
Nadaraya--Watson estimator introduced later. The associated population
bridge risk can therefore be written as
\begin{equation}\label{eqn:structural-bridge-risk}
    R_{\bm\theta}^B(h)
    =
    \E_{P_{\bm\theta}^B}[L_h]
    =
    \frac{
      \E_{P_S}[w_{\bm\theta}^B L_h]
    }{
      \E_{P_S}[w_{\bm\theta}^B]
    }.
\end{equation}
This is the hypothesis-independent structural-bridge objective for which we derive empirical-process
bounds in Section~\ref{sec:bridge-risk-bounds}.

The bridge maps, scale-normalized by $g_{\alpha}^B(1)$, are shown in
Figure~\ref{fig:bridge-weights-bygeom}. The KL/HD panels use the canonical
one-parameter representation in~\eqref{eqn:bridge-weight-klhd}.

In practice, the structural bridge weights can be estimated directly by
substituting an estimated likelihood ratio from independent source and target
samples into the bridge maps. In
Appendix~\ref{app:asymptotics-loss-agnostic-bridge-weights}, we derive uniform
consistency under the density-estimation and strong-overlap conditions stated
there (Theorem~\ref{thm:unif-consistency-bridge-weights}) and pointwise
asymptotic normality for fixed parameters with nonzero map derivative
(Theorem~\ref{thm:pw-clt-bridge-weights}).

\begin{figure}
    \centering
    \includegraphics[width=1\linewidth]{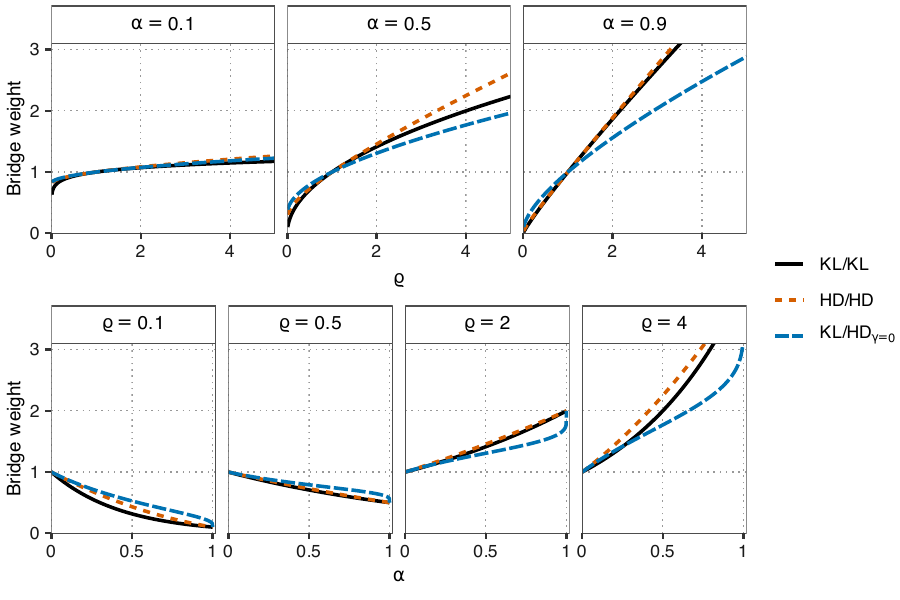}
    \caption{Scale-normalized bridge maps
    $g_{\alpha}^B(\LR)/g_{\alpha}^B(1)$ as a function of the likelihood
ratio $\LR$ (top row) or as a function of the bridge parameter $\alpha$
    (bottom row). For KL/HD, the scale-normalized map is extended to
    $\alpha=1$ by its pointwise likelihood-ratio limit.}
    \label{fig:bridge-weights-bygeom}
\end{figure}

\section{Bridge risk bounds}\label{sec:bridge-risk-bounds}

A primary goal in domain adaptation is to control the prediction risk of a
hypothesis under the true target law,
\begin{equation*}
    R_T(h)=\E_{P_T}[L_h(Z)].
\end{equation*}
When target labels are unavailable, this risk cannot be estimated directly.
Section~\ref{sec:general-dual-representation} defines the loss-aware robust risk
\[
    \Psi_{S,T}(h)
    =
    \sup_{Q\in\as(P_S,P_T^\circ)}\E_Q[L_h],
\]
whereas Section~\ref{sec:loss-agnostic-density-bridges} defines the structural bridge risk
\[
    R_{\bm\theta}^{B}(h)
    =
    \frac{\E_{P_S}[w_{\bm\theta}^{B}L_h]}
         {\E_{P_S}[w_{\bm\theta}^{B}]}.
\]
The first is a one-sided worst-case envelope whose inner optimization depends
on the loss. In the second, the bridge weights are independent of $h$ for each
$\bm\theta$. Our finite-sample analysis therefore studies the empirical
structural risks uniformly over $h\in\hs$ and over the class of bridge weights
indexed by $\bm\theta$. We first establish the population target locality of
both risks and then derive uniform finite-sample bounds for the empirical
structural bridge.

\subsection{Population robust-risk envelope}

Recall from Section~\ref{sec:setup-and-assumptions} that, in the target-unlabeled setting,
\[
    P_T^\circ(\mathrm{d} x,\mathrm{d} y)
    =P_T^X(\mathrm{d} x)P_S^{Y\mid X}(\mathrm{d} y\mid x).
\]
The population discrepancy between the true target law and this target reference is
\begin{equation}\label{eqn:target-completion-discrepancy}
    \delta_T^\circ
    :=D_{\phi_T}(P_T\Vert P_T^\circ).
\end{equation}
By construction, $P_T^\circ$ and $P_T$ have the same covariate marginal
$P_T^X$. Disintegration of the extended $\phi_T$-divergence therefore gives
\begin{equation*}
    \delta_T^\circ
    =
    \int_{\xspace}
    D_{\phi_T}\left(
       P_T^{Y\mid X=x}\Big\Vert P_S^{Y\mid X=x}
    \right)\,\d P_T^X(x).
\end{equation*}
Here and below, the conditional divergences are understood in the extended
sense of~\eqref{eqn:extended-phi-divergence}.
Thus $\delta_T^\circ$ measures the discrepancy between the unobserved target
conditional law and the source conditional law used to complete the target
anchor. It vanishes under covariate shift, but Assumption~\ref{ass:target-feasibility}
allows it to be positive, subject to $\delta_T^\circ\leq\rho_T$.

We first record the standard relation between distributional distance and
risk.

\begin{lemma}[Risk difference from distribution distance]
\label{lem:risk-difference-tv}
Under Assumption~\ref{ass:bounded-loss}, for any probability laws $P$ and
$Q$ on $\fs$ and every $h\in\hs$,
\begin{equation*}
    \left|\E_P[L_h]-\E_Q[L_h]\right|
    \leq
    M\operatorname{TV}(P,Q).
\end{equation*}
\end{lemma}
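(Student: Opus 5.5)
The plan is the standard argument: pass to a common dominating measure, then exploit that the loss takes values in $[0,M]$ and that $P$ and $Q$ have the same total mass. Throughout, total variation is taken as $\operatorname{TV}(P,Q)=\sup_{A}|P(A)-Q(A)|=\tfrac12\int|p-q|\d\nu$. This is the normalization under which the constant $M$ is sharp.

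\textbf{Step 1.} Let $\nu=P+Q$, which dominates both laws. Write $p=\di P/\di\nu$ and $q=\di Q/\di\nu$. Both $L_h$ and these densities are measurable, and $L_h$ is bounded by Assumption~\ref{ass:bounded-loss}. Hence both expectations are finite and
\[
    \E_P[L_h]-\E_Q[L_h]=\int_\fs L_h\,(p-q)\d\nu .
\]

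\textbf{Step 2.} Because $\int(p-q)\d\nu=0$, the constant $M/2$ can be subtracted from the loss at no cost:
\[
    \E_P[L_h]-\E_Q[L_h]=\int_\fs\Bigl(L_h-\tfrac M2\Bigr)(p-q)\d\nu .
\]
Since $0\leq L_h\leq M$, we have $|L_h-M/2|\leq M/2$. This yields
\[
    \bigl|\E_P[L_h]-\E_Q[L_h]\bigr|\leq\tfrac M2\int_\fs|p-q|\d\nu=M\operatorname{TV}(P,Q).
\]

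\textbf{Alternative for Step 2.} Split $\fs$ into $A=\{p>q\}$ and its complement. On $A$, bound the integrand by $M(p-q)$. On $A^c$, bound it below by $0$. This gives $\E_P[L_h]-\E_Q[L_h]\leq M\{P(A)-Q(A)\}\leq M\operatorname{TV}(P,Q)$. Exchanging the roles of $P$ and $Q$ gives the same bound in the other direction.

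\textbf{Main obstacle.} There is essentially none. The only point needing care is the convention for $\operatorname{TV}$.
\begin{itemize}
    \item If $\operatorname{TV}$ were instead defined as $\int|p-q|\d\nu$ without the factor $\tfrac12$, the same argument still holds. It then gives an even stronger bound, $\tfrac M2\int|p-q|\d\nu\leq M\int|p-q|\d\nu$, so the stated inequality is true under either convention.
    \item Nonnegativity of the loss matters only for centring it at $M/2$. Alternatively, it is what makes the $\{p>q\}$ split work.
\end{itemize}
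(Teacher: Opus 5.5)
Your proof is correct and is essentially the paper's argument made self-contained. The paper cites the variational characterization $\operatorname{TV}(P,Q)=\sup_{0\leq f\leq 1}|\E_P[f]-\E_Q[f]|$, under the same convention $\operatorname{TV}(P,Q)=\sup_A|P(A)-Q(A)|$, and applies it with $f=L_h/M$; your density computation with respect to $P+Q$, via either the centring at $M/2$ or the split on $\{p>q\}$, proves exactly the direction of that characterization that is needed (one wording slip: on $\{p\leq q\}$ the integrand $L_h(p-q)$ should be bounded \emph{above} by $0$, which is what $L_h\geq 0$ gives).
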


\begin{proof}
Under the convention
$\operatorname{TV}(P,Q)=\sup_A|P(A)-Q(A)|$, the variational
characterization of total variation \citep[Section~2]{Gibbs2002} gives
\[
    \operatorname{TV}(P,Q)
    =
    \sup_{0\leq f\leq 1}
    \left|\E_P[f]-\E_Q[f]\right|
    .
\]
Since $0\leq L_h/M\leq1$, the result follows by taking $f=L_h/M$.
\end{proof}

\begin{theorem}[Population robust-risk envelope]
\label{thm:population-robust-risk-envelope}
Suppose Assumptions~\ref{ass:bounded-loss}--\ref{ass:target-feasibility}
hold, and define the target-locality term
\begin{equation}\label{eqn:population-target-locality}
    \Delta_T^\circ(\rho_T)
    =
    \sup_{\substack{Q:\,
      D_{\phi_T}(Q\Vert P_T^\circ)\leq\rho_T}}
    \sup_{h\in\hs}
    \left|\E_Q[L_h]-R_T(h)\right|.
\end{equation}
Then, for every $h\in\hs$,
\begin{equation*}
    R_T(h)
    \leq
    \Psi_{S,T}(h)
    \leq
    R_T(h)+\Delta_T^\circ(\rho_T).
\end{equation*}
\end{theorem}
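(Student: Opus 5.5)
The plan is to prove the two inequalities separately, each in a line or two from the definitions.

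For the lower bound $R_T(h)\leq\Psi_{S,T}(h)$, I will invoke Assumption~\ref{ass:target-feasibility}. It states that $P_T\in\as(P_S,P_T^\circ)$, with the divergences taken in the extended sense, so $P_T$ is an admissible law in the full ambiguity set. The supremum defining $\Psi_{S,T}(h)$ is therefore at least $\E_{P_T}[L_h]=R_T(h)$. This needs no domination: the statement concerns the full set $\as$, not $\as_{\mathrm{ac}}$, so it holds even for HD/HD pairings where $P_T$ may have a singular part.

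For the upper bound, I will enlarge the feasible set. Every $Q\in\as(P_S,P_T^\circ)$ satisfies $D_{\phi_T}(Q\Vert P_T^\circ)\leq\rho_T$, so it belongs to the target ball over which $\Delta_T^\circ(\rho_T)$ in \eqref{eqn:population-target-locality} takes its supremum. For any such $Q$ and the fixed $h$, I will write
\[
\E_Q[L_h]\leq R_T(h)+\left|\E_Q[L_h]-R_T(h)\right|\leq R_T(h)+\Delta_T^\circ(\rho_T).
\]
The second inequality holds because $\Delta_T^\circ$ is a supremum over both $Q$ in the larger ball and $h\in\hs$. Taking the supremum over $Q\in\as(P_S,P_T^\circ)$ then gives $\Psi_{S,T}(h)\leq R_T(h)+\Delta_T^\circ(\rho_T)$.

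The only point needing care is that the quantities are well defined. By Assumption~\ref{ass:bounded-loss}, $0\leq L_h\leq M$, so every expectation lies in $[0,M]$. Hence $\Psi_{S,T}(h)$ and $\Delta_T^\circ(\rho_T)$ are finite, with $\Delta_T^\circ(\rho_T)\leq M$, for instance via Lemma~\ref{lem:risk-difference-tv}. The feasible set is nonempty because it contains $P_T$. Assumption~\ref{ass:common-domination} is not essentially used beyond making $P_T^\circ$ a legitimate anchor. I expect no step to be a real obstacle: the argument is a feasibility inclusion plus a set enlargement.
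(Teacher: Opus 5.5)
Your proposal is correct and follows the paper's proof essentially verbatim: the lower bound comes from $P_T\in\as(P_S,P_T^\circ)$ under Assumption~\ref{ass:target-feasibility}, and the upper bound from noting that every $Q$ in the doubly-anchored set lies in the target-side ball over which $\Delta_T^\circ(\rho_T)$ is a supremum, then taking the supremum over $Q$. Your added remarks on finiteness and on not needing domination for the lower bound are accurate but not required by the argument.
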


\begin{proof}
By Assumption~\ref{ass:target-feasibility},
\[
    D_{\phi_S}(P_T\Vert P_S)\leq\rho_S
    \quad\text{and}\quad
    D_{\phi_T}(P_T\Vert P_T^\circ)\leq\rho_T.
\]
Hence $P_T\in\as(P_S,P_T^\circ)$, and therefore
\[
    R_T(h)
    =\E_{P_T}[L_h]
    \leq
    \sup_{Q\in\as(P_S,P_T^\circ)}\E_Q[L_h]
    =\Psi_{S,T}(h),
\]
which proves the lower bound. Every
$Q\in\as(P_S,P_T^\circ)$ also satisfies
$D_{\phi_T}(Q\Vert P_T^\circ)\leq\rho_T$. By the definition of
$\Delta_T^\circ(\rho_T)$,
\[
    \E_Q[L_h]
    \leq
    R_T(h)+\Delta_T^\circ(\rho_T).
\]
Taking the supremum over the doubly anchored ambiguity set proves the upper
bound.
\end{proof}

The target-locality term can be bounded in terms of $\rho_T$ and
$\delta_T^\circ$ for the target-side divergences used in this paper. Write $\Psi_{\mathrm{KL}}$,
$\Psi_{\mathrm{HD}}$, and $\Psi_{\mathrm{KH}}$ for the robust risk
under the KL/KL, HD/HD, and KL/HD pairings, respectively.

\begin{corollary}[Divergence-specific population bounds]
\label{cor:population-bridge-specializations}
Under the conditions of Theorem~\ref{thm:population-robust-risk-envelope},
the following bounds hold.

\begin{enumerate}
\item If the target-side divergence is KL, then
\begin{equation}\label{eqn:population-kl-locality}
    \Delta_T^\circ(\rho_T)
    \leq
    M\left\{
      \sqrt{\frac{\rho_T}{2}}
      +
      \sqrt{\frac{\delta_T^\circ}{2}}
    \right\}
    \leq
    M\sqrt{2\rho_T}.
\end{equation}
Consequently,
\[
    R_T(h)
    \leq
    \Psi_{\mathrm{KL}}(h)
    \leq
    R_T(h)
    +M\left\{
      \sqrt{\frac{\rho_T}{2}}
      +
      \sqrt{\frac{\delta_T^\circ}{2}}
    \right\}.
\]

\item If the target-side divergence is the squared Hellinger distance, then
\begin{equation}\label{eqn:population-hd-locality}
    \Delta_T^\circ(\rho_T)
    \leq
    M\left\{
      \sqrt{\rho_T}
      +
      \sqrt{\delta_T^\circ}
    \right\}
    \leq
    2M\sqrt{\rho_T}.
\end{equation}
Consequently, for $B\in\{\mathrm{HD},\mathrm{KH}\}$,
\begin{equation}\label{eqn:population-hd-risk-envelope}
    R_T(h)
    \leq
    \Psi_B(h)
    \leq
    R_T(h)
    +M\left\{
      \sqrt{\rho_T}
      +
      \sqrt{\delta_T^\circ}
    \right\}.
\end{equation}
\end{enumerate}
\end{corollary}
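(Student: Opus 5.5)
The plan is to bound $\Delta_T^\circ(\rho_T)$ through the triangle inequality for total variation, with $P_T^\circ$ as the intermediate point, and then to invoke Lemma~\ref{lem:risk-difference-tv} and Theorem~\ref{thm:population-robust-risk-envelope}. Fix $Q$ with $D_{\phi_T}(Q\Vert P_T^\circ)\leq\rho_T$ and $h\in\hs$. By Lemma~\ref{lem:risk-difference-tv},
\[
    \left|\E_Q[L_h]-R_T(h)\right|
    \leq
    M\operatorname{TV}(Q,P_T)
    \leq
    M\left\{\operatorname{TV}(Q,P_T^\circ)+\operatorname{TV}(P_T^\circ,P_T)\right\}.
\]
So it suffices to control each total-variation term by the corresponding $\phi_T$-divergence to $P_T^\circ$. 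The bound on the first term depends on $Q$ only through $\rho_T$, and the bound on the second depends only on $\delta_T^\circ=D_{\phi_T}(P_T\Vert P_T^\circ)$. Taking suprema over $Q$ and $h$ then gives the first inequality in \eqref{eqn:population-kl-locality} and \eqref{eqn:population-hd-locality}. The second inequality follows from $\delta_T^\circ\leq\rho_T$, which is Assumption~\ref{ass:target-feasibility}, and the risk envelopes follow by inserting these bounds into Theorem~\ref{thm:population-robust-risk-envelope}.

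For the KL case, apply Pinsker's inequality, $\operatorname{TV}(P,Q)\leq\sqrt{D_{\mathrm{KL}}(P\Vert Q)/2}$, with TV normalized as $\sup_A|P(A)-Q(A)|$ to match Lemma~\ref{lem:risk-difference-tv}. This yields $\sqrt{\rho_T/2}+\sqrt{\delta_T^\circ/2}\leq 2\sqrt{\rho_T/2}=\sqrt{2\rho_T}$. If $Q$ is not absolutely continuous with respect to $P_T^\circ$, then $D_{\mathrm{KL}}=\infty$ under the extended definition, so such $Q$ never enter the constraint set.

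For the squared Hellinger case, use $\operatorname{TV}(P,Q)\leq\sqrt{D_{\mathrm{HD}}(P\Vert Q)}$, where $D_{\mathrm{HD}}=\int(\sqrt{p}-\sqrt{q})^2\,\di\mu$ is unnormalized, with no factor $1/2$. This is the step that needs care, because the bound has to hold in the extended sense \eqref{eqn:extended-phi-divergence}, allowing a singular part. Take a common dominating measure $\mu$ with densities $p,q$. The Lebesgue decomposition with $\phi_{\mathrm{HD}}^\infty(1)=1$ gives $\int_{\{q>0\}}(\sqrt{p/q}-1)^2q\,\di\mu+P(q=0)=\int(\sqrt p-\sqrt q)^2\,\di\mu$, so the extended divergence equals the standard Hellinger integral. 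Then, by Cauchy--Schwarz,
\[
    \operatorname{TV}(P,Q)
    =\tfrac12\int|\sqrt p-\sqrt q|(\sqrt p+\sqrt q)\,\di\mu
    \leq\tfrac12\Big(\int(\sqrt p-\sqrt q)^2\,\di\mu\Big)^{1/2}\Big(\int(\sqrt p+\sqrt q)^2\,\di\mu\Big)^{1/2},
\]
and the second factor is at most $\sqrt{4}=2$ because $(\sqrt p+\sqrt q)^2\leq 2(p+q)$. This gives $\operatorname{TV}\leq\sqrt{D_{\mathrm{HD}}}$, hence $\sqrt{\rho_T}+\sqrt{\delta_T^\circ}\leq 2\sqrt{\rho_T}$. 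The result covers both $B=\mathrm{HD}$ and $B=\mathrm{KH}$, since each uses Hellinger on the target side.

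Identifying the extended Hellinger divergence with the density integral is the only step that is not purely routine. Everything else is a direct chaining of the stated lemma and theorem.
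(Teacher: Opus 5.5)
Your proof is correct and follows the paper's argument: Lemma~\ref{lem:risk-difference-tv}, a triangle inequality through $P_T^\circ$, Pinsker's inequality in the KL case and $\operatorname{TV}\leq H$ in the Hellinger case, followed by $\delta_T^\circ\leq\rho_T$ and Theorem~\ref{thm:population-robust-risk-envelope}. The only difference is cosmetic: in the Hellinger case the paper applies $\operatorname{TV}\leq H$ first and then the triangle inequality for the metric $H$, whereas you apply the TV triangle inequality first and then $\operatorname{TV}\leq H$ to each term. Your explicit check that the extended divergence equals $\int(\sqrt p-\sqrt q)^2\,\mathrm{d}\mu$ fills in a step the paper takes for granted.
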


\begin{proof}
Fix $h\in\hs$ and a probability law $Q$ satisfying
$D_{\mathrm{KL}}(Q\Vert P_T^\circ)\leq\rho_T$.
Lemma~\ref{lem:risk-difference-tv} first gives
\[
    \left|\E_Q[L_h]-R_T(h)\right|
    \leq M\operatorname{TV}(Q,P_T).
\]
The triangle inequality for total variation, followed by Pinsker's inequality
applied to $(Q,P_T^\circ)$ and $(P_T,P_T^\circ)$, then gives
\begin{align*}
    \operatorname{TV}(Q,P_T)
    &\leq
    \operatorname{TV}(Q,P_T^\circ)
    +\operatorname{TV}(P_T^\circ,P_T)\\
    &\leq
    \sqrt{\frac{D_{\mathrm{KL}}(Q\Vert P_T^\circ)}{2}}
    +\sqrt{\frac{D_{\mathrm{KL}}(P_T\Vert P_T^\circ)}{2}}\\
    &\leq
    \sqrt{\frac{\rho_T}{2}}
    +\sqrt{\frac{\delta_T^\circ}{2}}.
\end{align*}
The target-side ball controls the first divergence in the last display, while
the definition of $\delta_T^\circ$ identifies the second. Taking the suprema
in~\eqref{eqn:population-target-locality} proves the first inequality
in~\eqref{eqn:population-kl-locality}. The second follows from
$\delta_T^\circ\leq\rho_T$, which is part of
Assumption~\ref{ass:target-feasibility}. Substitution into
Theorem~\ref{thm:population-robust-risk-envelope} gives the stated KL robust-risk
bound.

For the Hellinger target geometry, write
$H^2(P,Q)=D_{\phi_{\mathrm{HD}}}(P\Vert Q)$. Under the normalization used in
this paper, $\operatorname{TV}(P,Q)\leq H(P,Q)$, and $H$ is a metric. Thus,
for every $Q$ satisfying $D_{\phi_{\mathrm{HD}}}(Q\Vert P_T^\circ)\leq\rho_T$,
\begin{align*}
    \left|\E_Q[L_h]-R_T(h)\right|
    &\leq M\operatorname{TV}(Q,P_T)\\
    &\leq M H(Q,P_T)\\
    &\leq M\{H(Q,P_T^\circ)+H(P_T^\circ,P_T)\}\\
    &=M\left\{
      \sqrt{D_{\phi_{\mathrm{HD}}}(Q\Vert P_T^\circ)}
      +\sqrt{D_{\phi_{\mathrm{HD}}}(P_T\Vert P_T^\circ)}
    \right\}\\
    &\leq M\left\{\sqrt{\rho_T}+\sqrt{\delta_T^\circ}\right\}.
\end{align*}
Here the first inequality uses Lemma~\ref{lem:risk-difference-tv}, the second
uses $\operatorname{TV}\leq H$, the third uses the triangle inequality for
$H$, and the last uses the target-side ball together with the definition of
$\delta_T^\circ$. Taking the suprema proves the first inequality
in~\eqref{eqn:population-hd-locality}; the second again follows from target
feasibility. Substitution into
Theorem~\ref{thm:population-robust-risk-envelope} gives
\eqref{eqn:population-hd-risk-envelope} for both pairings having target-side
Hellinger geometry.
\end{proof}

For the HD/HD pairing, Assumption~\ref{ass:target-feasibility} ensures that the
full ambiguity set is nonempty. Writing $\rho_-=\min\{\rho_S,\rho_T\}$ and
$\tilde\rho=1-(1-\rho_-/2)^2$, Corollary~\ref{cor:population-bridge-specializations}
and Lemma~\ref{lem:full-vs-ac-risk-hdhd} give, whenever $\rho_-<2$,
\[
    R_T(h)-M\tilde\rho
    \leq\Psi_{S,T}^{\mathrm{ac}}(h)
    \leq R_T(h)+M\{\sqrt{\rho_T}+\sqrt{\delta_T^\circ}\}.
\]
Under the conditions of Proposition~\ref{prop:loss-adversarial-bridges},
including primal attainment and the stated HD/HD multiplier regime, the HD/HD
dual-induced density attains $\Psi_{S,T}^{\mathrm{ac}}(h)$.

\begin{remark}[Structure of the population envelope]
The source-side divergence constraint can make the doubly-anchored ambiguity
set strictly smaller than the target-side ball; when it is active and the
supremum is attained, it also shapes the loss-aware adversarial optimizer. The displayed upper bounds
on population target locality depend on the target-side radius together with
the discrepancy $\delta_T^\circ$ between the true target conditional law and
the source conditional law used to complete the target reference. Under
covariate shift, $P_T=P_T^\circ$ and $\delta_T^\circ=0$, so the conditional-shift
term vanishes and the upper-envelope increments
in~\eqref{eqn:population-kl-locality} and~\eqref{eqn:population-hd-locality}
reduce to $M\sqrt{\rho_T/2}$ and $M\sqrt{\rho_T}$, respectively. More generally,
along any sequence satisfying target feasibility, $\delta_T^\circ\leq\rho_T$;
hence the displayed envelope widths vanish as $\rho_T\to0$. The dual and KKT
characterization in Theorem~\ref{thm:general-dual-representation} additionally
assumes that the two radii admit a common anchor-dominated density satisfying
both divergence inequalities strictly, as in
\eqref{eqn:dual-slater-condition}. This joint strict-feasibility condition is
additional to target feasibility.
\end{remark}

\subsection{Target locality of the structural bridges}

The empirical-process analysis below concerns the structural bridge risk
$R_{\bm\theta}^B$, rather than the full loss-aware robust envelope. Since the
structural bridge weight depends on $z=(x,y)$ only through the covariate
likelihood ratio $\LR(x)$, write
$w_{\bm\theta}^B(x)=g_{\bm\theta}^B\{\LR(x)\}$. Let
$P_{\bm\theta}^{B,X}$ denote the covariate marginal of the bridge law. From
\eqref{eqn:normalized-structural-bridge-law},
\[
    \frac{\mathrm{d}P_{\bm\theta}^{B,X}}{\mathrm{d}P_S^X}(x)
    =
    \frac{w_{\bm\theta}^B(x)}{\E_{P_S}[w_{\bm\theta}^B]}.
\]
Consequently,
\begin{equation*}
    P_{\bm\theta}^B(\mathrm{d} x,\mathrm{d} y)
    =
    P_{\bm\theta}^{B,X}(\mathrm{d} x)
    P_S^{Y\mid X}(\mathrm{d} y\mid x).
\end{equation*}
Thus, under both $P_{\bm\theta}^B$ and $P_T^\circ(\mathrm{d} x,\mathrm{d} y)$ the conditional distribution of $Y$ given $X=x$ is
$P_S^{Y\mid X=x}$; the two laws differ only in their covariate marginals.
Assumption~\ref{ass:common-domination} implies $P_S^X\sim P_T^X$. Since
$P_{\bm\theta}^{B,X}\ll P_S^X$, it follows that
$P_{\bm\theta}^{B,X}\ll P_T^X$, and hence
\[
    \frac{\mathrm{d} P_{\bm\theta}^B}{\mathrm{d} P_T^\circ}(x,y)
    =
    \frac{\mathrm{d} P_{\bm\theta}^{B,X}}{\mathrm{d} P_T^X}(x)
    \qquad
    P_T^\circ\text{-almost surely}.
\]
Substituting this derivative into the extended $\phi$-divergence and
integrating first with respect to the common conditional distribution gives
\begin{equation}\label{eqn:structural-marginal-divergence}
    D_\phi(P_{\bm\theta}^B\Vert P_T^\circ)
    =
    D_\phi(P_{\bm\theta}^{B,X}\Vert P_T^X).
\end{equation}
The same calculation with $P_S$ in place of $P_T^\circ$ gives
\begin{equation}\label{eqn:structural-marginal-source-divergence}
D_{\phi_S}(P_{\bm\theta}^B\Vert P_S)=D_{\phi_S}(P_{\bm\theta}^{B,X}\Vert P_S^X).
\end{equation}
More generally, under Assumption~\ref{ass:common-domination}, let $Q^X$ be a
covariate law satisfying $Q^X\ll P_S^X$ and define
\[
    Q(\mathrm{d}x,\mathrm{d}y)
    =Q^X(\mathrm{d}x)P_S^{Y\mid X}(\mathrm{d}y\mid x).
\]
Since $P_S^X\sim P_T^X$, we also have $Q^X\ll P_T^X$. Writing
\[
    r_S^X=\frac{\mathrm{d}Q^X}{\mathrm{d}P_S^X},
    \qquad
    r_T^X=\frac{\mathrm{d}Q^X}{\mathrm{d}P_T^X},
\]
the common conditional distribution gives
\[
    \frac{\mathrm{d}Q}{\mathrm{d}P_S}(x,y)=r_S^X(x),
    \qquad
    \frac{\mathrm{d}Q}{\mathrm{d}P_T^\circ}(x,y)=r_T^X(x).
\]
Therefore,
\begin{equation}\label{eqn:completed-marginal-divergences}
\begin{aligned}
    D_{\phi_S}(Q\Vert P_S)
    &=\int_{\xspace}\phi_S\{r_S^X(x)\}\,\mathrm{d}P_S^X(x)
      =D_{\phi_S}(Q^X\Vert P_S^X),\\
    D_{\phi_T}(Q\Vert P_T^\circ)
    &=\int_{\xspace}\phi_T\{r_T^X(x)\}\,\mathrm{d}P_T^X(x)
      =D_{\phi_T}(Q^X\Vert P_T^X).
\end{aligned}
\end{equation}
Because $Q$ is dominated by both anchors, no singular recession term enters
these identities.

\begin{proposition}[Target locality of the structural bridge]
\label{prop:structural-bridge-target-locality}
Under Assumptions~\ref{ass:bounded-loss}--\ref{ass:common-domination}, let
$B\in\{\mathrm{KL},\mathrm{HD},\mathrm{KH}\}$, and let $\bm\theta$ lie in the
parameter domain of the corresponding bridge family. Define
\begin{equation*}
    \delta_{B,T}(\bm\theta)
    =
    D_{\phi_T}(P_{\bm\theta}^{B,X}\Vert P_T^X)
\end{equation*}
and
\begin{equation*}
    \Delta_B(\bm\theta)
    =
    \sup_{h\in\hs}
    \left|R_{\bm\theta}^B(h)-R_T(h)\right|.
\end{equation*}
Then, for every such $B$ and $\bm\theta$,
\begin{equation*}
    \Delta_B(\bm\theta)
    \leq
    M\operatorname{TV}(P_{\bm\theta}^B,P_T).
\end{equation*}
If the target-side divergence is KL, then
\begin{equation}\label{eqn:structural-locality-kl}
    \Delta_B(\bm\theta)
    \leq
    M\left\{
      \sqrt{\frac{\delta_{B,T}(\bm\theta)}{2}}
      +
      \sqrt{\frac{\delta_T^\circ}{2}}
    \right\}.
\end{equation}
If the target-side divergence is squared Hellinger distance, then
\begin{equation}\label{eqn:structural-locality-hd}
    \Delta_B(\bm\theta)
    \leq
    M\left\{
      \sqrt{\delta_{B,T}(\bm\theta)}
      +
      \sqrt{\delta_T^\circ}
    \right\}.
\end{equation}
\end{proposition}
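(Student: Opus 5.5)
The plan is to mirror the proof of Corollary~\ref{cor:population-bridge-specializations}, with the bridge law $P_{\bm\theta}^B$ taking the role of the generic ball element $Q$. Unlike that corollary, no feasibility assumption is needed: the bound is stated directly in terms of $\delta_{B,T}(\bm\theta)$ and $\delta_T^\circ$.

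\emph{Step 1 (TV bound).} Fix $h\in\hs$. By \eqref{eqn:structural-bridge-risk}, $R_{\bm\theta}^B(h)=\E_{P_{\bm\theta}^B}[L_h]$, and $P_{\bm\theta}^B$ is a genuine probability law on $\fs$ because the normalizer $\E_{P_S}[w_{\bm\theta}^B]$ is finite and positive, as established after \eqref{eqn:kh-structural-gauge-equivalence}. Lemma~\ref{lem:risk-difference-tv} then gives
\[
|R_{\bm\theta}^B(h)-R_T(h)|\le M\operatorname{TV}(P_{\bm\theta}^B,P_T).
\]
The right-hand side does not depend on $h$, so taking the supremum over $h\in\hs$ yields the first claim.

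\emph{Step 2 (triangle inequality).} Insert the reference law $P_T^\circ$ using the triangle inequality for total variation:
\[
\operatorname{TV}(P_{\bm\theta}^B,P_T)\le\operatorname{TV}(P_{\bm\theta}^B,P_T^\circ)+\operatorname{TV}(P_T^\circ,P_T).
\]
For the first term, the divergence identity \eqref{eqn:structural-marginal-divergence} gives
\[
D_{\phi_T}(P_{\bm\theta}^B\Vert P_T^\circ)=D_{\phi_T}(P_{\bm\theta}^{B,X}\Vert P_T^X)=\delta_{B,T}(\bm\theta).
\]
This identity is valid because $P_{\bm\theta}^B$ and $P_T^\circ$ share the conditional kernel $P_S^{Y\mid X}$ and $P_{\bm\theta}^B\ll P_T^\circ$ under Assumption~\ref{ass:common-domination}. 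For the second term, the definition \eqref{eqn:target-completion-discrepancy} gives $D_{\phi_T}(P_T\Vert P_T^\circ)=\delta_T^\circ$.

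\emph{Step 3 (convert divergences to TV).}
\begin{itemize}
\item If the target-side divergence is KL, apply Pinsker's inequality to each of the two pairs. This gives $\sqrt{\delta_{B,T}(\bm\theta)/2}$ and $\sqrt{\delta_T^\circ/2}$, which proves \eqref{eqn:structural-locality-kl}.
\item If the target-side divergence is squared Hellinger, use $\operatorname{TV}\le H$ together with the fact that $H$ is a metric. This argument is already used in the corollary; since squared Hellinger is symmetric, the order of the arguments does not matter. It yields $\sqrt{\delta_{B,T}(\bm\theta)}+\sqrt{\delta_T^\circ}$, which proves \eqref{eqn:structural-locality-hd}.
\end{itemize}
If either divergence is infinite, the bound is trivial, so nothing further is needed in that case.

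The only delicate point is the use of the extended divergence for $\delta_T^\circ$, since $P_T$ need not be absolutely continuous with respect to $P_T^\circ$. Both Pinsker's inequality and $\operatorname{TV}\le H$ hold for the extended definition \eqref{eqn:extended-phi-divergence}. In the KL case, a singular component makes the divergence infinite, so the bound is trivial. In the Hellinger case, the extended divergence agrees with the usual $\int(\sqrt{p}-\sqrt{q})^2$ with respect to a common dominating measure, because the recession slope $\phi_{\mathrm{HD}}^\infty(1)=1$ exactly accounts for the singular mass. I would verify this last agreement explicitly; it is the step I expect to be the main obstacle.
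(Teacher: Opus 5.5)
Your proposal is correct and follows the paper's proof: Lemma~\ref{lem:risk-difference-tv} for the TV bound, then insertion of $P_T^\circ$, the identity \eqref{eqn:structural-marginal-divergence} with the definition of $\delta_T^\circ$, and finally Pinsker's inequality (KL case) or $\operatorname{TV}\le H$ (Hellinger case). The only cosmetic difference is that in the Hellinger case the paper applies $\operatorname{TV}\le H$ first and then the triangle inequality for $H$, whereas you split in TV first; both give the same bound. The extended-divergence point you flag is routine, because the recession term $Q^s(\fs)$ is exactly the contribution of $\int_{\{p=0\}}q\,\mathrm{d}\mu$.
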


\begin{proof}
By~\eqref{eqn:structural-bridge-risk} and
Lemma~\ref{lem:risk-difference-tv}, for every $h\in\hs$,
\begin{align*}
    \left|R_{\bm\theta}^B(h)-R_T(h)\right|
    &=
    \left|
      \E_{P_{\bm\theta}^B}[L_h]
      -\E_{P_T}[L_h]
    \right|\\
    &\leq
    M\operatorname{TV}(P_{\bm\theta}^B,P_T).
\end{align*}
Taking the supremum over $h$ proves the first bound.

Suppose first that the target-side divergence is KL. The triangle inequality
for total variation, followed by Pinsker's inequality, gives
\begin{align*}
    \operatorname{TV}(P_{\bm\theta}^B,P_T)
    &\leq
    \operatorname{TV}(P_{\bm\theta}^B,P_T^\circ)
    +\operatorname{TV}(P_T^\circ,P_T)\\
    &\leq
    \sqrt{\frac{D_{\mathrm{KL}}(P_{\bm\theta}^B\Vert P_T^\circ)}{2}}
    +\sqrt{\frac{D_{\mathrm{KL}}(P_T\Vert P_T^\circ)}{2}}\\
    &=
    \sqrt{\frac{\delta_{B,T}(\bm\theta)}{2}}
    +\sqrt{\frac{\delta_T^\circ}{2}}.
\end{align*}
In the second line, symmetry of total variation is used before applying
Pinsker's inequality to $P_T$ and $P_T^\circ$. The last equality follows from
\eqref{eqn:structural-marginal-divergence} and
\eqref{eqn:target-completion-discrepancy}. Combining this display with the
first bound proves~\eqref{eqn:structural-locality-kl}.

For the Hellinger target geometry, let
$H^2(P,Q)=D_{\phi_{\mathrm{HD}}}(P\Vert Q)$.
Because $\operatorname{TV}(P,Q)\leq H(P,Q)$ and $H$ is a metric,
\begin{align*}
    \Delta_B(\bm\theta)
    &\leq M H(P_{\bm\theta}^B,P_T)\\
    &\leq
    M\{H(P_{\bm\theta}^B,P_T^\circ)+H(P_T^\circ,P_T)\}\\
    &=
    M\left\{
      \sqrt{\delta_{B,T}(\bm\theta)}
      +\sqrt{\delta_T^\circ}
    \right\}.
\end{align*}
The first inequality uses $\operatorname{TV}\leq H$ together with the first
bound, the second uses the triangle inequality for $H$, and the equality uses
\eqref{eqn:structural-marginal-divergence},
\eqref{eqn:target-completion-discrepancy}, and symmetry of $H$. This
proves~\eqref{eqn:structural-locality-hd}.
\end{proof}

\begin{remark}[Marginal and conditional components]
The quantity $\delta_{B,T}(\bm\theta)$ depends only on $P_S^X$ and
$P_T^X$ and measures how closely the structural bridge matches the target
covariate law. It therefore admits an empirical analogue based on the source
and unlabeled target covariates. By contrast, $\delta_T^\circ$ depends on the
unobserved target conditional law and is not identified from the observed data
without additional assumptions. Under
Assumption~\ref{ass:target-feasibility}, $\delta_T^\circ\leq\rho_T$. Hence, for
any bridge parameter satisfying $\delta_{B,T}(\bm\theta)\leq\rho_T$, the bounds
in~\eqref{eqn:structural-locality-kl} and
\eqref{eqn:structural-locality-hd} yield
$\Delta_B(\bm\theta)\leq M\sqrt{2\rho_T}$ for target-side KL and
$\Delta_B(\bm\theta)\leq2M\sqrt{\rho_T}$ for target-side squared Hellinger
distance.
\end{remark}

\subsection{Empirical structural bridge risk}

Because the bridge weights depend only on the covariate, when they are
evaluated at a joint observation $z=(x,y)$ we use the convention
$w_{\bm\theta}^B(z)=w_{\bm\theta}^B(x)$, and likewise for
$\widehat w_{\bm\theta}^B$.
Let
\[
    P_n f
    =
    \frac1n\sum_{i=1}^n f(Z_i^S)
\]
denote the empirical source measure. For a bridge family $B$ and parameter
$\bm\theta$, let $\widehat w_{\bm\theta}^B$ be a nonnegative measurable
estimator of $w_{\bm\theta}^B$. In the density plug-in construction considered
below, it is obtained by substituting an estimated covariate likelihood ratio
into the same bridge map $g_{\bm\theta}^B$. Define the oracle empirical risk
\begin{equation}\label{eqn:oracle-empirical-bridge-risk}
    R_{n,\bm\theta}^B(h)
    =
    \frac{P_n(w_{\bm\theta}^B L_h)}{P_nw_{\bm\theta}^B}
\end{equation}
and, whenever $P_n\widehat w_{\bm\theta}^B>0$, the plug-in empirical risk
\begin{equation}\label{eqn:plugin-empirical-bridge-risk}
    \widehat R_{\bm\theta}^B(h)
    =
    \frac{P_n(\widehat w_{\bm\theta}^B L_h)}
         {P_n\widehat w_{\bm\theta}^B}.
\end{equation}
If $P_n\widehat w_{\bm\theta}^B=0$, set
$\widehat R_{\bm\theta}^B(h)=0$. This convention is immaterial on the
high-probability events used below, where the estimated normalizer is bounded
away from zero.

The oracle risk isolates source-sample variation under the population bridge
weights. The plug-in risk also contains the error from estimating those
weights; for the density plug-in construction, this includes estimation of
the source and target covariate densities. Although each normalized risk is
invariant under positive rescaling, the uniform weight classes and
perturbation bounds below refer to a fixed scale. We therefore use the
representatives $g_{\bm\theta}^B$ fixed in
Section~\ref{sec:loss-agnostic-density-bridges} and place their estimators on
the same scale.

Let $\Theta_B$ be a bridge-parameter set contained in the corresponding range
specified in Section~\ref{sec:loss-agnostic-density-bridges}. In particular,
the compactness imposed below means that
$\Theta_{\mathrm{KH}}\subseteq[0,\alpha_{\mathrm{KH},\max}]$ for some
$\alpha_{\mathrm{KH},\max}<1$ for the raw
KL/HD bridge map. On this fixed scale, define
\[
    \mathcal L_{\hs}
    =
    \{L_h:h\in\hs\},
\]
the population bridge-weight class
\begin{equation}\label{eqn:bridge-weight-class}
    \mathcal W_B
    =
    \{w_{\bm\theta}^B:\bm\theta\in\Theta_B\},
\end{equation}
and the weighted loss class
\begin{equation}\label{eqn:weighted-loss-class}
    \mathcal W_B\mathcal L_{\hs}
    =
    \{w_{\bm\theta}^B L_h:
       \bm\theta\in\Theta_B,\ h\in\hs\}.
\end{equation}
For a measurable function class $\mathcal F$, let
\begin{equation*}
    \mathfrak R_n(\mathcal F)
    =
    \E_{Z,\sigma}\left[
      \sup_{f\in\mathcal F}
      \left|
        \frac1n\sum_{i=1}^n\sigma_i f(Z_i^S)
      \right|
    \right],
\end{equation*}
where the expectation is over the source sample and independent Rademacher
variables $\sigma_1,\ldots,\sigma_n$.

We impose the following regularity condition on the structural bridge family.

\begin{assumption}[Uniform bridge-weight regularity]
\label{ass:bridge-weight-regularity}
For a fixed bridge family $B$, the parameter set $\Theta_B$ is compact.
There exists a measurable covariate region
$\mathcal X_B\subseteq\xspace$ satisfying
\[
    P_S^X(\mathcal X_B)=1
\]
and deterministic functions
$c_B(\cdot),C_B(\cdot)\colon\Theta_B\to(0,\infty)$ such that
\begin{equation}\label{eqn:uniform-weight-bounds}
    c_B(\bm\theta)
    \leq
    w_{\bm\theta}^B(x)
    \leq
    C_B(\bm\theta)
\end{equation}
for every $x\in\mathcal X_B$ and every
$\bm\theta\in\Theta_B$, where
\[
    c_B:=\inf_{\bm\theta\in\Theta_B}c_B(\bm\theta)>0,
    \qquad
    C_B:=\sup_{\bm\theta\in\Theta_B}C_B(\bm\theta)<\infty.
\]

For a measurable function $f$ on $\mathcal X_B$, write
\[
    \|f\|_{\infty,\mathcal X_B}
    =
    \sup_{x\in\mathcal X_B}|f(x)|.
\]
For every $\delta_0\in(0,1)$, there are deterministic functions
$\varepsilon_{n,m}^B(\cdot;\delta_0)\colon\Theta_B\to[0,\infty)$ such that
\begin{equation}\label{eqn:high-probability-weight-error}
    \Pr\left\{
      \|\widehat w_{\bm\theta}^B-w_{\bm\theta}^B
      \|_{\infty,\mathcal X_B}
      \leq
      \varepsilon_{n,m}^B(\bm\theta;\delta_0)
      \quad\text{for every }\bm\theta\in\Theta_B
    \right\}
    \geq
    1-\delta_0,
\end{equation}
and
\[
    \varepsilon_{n,m}^B(\delta_0)
    =
    \sup_{\bm\theta\in\Theta_B}
    \varepsilon_{n,m}^B(\bm\theta;\delta_0).
\]
The classes in
\eqref{eqn:bridge-weight-class}--\eqref{eqn:weighted-loss-class}
are assumed pointwise measurable in the standard empirical-process sense:
each has a countable subclass that is pointwise dense in the class. The
estimated-weight family is also assumed measurable so that the event in
\eqref{eqn:high-probability-weight-error} and the suprema below are measurable.
\end{assumption}

At bridge-weight confidence level $\delta_0$, the finite-sample bounds below
require
\begin{equation}\label{eqn:small-relative-bridge-weight-error}
    \sup_{\bm\theta\in\Theta_B}
    \frac{\varepsilon_{n,m}^B(\bm\theta;\delta_0)}
         {c_B(\bm\theta)}
    \leq\frac12.
\end{equation}
The results use $\delta_0=\delta/2$. If
$\sup_{\bm\theta}\varepsilon_{n,m}^B(\bm\theta;\delta_0)/c_B(\bm\theta)\to0$
as $n,m\to\infty$ for each fixed $\delta_0$, then for every fixed
$\delta\in(0,1)$ condition~\eqref{eqn:small-relative-bridge-weight-error}
holds with $\delta_0=\delta/2$ once $n$ and $m$ are sufficiently large.

The full-$P_S^X$-mass condition is needed because the empirical risks
\eqref{eqn:oracle-empirical-bridge-risk} and
\eqref{eqn:plugin-empirical-bridge-risk} use all source observations.
It ensures that every source covariate entering $P_n$ lies in
$\mathcal X_B$ almost surely, so that the lower and upper bounds in
\eqref{eqn:uniform-weight-bounds} apply to every summand.

If $0<r_-\leq\LR(x)\leq r_+<\infty$ for every $x\in\mathcal X_B$ and the
bridge maps are positive and jointly continuous on the corresponding compact
parameter--likelihood-ratio set, then
\eqref{eqn:uniform-weight-bounds} follows from compactness. For the untrimmed
bridge maps, this bounded-likelihood-ratio verification is the regime used in
the present finite-sample analysis; the resulting bounds do not cover an
unbounded likelihood ratio.

The final subsection gives explicit high-probability choices of
$\varepsilon_{n,m}^B(\bm\theta;\delta_0)$ under non-parametric density estimation.
For that calculation to verify Assumption~\ref{ass:bridge-weight-regularity}
for the untrimmed empirical risks, the compact density-estimation working region must carry
full $P_S^X$-mass and the deterministic weight bounds above must hold there.
If the density or overlap conditions hold only on a proper subregion, the
resulting control is local and does not verify the assumption for the original
untrimmed risk. A region-restricted or clipped construction defines a modified
estimator and requires a separate analysis.

\subsection{A uniform empirical-process bound}

\begin{theorem}[Uniform structural-bridge empirical-process bound]
\label{thm:uniform-structural-bridge-bound}
Under Assumptions~\ref{ass:bounded-loss} and
\ref{ass:bridge-weight-regularity},
fix $\delta\in(0,1)$. Suppose $n$ and $m$ are such that
\eqref{eqn:small-relative-bridge-weight-error} holds with $\delta_0=\delta/2$.
Then, with probability at least $1-\delta$,
\begin{align}
&\sup_{\bm\theta\in\Theta_B}
 \sup_{h\in\hs}
 \left|
   \widehat R_{\bm\theta}^B(h)
   -R_{\bm\theta}^B(h)
 \right|
\notag\\
&\quad\leq
\frac{2}{c_B}
\left[
  \mathfrak R_n(\mathcal W_B\mathcal L_{\hs})
  +M\mathfrak R_n(\mathcal W_B)
  +MC_B\sqrt{\frac{\log(4/\delta)}{2n}}
\right]
+
\frac{2M}{c_B}\varepsilon_{n,m}^B(\delta/2).
\label{eqn:uniform-structural-bridge-bound}
\end{align}
For later use we denote the right-hand side of \eqref{eqn:uniform-structural-bridge-bound} by $\overline{\mathfrak C}_{n,m}^B(\delta)$.
\end{theorem}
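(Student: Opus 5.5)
The plan is to split the error into an oracle term and a weight-estimation term,
\[
    \widehat R_{\bm\theta}^B(h)-R_{\bm\theta}^B(h)
    =
    \{\widehat R_{\bm\theta}^B(h)-R_{n,\bm\theta}^B(h)\}
    +\{R_{n,\bm\theta}^B(h)-R_{\bm\theta}^B(h)\},
\]
and to control each term on an event of probability at least $1-\delta/2$; a union bound then gives $1-\delta$. The first event is the weight-accuracy event in \eqref{eqn:high-probability-weight-error} with $\delta_0=\delta/2$. The second is a concentration event for the oracle empirical process. Since $P_S^X(\mathcal X_B)=1$, all source covariates lie in $\mathcal X_B$ almost surely, so the bounds \eqref{eqn:uniform-weight-bounds} apply to every summand.

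For the oracle term, write $N=P_n(w L_h)$, $D=P_n w$, and let $N_0=\E_{P_S}[wL_h]$ and $D_0=\E_{P_S}[w]$ be their population counterparts. Then
\[
    \frac{N}{D}-\frac{N_0}{D_0}
    =\frac{N-N_0}{D}-\frac{N_0}{D_0}\,\frac{D-D_0}{D}.
\]
We have $D\geq c_B$ and $0\leq N_0/D_0\leq M$. Hence
\[
    |R_{n}-R|\leq c_B^{-1}\bigl\{|(P_n-P_S)(wL_h)|+M|(P_n-P_S)w|\bigr\}.
\]
Next, apply symmetrization and McDiarmid's inequality to the single supremum
\[
    \sup_{\bm\theta,h}\bigl\{|(P_n-P_S)(wL_h)|+M|(P_n-P_S)w|\bigr\}.
\]
Changing one observation moves this supremum by at most $2MC_B/n$, because $0\leq wL_h\leq MC_B$ and $0\leq Mw\leq MC_B$. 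With probability at least $1-\delta/2$, the supremum is therefore at most
\[
    2\mathfrak R_n(\mathcal W_B\mathcal L_{\hs})+2M\mathfrak R_n(\mathcal W_B)+2MC_B\sqrt{\log(2/\delta)/(2n)}.
\]
The constant needs care: the theorem states $\log(4/\delta)$, which would arise from handling the two suprema separately at level $\delta/4$ each. That route also works and matches the statement, so I will split that way. Pointwise measurability in Assumption~\ref{ass:bridge-weight-regularity} justifies the suprema.

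For the plug-in term, work on the weight-accuracy event. There $|\widehat w-w|\leq\varepsilon(\bm\theta)\leq c_B(\bm\theta)/2$ by \eqref{eqn:small-relative-bridge-weight-error}, so $P_n\widehat w\geq c_B(\bm\theta)/2>0$ and the zero convention is never triggered. Use the same ratio decomposition with $\widehat N,\widehat D$ in place of $N_0,D_0$, now keeping $D=P_nw\geq c_B$ in the denominator and using $\widehat N/\widehat D\in[0,M]$. This gives
\[
    |\widehat R-R_n|\leq c_B^{-1}\bigl\{|P_n((\widehat w-w)L_h)|+M|P_n(\widehat w-w)|\bigr\}\leq 2M\varepsilon/c_B.
\]
Taking the supremum gives $2M\varepsilon_{n,m}^B(\delta/2)/c_B$.

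The main obstacle is bookkeeping rather than analysis. Specifically, I must choose which normalizer appears in the denominator so that only $c_B$, not $c_B/2$, enters. Writing the difference as $\widehat N/\widehat D-N/D=(\widehat N-N)/D-(\widehat N/\widehat D)(\widehat D-D)/D$ achieves this, and I must also arrange the probability split so that the constants land exactly as in \eqref{eqn:uniform-structural-bridge-bound}.
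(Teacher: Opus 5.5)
Your proof is correct and follows the paper's argument. You use the same plug-in/oracle split and the same cross-multiplied oracle identity with $P_nw_{\bm\theta}^B\geq c_B$. You then apply symmetrization plus McDiarmid to $\mathcal W_B\mathcal L_{\hs}$ and $\mathcal W_B$ at level $\delta/4$ each, and take a union bound with the weight event at level $\delta/2$. Your single-McDiarmid variant would also have sufficed: it gives the smaller $\log(2/\delta)$, which implies the stated bound, so that constant needed no special care.

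The only difference is in the plug-in step. The paper uses the centered identity $\widehat R_{\bm\theta}^B(h)-R_{n,\bm\theta}^B(h)=P_n[(\widehat w_{\bm\theta}^B-w_{\bm\theta}^B)\{L_h-R_{n,\bm\theta}^B(h)\}]/P_n\widehat w_{\bm\theta}^B$. It bounds the numerator by $M\varepsilon$ and the denominator below by $c_B(\bm\theta)/2$. You instead divide by $P_nw_{\bm\theta}^B\geq c_B$ and pay $2M\varepsilon$ in the numerator. Both give $2M\varepsilon/c_B$. Your version would in fact go through even without the small-relative-error condition, since $P_n\widehat w_{\bm\theta}^B=0$ forces $P_n(\widehat w_{\bm\theta}^BL_h)=0$.
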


\paragraph{Idea of the proof.}
The proof uses two self-normalized ratio identities. On the uniform
bridge-weight event, the deterministic lower weight bound and the
small-relative-error condition keep the estimated normalizers uniformly away
from zero, yielding a plug-in-to-oracle perturbation bound. A second identity
reduces the oracle-to-population error to empirical processes indexed by
$\mathcal W_B\mathcal L_{\hs}$ and $\mathcal W_B$, which are controlled by
symmetrization and bounded-difference concentration.

\begin{proof}
\paragraph{Step 1: plug-in bridge-weight perturbation.}
Because $P_S^X(\mathcal X_B)=1$, every source covariate belongs to
$\mathcal X_B$ almost surely. Hence, on the event in
\eqref{eqn:high-probability-weight-error} with $\delta_0=\delta/2$,
\eqref{eqn:uniform-weight-bounds} and
\eqref{eqn:small-relative-bridge-weight-error} give
\[
    P_n\widehat w_{\bm\theta}^B
    \geq
    P_n w_{\bm\theta}^B
    -
    \|\widehat w_{\bm\theta}^B-w_{\bm\theta}^B
      \|_{\infty,\mathcal X_B}
    \geq
    c_B(\bm\theta)
    -\varepsilon_{n,m}^B(\bm\theta;\delta/2)
    \geq
    \frac{c_B(\bm\theta)}{2}
    \geq
    \frac{c_B}{2}.
\]
Thus the estimated normalizer is positive on this event. For every $h$ and
$\bm\theta$, the exact identity
\begin{equation*}
\widehat R_{\bm\theta}^B(h)-R_{n,\bm\theta}^B(h)
=
\frac{
  P_n\left[
    (\widehat w_{\bm\theta}^B-w_{\bm\theta}^B)
    \{L_h-R_{n,\bm\theta}^B(h)\}
  \right]
}{P_n\widehat w_{\bm\theta}^B},
\end{equation*}
holds because
\[
    P_n\left[
      w_{\bm\theta}^B
      \{L_h-R_{n,\bm\theta}^B(h)\}
    \right]=0.
\]
Since $0\leq L_h\leq M$,
$0\leq R_{n,\bm\theta}^B(h)\leq M$, and therefore
$|L_h-R_{n,\bm\theta}^B(h)|\leq M$. Hence, uniformly over $h$ and
$\bm\theta$ on the same event,
\begin{equation}\label{eqn:plugin-risk-perturbation-bound}
    \left|
      \widehat R_{\bm\theta}^B(h)
      -R_{n,\bm\theta}^B(h)
    \right|
    \leq
    \frac{
      M\varepsilon_{n,m}^B(\bm\theta;\delta/2)
    }{
      P_nw_{\bm\theta}^B
      -\varepsilon_{n,m}^B(\bm\theta;\delta/2)
    }
    \leq
    \frac{2M}{c_B(\bm\theta)}
    \varepsilon_{n,m}^B(\bm\theta;\delta/2)
    \leq
    \frac{2M}{c_B}\varepsilon_{n,m}^B(\delta/2).
\end{equation}
This bound is pathwise on the bridge-weight event and does not require the
estimated source weights to be independent of the source empirical risk.

\paragraph{Step 2: oracle self-normalized deviation.}
The oracle risk is a ratio, so its sampling error contains both a
weighted-loss numerator process and a weight-normalization process.
Cross-multiplying the empirical and population ratios gives the exact identity
\begin{align}
R_{n,\bm\theta}^B(h)-R_{\bm\theta}^B(h)
={}&
\frac{
 (P_n-P_S)(w_{\bm\theta}^B L_h)
 -R_{\bm\theta}^B(h)(P_n-P_S)w_{\bm\theta}^B
}{P_nw_{\bm\theta}^B}.
\label{eqn:oracle-risk-exact-identity}
\end{align}
Since $P_S^X(\mathcal X_B)=1$, every source covariate entering $P_n$
belongs to $\mathcal X_B$ almost surely. Hence
\eqref{eqn:uniform-weight-bounds} gives
$P_n w_{\bm\theta}^B\geq c_B$, and boundedness of the loss gives
$0\leq R_{\bm\theta}^B(h)\leq M$. Thus
\begin{align}
&\sup_{\bm\theta\in\Theta_B}
 \sup_{h\in\hs}
 \left|
   R_{n,\bm\theta}^B(h)-R_{\bm\theta}^B(h)
 \right|
\notag\\
&\quad\leq
\frac1{c_B}
\left[
  \sup_{f\in\mathcal W_B\mathcal L_{\hs}}
  |(P_n-P_S)f|
  +
  M\sup_{w\in\mathcal W_B}|(P_n-P_S)w|
\right].
\label{eqn:oracle-risk-process-decomposition}
\end{align}
Thus the first process is indexed by
$\mathcal W_B\mathcal L_{\hs}$, while the correction caused by replacing
$P_Sw_{\bm\theta}^B$ with $P_nw_{\bm\theta}^B$ is indexed by $\mathcal W_B$.
The common lower weight bound gives
$P_nw_{\bm\theta}^B\geq c_B$ simultaneously over $\bm\theta$ almost surely,
so no separate denominator-tail or ratio-process inequality is needed.

\paragraph{Step 3: symmetrization and concentration.}
For a nonnegative class $\mathcal F$ with envelope $F$, put, for
$z_1,\ldots,z_n\in\fs$,
\[
    T_{\mathcal F}(z_1,\ldots,z_n)
    =
    \sup_{f\in\mathcal F}
    \left|
      \frac1n\sum_{i=1}^nf(z_i)-P_Sf
    \right|.
\]
Standard symmetrization gives
$\E[T_{\mathcal F}(Z_1^S,\ldots,Z_n^S)]
\leq2\mathfrak R_n(\mathcal F)$. If one observation is replaced, then
\[
\left|T_{\mathcal F}(z_1,\ldots,z_i,\ldots,z_n)
-T_{\mathcal F}(z_1,\ldots,z_i',\ldots,z_n)\right|
\leq
\frac1n\sup_{f\in\mathcal F}|f(z_i)-f(z_i')|
\leq\frac Fn,
\]
where the last inequality uses $0\leq f\leq F$. McDiarmid's inequality,
combined with the symmetrization bound, therefore gives
\begin{equation}\label{eqn:standard-rademacher-deviation}
    \sup_{f\in\mathcal F}|(P_n-P_S)f|
    \leq
    2\mathfrak R_n(\mathcal F)
    +F\sqrt{\frac{\log(1/\delta_0)}{2n}}
\end{equation}
with probability at least $1-\delta_0$; see also
\citet{Bartlett2002} and \citet{Gine2015}. The class
$\mathcal W_B\mathcal L_{\hs}$ has envelope $MC_B$, while
$\mathcal W_B$ has envelope $C_B$. Applying
\eqref{eqn:standard-rademacher-deviation} to the two classes with
$\delta_0=\delta/4$ gives, simultaneously with probability at least
$1-\delta/2$,
\begin{align*}
\sup_{f\in\mathcal W_B\mathcal L_{\hs}}|(P_n-P_S)f|
&\leq
2\mathfrak R_n(\mathcal W_B\mathcal L_{\hs})
+MC_B\sqrt{\frac{\log(4/\delta)}{2n}},
\\
\sup_{w\in\mathcal W_B}|(P_n-P_S)w|
&\leq
2\mathfrak R_n(\mathcal W_B)
+C_B\sqrt{\frac{\log(4/\delta)}{2n}}.
\end{align*}
Substitution into~\eqref{eqn:oracle-risk-process-decomposition} yields
\begin{align}
&\sup_{\bm\theta\in\Theta_B}
 \sup_{h\in\hs}
 \left|
   R_{n,\bm\theta}^B(h)-R_{\bm\theta}^B(h)
 \right|
\notag\\
&\quad\leq
\frac{2}{c_B}
\left[
  \mathfrak R_n(\mathcal W_B\mathcal L_{\hs})
  +M\mathfrak R_n(\mathcal W_B)
  +MC_B\sqrt{\frac{\log(4/\delta)}{2n}}
\right].
\label{eqn:oracle-risk-process-bound}
\end{align}
By the union bound, the empirical-process event and the bridge-weight event in
\eqref{eqn:high-probability-weight-error} hold simultaneously with probability
at least $1-\delta$. Combining their bounds and using
\eqref{eqn:plugin-risk-perturbation-bound} proves
\eqref{eqn:uniform-structural-bridge-bound}.
\end{proof}

The terms involving $\mathfrak R_n(\mathcal W_B\mathcal L_{\hs})$ and
$\mathfrak R_n(\mathcal W_B)$ control the oracle numerator and denominator
processes, respectively, while the square-root term is their concentration
contribution. The final additive term controls replacement of the population
bridge weights by their estimates; under the density plug-in construction,
this error is induced by likelihood-ratio estimation. Since the event holds
simultaneously over $\bm\theta$ and $h$, it remains valid after substituting
any measurable, data-dependent pair
$(\widehat{\bm\theta},\widehat h)\in\Theta_B\times\hs$, including choices based
on the labeled source sample and unlabeled target covariates. No additional
rate for $\widehat{\bm\theta}$ is needed when the empirical and population
risks are evaluated at the same selected parameter. The oracle
empirical-process error is controlled uniformly over $\bm\theta$ through the
two Rademacher complexities and the global weight bounds $c_B$ and $C_B$,
while uniformity of the bridge-weight approximation is contained in
$\varepsilon_{n,m}^B(\delta/2)$. Comparing
$\widehat{\bm\theta}$ with a fixed population parameter uses a separate
estimation rate, as in the later Nadaraya--Watson oracle-equivalence analysis.

\subsection{Empirical target-risk generalization}

Let $(\widehat{\bm\theta},\widehat h)$ be a measurable, possibly
data-dependent pair taking values in $\Theta_B\times\hs$ and satisfying,
almost surely,
\begin{equation*}
    \widehat R_{\widehat{\bm\theta}}^B(\widehat h)
    \leq
    \inf_{h\in\hs}
    \widehat R_{\widehat{\bm\theta}}^B(h)
    +\xi_n,
\end{equation*}
where $\xi_n\geq0$ is a deterministic optimization tolerance.

\begin{theorem}[Empirical structural-bridge generalization]
\label{thm:empirical-structural-bridge-generalization}
Under Assumption~\ref{ass:common-domination} and the conditions of
Theorem~\ref{thm:uniform-structural-bridge-bound},
with probability at least $1-\delta$,
\begin{equation*}
    R_T(\widehat h)
    \leq
    \inf_{h\in\hs}R_T(h)
    +2\Delta_B(\widehat{\bm\theta})
    +2\overline{\mathfrak C}_{n,m}^B(\delta)
    +\xi_n.
\end{equation*}
\end{theorem}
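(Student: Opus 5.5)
We work on the probability-at-least-$1-\delta$ event from Theorem~\ref{thm:uniform-structural-bridge-bound}. On this event the uniform deviation bound
\[
\sup_{\bm\theta\in\Theta_B}\sup_{h\in\hs}\bigl|\widehat R_{\bm\theta}^B(h)-R_{\bm\theta}^B(h)\bigr|\leq\overline{\mathfrak C}_{n,m}^B(\delta)
\]
holds simultaneously over all parameters and hypotheses. Because it is uniform over $\bm\theta$, it stays valid at the data-dependent parameter $\widehat{\bm\theta}$ and for every $h$, as the remark after that theorem notes. We never condition on $\widehat{\bm\theta}$; we only evaluate a deterministic-in-$(\bm\theta,h)$ inequality at a random point. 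Measurability of $(\widehat{\bm\theta},\widehat h)$ ensures the resulting quantities are random variables. The remaining argument is a standard excess-risk chain run at the fixed (random) parameter $\widehat{\bm\theta}$.

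Fix an arbitrary $h\in\hs$ and chain the following inequalities.
\begin{enumerate}
\item By the definition of $\Delta_B$ in Proposition~\ref{prop:structural-bridge-target-locality}, $R_T(\widehat h)\leq R_{\widehat{\bm\theta}}^B(\widehat h)+\Delta_B(\widehat{\bm\theta})$. That proposition needs Assumptions~\ref{ass:bounded-loss}--\ref{ass:common-domination}, which are hypotheses here; $\Delta_B$ is defined pointwise in $\bm\theta$, so it may be evaluated at $\widehat{\bm\theta}$.
\item The uniform bound gives $R_{\widehat{\bm\theta}}^B(\widehat h)\leq\widehat R_{\widehat{\bm\theta}}^B(\widehat h)+\overline{\mathfrak C}_{n,m}^B(\delta)$.
\item The approximate-minimizer property gives $\widehat R_{\widehat{\bm\theta}}^B(\widehat h)\leq\widehat R_{\widehat{\bm\theta}}^B(h)+\xi_n$.
\item Applying the uniform bound again gives $\widehat R_{\widehat{\bm\theta}}^B(h)\leq R_{\widehat{\bm\theta}}^B(h)+\overline{\mathfrak C}_{n,m}^B(\delta)$.
\item Target locality once more gives $R_{\widehat{\bm\theta}}^B(h)\leq R_T(h)+\Delta_B(\widehat{\bm\theta})$.
\end{enumerate}
Summing these yields
\[
R_T(\widehat h)\leq R_T(h)+2\Delta_B(\widehat{\bm\theta})+2\overline{\mathfrak C}_{n,m}^B(\delta)+\xi_n
\]
for every $h$. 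The right-hand side differs from $R_T(h)$ only through terms independent of $h$, so taking the infimum over $h\in\hs$ gives the claim.

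The only delicate point is that the infimum over $h$ and the evaluation at $\widehat{\bm\theta}$ occur on a single event fixed in advance. This holds because the event of Theorem~\ref{thm:uniform-structural-bridge-bound} does not depend on $h$ or $\bm\theta$, so no union bound over $h$ is needed. We also use that the estimated normalizer is positive on this event, which was shown in Step~1 of that proof. Hence $\widehat R^B_{\widehat{\bm\theta}}$ is given by the ratio formula rather than the zero convention, and the approximate-minimizer inequality is the substantive one. We expect no further obstacle; the factors of two come from using target locality twice and the uniform deviation twice.
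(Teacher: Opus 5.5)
Your proof is correct and is essentially the paper's argument: on the uniform event of Theorem~\ref{thm:uniform-structural-bridge-bound} you apply the deviation bound twice and the definition of $\Delta_B$ twice at the selected $\widehat{\bm\theta}$. The differences are cosmetic: you compare against a fixed $h\in\hs$ and take the infimum at the end, whereas the paper carries $\inf_{g\in\hs}$ through the chain, and your remark about positivity of the estimated normalizer is unnecessary because the approximate-minimizer property is assumed almost surely regardless of how $\widehat R^B_{\bm\theta}$ is defined.
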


\paragraph{Idea of the proof.}
On the uniform event of
Theorem~\ref{thm:uniform-structural-bridge-bound}, approximate empirical
optimality transfers to approximate population optimality for the structural
bridge risk, at cost $2\overline{\mathfrak C}_{n,m}^B(\delta)$. Target
locality then transfers this comparison to the target risk, at cost
$2\Delta_B(\widehat{\bm\theta})$.

\begin{proof}
Define
\[
\mathcal E_{n,m}^B(\delta)
=
\left\{
\sup_{\bm\theta\in\Theta_B}\sup_{h\in\hs}
\left|
\widehat R_{\bm\theta}^B(h)-R_{\bm\theta}^B(h)
\right|
\leq
\overline{\mathfrak C}_{n,m}^B(\delta)
\right\}.
\]
Theorem~\ref{thm:uniform-structural-bridge-bound} gives
$\Pr\{\mathcal E_{n,m}^B(\delta)\}\geq1-\delta$. Work on this event and write
$\overline{\mathfrak C}=\overline{\mathfrak C}_{n,m}^B(\delta)$ for brevity.
Then
\begin{align*}
R_{\widehat{\bm\theta}}^B(\widehat h)
&\leq
\widehat R_{\widehat{\bm\theta}}^B(\widehat h)
+\overline{\mathfrak C}\\
&\leq
\inf_{g\in\hs}\widehat R_{\widehat{\bm\theta}}^B(g)
+\xi_n+\overline{\mathfrak C}\\
&\leq
\inf_{g\in\hs}R_{\widehat{\bm\theta}}^B(g)
+\xi_n+2\overline{\mathfrak C}.
\end{align*}
Thus approximate empirical optimality has been transferred to the population
structural bridge risk. The definition of
$\Delta_B(\widehat{\bm\theta})$ now gives
\begin{align*}
R_T(\widehat h)
&\leq
R_{\widehat{\bm\theta}}^B(\widehat h)
+\Delta_B(\widehat{\bm\theta})\\
&\leq
\inf_{g\in\hs}R_{\widehat{\bm\theta}}^B(g)
+\xi_n+2\overline{\mathfrak C}
+\Delta_B(\widehat{\bm\theta})\\
&\leq
\inf_{g\in\hs}R_T(g)
+\xi_n+2\overline{\mathfrak C}
+2\Delta_B(\widehat{\bm\theta}).
\end{align*}
This proves the result. Neither infimum is required to be attained.
\end{proof}

On the same event, Proposition~\ref{prop:structural-bridge-target-locality}
gives the following geometry-specific bounds. For the KL/KL bridge
$B=\mathrm{KL}$, whose target-side divergence is KL,
\begin{align}
R_T(\widehat h)
\leq{}&
\inf_{h\in\hs}R_T(h)
+2M\left\{
  \sqrt{\frac{\delta_{B,T}(\widehat{\bm\theta})}{2}}
  +\sqrt{\frac{\delta_T^\circ}{2}}
\right\}
\notag\\
&+2\overline{\mathfrak C}_{n,m}^B(\delta)+\xi_n.
\label{eqn:empirical-target-risk-kl}
\end{align}
For $B\in\{\mathrm{HD},\mathrm{KH}\}$, whose target-side divergence is
squared Hellinger distance,
\begin{align}
R_T(\widehat h)
\leq{}&
\inf_{h\in\hs}R_T(h)
+2M\left\{
  \sqrt{\delta_{B,T}(\widehat{\bm\theta})}
  +\sqrt{\delta_T^\circ}
\right\}
\notag\\
&+2\overline{\mathfrak C}_{n,m}^B(\delta)+\xi_n.
\label{eqn:empirical-target-risk-hd}
\end{align}
If the fixed population parameter set $\Theta_B$ satisfies
$\sup_{\bm\theta\in\Theta_B}\delta_{B,T}(\bm\theta)\leq\rho_T$, then
$\delta_{B,T}(\widehat{\bm\theta})$ in the preceding displays may be replaced
by $\rho_T$. If Assumption~\ref{ass:target-feasibility} also holds, then
$\delta_T^\circ\leq\rho_T$, and the entire locality contribution is bounded
by $2M\sqrt{2\rho_T}$ for a KL target-side divergence and by
$4M\sqrt{\rho_T}$ for a squared-Hellinger target-side divergence.

\subsection{VC-type and effective-sample-size bounds}

Theorem~\ref{thm:uniform-structural-bridge-bound} gives a uniform bound
through expected Rademacher complexities and global weight envelopes. We now
derive a variance-sensitive refinement. A uniform empirical-Bernstein
inequality converts the observed second moment of each population
bridge-weight vector into its effective sample size, while finite-sample
covering numbers make the event simultaneous in $h$ and $\bm\theta$. The
bridge-weight plug-in error remains a separate additive term.

Recall that a function class
$\mathcal F$ with envelope $F$ is VC-type with characteristics $(A,v)$ if
\[
    \sup_Q
    N\left(
      \epsilon\|F\|_{Q,2},
      \mathcal F,
      L_2(Q)
    \right)
    \leq
    \left(\frac{A}{\epsilon}\right)^v,
    \qquad 0<\epsilon\leq1,
\]
where the supremum is over finitely supported probability measures $Q$.
Here $N(\epsilon,\mathcal F,L_2(Q))$ denotes the minimal number of
$L_2(Q)$-balls of radius $\epsilon$ required to cover $\mathcal F$; see, for
example, \citet{Gine2015}.

Because $P_S^X(\mathcal X_B)=1$, all function classes and covering numbers in
this subsection are understood on the full-$P_S$-measure set
$\{(x,y):x\in\mathcal X_B\}$. Equivalently, the functions may be redefined on
its null complement.
Because the scale of each bridge weight is immaterial, put
\[
    u_{\bm\theta}^B
    =
    \frac{w_{\bm\theta}^B}{C_B(\bm\theta)},
    \qquad
    \ell_h=\frac{L_h}{M}.
\]
This auxiliary normalization places the relevant functions in $[0,1]$ and
does not change $R_{n,\bm\theta}^B$, $R_{\bm\theta}^B$, or the effective
sample size. The plug-in perturbation remains on the fixed bridge-weight
scale used in Assumption~\ref{ass:bridge-weight-regularity}.

Define the two $[0,1]$-valued classes
\begin{equation}\label{eqn:normalized-bridge-classes}
    \mathcal F_{0,B}
    =
    \{u_{\bm\theta}^B:\bm\theta\in\Theta_B\},
    \qquad
    \mathcal F_{1,B}
    =
    \{u_{\bm\theta}^B\ell_h:
      \bm\theta\in\Theta_B,\ h\in\hs\}.
\end{equation}
For a bounded class $\mathcal F$, let
$\mathcal N_\infty(\epsilon,\mathcal F,N)$ be the supremum, over all
restrictions of $\mathcal F$ to $N$ points, of the corresponding
$L_\infty$ covering number at radius $\epsilon$. Set
\begin{equation}\label{eqn:bridge-growth-and-kappa}
    \mathcal N_{B,n}
    =
    \max_{j\in\{0,1\}}
    \mathcal N_\infty(1/n,\mathcal F_{j,B},2n),
    \qquad
    \kappa_{B,n}(\delta)
    =
    \log\left\{\frac{80\mathcal N_{B,n}}{\delta}\right\}.
\end{equation}
The radius $1/n$ on $2n$ points is the finite-cover input used by the
uniform empirical-Bernstein inequality below.

For $\bm\theta\in\Theta_B$, define the empirical effective sample size (ESS)
\begin{equation}\label{eqn:simultaneous-effective-sample-size}
    n_{\mathrm{eff},n}^{B}(\bm\theta)
    =
    \frac{
      \left\{\sum_{i=1}^{n}w_{\bm\theta}^{B}(Z_i^S)\right\}^{2}
    }{
      \sum_{i=1}^{n}\{w_{\bm\theta}^{B}(Z_i^S)\}^{2}
    }.
\end{equation}
If
\[
    \pi_{i,\bm\theta}^B
    =
    \frac{w_{\bm\theta}^B(Z_i^S)}
         {\sum_{j=1}^n w_{\bm\theta}^B(Z_j^S)},
\]
then
$n_{\mathrm{eff},n}^B(\bm\theta)
=1/\sum_{i=1}^n(\pi_{i,\bm\theta}^B)^2$, so
$1\leq n_{\mathrm{eff},n}^B(\bm\theta)\leq n$, with the upper endpoint
attained when the bridge weights are equal. Thus,
$n/n_{\mathrm{eff},n}^B(\bm\theta)
=n\sum_{i=1}^n(\pi_{i,\bm\theta}^B)^2$ is the self-normalized empirical
version of the order-two likelihood-ratio moment used in generalization bounds
for importance weighting; see \citet{Cortes2010}. The ESS is unchanged by
the normalization in~\eqref{eqn:normalized-bridge-classes}. Because it is
formed from the population bridge weights $w_{\bm\theta}^B$, it is an oracle
empirical ESS when the likelihood ratio is estimated. We do not replace it
here by an ESS computed from $\widehat w_{\bm\theta}^B$; such a replacement
would require a separate perturbation comparison.

Suppose that $\mathcal L_{\hs}$ is VC-type with envelope $M$ and entropy
exponent $V$, and that $\mathcal F_{0,B}$ is VC-type with envelope $1$ and
entropy exponent $v_B$. After adjusting the entropy characteristic, the
product class $\mathcal F_{1,B}$ is then VC-type with entropy exponent
$V+v_B$; see, for example,
\citet{Gine2015}. For the bridge maps in
Section~\ref{sec:loss-agnostic-density-bridges}, the compact
parameter--likelihood-ratio ranges, the envelope choices in
Lemma~\ref{lem:explicit-bridge-constants}, and the joint regularity in
Lemma~\ref{lem:bridge-maps-lipschitz} show that
$\alpha\mapsto u_\alpha^B$ is uniformly Lipschitz in the supremum norm. A
grid in $\alpha$ therefore shows that one may take $v_B=1$, after adjusting
the entropy characteristic. For KL/HD, this verification uses
$\alpha\leq\alpha_{\mathrm{KH},\max}<1$.

\begin{proposition}[Simultaneous ESS bound]
\label{prop:vc-structural-bridge-complexity}
Fix $\delta\in(0,1)$. Suppose Assumptions~\ref{ass:bounded-loss} and
\ref{ass:bridge-weight-regularity} hold, $n\geq16$, the normalized classes
in~\eqref{eqn:normalized-bridge-classes} are pointwise measurable and satisfy
$\mathcal N_{B,n}<\infty$, and $n,m$ are such that
\eqref{eqn:small-relative-bridge-weight-error} holds with $\delta_0=\delta/2$.
Then, with probability at least $1-\delta$, the following holds
simultaneously for every
$\bm\theta\in\Theta_B$:
\begin{align}
&\sup_{h\in\hs}
 \left|
   \widehat R_{\bm\theta}^B(h)-R_{\bm\theta}^B(h)
 \right|
\notag\\
&\quad\leq
2M\sqrt{
  \frac{18n\kappa_{B,n}(\delta)}
       {(n-1)n_{\mathrm{eff},n}^{B}(\bm\theta)}
}
+
\frac{30M C_B(\bm\theta)}{P_nw_{\bm\theta}^B}
\frac{\kappa_{B,n}(\delta)}{n-1}
\notag\\
&\qquad\quad+
\frac{
  M\varepsilon_{n,m}^B(\bm\theta;\delta/2)
}{
  P_nw_{\bm\theta}^B
  -\varepsilon_{n,m}^B(\bm\theta;\delta/2)
}.
\label{eqn:simultaneous-effective-sample-size-bound}
\end{align}
Denote the right-hand side by
$\mathfrak C_{n,m}^B(\bm\theta;\delta)$. Define the weight-spread factor and
the corresponding deterministic rate term by
\begin{equation*}
    \mathfrak s_B(\bm\theta)
    =
    \frac{C_B(\bm\theta)}{c_B(\bm\theta)},
    \qquad
    \lambda_{B,n}(\delta)
    =
    2\sqrt{\frac{18\kappa_{B,n}(\delta)}{n-1}}
    +\frac{30\kappa_{B,n}(\delta)}{n-1},
\end{equation*}
respectively. Then
\begin{equation}\label{eqn:deterministic-structural-bridge-complexity}
    \mathfrak C_{n,m}^B(\bm\theta;\delta)
    \leq
    M \mathfrak s_B(\bm\theta)\lambda_{B,n}(\delta)
    +
    \frac{2M}{c_B(\bm\theta)}
    \varepsilon_{n,m}^B(\bm\theta;\delta/2).
\end{equation}

Under the preceding VC-type conditions, there are constants $A,C<\infty$,
determined only by the entropy characteristics, such that, with
\[
    K_{B,n}(\delta)
    =
    (V+v_B)\log(An)+\log(80/\delta),
\]
\begin{align}
\mathfrak C_{n,m}^B(\bm\theta;\delta)
\leq{}&
CM\left[
  \sqrt{
    \frac{K_{B,n}(\delta)}
         {n_{\mathrm{eff},n}^{B}(\bm\theta)}
  }
  +
  \frac{C_B(\bm\theta)}{P_nw_{\bm\theta}^B}
  \frac{K_{B,n}(\delta)}{n}
\right]
\notag\\
&+
\frac{2M}{c_B(\bm\theta)}
\varepsilon_{n,m}^B(\bm\theta;\delta/2)
\label{eqn:vc-structural-bridge-complexity}
\end{align}
simultaneously for every $\bm\theta\in\Theta_B$.
The exact bound, its deterministic consequence, and the VC-type specialization
remain valid on their common event after substituting any measurable,
data-dependent $\widehat{\bm\theta}\in\Theta_B$.
\end{proposition}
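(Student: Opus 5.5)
The bound splits into a plug-in part and an oracle part, so I would reuse the decomposition from the proof of Theorem~\ref{thm:uniform-structural-bridge-bound}. On the bridge-weight event (probability at least $1-\delta/2$), Step~1 of that proof already gives the third term of \eqref{eqn:simultaneous-effective-sample-size-bound} exactly, pathwise in $(\bm\theta,h)$. So the real work is a variance-sensitive replacement of Steps~2--3 for the oracle deviation $R_{n,\bm\theta}^B(h)-R_{\bm\theta}^B(h)$, valid simultaneously in $\bm\theta$ and $h$ with probability at least $1-\delta/2$.

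\textbf{Oracle part: reduction to a centered class.} In the normalized coordinates, identity~\eqref{eqn:oracle-risk-exact-identity} reads
\[
R_{n,\bm\theta}^B(h)-R_{\bm\theta}^B(h)=M\,\frac{(P_n-P_S)\{u_{\bm\theta}^B(\ell_h-r)\}}{P_nu_{\bm\theta}^B},
\qquad r=R_{\bm\theta}^B(h)/M\in[0,1].
\]
The function $u(\ell-r)$ is a difference of members of $\mathcal F_{1,B}$ and $r\,\mathcal F_{0,B}$, so I would handle each class separately.

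\textbf{Uniform empirical-Bernstein step.} I would invoke the uniform empirical-Bernstein inequality of Maurer--Pontil, which uses the growth function $\mathcal N_\infty(1/n,\mathcal F,2n)$ and the constants $18$, $15$, $10$ (up to the exact form). It gives, with probability $1-\delta/4$ per class and simultaneously over the class,
\[
|(P_n-P_S)f|\leq\sqrt{\frac{18\,V_n(f)\,\kappa}{n}}+\frac{15\kappa}{n-1}.
\]
Here $V_n$ is the sample variance, and the $\log(80\mathcal N/\delta)$ form of $\kappa_{B,n}$ absorbs the split of $\delta$ into pieces; the condition $n\geq16$ is the hypothesis of that inequality. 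I then bound $V_n(u\ell)\leq P_n u^2$ and $V_n(u)\leq P_nu^2$. Dividing by $P_n u$ and using
\[
\frac{\sqrt{P_nu^2}}{P_nu}=\frac{1}{\sqrt{n\,n_{\mathrm{eff},n}^B}},
\]
each class contributes $M\sqrt{18n\kappa/\{(n-1)n_{\mathrm{eff}}\}}$, giving the factor $2$. The linear terms contribute $2\cdot15M\kappa/\{(n-1)P_nu\}$, and $1/P_nu=C_B(\bm\theta)/P_nw$ yields $30MC_B(\bm\theta)/\{(n-1)P_nw\}$. Since $n_{\mathrm{eff}}$ is scale-free, this gives \eqref{eqn:simultaneous-effective-sample-size-bound}, and a union bound over the two events gives probability $1-\delta$.

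\textbf{Main obstacle.} The delicate point is matching the constants and the $n/(n-1)$ factor when converting the sample variance with divisor $n-1$ into $P_nu^2$. I also have to check that a $\bm\theta$-dependent ratio is legitimate here. It is, because the inequality is simultaneous over the class, so division by the random $P_nu_{\bm\theta}$ is done afterwards, pathwise.

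\textbf{Deterministic and VC consequences.} The bound \eqref{eqn:deterministic-structural-bridge-complexity} follows from $c_B(\bm\theta)\leq w\leq C_B(\bm\theta)$. These give $n/n_{\mathrm{eff}}\leq\mathfrak s_B^2$, but I would use the sharper $\sqrt{n/n_{\mathrm{eff}}}=\sqrt{nP_nw^2}/(\sqrt n\,P_nw)\leq\mathfrak s_B$. They also give $C_B/P_nw\leq\mathfrak s_B$ and the denominator bound $\geq c_B/2$ from \eqref{eqn:small-relative-bridge-weight-error}.

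For \eqref{eqn:vc-structural-bridge-complexity}, I would bound $\mathcal N_\infty(1/n,\mathcal F,2n)$ by the $L_2$ covering numbers. The standard route passes from $L_\infty$ on $2n$ points to $L_2$ of the empirical measure at radius $1/(n\sqrt{2n})$, so the VC bound gives $\log\mathcal N_{B,n}\leq(V+v_B)\log(An)$ after enlarging $A$. Substituting and absorbing the numerical factors into $C$ finishes the argument. Validity at data-dependent $\widehat{\bm\theta}$ is immediate because every bound holds on one event simultaneously in $\bm\theta$.
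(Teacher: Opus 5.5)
Your proposal is correct and follows the paper's proof essentially step for step. It uses the same plug-in/oracle split, the same Maurer--Pontil uniform empirical-Bernstein step, the same variance-to-ESS conversion and the same $L_\infty$-to-$L_2$ cover argument; your per-class application, made two-sided via the complements $1-f$ at level $\delta/8$ per side, gives exactly the paper's $\kappa_{B,n}(\delta)=\log(80\mathcal N_{B,n}/\delta)$, which the paper gets from one application to $\mathcal F_{0,B}\cup\mathcal F_{1,B}$ and their complements at level $\delta/2$. Only tidy the algebra: $V_n(f)\leq\frac{n}{n-1}P_n\{(u_{\bm\theta}^B)^2\}$; the ratio identity is $\sqrt{P_n\{(u_{\bm\theta}^B)^2\}}/P_nu_{\bm\theta}^B=\sqrt{n/n_{\mathrm{eff},n}^B(\bm\theta)}$, not $1/\sqrt{n\,n_{\mathrm{eff},n}^B(\bm\theta)}$; and the cover bound $(V+v_B)\{\tfrac32\log n+\log A_0\}$ needs its factor $3/2$ absorbed into $C$, not into $A$.
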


\paragraph{Idea of the proof.}
The oracle risk is treated through the numerator--denominator identity used
in Theorem~\ref{thm:uniform-structural-bridge-bound}, but the two empirical
processes are now controlled by a uniform empirical-Bernstein inequality
rather than by expected Rademacher complexities. After normalizing the
bridge weights and losses to take values in $[0,1]$, both empirical variances
are bounded by $nP_n\{(u_{\bm\theta}^B)^2\}/(n-1)$. Division by
$P_nu_{\bm\theta}^B$ then produces the effective-sample-size term. The
resulting event is simultaneous in $h$ and $\bm\theta$; intersecting it with
the uniform bridge-weight estimation event adds the plug-in contribution
separately.

\begin{proof}
For a class $\mathcal F$, write
$1-\mathcal F=\{1-f:f\in\mathcal F\}$. Put
\[
    \mathcal G_B
    =
    \mathcal F_{0,B}\cup\mathcal F_{1,B}
    \cup(1-\mathcal F_{0,B})\cup(1-\mathcal F_{1,B}).
\]
This class has envelope $1$ and
\[
    \mathcal N_\infty(1/n,\mathcal G_B,2n)
    \leq4\mathcal N_{B,n}.
\]
Apply the one-sided uniform empirical-Bernstein inequality of
\citet[Theorem~6]{Maurer2009} to $\mathcal G_B$ with confidence level
$\delta/2$. The complement classes provide the reverse deviations and have
the same sample variance. Thus, with probability at least $1-\delta/2$,
\begin{equation}\label{eqn:uniform-empirical-bernstein-input}
    |(P_n-P_S)f|
    \leq
    \sqrt{
      \frac{18\kappa_{B,n}(\delta)V_n(f)}{n}
    }
    +
    \frac{15\kappa_{B,n}(\delta)}{n-1}
\end{equation}
simultaneously for every
$f\in\mathcal F_{0,B}\cup\mathcal F_{1,B}$, where
\[
    V_n(f)
    =
    \frac{1}{n(n-1)}
    \sum_{1\leq i<j\leq n}
    \{f(Z_i^S)-f(Z_j^S)\}^2
    =
    \frac{n}{n-1}
    \{P_nf^2-(P_nf)^2\}
    \leq
    \frac{n}{n-1}P_nf^2.
\]

Write $r_{\bm\theta}^B(h)=R_{\bm\theta}^B(h)/M$. The exact ratio identity
\eqref{eqn:oracle-risk-exact-identity} becomes
\begin{align*}
R_{n,\bm\theta}^B(h)-R_{\bm\theta}^B(h)
={}&
\frac{M}{P_nu_{\bm\theta}^B}
\left[
 (P_n-P_S)(u_{\bm\theta}^B\ell_h)
 -r_{\bm\theta}^B(h)(P_n-P_S)u_{\bm\theta}^B
\right].
\end{align*}
Because $0\leq r_{\bm\theta}^B(h)\leq1$ and
\[
    P_n\{(u_{\bm\theta}^B\ell_h)^2\}
    \leq
    P_n\{(u_{\bm\theta}^B)^2\},
\]
both empirical variance terms are bounded by
$nP_n\{(u_{\bm\theta}^B)^2\}/(n-1)$. Thus the two square-root contributions have
the same upper bound, while the two linear contributions sum to
$30\kappa_{B,n}(\delta)/(n-1)$. Consequently,
inequality~\eqref{eqn:uniform-empirical-bernstein-input} gives, simultaneously
over $h$ and $\bm\theta$,
\begin{align}
\left|R_{n,\bm\theta}^B(h)-R_{\bm\theta}^B(h)\right|
\leq{}&
2M\sqrt{
  \frac{18n\kappa_{B,n}(\delta)}
       {(n-1)n_{\mathrm{eff},n}^{B}(\bm\theta)}
}
\notag\\
&+
\frac{30M}{P_nu_{\bm\theta}^B}
\frac{\kappa_{B,n}(\delta)}{n-1}.
\label{eqn:oracle-simultaneous-effective-sample-size-bound}
\end{align}
Here we used
\[
    n_{\mathrm{eff},n}^{B}(\bm\theta)
    =
    \frac{n(P_nu_{\bm\theta}^B)^2}
         {P_n\{(u_{\bm\theta}^B)^2\}},
    \qquad
    \frac{1}{P_nu_{\bm\theta}^B}
    =
    \frac{C_B(\bm\theta)}{P_nw_{\bm\theta}^B}.
\]

On the event in~\eqref{eqn:high-probability-weight-error} at level
$\delta/2$, the lower weight bound and
\eqref{eqn:small-relative-bridge-weight-error} give
\[
P_nw_{\bm\theta}^B
-\varepsilon_{n,m}^B(\bm\theta;\delta/2)
\geq \frac{c_B(\bm\theta)}2>0.
\]
The plug-in identity used in
\eqref{eqn:plugin-risk-perturbation-bound} gives, simultaneously over $h$ and
$\bm\theta$,
\[
    \left|
      \widehat R_{\bm\theta}^B(h)-R_{n,\bm\theta}^B(h)
    \right|
    \leq
    \frac{
      M\varepsilon_{n,m}^B(\bm\theta;\delta/2)
    }{
      P_nw_{\bm\theta}^B
      -\varepsilon_{n,m}^B(\bm\theta;\delta/2)
    }
    \leq
    \frac{2M}{c_B(\bm\theta)}
    \varepsilon_{n,m}^B(\bm\theta;\delta/2).
\]
By the union bound, the empirical-Bernstein and bridge-weight events hold
simultaneously with probability at least $1-\delta$. This proves
\eqref{eqn:simultaneous-effective-sample-size-bound}.

Since $1/\mathfrak s_B(\bm\theta)\leq u_{\bm\theta}^B\leq1$, the weight bounds give
\begin{equation}\label{eqn:neff-lower-bound}
    P_nu_{\bm\theta}^B
    \geq
    \frac{1}{\mathfrak s_B(\bm\theta)},
    \qquad
    n_{\mathrm{eff},n}^{B}(\bm\theta)
    \geq
    \frac{n}{\mathfrak s_B(\bm\theta)^2}.
\end{equation}
These inequalities prove
\eqref{eqn:deterministic-structural-bridge-complexity}.

Finally, if $Q$ is uniform on any $2n$ points, then
$\|f-g\|_{\infty,\operatorname{supp}(Q)}
\leq\sqrt{2n}\|f-g\|_{Q,2}$. Hence an $L_2(Q)$-cover at
radius $1/(n\sqrt{2n})$ supplies the $L_\infty$-cover used in
\eqref{eqn:bridge-growth-and-kappa}. The preceding VC-type assumptions and
the standard product-cover bound give, after adjusting a fixed entropy
characteristic $A_0$,
\[
    \log\mathcal N_{B,n}
    \leq
    (V+v_B)
    \left\{\frac32\log n+\log A_0\right\}.
\]
Therefore
$\kappa_{B,n}(\delta)\leq C K_{B,n}(\delta)$ after absorbing the factor
$3/2$ into $C$ and the fixed numerical factors into $A$.
Substitution in~\eqref{eqn:simultaneous-effective-sample-size-bound} proves
\eqref{eqn:vc-structural-bridge-complexity}.
\end{proof}

The effective sample size governs the leading square-root term, but it does
not by itself control the empirical mean $P_nu_{\bm\theta}^B$ relative to the
weight envelope. The lower-order Bernstein term in
\eqref{eqn:simultaneous-effective-sample-size-bound} must therefore be retained
unless additional leverage control is imposed. Indeed, apart from the factor
$n/(n-1)$, this term is proportional to
$C_B(\bm\theta)/\sum_iw_{\bm\theta}^B(Z_i^S)$, which bounds the largest
normalized sample weight and is an envelope-based leverage correction. The
density-estimation error remains a separate additive perturbation.

The probability event in
Proposition~\ref{prop:vc-structural-bridge-complexity} is simultaneous in
$h$ and $\bm\theta$. Consequently, it remains valid after substituting any
measurable, data-dependent $\widehat{\bm\theta}$ in the fixed deterministic
set $\Theta_B$, including one selected using the source labels. The
finite-cover quantity $\mathcal N_{B,n}$ is the empirical-process complexity
price for uniformity over the bridge family; under the VC-type specialization,
this includes the contribution involving $v_B$. Uniformity of the estimated
weights is carried separately by $\varepsilon_{n,m}^B$, yielding a common
post-selection event at the selected parameter. For the bridge maps in
Section~\ref{sec:loss-agnostic-density-bridges}, joint continuity on the
compact parameter--likelihood-ratio region gives the measurability needed for
these evaluations.

\begin{corollary}[ESS-based target-risk generalization]
\label{cor:parameter-indexed-structural-bridge-generalization}
Assume Assumption~\ref{ass:common-domination} and the conditions of
Proposition~\ref{prop:vc-structural-bridge-complexity}. Let
$\widehat{\bm\theta}$ and $\widehat h$ be as in
Theorem~\ref{thm:empirical-structural-bridge-generalization}. Then, with
probability at least $1-\delta$,
\begin{equation*}
    R_T(\widehat h)
    \leq
    \inf_{h\in\hs}R_T(h)
    +2\Delta_B(\widehat{\bm\theta})
    +2\mathfrak C_{n,m}^B(\widehat{\bm\theta};\delta)
    +\xi_n.
\end{equation*}
\end{corollary}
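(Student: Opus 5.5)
The plan is to repeat the argument of Theorem~\ref{thm:empirical-structural-bridge-generalization}, with the uniform constant $\overline{\mathfrak C}_{n,m}^B(\delta)$ replaced by the parameter-dependent bound $\mathfrak C_{n,m}^B(\bm\theta;\delta)$ of Proposition~\ref{prop:vc-structural-bridge-complexity}. Let $\mathcal E$ be the event of that proposition. It has probability at least $1-\delta$. On $\mathcal E$, the inequality
\[
\sup_{h\in\hs}\bigl|\widehat R_{\bm\theta}^B(h)-R_{\bm\theta}^B(h)\bigr|\leq\mathfrak C_{n,m}^B(\bm\theta;\delta)
\]
holds simultaneously for every $\bm\theta\in\Theta_B$. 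As noted at the end of the proposition, it therefore remains valid at the measurable data-dependent parameter $\widehat{\bm\theta}$. The key point is that this bound is uniform in $h$ at each fixed $\bm\theta$. Hence the same constant $\mathfrak C:=\mathfrak C_{n,m}^B(\widehat{\bm\theta};\delta)$ controls both $\widehat h$ and every competitor $g\in\hs$ at the selected parameter.

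Working on $\mathcal E$, I would chain the following inequalities:
\[
R_{\widehat{\bm\theta}}^B(\widehat h)\leq\widehat R_{\widehat{\bm\theta}}^B(\widehat h)+\mathfrak C\leq\inf_{g\in\hs}\widehat R_{\widehat{\bm\theta}}^B(g)+\xi_n+\mathfrak C\leq\inf_{g\in\hs}R_{\widehat{\bm\theta}}^B(g)+\xi_n+2\mathfrak C.
\]
The middle step is the approximate optimality of $\widehat h$. The last step applies the uniform-in-$h$ bound to each $g$ and then takes the infimum. No attainment is needed, because the bound holds for every $g$.

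Next I would transfer to the target risk using the definition of $\Delta_B(\widehat{\bm\theta})$, which is a supremum over $h$ and so applies at data-dependent arguments. This gives $R_T(\widehat h)\leq R_{\widehat{\bm\theta}}^B(\widehat h)+\Delta_B(\widehat{\bm\theta})$. It also gives $R_{\widehat{\bm\theta}}^B(g)\leq R_T(g)+\Delta_B(\widehat{\bm\theta})$ for every $g$, and hence $\inf_g R_{\widehat{\bm\theta}}^B(g)\leq\inf_g R_T(g)+\Delta_B(\widehat{\bm\theta})$. Assumption~\ref{ass:common-domination} is used here so that the bridge law, and hence $\Delta_B$, is well defined.

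I do not expect a genuine obstacle. The only delicate point is measurability of $\mathfrak C_{n,m}^B(\widehat{\bm\theta};\delta)$ and its validity at a random parameter. Proposition~\ref{prop:vc-structural-bridge-complexity} explicitly asserts that the bound survives substitution of any measurable $\widehat{\bm\theta}$, so I would cite that statement rather than re-derive it.
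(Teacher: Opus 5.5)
Your proposal is correct and matches the paper's proof: work on the simultaneous event of Proposition~\ref{prop:vc-structural-bridge-complexity} at the selected $\widehat{\bm\theta}$, use approximate optimality to get $R_{\widehat{\bm\theta}}^B(\widehat h)\leq\inf_{g}R_{\widehat{\bm\theta}}^B(g)+\xi_n+2\mathfrak C$, and then transfer to $R_T$ through $\Delta_B(\widehat{\bm\theta})$ once at $\widehat h$ and once at each competitor $g$. You also correctly point out that the $h$-free constant, together with the $\sup_h$ definition of $\Delta_B$, is what lets the infima pass through without requiring attainment.
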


\begin{proof}
Work on the simultaneous event in
Proposition~\ref{prop:vc-structural-bridge-complexity} and put
$\mathfrak C=\mathfrak C_{n,m}^B(\widehat{\bm\theta};\delta)$ for brevity.
Then
\begin{align*}
R_{\widehat{\bm\theta}}^B(\widehat h)
&\leq
\widehat R_{\widehat{\bm\theta}}^B(\widehat h)+\mathfrak C\\
&\leq
\inf_{g\in\hs}\widehat R_{\widehat{\bm\theta}}^B(g)
+\xi_n+\mathfrak C\\
&\leq
\inf_{g\in\hs}R_{\widehat{\bm\theta}}^B(g)
+\xi_n+2\mathfrak C.
\end{align*}
Proposition~\ref{prop:structural-bridge-target-locality} now gives
\begin{align*}
R_T(\widehat h)
&\leq
R_{\widehat{\bm\theta}}^B(\widehat h)
+\Delta_B(\widehat{\bm\theta})\\
&\leq
\inf_{g\in\hs}R_{\widehat{\bm\theta}}^B(g)
+\xi_n+2\mathfrak C
+\Delta_B(\widehat{\bm\theta})\\
&\leq
\inf_{g\in\hs}R_T(g)
+\xi_n+2\mathfrak C
+2\Delta_B(\widehat{\bm\theta}).
\end{align*}
This proves the result.
\end{proof}

\begin{remark}[Why the structural bridge is useful]
The loss-aware adversarial optimization depends on $h$, so a direct
empirical-process analysis involves a class of joint loss-and-adversary
transformations. The structural bridge separates the low-dimensional source--target density
geometry from the hypothesis class, producing a hypothesis-independent bridge
class that can be analyzed separately and then combined with
$\mathcal L_{\hs}$ through the product-cover bound above.
\end{remark}

\subsection{Complexity under non-parametric density estimation}
\label{sec:nonparametric-density-estimation}

Finally, we make the bridge-weight term in
Theorem~\ref{thm:uniform-structural-bridge-bound} explicit.  We first state
the required density-estimation input without tying it to a particular
estimator, and then give a boundary-corrected construction for which the
input is nonvacuous.

Let $\mathcal X_0\subseteq\xspace$ be a compact covariate region. To verify
Assumption~\ref{ass:bridge-weight-regularity} for the untrimmed empirical risk,
assume
\begin{equation*}
    P_S^X(\mathcal X_0)=1,
\end{equation*}
and suppose that there are constants $c_S,c_T<\infty$ with $c_S>0$ such that
\begin{equation}\label{eqn:density-bounds-section4}
    p_S^X(x)\geq c_S>0,
    \qquad
    p_T^X(x)\leq c_T<\infty,
    \qquad x\in\mathcal X_0.
\end{equation}
The existence of these constants does not follow from compactness alone.  A
simple sufficient condition is that both densities are continuous on
$\mathcal X_0$ and $p_S^X$ is strictly positive there.  For the uniform
bridge-map calculation, also suppose that
\begin{equation}\label{eqn:strong-overlap-section4}
    0<r_-\leq\LR(x)\leq r_+<\infty,
    \qquad x\in\mathcal X_0.
\end{equation}
For the bridge maps considered here, one may then take
\[
    c_B(\bm\theta)
    =\inf_{u\in[r_-,r_+]}g_{\bm\theta}^B(u),
    \qquad
    C_B(\bm\theta)
    =\sup_{u\in[r_-,r_+]}g_{\bm\theta}^B(u).
\]
Positivity and joint continuity of the bridge maps on the compact
parameter--likelihood-ratio set make these valid finite bounds in
Assumption~\ref{ass:bridge-weight-regularity}; for KL/HD, as before, the
parameter set is bounded away from $\alpha=1$.

For a measurable function $f$ on $\mathcal X_0$, put
\[
    \|f\|_{\infty,\mathcal X_0}
    =
    \sup_{x\in\mathcal X_0}|f(x)|.
\]
Let $\widehat p_S^X$ and $\widehat p_T^X$ be nonnegative measurable density
estimators on $\mathcal X_0$, and define
\[
    \widehat\LR(x)
    =
    \begin{cases}
      \widehat p_T^X(x)/\widehat p_S^X(x),
        &\widehat p_S^X(x)>0,\\
      0,&\widehat p_S^X(x)=0.
    \end{cases}
\]
The second case is immaterial on the high-probability event used below.
Let $a_{S,n}(\delta_0)$ and $a_{T,m}(\delta_0)$ be deterministic quantities such
that
\begin{align}
\Pr\Bigl\{
 &\|\widehat p_S^X-p_S^X\|_{\infty,\mathcal X_0}
    \leq a_{S,n}(\delta_0),
 \quad
 \|\widehat p_T^X-p_T^X\|_{\infty,\mathcal X_0}
    \leq a_{T,m}(\delta_0)
\Bigr\}
\geq1-\delta_0,
\label{eqn:density-high-probability-event}
\end{align}
Write $\mathcal E_{\delta_0}$ for the event in
\eqref{eqn:density-high-probability-event}.
For the confidence level under consideration, assume
\[
    a_{S,n}(\delta_0)\leq c_S/2.
\]
This is a substantive finite-sample denominator-stability condition.  It is
eventually satisfied for each fixed $\delta_0$ whenever
$a_{S,n}(\delta_0)\to0$, but it is not automatic for fixed $n$.

\begin{proposition}[Bridge-weight error from density estimation]
\label{prop:bridge-weight-error-density-estimation}
On $\mathcal E_{\delta_0}$,
\begin{equation}\label{eqn:lr-high-probability-bound}
    \|\widehat\LR-\LR\|_{\infty,\mathcal X_0}
    \leq
    \frac{2}{c_S}a_{T,m}(\delta_0)
    +
    \frac{2c_T}{c_S^2}a_{S,n}(\delta_0).
\end{equation}
Put
\begin{equation*}
    e_{n,m}(\delta_0)
    =
    \frac{2}{c_S}a_{T,m}(\delta_0)
    +
    \frac{2c_T}{c_S^2}a_{S,n}(\delta_0).
\end{equation*}
Suppose $e_{n,m}(\delta_0)\leq r_-/2$ and put
\begin{equation}\label{eqn:buffered-lr-interval-section4}
    I_\star
    =
    \left[\frac{r_-}{2},r_++\frac{r_-}{2}\right].
\end{equation}
For each $\bm\theta\in\Theta_B$, let $L_{B,I_\star}(\bm\theta)$ be a valid
Lipschitz constant for $u\mapsto g_{\bm\theta}^B(u)$ on $I_\star$. Then,
simultaneously for every $\bm\theta\in\Theta_B$,
\begin{equation}\label{eqn:bridge-weight-high-probability-bound}
    \|\widehat w_{\bm\theta}^B-w_{\bm\theta}^B
    \|_{\infty,\mathcal X_0}
    \leq
    L_{B,I_\star}(\bm\theta)e_{n,m}(\delta_0).
\end{equation}
Consequently, when $\mathcal X_B=\mathcal X_0$, the choice
\[
    \varepsilon_{n,m}^B(\bm\theta;\delta_0)
    =
    L_{B,I_\star}(\bm\theta)e_{n,m}(\delta_0)
\]
verifies~\eqref{eqn:high-probability-weight-error}. If, in addition,
$\sup_{\bm\theta}L_{B,I_\star}(\bm\theta)e_{n,m}(\delta_0)/c_B(\bm\theta)\leq1/2$,
it
verifies~\eqref{eqn:small-relative-bridge-weight-error} at level $\delta_0$.
\end{proposition}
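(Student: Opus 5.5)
Work on the event $\mathcal E_{\delta_0}$ and argue pointwise for $x\in\mathcal X_0$; every bound will be uniform in $x$, so the sup-norm statements follow. We proceed in three steps: a denominator bound, a ratio perturbation, and a Lipschitz transfer through the bridge maps.

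\emph{Step 1 (denominator).} The assumption $a_{S,n}(\delta_0)\leq c_S/2$ together with \eqref{eqn:density-bounds-section4} gives
\[
    \widehat p_S^X(x)\geq p_S^X(x)-a_{S,n}(\delta_0)\geq c_S/2>0 .
\]
Hence the first branch in the definition of $\widehat\LR$ is always in force on the event.

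\emph{Step 2 (ratio perturbation).} Use the exact decomposition
\[
    \widehat\LR-\LR
    =\frac{\widehat p_T^X-p_T^X}{\widehat p_S^X}
    +p_T^X\,\frac{p_S^X-\widehat p_S^X}{\widehat p_S^X\,p_S^X}.
\]
The first term is bounded by $a_{T,m}(\delta_0)/(c_S/2)$. For the second, use $p_T^X\leq c_T$, $\widehat p_S^X\geq c_S/2$ and $p_S^X\geq c_S$; this gives $c_T a_{S,n}(\delta_0)/(c_S^2/2)$. Adding the two yields \eqref{eqn:lr-high-probability-bound}, namely $\|\widehat\LR-\LR\|_{\infty,\mathcal X_0}\leq e_{n,m}(\delta_0)$. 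We do not need the Lipschitz constant to be finite at this stage.

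\emph{Step 3 (transfer through $g^B_{\bm\theta}$).} If $e_{n,m}(\delta_0)\leq r_-/2$, then \eqref{eqn:strong-overlap-section4} places both $\LR(x)\in[r_-,r_+]$ and
\[
    \widehat\LR(x)\in\bigl[r_- - r_-/2,\; r_+ + r_-/2\bigr]=I_\star .
\]
Both arguments therefore lie in $I_\star$, and the Lipschitz property gives
\[
    |\widehat w_{\bm\theta}^B(x)-w_{\bm\theta}^B(x)|
    =|g^B_{\bm\theta}\{\widehat\LR(x)\}-g^B_{\bm\theta}\{\LR(x)\}|
    \leq L_{B,I_\star}(\bm\theta)\,e_{n,m}(\delta_0).
\]
This uses that $\widehat w$ is defined by the same plug-in map. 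The bound holds for all $\bm\theta$ at once because the event $\mathcal E_{\delta_0}$ does not depend on $\bm\theta$.

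\emph{Consequences.} Since $\Pr(\mathcal E_{\delta_0})\geq1-\delta_0$, \eqref{eqn:high-probability-weight-error} holds with $\mathcal X_B=\mathcal X_0$. Condition \eqref{eqn:small-relative-bridge-weight-error} is then exactly the stated additional inequality.

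The only point needing care is Step 3: the interval $I_\star$ must contain the estimated ratio and must stay bounded away from $0$. This matters because the maps $u^\alpha$ and $\sqrt u$ are not Lipschitz at zero, so $L_{B,I_\star}$ is finite only thanks to the buffer $r_-/2$. The rest is elementary algebra.
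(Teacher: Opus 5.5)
Your proposal is correct and follows essentially the same route as the paper. Both use the lower bound $\widehat p_S^X\geq c_S/2$ on $\mathcal E_{\delta_0}$, the same add-and-subtract decomposition of $\widehat\LR-\LR$, and the observation that both ratios lie in $I_\star$, so the Lipschitz (mean-value) bound transfers the error uniformly in $\bm\theta$ on a single density event.
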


\begin{proof}
On $\mathcal E_{\delta_0}$,
\[
    \widehat p_S^X(x)
    \geq p_S^X(x)-a_{S,n}(\delta_0)
    \geq c_S/2
\]
uniformly over $x\in\mathcal X_0$. Thus the first case in the definition of
$\widehat\LR$ applies throughout $\mathcal X_0$. Adding and subtracting
$p_T^X/\widehat p_S^X$ gives the pointwise identity
\[
    \widehat\LR-\LR
    =
    \frac{\widehat p_T^X-p_T^X}{\widehat p_S^X}
    +
    \frac{p_T^X(p_S^X-\widehat p_S^X)}
         {\widehat p_S^X p_S^X}.
\]
The two terms on the right are bounded, respectively, by
\[
    \frac{2}{c_S}|\widehat p_T^X-p_T^X|
    \quad\text{and}\quad
    \frac{2c_T}{c_S^2}|\widehat p_S^X-p_S^X|.
\]
Taking the supremum over $\mathcal X_0$ proves
\eqref{eqn:lr-high-probability-bound}. If
$e_{n,m}(\delta_0)\leq r_-/2$, then
\eqref{eqn:strong-overlap-section4} and the same uniform bound imply that
both $\LR(x)$ and $\widehat\LR(x)$ belong to the deterministic interval
$I_\star$.  The mean-value inequality therefore gives
\[
    |g_{\bm\theta}^B\{\widehat\LR(x)\}
       -g_{\bm\theta}^B\{\LR(x)\}|
    \leq
    L_{B,I_\star}(\bm\theta)|\widehat\LR(x)-\LR(x)|
\]
for every $x$ and $\bm\theta$. Taking suprema proves
\eqref{eqn:bridge-weight-high-probability-bound}.  All conclusions use the
same density event, so no further union bound over $x$ or $\bm\theta$ is
needed.
\end{proof}

One concrete way to obtain the required full-support uniform density bounds
is a boundary-corrected projection estimator.  For this construction,
suppose that both marginal laws are supported on $\mathcal X_0$.  After a
fixed affine rescaling, take $\mathcal X_0=[0,1]^d$ and use the tensor-product
Cohen--Daubechies--Vial boundary scaling spaces.  Their projection kernels
play the role of support-adapted mollifiers: they smooth at resolution $h$
while retaining the boundary approximation order.  Since a raw orthogonal
projection estimator may be signed, take its positive part and renormalize it
to integrate to one.  This operation preserves its uniform rate; details are
given in
Corollary~\ref{cor:unif-consistency-bridge-weights-boundary-projection} in the
Appendix.  This construction avoids the boundary bias of an uncorrected
convolution KDE on a compact full-mass support; see
\citet{Cohen1993} and \citet[Sections~4.3.5--4.3.6 and~5.1]{Gine2015}.

For $N\geq1$, $0<h\leq1/2$, and $\delta_0\in(0,1)$, define the
boundary-projection rate function
\begin{equation*}
    \omega_{N,h}(\delta_0)
    =
    \sqrt{
      \frac{\log(1/h)+\log(4/\delta_0)}{Nh^d}
    }
    +
    \frac{\log(1/h)+\log(4/\delta_0)}{Nh^d}.
\end{equation*}
If the two marginal densities belong to an intrinsic
$\beta$-H\"older--Zygmund class on $\mathcal X_0$ and the boundary projection
basis has regularity and approximation order exceeding $\beta$, there are finite constants $A_S$ and
$A_T$, independent of $n,m,h_S,h_T$, and $\delta_0$, for which the preceding
joint density event holds with
\begin{align}
    a_{S,n}(\delta_0)
    &\leq
    A_S\{h_S^\beta+\omega_{n,h_S}(\delta_0)\},
    \label{eqn:source-boundary-projection-high-probability-rate}
    \\
    a_{T,m}(\delta_0)
    &\leq
    A_T\{h_T^\beta+\omega_{m,h_T}(\delta_0)\}.
    \label{eqn:target-boundary-projection-high-probability-rate}
\end{align}
The constants absorb the union bound over the two samples.  The
$L_\infty$ control is essential here: a fixed-$p$ $L_p$ bound alone would not
verify the full-support uniform weight condition in
Assumption~\ref{ass:bridge-weight-regularity}.

These requirements are not asymptotically empty. Put
$(N_S,N_T)=(n,m)$. For example, taking
\[
    h_D\asymp
    \left(\frac{\log N_D}{N_D}\right)^{1/(2\beta+d)},
\]
makes the density error for sample $D$ converge to zero at order
$(\log N_D/N_D)^{\beta/(2\beta+d)}$ for fixed $\delta_0$.  If
\[
    K_{B,\star}
    =
    \sup_{\bm\theta\in\Theta_B}
       \frac{L_{B,I_\star}(\bm\theta)}{c_B(\bm\theta)}
    <\infty,
\]
then $a_{S,n}(\delta_0)\leq c_S/2$,
$e_{n,m}(\delta_0)\leq r_-/2$, and
$K_{B,\star}e_{n,m}(\delta_0)\leq1/2$ all hold once $n$ and $m$ are sufficiently large.
Finiteness of $K_{B,\star}$ follows for the bridge families considered here from the
compact parameter set and the compact interval $I_\star$; for KL/HD this uses
$\alpha\leq\alpha_{\mathrm{KH},\max}<1$.  As a concrete nonconstant example on $[0,1]^d$,
\[
    p_S^X(x)=1,
    \qquad
    p_T^X(x)=1+\epsilon(2x_1-1),
    \qquad 0<\epsilon<1,
\]
satisfies all density and overlap requirements, with
$1-\epsilon\leq\LR\leq1+\epsilon$.

At $\delta_0=\delta/2$, suppose in addition that
\[
    a_{S,n}(\delta/2)\leq c_S/2,
    \qquad
    e_{n,m}(\delta/2)\leq r_-/2,
    \qquad
    K_{B,\star}e_{n,m}(\delta/2)\leq\frac12.
\]
Combining Proposition~\ref{prop:bridge-weight-error-density-estimation} with
Proposition~\ref{prop:vc-structural-bridge-complexity} then yields the explicit
rate
\begin{align}
\mathfrak C_{n,m}^B(\bm\theta;\delta)
\lesssim{}&
M\left[
  \sqrt{
    \frac{K_{B,n}(\delta)}
         {n_{\mathrm{eff},n}^B(\bm\theta)}
  }
  +
  \frac{C_B(\bm\theta)}{P_nw_{\bm\theta}^B}
  \frac{K_{B,n}(\delta)}{n}
\right]
\notag\\
&+
\frac{ML_{B,I_\star}(\bm\theta)}{c_B(\bm\theta)}
\Bigl[
  h_S^\beta+h_T^\beta
  +\omega_{n,h_S}(\delta/2)
  +\omega_{m,h_T}(\delta/2)
\Bigr],
\label{eqn:explicit-density-structural-complexity}
\end{align}
simultaneously for every $\bm\theta\in\Theta_B$, where the suppressed constant
depends on the density lower and upper bounds,
the projection and smoothness constants, and the entropy characteristics.
Thus
the finite-sample error separates the source empirical-process complexity
from the source and target density-estimation errors. More specialized or
direct likelihood-ratio estimators may instead be substituted in
\eqref{eqn:bridge-weight-high-probability-bound} when they provide sharper
uniform control for a particular bridge geometry; see, for example,
\citet{Nguyen2010}.  The role and robustness of importance weighting under
estimated ratios are studied more broadly by
\citet{Kato2023} and \citet{Gogolashvili2023}.

Appendix~\ref{sec:radius-indexed-selection} characterizes the admissible
parameters of the full symmetric bridge paths and gives sufficient target-radius
and endpoint-slope conditions under which every minimizer of the deterministic
finite-sample risk-bound envelope belongs to
$[\alpha_-(\rho_T),1)\subset(0,1)$. Under target feasibility, the source
radius does not truncate the path, while the target radius may exclude
$\alpha=0$ and the endpoint-slope condition may exclude $\alpha=1$. As
$\alpha\uparrow1$, the bridge law approaches $P_T^\circ$, with the remaining
target-reference discrepancy governed by $\delta_T^\circ$.

\section{Application to non-parametric kernel regression}

To demonstrate how the doubly-anchored bridge actively shapes prediction without the rigid constraints of a parametric hypothesis class, we apply our framework to nonparametric kernel regression.
We choose kernel regression, specifically the Nadaraya--Watson (NW) estimator, as our illustrative example, because kernel regression is well studied in the empirical risk minimization and DA literature \citep[e.g.,][]{Ma2023}.
We can therefore clearly contrast how the doubly-anchored bridge changes the structure of the resulting estimator compared to classical DRDA.
It also shows how the domain likelihood ratio enters through a local,
covariate-dependent weight rather than only through finitely many model
coordinates.

We specialize in this section to a real-valued response $Y$.
We consider the source-supervised setting where we observe $n$ labeled source data points, $(\mat X_i^S, Y_i^S)_{i=1}^n$, and $m$ unlabeled target covariates, $(\mat X_j^T)_{j=1}^m$.
The true target regression function of interest is $m_T(\mat x) = \mathbb{E}_{P_T^{Y\mid\mat X}}[Y \mid \mat X= \mat x]$.
Without some restriction on the relation between the source and target conditional laws, $m_T$ is not identified from labeled source data and unlabeled target covariates alone.
In this setup, the source labels provide the foundational supervised learning signal, while the target covariates define the regions where the adapted predictor should concentrate its mass.

Within this section, $p_S$ and $p_T$ denote the marginal covariate densities
$p_S^X$ and $p_T^X$.  At interior locations, one possible construction of
the empirical anchors uses ordinary convolution kernel density estimators
with nonnegative smoothing kernels:
$$
\hat{p}_S(\mat x) = \frac{1}{n}\sum_{i=1}^n K_{h_S}(\mat x-\mat X_i^S), \qquad \hat{p}_T(\mat x) = \frac{1}{m}\sum_{j=1}^m K_{h_T}(\mat x-\mat X_j^T),
$$
where $K_{h_S}$ and $K_{h_T}$ are smoothing kernels with bandwidths $h_S$ and $h_T$.
The resulting empirical likelihood ratio is
\[
    \widehat\LR(x)
    =
    \begin{cases}
      \widehat p_T(x)/\widehat p_S(x),&\widehat p_S(x)>0,\\
      0,&\widehat p_S(x)=0.
    \end{cases}
\]
The fallback value is immaterial on the positive-denominator events used
below.  Appendix~\ref{app:asymptotics-loss-agnostic-bridge-weights} permits
signed higher-order kernels only for its separate pointwise limit
calculation and uses a corresponding positive-density fallback convention.
For uniform statements on a compact region carrying all source mass, we
instead use the support-adapted density-estimation input of
Section~\ref{sec:nonparametric-density-estimation}; the boundary-projection
construction there and in
Appendix~\ref{app:asymptotics-loss-agnostic-bridge-weights} is one sufficient
choice. The boundary correction enters only in verifying
the uniform density-estimation input and does not alter the bridge or
empirical-process arguments.  Any support-adapted density or direct
likelihood-ratio estimator satisfying the stated uniform bound may be used.

In the three structural bridges discussed in Section~\ref{sec:bridge-geometries}, the bridge parameter $\alpha$ is the geometry-specific source--target bridge-balance parameter.
Crucially, in a covariate-dependent setting, $\alpha$ does not simply form a linear combination of two distinct predictors.
Instead, it defines the unnormalized structural weight~\eqref{eqn:normalized-structural-bridge-weight} and the normalized bridge law satisfies~\eqref{eqn:normalized-structural-bridge-law}.
These induced weights determine how source-labeled observations are
integrated into the adapted NW prediction rule.
For KL/HD, $\alpha\in[0,1)$ denotes the effective coordinate obtained after fixing the redundant structural offset $\gamma=0$ as in~\eqref{eqn:kh-effective-alpha}.
For simplicity, we use $\bm\theta=(\alpha)$ to denote the structural parameter throughout.

\subsection{Loss-adversarial tilting of the conditional risk}

Because the methodology is formulated for unlabeled target data, the target
sample can inform only the covariate marginal distribution
$P_T^{\mat X}$.
Consequently, when evaluating the covariate-level adversarial risk, the labels are supplied by the source conditional distribution $P_S^{Y\mid X}(\mathrm{d} y\mid \mat x) = P_S(\mathrm{d} y \mid \mat X = \mat x)$.

For any given predictor $h \in \hs$, the expected loss conditioned on the source covariate value $\mat X= \mat x$ is
$$
G_h(\mat x) = \mathbb{E}_{P_S^{Y\mid \mat X}}[\ell(h(\mat x), Y) \mid \mat X= \mat x].
$$
If the adversary shifts the covariate law to some distribution $Q^{\mat X}$, the resulting risk becomes the integral of this conditional risk over the new covariate marginal: $\int G_h(\mat x) \d Q^{\mat X}(\mat x)$.
Therefore, $G_h$ acts as the conditional risk seen by the covariate bridge; the bridge dictates how mass is shifted over $\mat x$, while the localized supervised loss at each point is governed by $G_h(\mat x)$.
More precisely, for a covariate law $Q^X\ll P_S^X$, define the completed law
\[
    Q(\mathrm{d} x,\mathrm{d} y)
    =Q^X(\mathrm{d} x)P_S^{Y\mid X}(\mathrm{d} y\mid x).
\]
As shown in~\eqref{eqn:completed-marginal-divergences}, the two density
ratios depend only on $x$, and hence
\[
    D_{\phi_S}(Q\Vert P_S)
    =D_{\phi_S}(Q^X\Vert P_S^X),
    \qquad
    D_{\phi_T}(Q\Vert P_T^\circ)
    =D_{\phi_T}(Q^X\Vert P_T^X).
\]
Thus the relevant covariate-level ambiguity set is
\begin{equation}\label{eqn:covariate-level-ambiguity-set}
\as^X(P_S^X,P_T^X)
=
\left\{
Q^X\in\mathcal P(\xspace):
Q^X\ll P_S^X,
D_{\phi_S}(Q^X\Vert P_S^X)\leq\rho_S,
D_{\phi_T}(Q^X\Vert P_T^X)\leq\rho_T
\right\}.
\end{equation}
For $h\in\hs$, define the corresponding covariate-level robust value by
\begin{equation}\label{eqn:covariate-level-robust-value}
\Psi_{S,T}^X(h)
=
\sup_{Q^X\in\as^X(P_S^X,P_T^X)}
\int_{\xspace}G_h(x)\,\d Q^X(x)
=
\sup_{Q^X\in\as^X(P_S^X,P_T^X)}
\E_Q[L_h].
\end{equation}
By~\eqref{eqn:completed-marginal-divergences},
$Q\in\as_{\mathrm{ac}}(P_S,P_T^\circ)$ whenever
$Q^X\in\as^X(P_S^X,P_T^X)$. Hence the covariate-level problem is the
restriction of the anchor-dominated joint problem to laws completed with the
source conditional kernel, and
\begin{equation}\label{eqn:covariate-and-joint-robust-values}
\Psi_{S,T}^X(h)
\leq
\Psi_{S,T}^{\mathrm{ac}}(h)
\leq
\Psi_{S,T}(h).
\end{equation}

Under Assumptions~\ref{ass:bounded-loss}--\ref{ass:target-feasibility},
the data-processing inequality for the projection $(x,y)\mapsto x$ gives
\[
D_{\phi_S}(P_T^X\Vert P_S^X)
\leq
D_{\phi_S}(P_T\Vert P_S)
\leq\rho_S,
\]
while $D_{\phi_T}(P_T^X\Vert P_T^X)=0\leq\rho_T$.
Assumption~\ref{ass:common-domination} also gives
$P_T^X\ll P_S^X$, so $P_T^X\in\as^X(P_S^X,P_T^X)$. Consequently,
\begin{equation}\label{eqn:completed-target-and-robust-values}
R_T^\circ(h)
=
\E_{P_T^X}[G_h(X)]
\leq
\Psi_{S,T}^X(h).
\end{equation}
Combining~\eqref{eqn:completed-target-and-robust-values} with
Lemma~\ref{lem:risk-difference-tv} yields
\begin{equation}\label{eqn:covariate-level-target-risk-bound}
R_T(h)
\leq
\Psi_{S,T}^X(h)
+
\begin{cases}
M\sqrt{\delta_T^\circ/2},
& \phi_T=\phi_{\mathrm{KL}},\\[2mm]
M\sqrt{\delta_T^\circ},
& \phi_T=\phi_{\mathrm{HD}}.
\end{cases}
\end{equation}
Indeed, Pinsker's inequality gives the first correction, while
$\operatorname{TV}\leq H$ gives the second.
Thus, without covariate shift, the covariate-level adversary controls
target-compatible reallocations of covariate mass around the source
conditional-risk baseline, while the final term in
\eqref{eqn:covariate-level-target-risk-bound} accounts for the unobserved
conditional shift. Under covariate shift, $\delta_T^\circ=0$ and the
correction vanishes. The covariate-level problem does not itself vary the
conditional law.

The corresponding structural bridge risk is
$$
R_{\bm\theta}^B(h)=\frac{\E_{P_S^X}[w_{\bm\theta}^B(X)G_h(X)]}{\E_{P_S^X}[w_{\bm\theta}^B(X)]},
$$
the covariate-level specialization of~\eqref{eqn:structural-bridge-risk}.
We can therefore use two sets of bridge weights.
The density-bridge weights $w_{\bm\theta}^B(\mat x)$ defined in Section~\ref{sec:loss-agnostic-density-bridges} rely entirely on the source--target density geometry and are agnostic to the downstream loss function.
When the corresponding marginal dual optimum is attained in the interior
regime described in Proposition~\ref{prop:loss-adversarial-bridges}, the
loss-adversarial weights are obtained by applying that proposition on the
covariate space with source and target anchors $P_S^X$ and $P_T^X$ and
pointwise loss $G_h$.
Their marginal adversarial ratio is $r_h^{B,X,\star}(\mat x)=\mathrm{d} Q_h^{B,X,\star}/\mathrm{d} P_S^X(\mat x)$.
They actively tilt the estimator toward covariate regions having high source conditional risk.
Under the covariate shift assumption, \citet{Yamamoto2026} achieve a similar target-induced tilt by adding an additive component for the predictor shift to the objective function.


\subsection{Empirical plug-in estimation and asymptotic properties}

The ordinary source Nadaraya--Watson (NW) estimator uses a regression kernel $K_{b}$ to smooth local labels \citep{Nadaraya1964,Watson1964}.
By incorporating bridge weights $w_i$ evaluated at the source sample points $\bX_i^S$, the adapted estimator modifies the local influence of each observation:
$$
\hat{m}_w(\bx) = \frac{\sum_{i=1}^n w_i K_{b}(\bx-\bX_i^S)Y_i^S}{\sum_{i=1}^n w_i K_{b}(\bx-\bX_i^S)}.
$$
The ratio is assigned the value zero when its denominator vanishes; the
results below show that this exceptional event has probability tending to
zero.  The density bandwidths $h_S$ and $h_T$ are used to estimate the
underlying marginal densities, while the regression bandwidth $b$ smooths
the final prediction.
The bridge structurally alters prediction by changing these local NW weights.

The pure density-bridge formulation provides a computationally efficient, one-pass adaptation.
We first estimate the empirical likelihood ratio at each source point,
\[
    \widehat{\LR}_i
    =\frac{\widehat p_T(\bX_i^S)}{\widehat p_S(\bX_i^S)}.
\]
The empirical loss-agnostic density-bridge weights $\hat{w}_i$ are then computed directly from the chosen domain geometry per Section~\ref{sec:loss-agnostic-density-bridges}.
Because these weights depend strictly on the covariate densities and the geometric bridge parameters, they can be computed before fitting the NW regression.
Substituting $\hat{w}_i$ into the adapted estimator yields $\hat{m}_{\mathrm{br}}(x)$, achieving knowledge transfer without requiring an iterative fixed-point solution.

It is important to distinguish the finite-sample behavior of the bridge-weighted NW estimator from its asymptotic limit.
In finite samples, particularly when the smoothing bandwidth $b$ is large relative to the data sparsity (e.g., global or near-global averaging), the density-bridge weights $\hat{w}_i$ actively dictate which source observations dominate the prediction.
However, as $n \to \infty$ and $b \to 0$, the local averaging becomes
pointwise.
Therefore, the smooth bridge weight $w_0(x)$ cancels out of the numerator and denominator at first order.
The estimator thus converges to the source conditional mean $m_S(x) = \mathbb{E}_{P_S}[Y \mid X=x]$, and exact target recovery requires $m_S=m_T$, as under covariate shift.
We present these asymptotic theorems not to claim a general solution to arbitrary shift, but to show that the doubly-anchored bridge is a finite-sample structural regularizer that is first-order neutral for the source regression limit and the asymptotic variance.
Technical details and proofs are given in Appendix~\ref{app:asymptotics-nw}.

For the statements below, fix a bridge family $B$ and a population parameter
$\bm\theta_0$, and write $w_0=w_{\bm\theta_0}^B$.
The estimator $\widehat m_0$ uses the population weight $w_0$,
$m_b$ is its population-smoothed counterpart, and
$\widehat m_{\widehat w,\widehat{\bm\theta}}^B$ uses the estimated likelihood
ratio and bridge parameter. These objects are defined precisely in
Appendix~\ref{app:asymptotics-nw}. The pointwise and compact-set results are
local to an interior neighborhood of the evaluation point or set. Ordinary
convolution density estimators may therefore be used under the corresponding
local conditions; no full-support boundary claim is made in these results.

\begin{theorem}[Bridge-weighted Nadaraya--Watson consistency and rate]
\label{thm:nw-consistency-rate}
Suppose Assumptions~\ref{ass:nw-kernel}--\ref{ass:nw-bandwidth} hold at a
fixed point $x_0\in\operatorname{int}(\cX)$. Then
\[
\widehat m_0(x_0)-m_S(x_0)
=\Op\left(b^s+(nb^d)^{-1/2}\right),
\]
and, if Assumption~\ref{ass:nw-weight-error} also holds on a neighborhood
of $x_0$,
\[
\widehat m_{\widehat w,\widehat{\bm\theta}}^{B}(x_0)-m_S(x_0)
=\Op\left(b^s+(nb^d)^{-1/2}+\bar\delta_{n,m}\right).
\]

For a compact interior evaluation set
$\mathcal K\Subset\operatorname{int}(\cX)$, suppose
Assumptions~\ref{ass:nw-kernel}--\ref{ass:nw-bandwidth} and
\ref{ass:nw-uniform-ep} hold on a neighborhood of $\mathcal K$. Then
\[
\sup_{x\in\mathcal K}|\widehat m_0(x)-m_S(x)|
=\Op(b^s+\zeta_{n,b}),
\]
and, if Assumption~\ref{ass:nw-weight-error} also holds on that neighborhood,
\[
\sup_{x\in\mathcal K}|\widehat m_{\widehat w,\widehat{\bm\theta}}^{B}(x)-m_S(x)|
=\Op(b^s+\zeta_{n,b}+\bar\delta_{n,m}).
\]
With a locally bounded response envelope and the usual compact-support
kernel empirical-process conditions, one may take
\[
\zeta_{n,b}\asymp \sqrt{\frac{\log n}{nb^d}},
\]
up to constants. Under only the stated moment condition, the abstract
$\zeta_{n,b}$ allows for a moment-dependent tail term.
\end{theorem}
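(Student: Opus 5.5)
The proof treats the weighted NW estimator as an ordinary NW ratio in which the bridge weight is a smooth, positive multiplier that cancels at first order. The oracle estimator is written as
\[
\widehat m_0(x)=\frac{\widehat N_0(x)}{\widehat D_0(x)},\qquad
\widehat N_0(x)=\frac1n\sum_{i=1}^n w_0(X_i^S)K_b(x-X_i^S)Y_i^S,\qquad
\widehat D_0(x)=\frac1n\sum_{i=1}^n w_0(X_i^S)K_b(x-X_i^S).
\]
The population counterparts are $N_b=\E[\widehat N_0]$ and $D_b=\E[\widehat D_0]$, so that $m_b=N_b/D_b$. Conditioning on $X$ gives $N_b(x)=\int K_b(x-u)w_0(u)p_S(u)m_S(u)\,\di u$ and $D_b(x)=\int K_b(x-u)w_0(u)p_S(u)\,\di u$.

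The error then splits into a bias part and a stochastic part:
\[
\widehat m_0-m_S=(m_b-m_S)+\frac{(\widehat N_0-N_b)-m_b(\widehat D_0-D_b)}{\widehat D_0}.
\]

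\emph{Bias.} Write $f=w_0p_S$. Assumption~\ref{ass:nw-kernel} supplies a kernel of order $s$, and the smoothness assumptions give local $C^s$ regularity of $f$ and $fm_S$ near $x_0$. A Taylor expansion then yields $N_b=fm_S+O(b^s)$ and $D_b=f+O(b^s)$. Since $f(x_0)\ge c\,p_S(x_0)>0$, we get $m_b-m_S=O(b^s)$. This is exactly where $w_0$ cancels.

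\emph{Pointwise variance.} The second moment assumption gives $\Var(\widehat N_0)+\Var(\widehat D_0)=O\{(nb^d)^{-1}\}$, so Chebyshev yields $O_P\{(nb^d)^{-1/2}\}$. Because $\widehat D_0\to f(x_0)>0$ in probability, the denominator is bounded away from zero with probability tending to one, and the zero-denominator convention is immaterial.

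\emph{Uniform version.} For the sup over $\mathcal K$, the same decomposition is used. Assumption~\ref{ass:nw-uniform-ep} is invoked to control $\sup_{\mathcal K}|\widehat N_0-N_b|+|\widehat D_0-D_b|=O_P(\zeta_{n,b})$. Under a bounded response envelope, this is the standard VC-kernel-class bound of order $\sqrt{\log n/(nb^d)}$ (Gin\'e--Nickl). The weight is bounded, so it does not change the class complexity. Continuity of $f$ and compactness give $\inf_{\mathcal K}D_b>0$, which handles the uniform denominator.

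\emph{Plug-in step.} The estimator $\widehat m^B_{\widehat w,\widehat{\bm\theta}}$ is compared with $\widehat m_0$ through the self-normalized perturbation identity from Step~1 of the proof of Theorem~\ref{thm:uniform-structural-bridge-bound}, now localized with kernel weights:
\[
\widehat m_{\widehat w}-\widehat m_0=\frac{n^{-1}\sum_i(\widehat w_i-w_{0,i})K_b(x-X_i^S)\{Y_i^S-\widehat m_0(x)\}}{n^{-1}\sum_i\widehat w_iK_b(x-X_i^S)}.
\]
Assumption~\ref{ass:nw-weight-error} gives $\max_{i:\,X_i^S\text{ near }x}|\widehat w_i-w_{0,i}|\le\bar\delta_{n,m}$ with high probability, covering both the estimated ratio and $\widehat{\bm\theta}$, for example through the Lipschitz bounds of Proposition~\ref{prop:bridge-weight-error-density-estimation}. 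The numerator is therefore at most $\bar\delta_{n,m}\, n^{-1}\sum_i|K_b(x-X_i^S)|(|Y_i^S|+|\widehat m_0(x)|)=O_P(\bar\delta_{n,m})$. This uses a moment bound on $Y$ and the consistency of $\widehat m_0$ already shown.

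The denominator is at least $\widehat D_0-\bar\delta_{n,m}\,n^{-1}\sum_i|K_b|$. This stays bounded below once $\bar\delta_{n,m}\to0$, using a lower bound on $w_0$, or through the ratio $\widehat w/w_0\to1$. The same argument holds uniformly on $\mathcal K$ with sup-norms.

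\emph{Main obstacle.} The hardest step is this uniform plug-in step. The estimated weights depend on the same source sample, and on $\widehat{\bm\theta}$, so an independence-based argument is unavailable. The plan is to keep the bound pathwise, as in Step~1 of the proof of Theorem~\ref{thm:uniform-structural-bridge-bound}, using only the sup-norm event of Assumption~\ref{ass:nw-weight-error}. Uniform control of $n^{-1}\sum_i|K_b(x-X_i^S)||Y_i^S|$ over $\mathcal K$ comes from the same empirical-process input, applied to $|K|$.
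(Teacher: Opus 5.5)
Your proposal is correct and follows essentially the same route as the paper: the exact centered identity at $m_b$ combined with the bias bound $m_b-m_S=O(b^s)$, Chebyshev (or the uniform process input) for the centered numerator and denominator, and a pathwise plug-in comparison through $P_n[(\widehat w-w_0)K_b\{Y-\widehat m_0(x)\}]/\widehat D_{\widehat w}$, where the numerator is controlled by $P_n\{|K_b|(1+|Y|)\}=O_P(1)$ and the denominator by $\widehat D_0-\delta_{n,m}P_nK_b$. The only cosmetic point is that Assumption~\ref{ass:nw-weight-error} gives $\delta_{n,m}\le C\bar\delta_{n,m}$ with high probability for a large constant $C$, rather than $\le\bar\delta_{n,m}$, and this does not affect the $O_P$ conclusion.
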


\begin{theorem}[Oracle bridge-weighted Nadaraya--Watson pointwise CLT]\label{thm:nw-oracle-clt}
Suppose Assumptions~\ref{ass:nw-kernel} and
\ref{ass:nw-regularity}--\ref{ass:nw-bandwidth} hold at a fixed
point \(x_0\in\operatorname{int}(\cX)\).
Assume additionally that \(\sigma_S^2(x_0) = \Var_{P_S}[Y \mid X = x_0] > 0\).
Then
\[
\sqrt{nb^d}\{\widehat m_0(x_0)-m_b(x_0)\}
\dto
N\left(0,\frac{R(K)\sigma_S^2(x_0)}{\pS(x_0)}\right).
\]
If, in addition,
\(
\sqrt{nb^d}\,b^s\to0,
\)
then
\[
\sqrt{nb^d}\{\widehat m_0(x_0)-m_S(x_0)\}
\dto
N\left(0,\frac{R(K)\sigma_S^2(x_0)}{\pS(x_0)}\right).
\]
\end{theorem}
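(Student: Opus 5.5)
The plan is the standard ratio linearization for Nadaraya--Watson estimators, with the smooth bridge weight $w_0$ treated as a fixed multiplicative factor of the local kernel weights. Write
\[
\widehat m_0(x_0)-m_b(x_0)=\frac{\widehat N(x_0)-m_b(x_0)\widehat D(x_0)}{\widehat D(x_0)},
\]
where
\[
\widehat D(x_0)=n^{-1}\sum_i w_0(X_i^S)K_b(x_0-X_i^S),\qquad
\widehat N(x_0)=n^{-1}\sum_i w_0(X_i^S)K_b(x_0-X_i^S)Y_i^S .
\]
Take $m_b=\E\widehat N/\E\widehat D$, the population-smoothed counterpart. Under the kernel and regularity assumptions, $w_0$ and $\pS$ are continuous and positive near $x_0$. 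The first step is to show $\widehat D(x_0)\to w_0(x_0)\pS(x_0)>0$ in probability. This follows from $\E\widehat D\to w_0(x_0)\pS(x_0)$ by a change of variables and dominated convergence, together with $\Var\widehat D=O((nb^d)^{-1})\to0$. Slutsky's lemma then reduces the problem to the numerator, and it also shows that the convention assigning value zero when the denominator vanishes is asymptotically irrelevant.

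The numerator term is $n^{-1}\sum_i\xi_{i}$, with
\[
\xi_{i}=w_0(X_i^S)K_b(x_0-X_i^S)\{Y_i^S-m_b(x_0)\}.
\]
These are i.i.d., and the centering makes them mean zero exactly. Their variance is
\[
\E\xi^2=b^{-d}\int K(u)^2w_0(x_0-bu)^2\,\E\bigl[\{Y-m_b(x_0)\}^2\mid X=x_0-bu\bigr]\pS(x_0-bu)\,\di u .
\]
By continuity of $\sigma_S^2$, $m_S$, $w_0$ and $\pS$, and using $m_b(x_0)\to m_S(x_0)$, this gives
\[
b^d\E\xi^2\to R(K)\,w_0(x_0)^2\sigma_S^2(x_0)\pS(x_0).
\]
Next apply the Lindeberg--Feller CLT to the triangular array $\sqrt{b^d/n}\,\xi_{i}$. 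I expect this to be the main obstacle. I would verify a Lyapunov condition using the $(2+\epsilon)$ conditional moment in Assumption~\ref{ass:nw-regularity}. This gives $\E|\xi|^{2+\epsilon}=O(b^{-d(1+\epsilon)})$, so the Lyapunov ratio is $O((nb^d)^{-\epsilon/2})\to0$ by the bandwidth assumption $nb^d\to\infty$. The boundedness of the kernel $K$ and the local boundedness of $w_0$ are used here.

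Dividing the limit by $\{w_0(x_0)\pS(x_0)\}^2$ cancels the factor $w_0(x_0)^2$. The limit variance is therefore $R(K)\sigma_S^2(x_0)/\pS(x_0)$, which is the first claim. This cancellation is the precise sense in which the bridge is first-order neutral.

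For the second claim, write
\[
\sqrt{nb^d}\{\widehat m_0-m_S\}=\sqrt{nb^d}\{\widehat m_0-m_b\}+\sqrt{nb^d}\{m_b-m_S\}(x_0).
\]
The bias bound $m_b(x_0)-m_S(x_0)=O(b^s)$ is already established in the proof of Theorem~\ref{thm:nw-consistency-rate}; it comes from an order-$s$ kernel and the smoothness of $w_0\pS m_S$ and $w_0\pS$. Under $\sqrt{nb^d}\,b^s\to0$ the bias term is $o(1)$, and the result follows from Slutsky's lemma.
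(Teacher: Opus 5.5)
Your proposal is correct and follows the paper's proof essentially step for step. The shared steps are: exact centering at $m_b(x_0)$ in the ratio identity, consistency $\widehat D_0(x_0)\to w_0(x_0)p_S(x_0)$, the limit $b^d\E\xi^2\to R(K)w_0(x_0)^2\sigma_S^2(x_0)p_S(x_0)$, Lyapunov with ratio $O((nb^d)^{-q_0/2})$, Slutsky with cancellation of $w_0(x_0)^2$, and the $O(b^s)$ bias under undersmoothing. Two citations are off: the $(2+q_0)$ conditional moment bound is Assumption~\ref{ass:nw-moments}, not Assumption~\ref{ass:nw-regularity}, and the bias bound $m_b(x_0)-m_S(x_0)=O(b^s)$ is Lemma~\ref{lem:denom-bias}.
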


\begin{theorem}[Plug-in bridge-weighted Nadaraya--Watson pointwise CLT]\label{thm:nw-plugin-clt}
Suppose the assumptions of Theorem~\ref{thm:nw-oracle-clt} hold and
Assumption~\ref{ass:nw-weight-error} holds in a neighborhood of
\(x_0\in\operatorname{int}(\cX)\).
If
\[
\sqrt{nb^d}\,\bar\delta_{n,m}\to0,
\]
then
\[
\sqrt{nb^d}\{\widehat m_{\widehat w,\widehat{\bm\theta}}^{B}(x_0)-m_b(x_0)\}
\dto
N\left(0,\frac{R(K)\sigma_S^2(x_0)}{\pS(x_0)}\right).
\]
Under the density-estimation and parameter-rate specification in
Assumption~\ref{ass:nw-weight-error}, a sufficient condition is
\[
\sqrt{nb^d}\{a_{S,n}+a_{T,m}+r_{\bm\theta,n,m}\}\to0.
\]
If additionally
\(
\sqrt{nb^d}\,b^s\to0,
\)
then
\[
\sqrt{nb^d}\{\widehat m_{\widehat w,\widehat{\bm\theta}}^{B}(x_0)-m_S(x_0)\}
\dto
N\left(0,\frac{R(K)\sigma_S^2(x_0)}{\pS(x_0)}\right).
\]
\end{theorem}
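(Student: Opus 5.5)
The plan is to reduce the plug-in statistic to the oracle one and then invoke Theorem~\ref{thm:nw-oracle-clt}. I would write
\[
\sqrt{nb^d}\{\widehat m_{\widehat w,\widehat{\bm\theta}}^{B}(x_0)-m_b(x_0)\}
=\sqrt{nb^d}\{\widehat m_0(x_0)-m_b(x_0)\}
+\sqrt{nb^d}\{\widehat m_{\widehat w,\widehat{\bm\theta}}^{B}(x_0)-\widehat m_0(x_0)\}.
\]
The first term converges to $N(0,R(K)\sigma_S^2(x_0)/\pS(x_0))$ by Theorem~\ref{thm:nw-oracle-clt}. Slutsky's lemma then gives the first conclusion once the second term is shown to be $\op(1)$. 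The final conclusion, centered at $m_S(x_0)$, follows in the same way, because $\sqrt{nb^d}\{m_b(x_0)-m_S(x_0)\}=O(\sqrt{nb^d}\,b^s)\to0$ by the bias bound already used in Theorem~\ref{thm:nw-consistency-rate}.

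For the difference term I would use a self-normalized identity analogous to Step~1 of Theorem~\ref{thm:uniform-structural-bridge-bound}, written with local kernel weights. Put $K_i=K_b(x_0-\bX_i^S)$, $\Delta_i=\widehat w_i-w_0(\bX_i^S)$, and $D_w=n^{-1}\sum_i w_iK_i$. Then
\[
\widehat m_{\widehat w}(x_0)-\widehat m_0(x_0)
=\frac{n^{-1}\sum_i\Delta_iK_i\{Y_i^S-\widehat m_0(x_0)\}}{D_{\widehat w}}.
\]
Assumption~\ref{ass:nw-weight-error} supplies $\sup_{x\in U}|\widehat w_{\widehat{\bm\theta}}(x)-w_0(x)|=\Op(\bar\delta_{n,m})$ on a neighborhood $U$ of $x_0$. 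The kernel has compact support, so only points in $U$ contribute once $b$ is small, and the numerator is bounded by
\[
\Op(\bar\delta_{n,m})\cdot n^{-1}\sum_iK_i|Y_i^S-\widehat m_0(x_0)|.
\]
By the moment condition and consistency of $\widehat m_0(x_0)$, this last average is $\Op(1)$. The denominator $D_{w_0}$ converges in probability to $w_0(x_0)\pS(x_0)>0$, which is the standard NW denominator argument. Since $|D_{\widehat w}-D_{w_0}|\le\Op(\bar\delta_{n,m})\,n^{-1}\sum_iK_i$, the plug-in denominator stays bounded away from zero with probability tending to one. Hence the difference is $\Op(\bar\delta_{n,m})$, and multiplying by $\sqrt{nb^d}$ yields $\op(1)$ under the hypothesis $\sqrt{nb^d}\,\bar\delta_{n,m}\to0$.

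For the sufficient condition, I would bound $\bar\delta_{n,m}$ by decomposing
\[
\widehat w-w_0=\bigl\{g_{\widehat{\bm\theta}}(\widehat\LR)-g_{\widehat{\bm\theta}}(\LR)\bigr\}+\bigl\{g_{\widehat{\bm\theta}}(\LR)-g_{\bm\theta_0}(\LR)\bigr\}.
\]
The first piece is handled by the Lipschitz argument of Proposition~\ref{prop:bridge-weight-error-density-estimation}, localized to $U$, which gives order $a_{S,n}+a_{T,m}$. The second piece uses the uniform Lipschitz dependence of the bridge maps on $\alpha$ (Lemma~\ref{lem:bridge-maps-lipschitz}), which gives order $r_{\bm\theta,n,m}$. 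Therefore $\bar\delta_{n,m}\lesssim a_{S,n}+a_{T,m}+r_{\bm\theta,n,m}$.

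I expect the main obstacle to be that the bound on the difference term does not exploit independence. The estimated weights depend on the same source covariates as the NW sums, so no centering or conditional-variance argument is available, and the crude sup-norm bound must absorb the full $\sqrt{nb^d}$ inflation. This is exactly why the rate condition $\sqrt{nb^d}\,\bar\delta_{n,m}\to0$ is needed rather than a weaker one. I would take care to state the result on the event where $\widehat p_S$ is bounded below on $U$, whose probability tends to one, and to use $Y_i^S-\widehat m_0(x_0)$ as a whole rather than splitting $Y$ from $m_S$.
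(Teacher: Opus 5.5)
Your proposal is correct and follows the paper's own argument. You use the same split into the oracle term, handled by Theorem~\ref{thm:nw-oracle-clt} and Slutsky, plus the plug-in difference, which you bound by $\Op(\bar\delta_{n,m})$ through the centered identity \eqref{eqn:appendix-nw-plugin-identity}, the local sup-norm weight error, Lemma~\ref{lem:local-avg-y}, and the denominator control of Lemma~\ref{lem:oracle-denom}, with Lemma~\ref{lem:denom-bias} handling recentering at $m_S(x_0)$; your derivation of the sufficient rate condition re-derives Corollary~\ref{cor:estimated-parameter-bridge-weight-rate}, which the paper invokes directly through Assumption~\ref{ass:nw-weight-error}.
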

The oracle-equivalence condition is non-vacuous under standard
bandwidth regimes. For example, suppose $m\asymp n$,
$h_S\asymp h_T\asymp n^{-a}$, $b\asymp n^{-c}$, the marginal
densities are $\beta$-H\"older smooth, and the bridge parameter is fixed, so
$r_{\bm\theta,n,m}=0$. Ignoring logarithmic factors, choose
\[
    \frac{1}{2\beta+d}<c<\frac1d,
    \qquad
    \frac{1-cd}{2\beta}<a<c.
\]
Then $nb^d\to\infty$ and
\[
    \sqrt{nb^d}\,h_S^\beta\to0,
    \qquad
    \sqrt{nb^d}\,(nh_S^d)^{-1/2}
    =\left(\frac{b}{h_S}\right)^{d/2}\to0,
\]
with the same conclusions for the target density estimator.
Strict inequalities also absorb the logarithmic factors in the uniform
density-estimation rates supplied in
Section~\ref{sec:nonparametric-density-estimation}.
The same bandwidths apply to an estimated bridge parameter whenever
$\sqrt{nb^d}\,r_{\bm\theta,n,m}\to0$. A root-$n$ parameter rate is one
sufficient case if it is established separately, since then this term is
of order $b^{d/2}$; it is not a consequence of the radius-profile diagnostic
below. If centering at $m_S(x_0)$ is desired, one may additionally
take $c>(2s+d)^{-1}$ so that
$\sqrt{nb^d}\,b^s\to0$.

The asymptotic properties established in Theorems~\ref{thm:nw-consistency-rate}--\ref{thm:nw-plugin-clt} have several important implications for our doubly-anchored framework.
First, Theorem~\ref{thm:nw-consistency-rate} gives a decomposition of the
prediction error into the classical non-parametric smoothing bias, the local
stochastic error, and the structural bridge-weight estimation error
$\bar\delta_{n,m}$.

Second, and most crucially, Theorem~\ref{thm:nw-plugin-clt} establishes an oracle equivalence.
If the empirical density estimates and the bridge parameters converge
sufficiently fast that $\sqrt{nb^d}\,\bar\delta_{n,m}\to0$, the empirical
plug-in estimator has the same asymptotic normal distribution as the oracle
estimator equipped with the population weights.
This implies that the statistical cost of learning the geometric domain shift from data does not inflate the first-order asymptotic variance of the bridge-weighted source regression estimator.

Finally, the limiting variance $R(K)\sigma_S^2(x_0)/\pS(x_0)$ derived in
Theorem~\ref{thm:nw-oracle-clt} reveals a fundamental characteristic of the
non-parametric bridge. The doubly-anchored weights can alter finite-bandwidth
local averaging and finite-sample influence, but a positive smooth weight
cancels from the first-order pointwise variance. The limiting precision is
therefore controlled by the source domain's local design density and noise.
Together with consistent local estimators of $p_S(x_0)$ and
$\sigma_S^2(x_0)$, the CLTs yield asymptotically valid pointwise confidence
intervals for $m_S(x_0)$ under undersmoothing or a suitable bias correction;
when $m_S(x_0)=m_T(x_0)$, including under covariate shift, the same intervals
apply to the target regression function.

\subsubsection{A loss-aware empirical refinement}
\label{sec:loss-adversarial-refinement}

To incorporate the loss-aware tilt suggested by the population dual, we
consider a marginal loss-aware refinement.
For a candidate predictor $h \in \hs$, the population conditional risk $G_h$
can be replaced by the local NW-smoothed estimate
\[
\widehat G_h(\bx)
=
\begin{cases}
\displaystyle
\frac{\sum_{i=1}^n K_b(\bx-\bX_i^S)
      \ell\{h(\bX_i^S),Y_i^S\}}
     {\sum_{i=1}^n K_b(\bx-\bX_i^S)},
&\displaystyle \sum_{i=1}^nK_b(\bx-\bX_i^S)>0,\\[12pt]
0,&\text{otherwise}.
\end{cases}
\]
The fallback value is immaterial on regions where the local denominator is
positive.
Using this empirical risk, the plug-in marginal loss-aware weights
$\hat{r}_{h,i}^{B,X}$ are obtained from
Proposition~\ref{prop:loss-adversarial-bridges}, with the pointwise loss
replaced by $\widehat G_h$.
They adaptively tilt the baseline density bridge toward target-compatible covariate regions having high estimated source conditional risk.

Unlike the density bridge, these loss-aware weights depend on the current predictor $h$ through the loss.
For any fixed $h$, their empirical normalization gives the associated
loss-aware reweighted criterion
\[
\widehat{\Psi}_{\mathrm{rw}}(h)
=
\frac{\sum_{i=1}^n \hat{r}_{h,i}^{B,X}L_h(Z_i^S)}
     {\sum_{i=1}^n \hat{r}_{h,i}^{B,X}}.
\]
Starting from the loss-agnostic bridge weights, the implementation alternates
between fitting a weighted NW predictor and updating the loss-aware weights
through the plug-in response map from
Proposition~\ref{prop:loss-adversarial-bridges}.
Appendix~\ref{app:fmnist-implementation} gives the resulting damped
fixed-point iteration. The grid-indexed procedure is a computational
loss-aware refinement guided by the dual-induced response maps. The
statistical guarantees above concern the loss-agnostic structural bridge
estimator.

\subsection{Choosing bridge parameters}

If we observed target labels, the bridge parameter could be tuned using target validation risk.
However, because we only observe $\{(X_i^S, Y_i^S)\}_{i=1}^n$ and $\{X_j^T\}_{j=1}^m$, the bridge parameter $\alpha$ cannot be chosen by minimizing empirical target prediction error.
Instead, it is an unobserved parameter governing the source--target weighting geometry.
With unlabeled target data, the hyperparameters must be selected directly from the bridge geometry or the robust optimization framework, rather than target risk.
We discuss three strategies for handling the bridge parameters without access
to target labels: dual profiling, a radius-profile diagnostic, and sensitivity
analysis.
For KL/HD, $\alpha$ denotes the effective coordinate in
\eqref{eqn:kh-effective-alpha} for the structural bridge and, at fixed
$(h,\nu)$, for the normalized loss-aware response described after that
identity. We set $\gamma=0$ in these canonical parameterizations; the
corresponding original dual balance in the exact KKT representation depends on
the current $h$ and $\nu$ through normalization.

\paragraph{Dual profiling.}
Because the bridge balance emerges from the Lagrangian dual variables, it
can be profiled through the robust dual objective.  For KL/KL, write
$\lambda_S=\nu(1-\alpha)$ and $\lambda_T=\nu\alpha$, where
$\nu>0$ and $\alpha\in[0,1]$.  After profiling out the normalization
multiplier, the smoothed plug-in profiled dual criterion is
\[
\widehat\Psi_{\mathrm{KL}}(h;\alpha,\nu)
=
\nu\left[
  (1-\alpha)\rho_S+\alpha\rho_T
  +\log\int
    \exp\left\{\frac{\widehat G_h(x)}{\nu}\right\}
    \widehat p_S(x)^{1-\alpha}\widehat p_T(x)^\alpha
    \d\mu_X(x)
\right].
\]
At $\alpha=0$ and $\alpha=1$, the density product in the integrand is
understood as $\widehat p_S$ and $\widehat p_T$, respectively.  The smoothed
conditional source risk $\widehat G_h$ appears because the adversary shifts
the covariate marginal while labels are supplied by the source conditional
law.  If the regularized empirical marginal ambiguity problem is strictly
feasible, this criterion is its robust dual after the normalization
multiplier has been profiled out.  More generally, the optimization below is
used only when the profiled criterion is bounded below; empirical strict
feasibility is a sufficient condition and supplies the robust-primal
interpretation.

When a minimizer exists, one may choose
\[
(\widehat h,\widehat\alpha)
\in
\argmin_{h\in\hs,\,\alpha\in[0,1]}
\inf_{\nu>0}\widehat\Psi_{\mathrm{KL}}(h;\alpha,\nu),
\]
and, if the infimum is finite but unattained, use approximate minimizers.
The closed interval accommodates a zero dual multiplier while retaining the
corresponding constraint in the primal problem. Appendix~\ref{sec:radius-indexed-selection}
instead optimizes the deterministic risk-bound envelope over the admissible
loss-agnostic structural bridge laws; its sufficient conditions can exclude
the two structural endpoints for that objective.

\paragraph{Radius-profile diagnostic.}
A second, heuristic approach decouples the bridge parameter from the
downstream prediction task and uses the geometry of the divergence
constraints. Suppose that the radii $\rho_S$ and $\rho_T$ are supplied from
external scientific information or treated as sensitivity inputs. Here
$\widehat P_D^X(\mathrm{d} x)=\widehat p_D(x)\mu_X(\mathrm{d} x)$, $D\in\{S,T\}$, are
regularized density anchors rather than raw atomic empirical laws. Put
\[
    \widehat c_{\bm\theta}^B
    =\int
      g_{\bm\theta}^B\{\widehat\LR(x)\}
      \widehat p_S(x)\d\mu_X(x)
\]
and define the normalized empirical bridge law by
\[
    \frac{\d\widehat P_{\bm\theta}^{B,X}}{\d\mu_X}(x)
    =
    \frac{
      g_{\bm\theta}^B\{\widehat\LR(x)\}\widehat p_S(x)
    }{
      \widehat c_{\bm\theta}^B
    }.
\]
The latter definition is used whenever
$\widehat c_{\bm\theta}^B>0$; the regularization below ensures this for every
sample.
For support-adapted anchors on the compact full-mass working region
$\mathcal X_0$, finite-sample positivity may, if needed, be enforced by a
common reference law $Q_0$ supported on $\mathcal X_0$ whose density is
bounded and bounded away from zero there.  Namely, replace each anchor by
\[
    \widehat P_{D,\epsilon}^X
    =(1-\epsilon_{n,m})\widehat P_D^X+\epsilon_{n,m}Q_0,
    \qquad D\in\{S,T\},
\]
where $0<\epsilon_{n,m}\downarrow0$ is deterministic and has no larger order
than the uniform density-estimation error.  This makes the empirical KL
criterion finite for every sample without changing that rate. Whenever this
regularization is used, the two fitted marginal anchors are mutually
absolutely continuous on $\mathcal X_0$, and $\widehat\LR$,
$\widehat c_{\bm\theta}^B$, and $\widehat P_{\bm\theta}^{B,X}$ in the
preceding definitions are all computed from the mixed anchors. On the event
$\mathcal E_{\delta_0}$ in~\eqref{eqn:density-high-probability-event}, under
$a_{S,n}(\delta_0)\leq c_S/2$ and $e_{n,m}(\delta_0)\leq r_-/2$ as in
Proposition~\ref{prop:bridge-weight-error-density-estimation}, one has
$\widehat p_S\geq c_S/2$ and $\widehat\LR\geq r_-/2$ on $\mathcal X_0$.
Thus both fitted densities are already bounded away from zero there, and the
regularization is asymptotically immaterial.

For diagnostic purposes, define an empirical radius-matching point by
\[
\widehat{\bm\theta}
\in
\argmin_{\bm\theta\in\Theta_B}
\left[
  \left\{
    D_{\phi_S}(\widehat P_{\bm\theta}^{B,X}\Vert\widehat P_S^X)-\rho_S
  \right\}^{2}
  +
  \left\{
    D_{\phi_T}(\widehat P_{\bm\theta}^{B,X}\Vert\widehat P_T^X)-\rho_T
  \right\}^{2}
\right],
\]
with the corresponding population version
\begin{equation}\label{eqn:radius-calibration-estimate-population}
\bm\theta_0
\in
\argmin_{\bm\theta\in\Theta_B}
\left[
  \left\{
    D_{\phi_S}(P_{\bm\theta}^{B,X}\Vert P_S^X)-\rho_S
  \right\}^{2}
  +
  \left\{
    D_{\phi_T}(P_{\bm\theta}^{B,X}\Vert P_T^X)-\rho_T
  \right\}^{2}
\right].
\end{equation}
This criterion projects the prescribed radius pair onto the marginal
radius-profile curve. Since the radii define inequality constraints, it need
not make both constraints active and does not minimize the deterministic
risk-bound envelope in Theorem~\ref{thm:radius-indexed-interior-optimum}.
For the symmetric KL/KL and HD/HD bridges with $\Theta_B=[0,1]$, if the
population admissible interval in~\eqref{eqn:admissible-interval} is nonempty,
the monotonicity in Proposition~\ref{prop:radius-profile} implies that every
population minimizer of the squared criterion belongs to that interval:
outside the interval, moving toward it decreases both squared residuals.

Unlabeled marginal information also does not determine the two
target-feasible radii. Under Assumption~\ref{ass:target-feasibility}, data
processing gives
\[
    D_{\phi_S}(P_T^X\Vert P_S^X)
    \leq D_{\phi_S}(P_T\Vert P_S)
    \leq\rho_S,
    \qquad
    \delta_T^\circ
    =D_{\phi_T}(P_T\Vert P_T^\circ)
    \leq\rho_T.
\]
The first marginal divergence can be estimated from the source and target
covariates under suitable density-estimation conditions, but it is only a
necessary lower bound for $\rho_S$: it does not determine
$D_{\phi_S}(P_T\Vert P_S)$, which also depends on the unobserved target
conditional law.  Likewise, $\delta_T^\circ$ depends on that conditional law,
so the target covariate sample alone cannot establish a target-feasible value
of $\rho_T$.

The marginal divergences do provide convenient sufficient certificates for
the joint-space strict-feasibility condition in
Theorem~\ref{thm:general-dual-representation}. By
\eqref{eqn:completed-marginal-divergences}, the completed laws associated
with $\nu=P_S^X$ and $\nu=P_T^X$ give, respectively,
\[
\rho_S>0
    \quad\text{and}\quad
\rho_T>D_{\phi_T}(P_S^X\Vert P_T^X),
\qquad\text{or}\qquad
\rho_T>0
    \quad\text{and}\quad
\rho_S>D_{\phi_S}(P_T^X\Vert P_S^X).
\]
These are sufficient Slater certificates, rather than lower bounds required
of the radii. More generally,
Theorem~\ref{thm:general-dual-representation} requires some strictly
positive anchor-dominated joint density satisfying both inequalities
strictly; that law need not belong to the structural bridge family.

For the symmetric bridges under the nonconstant-likelihood-ratio conditions
of Proposition~\ref{prop:radius-profile}, target feasibility and $\rho_T>0$
give a direct bridge-based certificate. Indeed, for $\alpha<1$ sufficiently
close to $1$, that proposition gives
\[
    \delta_{B,T}(\alpha)<\rho_T,
    \qquad
    \delta_{B,S}(\alpha)
    <\delta_{B,S}(1)
    =D_{\phi_S}(P_T^X\Vert P_S^X)
    \leq\rho_S.
\]
Thus, a small positive target radius is compatible with strict feasibility.

For HD/HD, the radius profiles concern completed bridge laws, which are
anchor dominated. Lemma~\ref{lem:full-vs-ac-risk-hdhd} separately controls
the difference between the full and anchor-dominated robust risks when
$\rho_-<2$.

\paragraph{Sensitivity analysis.}
Finally, when informative radii are unavailable, the bridge coordinate can be
treated as a sensitivity parameter. For KL/KL and HD/HD,
$\alpha\in[0,1]$ traces the full source--target bridge path. For KL/HD, the
canonical effective coordinate satisfies $\alpha\in[0,1)$, with the
normalized bridge law converging to $P_T^\circ$ as $\alpha\uparrow1$.
The structural offset $\gamma$ is fixed at zero and is not varied as a second
parameter. Evaluating the adapted predictor $\widehat m_\alpha(x)$ over a grid
of effective bridge coordinates gives a sensitivity profile of the
predictions with respect to the degree and geometry of target-marginal
weighting.



\section{Empirical study on image data}\label{sec:fmnist}

We evaluate the doubly-anchored, domain-adapted NW estimator on the
Fashion-MNIST data set \citep{Xiao2017}.  We restrict attention to the
\textit{Pullover} and \textit{Coat} classes.  This gives $n=12,000$ training
images, $6,000$ from each class, which form the source domain, and $m=2,000$
test images, $1,000$ from each class, from which we construct the target
domain.  Target labels are not used to fit the autoencoder, density
estimators, or NW predictors.  The parameter study is exploratory rather than
fully blinded, however: the loss-scale choices described below were guided
primarily by unlabeled diagnostics but were made with qualitative knowledge
of the benchmark performance.  The target-accuracy curves are therefore
descriptive post-selection summaries, not an unbiased evaluation of a
target-label-free tuning rule.

To induce a target shift, we add independent centered Laplace perturbations
having pre-clipping standard deviation $0.15$ to the target-image pixels and
then clip the perturbed intensities to $[0,1]$.  We hold the source--target
split and the source-trained autoencoder fixed and redraw only the Laplace
perturbations for $100$ replications.  Thus the reported variation is
conditional on the fixed images and learned representation.
Figure~\ref{fig:shift-image-visualization-images} shows a random selection of
the original target images and the corresponding shifted images.
The imposed Laplace perturbation produces a substantial empirical shift and
may violate the strong-overlap condition in
Assumption~\ref{ass:strong-overlap}.  The experiment should therefore be read
as a finite-sample stress test outside the scope of the uniform guarantees,
not as a validation of those guarantees for unbounded likelihood ratios.

To make the problem more manageable for the nonparametric kernel estimator,
we fit an autoencoder to the source data and represent the $28\times28$
images in a $d$-dimensional latent space with $d=10$, minimizing a combination of
reconstruction MSE and cross-entropy; details are given in
Appendix~\ref{app:fmnist-implementation}.  The latent coordinates are
standardized using the source-sample means and variances.  We use
multivariate Gaussian KDEs with normal-reference bandwidths for the source
and target covariate densities and an Epanechnikov kernel with bandwidth
$b=10$ for NW regression.

For the loss-aware refinement, we estimate $G_h$ using binary cross-entropy,
clipped to $[0,4]$ to satisfy Assumption~\ref{ass:bounded-loss}, and apply the
damped fixed-point iteration described in
Appendix~\ref{app:fmnist-implementation}. The procedure iterates the
dual-induced response maps on the empirical sample.
For the sensitivity analysis in
Figure~\ref{fig:fmnist-sensitivity-accuracy}, we evaluate the bridge parameter
over a uniform grid $\alpha\in[0.05,0.95]$, together with $\alpha=1$ for
the symmetric bridges.  For KL/HD we use the canonical $\gamma=0$ parameterization and interpret
the plotted $\alpha$ as the effective coordinate described after
\eqref{eqn:kh-effective-alpha}.  We treat $\rho_S$ and $\rho_T$ as
unknown and use the sensitivity diagnostics rather than dual profiling or
radius calibration.

In Figure~\ref{fig:shift-image-visualization-latent}, we plot, for one
target-shift replication, the projections
of the clean and shifted target images onto the first two principal components
of the latent representation, overlaid on the source images.  The original
test images are similar to the training images in this representation,
although the autoencoder was fitted only on the training images, whereas the
shifted target domain is visibly shifted relative to the source domain.
A baseline (unweighted) NW estimator trained only on the source data achieves an accuracy of 85.5\% on the original test data, but only 76.4\% on the shifted target domain.

The left and right columns of
Figure~\ref{fig:fmnist-sensitivity-accuracy} show the loss-agnostic weights
and the loss-aware iterative refinement, respectively.  For a single
target-shift replication, the top three rows summarize the bridge weights
and predicted target probabilities.  The plug-in effective sample size in
the first row shows that the loss-agnostic bridges and the loss-aware HD/HD
bridge concentrate their weights on fewer observations as $\alpha\uparrow1$,
with loss-agnostic KL/KL generally the most aggressive over the displayed
grid.  At $\alpha=1$, the loss-agnostic symmetric bridges coincide with the
empirical-mean-one estimated likelihood-ratio weights
\[
    \widehat w_i^{\mathrm{LR},\tau}
    =\frac{n\widehat\LR_\tau(\mat X_i^S)}
           {\sum_{k=1}^n\widehat\LR_\tau(\mat X_k^S)},
    \qquad \tau=10^{-8}.
\]
The canonical KL/HD structural bridge has the same limiting endpoint as
$\alpha\uparrow1$, but concentrates the weights less aggressively over the
displayed grid.  The loss-aware KL/KL and KL/HD configurations retain nearly
the full effective sample size over this grid.
The proportion of target images classified as pullover in the second row can
serve as a diagnostic when external information about the target prevalence
is available.  In this benchmark the
two classes have equal counts by construction, yet the predicted pullover
proportion approaches $100\%$ as $\alpha\uparrow1$ for parts of the symmetric
bridge paths.  With loss-aware weights, the proportion grows more
slowly: it remains roughly stable along the KL/KL and KL/HD paths,
whereas the HD/HD path still approaches $100\%$.

The summed binary entropy in the third row describes the sharpness of the
predicted probabilities: lower entropy corresponds to predictions closer to
$0$ or $1$ rather than $1/2$. For several adapted NW estimators having very
high predicted pullover proportions, the entropy also drops substantially,
indicating increasingly extreme predictions; the accuracy row shows whether
those sharper predictions are correct.
The loss-aware KL/KL and KL/HD paths exhibit substantially more stable summed
entropy, whereas the HD/HD path still deteriorates as $\alpha$ increases.  The
canonical KL/HD curve avoids the pathological cases that predict pullover for
nearly all target observations.

Within each displayed loss-agnostic path, smaller $\alpha$ generally
gives higher target accuracy and leaves the weights closer to the source
endpoint.  The bottom row of
Figure~\ref{fig:fmnist-sensitivity-accuracy} reports descriptive target
accuracy after the exploratory parameter study described above.  Curves that
fall below the displayed accuracy range are visually truncated.
In the canonical $\gamma=0$ comparison, the loss-agnostic weights do not improve the prediction accuracy over the source-only NW, shown as the dashed gray line at 76.4\%.

For the loss-aware weights, the sensitivity analysis also depends on
the loss-scaling parameter $\nu$.  Figure~\ref{fig:fmnist-sensitivity-accuracy}
shows results for the values of $\nu$ retained in the numerical study, which
differ across bridge geometries.  These values are chosen exploratorily using effective sample size,
predictive entropy, and predicted class proportion, and the process is not
fully blinded to benchmark performance. The cross-configuration accuracies
are descriptive summaries for the retained settings.  The loss-aware KL/KL and KL/HD configurations
have similar effective sample sizes and summed entropies but different
target accuracies: the canonical KL/HD configuration reaches approximately
$77.8\%$ ($\text{SD}=0.48$ percentage points), whereas KL/KL attains approximately $75.8\%$ ($\text{SD}=0.54$ percentage points), slightly below the
$76.4\%$ source-only baseline.

The high effective sample size and modest accuracy gain of the canonical
KL/HD configuration are qualitatively consistent with the large-ratio
attenuation suggested by its Lambert-$W$ geometry.

The full codes to reproduce these results are available at \url{https://github.com/dakep/doubly-anchored-dro-for-da}.

\begin{figure}
    \centering
    \begin{subfigure}[c]{0.59\textwidth}
        \centering
        \includegraphics[width=\textwidth]{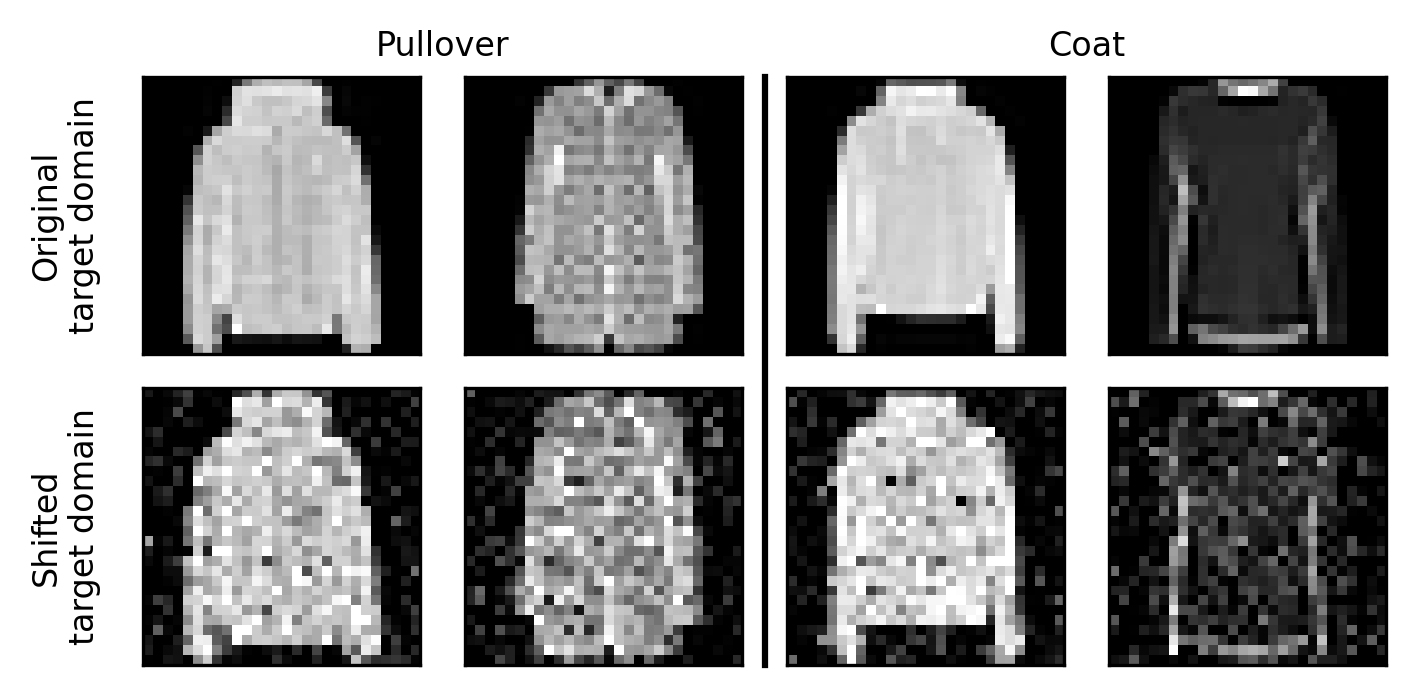}
        \caption{Sample images}
        \label{fig:shift-image-visualization-images}
    \end{subfigure}
    \begin{subfigure}[c]{0.4\textwidth}
        \centering
        \includegraphics[width=\textwidth]{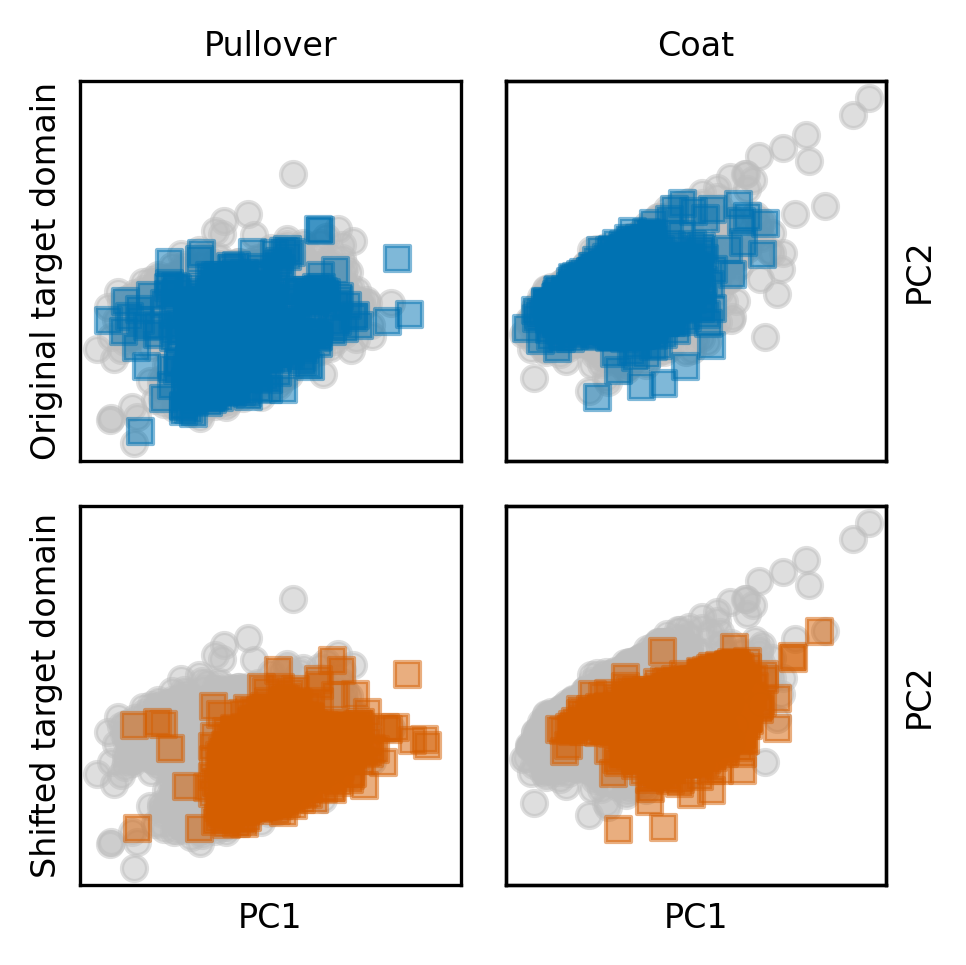}
        \caption{First 2 PCs in the latent space}
        \label{fig:shift-image-visualization-latent}
    \end{subfigure}
    \caption{Target-domain images from the Fashion-MNIST \citep{Xiao2017} data set.
             Figure~(a) shows 4 randomly sampled images from the target domain (two pullovers and two coats).
             The top row shows the original target images, whereas the bottom row shows the same images after an artificial domain shift.
             Figure~(b) shows projections onto the first 2 principal components of the 10-dimensional latent representation used by the domain-adapted NW estimator.
             The top and bottom rows again show the original and shifted target domains, respectively.
             The gray points in the background show the source domain images, which are the same in both rows.}
    \label{fig:shift-image-visualization}
\end{figure}

\begin{figure}
    \centering
    \includegraphics[width=0.86\textwidth]{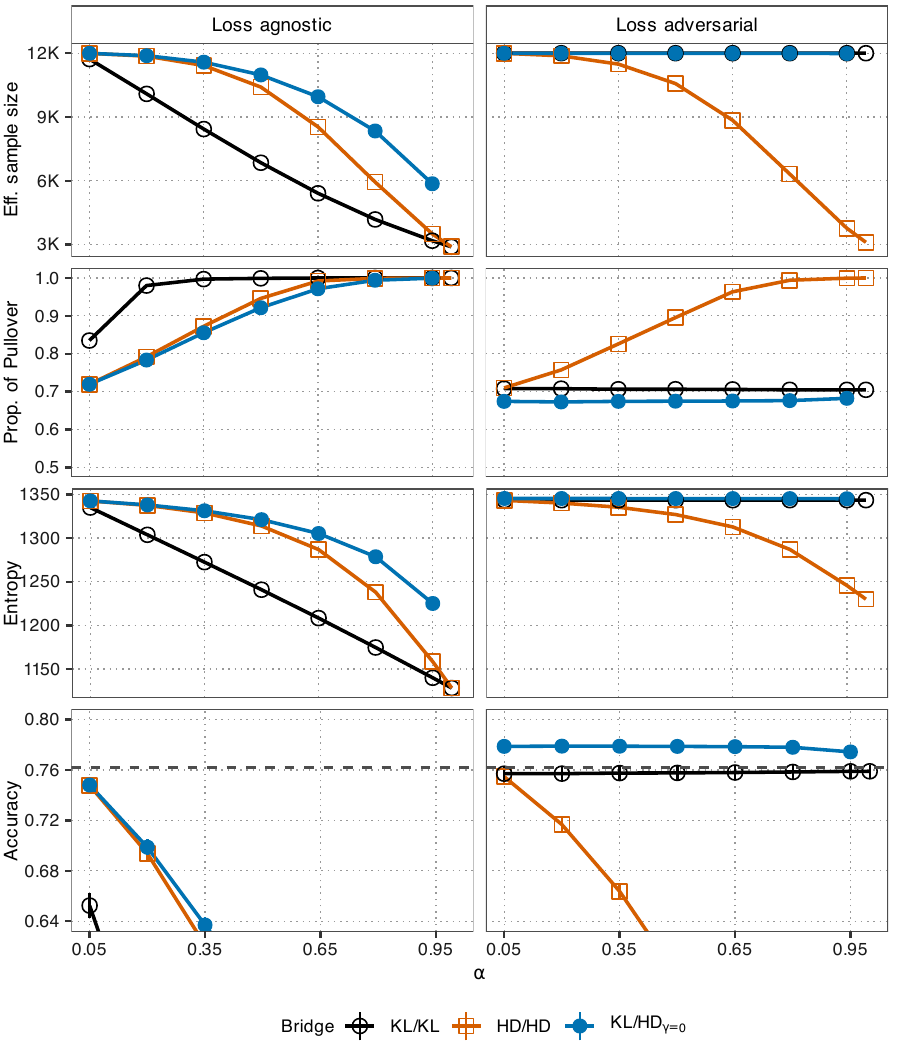}
    \caption{Sensitivity analysis and target accuracy of the domain-adapted NW estimators applied to the Fashion-MNIST data set with shifted target images.
    The left column uses loss-agnostic weights and the right uses the
    loss-aware iterative refinement; KL/HD has $\gamma=0$.
    Rows 1--3 use one target-shift replication.  The plug-in effective sample size in
    the first row is
    $n_\text{eff}=\{\sum_{i=1}^n\hat w_{\bm\theta}(\mat X^S_i)\}^2/
    \sum_{i=1}^n\hat w_{\bm\theta}(\mat X^S_i)^2\leq 12,000=n$.
    Let $\widehat p_j$ denote the fitted Pullover probability for target image
    $j$. The summed binary entropy in the third row is
    $E_{\hat w}=-\sum_{j=1}^m\left[\widehat p_j\log(\widehat p_j)+(1-\widehat p_j)\log(1-\widehat p_j)\right]$.
    Here $0\log0$ is interpreted as zero.  The bottom row gives the mean
    target accuracy over $100$ independent redraws of the Laplace
    perturbations, with the images, source-trained autoencoder, and
    source--target split held fixed; error bars show plus or minus one
    standard deviation.
    }
    \label{fig:fmnist-sensitivity-accuracy}
\end{figure}

\clearpage
\section{Discussion}

The doubly-anchored ambiguity set provides a mathematically principled
perspective on distributionally robust domain adaptation. By intersecting two
$\phi$-divergence balls, the framework uses the target covariate law as an
active structural constraint rather than merely as a passive testing
environment. It does not eliminate modeling assumptions; rather, it replaces
exact shift restrictions, such as covariate or label shift, with overlap and a
divergence-based feasibility model for the unknown target shift.

A primary contribution is the distinction between the exact loss-aware
adversarial densities and the related loss-agnostic structural bridge families
induced by the same divergence geometries. For the two symmetric pairings, the
likelihood-ratio and loss components separate in the attained KKT formulas,
whereas for KL/HD they remain coupled inside the Lambert-$W$ term. Removing the
loss produces a hypothesis-independent structural bridge class whose entropy
can be controlled, thereby permitting the finite-sample generalization
analysis. When the target constraint is removed, the formulation reduces to
source-anchored DRO; when it is inactive at an optimum, the corresponding
pointwise KKT equation has the source-anchor form. At the target endpoint, the
loss-agnostic symmetric bridges recover standard likelihood-ratio weighting,
and the normalized KL/HD structural bridge approaches the same endpoint as
$\alpha\uparrow1$.

The framework also provides a common basis for comparing the consequences of
different divergence pairings, although it does not furnish a universal
data-driven rule for choosing among them. In the Fashion-MNIST illustration,
the bridge geometries display substantially different weight concentration
and prediction behavior under the imposed Laplace shift. The asymmetric KL/HD
pairing yields a Lambert-$W$ geometry in which source-side exponential loss
response and target-side Hellinger attenuation enter through distinct terms.
This attenuates, but does not bound, the effect of large likelihood ratios.
Thus the divergence pairing determines the functional form of the transfer,
while selecting among pairings remains a modeling and sensitivity-analysis
decision.

Although our concrete bridge formulas and specialized statistical analysis
focus on KL divergence and squared Hellinger distance, the general dual
formulation applies to a broader class of $\phi$-divergences under the stated
conditions. Extending the framework to optimal-transport discrepancies, such
as Wasserstein or Sinkhorn-type constructions, is a possible direction for
future work, but would require new duality, bridge-construction, and
empirical-process arguments.

Finally, selecting the bridge coordinate without target labels remains a
central challenge. For the two symmetric bridge families,
Appendix~\ref{sec:radius-indexed-selection} provides finite-sample sufficient
conditions that can exclude the two structural endpoints from minimizers of
the deterministic risk-bound envelope. The NW theory shows that fixed
positive smooth bridge weights share the source-regression limit and
first-order pointwise variance, so first-order asymptotics alone cannot rank
the bridge parameters. A natural direction is therefore a finite-bandwidth or
second-order risk analysis of the tradeoff between stabilization from
tempering the weights and the bridge-to-target-anchor locality discrepancy.

\appendix

\section{Full vs.\ anchor-dominated robust risk}\label{app:full-vs-ac-riks}

Here, we quantify the gap between the full and the anchor-dominated population robust risk for the HD/HD pairing.
Since $\phi_{\mathrm{HD}}^\infty(1)=1$, the extended squared Hellinger divergence can remain finite for laws $Q \not\ll P_S$, and hence $\Psi_{S,T}$ can be larger than $\Psi_{S,T}^\mathrm{ac}$.
Since squared Hellinger divergence lies in $[0,2]$, the condition $\min\{\rho_S,\rho_T\}<2$ excludes the automatically vacuous regime $\rho_S,\rho_T\geq2$ and yields the nontrivial singular-mass bound used below.

\begin{lemma}[Full vs.\ anchor-dominated risk for the HD/HD pairing]\label{lem:full-vs-ac-risk-hdhd}
Suppose Assumptions~\ref{ass:bounded-loss}--\ref{ass:common-domination} hold and consider the extended squared Hellinger distance divergence for both anchors, $\phi_S=\phi_T=\phi_{\mathrm{HD}}$.
Assume that $\as(P_S,P_T^\circ)\neq\varnothing$.
Define $\rho_- = \min\{\rho_S, \rho_T \}$ and assume $\rho_- < 2$.
Further, let $\tilde\rho = 1 - \left(1 - \rho_-/2\right)^2$.
Then $0\leq\tilde\rho<1$ and, for every $h\in\hs$,
\[
0 \leq \Psi_{S,T}(h) - \Psi_{S,T}^\mathrm{ac}(h) \leq M \tilde\rho \leq M \rho_-.
\]
\end{lemma}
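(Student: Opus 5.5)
The plan is to control the singular mass of any law in the full HD/HD ambiguity set, and then renormalize its absolutely continuous part into a competitor in the anchor-dominated class. The lower bound $0\leq\Psi_{S,T}(h)-\Psi_{S,T}^{\mathrm{ac}}(h)$ is immediate, since $\as_{\mathrm{ac}}(P_S,P_T^\circ)\subseteq\as(P_S,P_T^\circ)$. For the elementary facts, note $\tilde\rho=\rho_--\rho_-^2/4\leq\rho_-$. Also $\rho_-<2$ gives $(1-\rho_-/2)^2\in(0,1]$, hence $0\leq\tilde\rho<1$.

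\textbf{Step 1 (Hellinger affinity identity).} Fix $Q\in\as(P_S,P_T^\circ)$ and an anchor $P\in\{P_S,P_T^\circ\}$. Write the Lebesgue decomposition $Q=Q_P^a+Q_P^s$, and put $r=\di Q_P^a/\di P$ and $s=Q_P^s(\fs)$. Expanding $(\sqrt r-1)^2$ in~\eqref{eqn:extended-phi-divergence} with $\phi_{\mathrm{HD}}^\infty(1)=1$ gives
\[
D_{\phi_{\mathrm{HD}}}(Q\Vert P)=\int r\d P-2\int\sqrt r\d P+1+s=2-2\int\sqrt r\d P,
\]
using $\int r\d P+s=1$.

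By Assumption~\ref{ass:common-domination}, $P_S$ and $P_T^\circ$ are mutually absolutely continuous. Hence the singular part of $Q$, and the mass $s$, are the same for either anchor. So I may choose the anchor $P$ whose radius equals $\rho_-$.

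\textbf{Step 2 (singular-mass bound).} The constraint $D_{\phi_{\mathrm{HD}}}(Q\Vert P)\leq\rho_-$ gives $\int\sqrt r\d P\geq1-\rho_-/2>0$. Cauchy--Schwarz gives $(\int\sqrt r\d P)^2\leq\int r\d P=1-s$. Combining the two, $1-s\geq(1-\rho_-/2)^2$, that is, $s\leq\tilde\rho<1$.

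\textbf{Step 3 (dominated competitor).} Define $Q'=Q_P^a/(1-s)$, which is well defined because $s<1$. It is a probability law with $Q'\ll P_S$.

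I then check the constraint for each anchor $P_j$, $j\in\{S,T\}$, using Step 1 applied to $Q'$. Let $A_j=\int\sqrt{r_j}\d P_j$, where $r_j$ is the density of the absolutely continuous part of $Q$ relative to $P_j$. Then
\[
D_{\phi_{\mathrm{HD}}}(Q'\Vert P_j)=2-\frac{2A_j}{\sqrt{1-s}}\leq2-2A_j=D_{\phi_{\mathrm{HD}}}(Q\Vert P_j)\leq\rho_j,
\]
since $A_j\geq0$ and $1-s\leq1$. Hence $Q'\in\as_{\mathrm{ac}}(P_S,P_T^\circ)$.

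\textbf{Step 4 (risk comparison).} With $0\leq L_h\leq M$,
\[
\E_Q[L_h]=(1-s)\E_{Q'}[L_h]+\int L_h\d Q_P^s\leq\E_{Q'}[L_h]+Ms\leq\Psi_{S,T}^{\mathrm{ac}}(h)+M\tilde\rho.
\]
Taking the supremum over $Q$ gives the upper bound.

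The main obstacle is Step 3, namely verifying that renormalizing $Q_P^a$ does not break the other anchor's constraint. This is exactly what the affinity form of the Hellinger divergence handles, because renormalization only scales the affinity upward. The hypothesis $\as(P_S,P_T^\circ)\neq\varnothing$ ensures the supremum defining $\Psi_{S,T}(h)$ is over a nonempty set. Step 3 then shows that the dominated class $\as_{\mathrm{ac}}(P_S,P_T^\circ)$ is nonempty as well, so $\Psi_{S,T}^{\mathrm{ac}}(h)$ is finite and the difference is well defined.
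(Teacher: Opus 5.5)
Your proposal is correct and follows the paper's proof essentially step for step. The shared steps are the affinity form $2-2\int\sqrt{r}\,\mathrm{d}P$ of the extended Hellinger divergence, Cauchy--Schwarz at the anchor with radius $\rho_-$ to bound the singular mass by $\tilde\rho$, renormalization of the absolutely continuous part (which only inflates both affinities, so both constraints survive), and the bounded-loss comparison. The only cosmetic difference is in Step 4: you drop the factor $1-s$ directly using $\E_{Q'}[L_h]\geq0$, whereas the paper first bounds by $(1-m_\perp)\Psi_{S,T}^{\mathrm{ac}}(h)+Mm_\perp$; both are valid.
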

\begin{proof}
Fix $Q \in \as(P_S, P_T^\circ)$ and let $Q=Q^a + Q^s$ be its Lebesgue decomposition relative to $P_S$, with absolutely continuous part $Q^a \ll P_S$ and singular part of mass $m_\perp=Q^s(\fs) \geq 0$.
From Assumption~\ref{ass:common-domination}, $P_S \sim P_T^\circ$, hence the two anchors have the same null sets.
Thus, a measure is absolutely continuous, or singular, with respect to $P_S$ if and only if it has the corresponding property with respect to $P_T^\circ$.
Uniqueness of the Lebesgue decomposition therefore gives the same $Q^a$ and $Q^s$ for both anchors, with $Q^a(\fs)=1-m_\perp$.

Let $P_-$ be an anchor whose associated radius is $\rho_-$, choosing either anchor in case of a tie, and write
\[
    r_{a,-} = \frac{\d Q^a}{\d P_-},
    \qquad
    I_- = \int \sqrt{r_{a,-}}\,\d P_-.
\]
The extended Hellinger divergence in~\eqref{eqn:extended-phi-divergence}, with $\phi^\infty_\text{HD}(1)=1$, is
\[
    D_{\phi_{\mathrm{HD}}}(Q\Vert P_-)
    =\int(\sqrt{r_{a,-}}-1)^{2}\d P_-+m_\perp
    =2-2I_-.
\]
By Cauchy--Schwarz, $I_- \leq \sqrt{1-m_\perp}$. Feasibility at the radius-minimizing anchor therefore gives
\[
    1-\frac{\rho_-}{2}
    \leq I_-
    \leq \sqrt{1-m_\perp}.
\]
Since $\rho_-<2$, the left-hand side is positive, and hence
\[
    m_\perp
    \leq
    1-\left(1-\frac{\rho_-}{2}\right)^2
    =\tilde\rho
    <1.
\]

Define the dominated law $\tilde Q = Q^a / (1-m_\perp)$.
For either anchor $A\in\{P_S,P_T^\circ\}$, write
\[
    r_A=\frac{\d Q^a}{\d A},
    \qquad
    I_A=\int\sqrt{r_A}\,\d A.
\]
Since $I_A\geq0$ and $0<\sqrt{1-m_\perp}\leq1$,
\[
    D_{\phi_{\mathrm{HD}}}(\tilde Q\Vert A)
    =2-\frac{2I_A}{\sqrt{1-m_\perp}}
    \leq 2-2I_A
    =D_{\phi_{\mathrm{HD}}}(Q\Vert A).
\]
Therefore, $\tilde Q \in \as_\mathrm{ac}(P_S, P_T^\circ)$.
In particular, $\as_\mathrm{ac}(P_S,P_T^\circ)$ is nonempty and $\Psi_{S,T}^\mathrm{ac}(h)\geq0$.

Finally, using the bounded loss $0 \leq L_h \leq M$,
\[
\begin{aligned}
\E_Q[L_h]
  &=(1-m_\perp)\E_{\tilde Q}[L_h]+\int L_h\,\d Q^s \\
  &\leq(1-m_\perp)\Psi_{S,T}^\mathrm{ac}(h)+Mm_\perp \\
  &=\Psi_{S,T}^\mathrm{ac}(h)
    +m_\perp\{M-\Psi_{S,T}^\mathrm{ac}(h)\} \\
  &\leq\Psi_{S,T}^\mathrm{ac}(h)+Mm_\perp \\
  &\leq\Psi_{S,T}^\mathrm{ac}(h)+M\tilde\rho.
\end{aligned}
\]
Taking the supremum over $Q \in \as(P_S, P_T^\circ)$ gives the upper bound.
The inclusion of $\as_\mathrm{ac}(P_S,P_T^\circ)$ in $\as(P_S,P_T^\circ)$ gives the lower bound.
Finally, $\tilde\rho=\rho_--\rho_-^2/4\leq\rho_-$.
\end{proof}

\section{Radius-indexed bridge admissibility and interior minimizers of a finite-sample bound}
\label{sec:radius-indexed-selection}

We first describe the generic admissible set. For the symmetric one-parameter
bridges, before imposing target feasibility, $\rho_S$ and $\rho_T$ constrain
the bridge path from opposite sides. Under target feasibility, the source constraint is satisfied
along the entire symmetric bridge path and therefore does not truncate it,
although it may bind at $\alpha=1$. We show that the target radius can exclude
the source endpoint and that a finite-sample endpoint-slope condition is
sufficient to exclude the importance-weighting endpoint. Under both
conditions, every minimizer of the deterministic risk-bound envelope lies in
$(0,1)$, although it may equal the lower boundary of the admissible interval.

To define the marginal radius map, recall from
Proposition~\ref{prop:structural-bridge-target-locality} the target-side
quantity
\[
    \delta_{B,T}(\bm\theta)
    =D_{\phi_T}(P_{\bm\theta}^{B,X}\Vert P_T^X),
\]
and introduce its source-side counterpart by
\begin{equation}\label{eqn:source-side-bridge-radius}
    \delta_{B,S}(\bm\theta)
    =
    D_{\phi_S}(P_{\bm\theta}^{B,X}\Vert P_S^X).
\end{equation}
By \eqref{eqn:structural-marginal-divergence} and \eqref{eqn:structural-marginal-source-divergence}, the completed bridge law
$P_{\bm\theta}^B(\di x, \di y)=P_{\bm\theta}^{B,X}(\di x) P_S^{Y\mid X}(\di y \mid x)$ satisfies
\[
    D_{\phi_S}(P_{\bm\theta}^B\Vert P_S)=\delta_{B,S}(\bm\theta),
    \qquad
    D_{\phi_T}(P_{\bm\theta}^B\Vert P_T^\circ)=\delta_{B,T}(\bm\theta),
\]
so that
\begin{equation}\label{eqn:bridge-in-ambiguity-set}
    P_{\bm\theta}^B\in\as(P_S,P_T^\circ)
    \iff
    \delta_{B,S}(\bm\theta)\leq\rho_S
    \text{ and }
    \delta_{B,T}(\bm\theta)\leq\rho_T .
\end{equation}
Both quantities depend only on the two covariate marginals $P_S^X$ and $P_T^X$.
We denote the admissible bridge parameters by
\begin{equation}\label{eqn:admissible-bridge-set}
    \widetilde\Theta_B(\rho_S,\rho_T)
    =
    \left\{
      \bm\theta\in\Theta_B:
      \delta_{B,S}(\bm\theta)\leq\rho_S,\
      \delta_{B,T}(\bm\theta)\leq\rho_T
    \right\}.
\end{equation}
By \eqref{eqn:completed-marginal-divergences}, if some
$\bm\theta\in\Theta_B$ satisfies both inequalities in
\eqref{eqn:admissible-bridge-set} strictly, then the density
$r_0=\di P_{\bm\theta}^B/\di P_S
=w_{\bm\theta}^B/\E_{P_S}[w_{\bm\theta}^B]$ is normalized and satisfies the
two strict divergence inequalities. Assumption~\ref{ass:common-domination}
gives $\LR>0$, $P_S$-almost surely, and the bridge maps are positive on
$(0,\infty)$, so $r_0>0$, $P_S$-almost surely. Thus $r_0$ satisfies the
Slater condition~\eqref{eqn:dual-slater-condition} required by
Theorem~\ref{thm:general-dual-representation}. The marginal radius map can
therefore be used both to check bridge admissibility and to construct a Slater
point for the dual representation.

For the symmetric one-parameter bridges, the map $\alpha\mapsto(\delta_{B,S},\delta_{B,T})$ is available in closed form and is monotone.

\begin{proposition}[Symmetric-bridge radius profiles] \label{prop:radius-profile}
For $B\in\{\mathrm{KL},\mathrm{HD}\}$, take $\Theta_B=[0,1]$.
Suppose Assumption~\ref{ass:common-domination} holds. Let $\mathcal X_0\subseteq\xspace$ be compact with $P_S^X(\mathcal X_0)=1$, and suppose there are constants $r_-$ and $r_+$ such that
\[
    0<r_-\leq\LR(x)\leq r_+<\infty,
    \qquad x\in\mathcal X_0.
\]
Suppose also that $\LR$ is not $P_S^X$-almost surely constant. Then:
\begin{enumerate}
\item \textbf{KL/KL:}
Let $P_\alpha^{\mathrm{KL},X}$ denote the KL/KL bridge marginal, and let
$\Lambda(\alpha)=\log\E_{P_S^X}[\LR^\alpha]$.
Then, $\Lambda$ is finite and strictly convex on $[0,1]$ with $\Lambda(0)=\Lambda(1)=0$, and
\begin{equation}\label{eqn:klkl-radius-profile}
    \delta_{\mathrm{KL},S}(\alpha)
    =\alpha\Lambda'(\alpha)-\Lambda(\alpha),
    \qquad
    \delta_{\mathrm{KL},T}(\alpha)
    =(\alpha-1)\Lambda'(\alpha)-\Lambda(\alpha),
\end{equation}
with
$\delta_{\mathrm{KL},S}'(\alpha)=\alpha\Lambda''(\alpha)\geq0$ and
$\delta_{\mathrm{KL},T}'(\alpha)=(\alpha-1)\Lambda''(\alpha)\leq0$,
where $\Lambda''(\alpha)=\Var_{P_\alpha^{\mathrm{KL},X}}(\log\LR)>0$.
In particular $\delta_{\mathrm{KL},S}$ is strictly increasing on $(0,1]$ from $0$ to $D_{\mathrm{KL}}(P_T^X\Vert P_S^X)$, and $\delta_{\mathrm{KL},T}$ is strictly decreasing on $[0,1)$ from $D_{\mathrm{KL}}(P_S^X\Vert P_T^X)$ to $0$.
\item \textbf{HD/HD:}
Let $\delta_H=D_{\phi_{\mathrm{HD}}}(P_S^X\Vert P_T^X)\in(0,2)$ and $Z(\alpha)=1-\alpha(1-\alpha)\delta_H$.
Then
\begin{equation}\label{eqn:hdhd-radius-profile}
    \delta_{\mathrm{HD},S}(\alpha)
    =2-\frac{2-\alpha\delta_H}{\sqrt{Z(\alpha)}},
    \qquad
    \delta_{\mathrm{HD},T}(\alpha)
    =2-\frac{2-(1-\alpha)\delta_H}{\sqrt{Z(\alpha)}},
\end{equation}
with $\delta_{\mathrm{HD},S}$ strictly increasing on $[0,1]$ from $0$ to $\delta_H$, and $\delta_{\mathrm{HD},T}$ strictly decreasing on $[0,1]$ from $\delta_H$ to $0$.
\end{enumerate}

Therefore, for $B\in\{\mathrm{KL},\mathrm{HD}\}$,
\begin{equation}\label{eqn:admissible-interval}
    \widetilde\Theta_B(\rho_S,\rho_T)
    =[\alpha_-(\rho_T),\alpha_+(\rho_S)]\cap[0,1],
\end{equation}
where $\alpha_+(\rho_S)=\sup\{\alpha\in[0,1]:\delta_{B,S}(\alpha)\leq\rho_S\}$ and $\alpha_-(\rho_T)=\inf\{\alpha\in[0,1]:\delta_{B,T}(\alpha)\leq\rho_T\}$.
The set is non-empty if and only if $\alpha_-(\rho_T)\leq\alpha_+(\rho_S)$, a condition determined by the two covariate marginals and the chosen radii.
Moreover
\[
\begin{aligned}
    \alpha_-(\rho_T)>0
    &\iff
    \rho_T<D_{\phi_T}(P_S^X\Vert P_T^X),
    \\
    \alpha_-(\rho_T)<1
    &\iff
    \rho_T>0,
    \\
    \alpha_+(\rho_S)<1
    &\iff
    \rho_S<D_{\phi_S}(P_T^X\Vert P_S^X).
\end{aligned}
\]

If Assumption~\ref{ass:target-feasibility} also holds, then
\begin{equation}\label{eqn:source-radius-inactive-under-feasibility}
    D_{\phi_S}(P_T^X\Vert P_S^X)
    \leq D_{\phi_S}(P_T\Vert P_S)
    \leq\rho_S,
    \qquad
    \alpha_+(\rho_S)=1,
\end{equation}
and hence
\begin{equation}\label{eqn:feasible-admissible-interval}
    \widetilde\Theta_B(\rho_S,\rho_T)
    =[\alpha_-(\rho_T),1].
\end{equation}

Lastly, as $\alpha\uparrow1$,
\begin{equation*}
    \delta_{B,T}(\alpha)
    =
    \varsigma_B(1-\alpha)^{2}+O\{(1-\alpha)^{3}\},
\end{equation*}
with $\varsigma_{\mathrm{KL}}=\tfrac12\Var_{P_T^X}(\log\LR)>0$ and $\varsigma_{\mathrm{HD}}=\delta_H(1-\delta_H/4)>0$.

\end{proposition}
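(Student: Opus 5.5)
The plan is to reduce every statement to explicit one-dimensional functionals of $\LR$ under $P_S^X$. Bounded overlap on the full-mass set $\mathcal X_0$ makes all moments $\E_{P_S^X}[\LR^\alpha]$ and $\E_{P_S^X}[\sqrt\LR]$ finite, and lets us differentiate under the integral. By~\eqref{eqn:structural-marginal-divergence} and~\eqref{eqn:structural-marginal-source-divergence}, both radii depend only on the covariate bridge marginal, whose density relative to $P_S^X$ is $g_\alpha^B(\LR)/\E_{P_S^X}[g_\alpha^B(\LR)]$. Its density relative to $P_T^X$ is this quantity divided by $\LR$.

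\textbf{KL/KL.} The bridge marginal is an exponential family with sufficient statistic $\log\LR$ and cumulant $\Lambda$. Hence $\Lambda'(\alpha)=\E_{P_\alpha}[\log\LR]$ and $\Lambda''(\alpha)=\Var_{P_\alpha}(\log\LR)$. The variance is positive because $\LR$ is nonconstant and $P_\alpha\sim P_S^X$. Moreover, $\Lambda(1)=\log\E_{P_S}[\LR]=0$.

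The log density ratios are
\[
\log\frac{\di P_\alpha}{\di P_S^X}=\alpha\log\LR-\Lambda(\alpha),
\qquad
\log\frac{\di P_\alpha}{\di P_T^X}=(\alpha-1)\log\LR-\Lambda(\alpha).
\]
Integrating each against $P_\alpha$ gives~\eqref{eqn:klkl-radius-profile}. Differentiating then gives $\delta_{\mathrm{KL},S}'=\alpha\Lambda''$ and $\delta_{\mathrm{KL},T}'=(\alpha-1)\Lambda''$, which yields the claimed strict monotonicity. The endpoint values follow by evaluating at $\alpha\in\{0,1\}$.

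For the expansion at $\alpha=1$, Taylor-expand $\delta_{\mathrm{KL},T}$ using $\delta_{\mathrm{KL},T}(1)=0$ and $\delta_{\mathrm{KL},T}'(1)=0$. The second derivative at $1$ is $\Lambda''(1)=\Var_{P_T^X}(\log\LR)$, giving $\varsigma_{\mathrm{KL}}$. The cubic remainder is controlled because $\Lambda'''$ is bounded when $\log\LR$ is bounded.

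\textbf{HD/HD.} Put $a=\E_{P_S^X}[\sqrt\LR]=1-\delta_H/2$. The normalizer is
\[
\E_{P_S^X}\big[\{(1-\alpha)+\alpha\sqrt\LR\}^2\big]=(1-\alpha)^2+\alpha^2+2\alpha(1-\alpha)a=Z(\alpha).
\]
Since $D_{\phi_{\mathrm{HD}}}(Q\Vert P)=2-2\int\sqrt{\di Q/\di P}\,\di P$, the source side requires
\[
\int\frac{(1-\alpha)+\alpha\sqrt\LR}{\sqrt Z}\,\di P_S^X=\frac{1-\alpha+\alpha a}{\sqrt Z}.
\]
We have $2(1-\alpha+\alpha a)=2-\alpha\delta_H$, which gives~\eqref{eqn:hdhd-radius-profile}. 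The target side is symmetric: integrating $\sqrt{\di P_\alpha/\di P_T^X}$ against $P_T^X$ gives $(\alpha+(1-\alpha)a)/\sqrt Z$. The bound $\delta_H\in(0,2)$ follows from nonconstancy together with mutual absolute continuity.

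Monotonicity comes from differentiating $f(\alpha)=(2-\alpha\delta_H)Z^{-1/2}$. Using $Z'=-(1-2\alpha)\delta_H$, the sign of $f'$ is the sign of
\[
-\delta_H Z+\tfrac12(2-\alpha\delta_H)(1-2\alpha)\delta_H .
\]
I expect the main obstacle to be checking that this is strictly negative on $[0,1]$. I would first simplify the bracket to $\delta_H\{-\alpha(1-\delta_H/2)\}\cdot(\text{positive})$ or to a similar factored form, and failing that, verify negativity directly as a quadratic in $\alpha$ using $\delta_H<2$. The target side then follows by the symmetry $\alpha\leftrightarrow1-\alpha$.

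For $\varsigma_{\mathrm{HD}}$, write $\epsilon=1-\alpha$ and expand $Z=1-\epsilon\delta_H+\epsilon^2\delta_H$. This gives
\[
\delta_{\mathrm{HD},T}=2-(2-\epsilon\delta_H)Z^{-1/2}.
\]
The linear terms cancel, and the quadratic coefficient should reduce to $\delta_H-\delta_H^2/4$.

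\textbf{Admissible set and endpoint equivalences.} Since $\delta_{B,S}$ is increasing and $\delta_{B,T}$ decreasing, both continuous, their sublevel sets are the intervals $[0,\alpha_+]$ and $[\alpha_-,1]$. Their intersection is~\eqref{eqn:admissible-interval}, and the nonemptiness criterion is immediate.

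The three equivalences follow from strict monotonicity and the endpoint values. For example, $\alpha_->0$ holds exactly when $\delta_{B,T}(0)>\rho_T$. Similarly, $\alpha_-<1$ holds exactly when $\rho_T>0$, because $\delta_{B,T}(\alpha)>0$ for every $\alpha<1$.

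Under Assumption~\ref{ass:target-feasibility}, the data-processing inequality for the projection $(x,y)\mapsto x$ gives $D_{\phi_S}(P_T^X\Vert P_S^X)\le\rho_S$. Since $\delta_{B,S}(1)=D_{\phi_S}(P_T^X\Vert P_S^X)$, this yields $\alpha_+=1$ and hence~\eqref{eqn:feasible-admissible-interval}.
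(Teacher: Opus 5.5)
Your proposal is correct and follows essentially the same route as the paper: the exponential-family cumulant $\Lambda$ for KL/KL, explicit Hellinger affinities with the normalizer $Z(\alpha)$ and a quotient-rule sign check for HD/HD, and monotone sublevel sets plus data processing for the admissible interval. The sign computation you left open does close: your bracket simplifies exactly to $-\alpha\delta_H(2-\delta_H/2)$, which is strictly negative on $(0,1]$. This is the source-side counterpart, under $\alpha\mapsto1-\alpha$, of the paper's target-side identity $u'Z-\tfrac12uZ'=k(1-\alpha)(2-k)$ with $k=\delta_H/2$.
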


\begin{proof}\textbf{(KL/KL)}
Write $\varphi=\log\LR$, which is bounded due to the assumption $\LR\in[r_-,r_+]$.
Then $\di P_\alpha^{\mathrm{KL},X}/\di P_S^X=\LR^\alpha/\E_{P_S^X}[\LR^\alpha] =\exp\{\alpha\varphi-\Lambda(\alpha)\}$, i.e.\ a one-parameter exponential family in $\alpha$ with cumulant function $\Lambda$.
Therefore,
$\Lambda'(\alpha)=\E_{P_\alpha^{\mathrm{KL},X}}[\varphi]$ and $\Lambda''(\alpha)=\Var_{P_\alpha^{\mathrm{KL},X}}(\varphi) > 0 $ (because $\varphi$ is not a.s.\ constant).
Also $\Lambda(0)=0$ and $\Lambda(1)=\log\E_{P_S^X}[\LR]=0$ by Assumption~\ref{ass:common-domination}.
Therefore
\[
    \delta_{\mathrm{KL},S}(\alpha)
    =\E_{P_\alpha^{\mathrm{KL},X}}\left[\log\frac{\di P_\alpha^{\mathrm{KL},X}}{\di P_S^X}\right]
    =\alpha\Lambda'(\alpha)-\Lambda(\alpha),
\]
and, since $\di P_T^X/\di P_S^X=\LR=e^{\varphi}$,
\[
    \delta_{\mathrm{KL},T}(\alpha)
    =\E_{P_\alpha^{\mathrm{KL},X}}
     \left[\alpha\varphi-\Lambda(\alpha)-\varphi\right]
    =(\alpha-1)\Lambda'(\alpha)-\Lambda(\alpha).
\]
The stated derivatives follow immediately.
The endpoint values follow from $\Lambda(0)=\Lambda(1)=0$, $\Lambda'(0)=\E_{P_S^X}[\varphi] =-D_{\mathrm{KL}}(P_S^X\Vert P_T^X)$ and $\Lambda'(1)=\E_{P_T^X}[\varphi]=D_{\mathrm{KL}}(P_T^X\Vert P_S^X)$.

\textbf{(HD/HD)}
Let $\mu$ be a common dominating measure for the two covariate marginals, whose densities are $p_S^X,p_T^X$.
Because $\LR$ is not almost surely constant, the two marginals are distinct
and $\delta_H>0$. Their equivalence gives
$\int\sqrt{p_S^Xp_T^X}\,\d\mu>0$, and hence $\delta_H<2$.
The unnormalized HD/HD weight is $\{(1-\alpha)+\alpha\sqrt\LR\}^{2}$, so $\sqrt{p_\alpha^{\mathrm{HD},X}}
=\frac{(1-\alpha)\sqrt{p_S^X}+\alpha\sqrt{p_T^X}}{\sqrt{Z(\alpha)}}$ with
\[
\begin{aligned}
    Z(\alpha)
    &=\int\left\{(1-\alpha)\sqrt{p_S^X}+\alpha\sqrt{p_T^X}\right\}^{2}\d\mu
    =(1-\alpha)^{2}+\alpha^{2}
     +2\alpha(1-\alpha)\left(1-\frac{\delta_H}{2}\right) \\
    &=1-\alpha(1-\alpha)\delta_H,
\end{aligned}
\]
where we used $\int\sqrt{p_S^Xp_T^X}\,\d\mu=1-\delta_H/2$.
Therefore,
\[
    \int\sqrt{p_\alpha^{\mathrm{HD},X}p_T^X}\,\d\mu
    =\frac{(1-\alpha)(1-\delta_H/2)+\alpha}{\sqrt{Z(\alpha)}}
    =\frac{1-(1-\alpha)\delta_H/2}{\sqrt{Z(\alpha)}}.
\]
The identity $\delta_{\mathrm{HD},T}(\alpha)=2-2\int\sqrt{p_\alpha^{\mathrm{HD},X}p_T^X}\d\mu$ gives the target-side equation in \eqref{eqn:hdhd-radius-profile}.
The source-side follows by switching the roles of the two marginals, which corresponds to $\alpha\mapsto1-\alpha$ but leaves $Z$ invariant.

To show monotonicity, let $k=\delta_H/2\in(0,1)$, $u(\alpha)=1-(1-\alpha)k$ and $G=u/\sqrt Z$, so that $\delta_{\mathrm{HD},T}=2-2G$.
Then $u'=k$, $Z'=-2(1-2\alpha)k$,
\[
    G'=\frac{u'Z-\tfrac12uZ'}{Z^{3/2}},
\]
and
\[
    u'Z-\tfrac12uZ'
    =k\left\{1-2\alpha(1-\alpha)k
      +(1-2\alpha)\left[1-(1-\alpha)k\right]\right\}
    =k(1-\alpha)(2-k).
\]
Since $k\in(0,1)$, this is strictly positive for $\alpha<1$, hence $G$ is strictly increasing and $\delta_{\mathrm{HD},T}$ is strictly decreasing on $[0,1]$.
The source-side statement for $\delta_{\mathrm{HD},S}$ follows again from the substitution $\alpha\mapsto1-\alpha$.

\textbf{(Admissible set)}
From the previous parts for $B \in \{\mathrm{KL}, \mathrm{HD}\}$, $\delta_{B,S}$ is continuous and strictly increasing and $\delta_{B,T}$ is continuous and strictly decreasing on $[0,1]$.
Therefore, the two sublevel sets are intervals of the stated form, and their intersection is non-empty if and only if $\alpha_-\leq\alpha_+$.
The first and third equivalences follow from strict monotonicity combined with $\delta_{B,S}(1)=D_{\phi_S}(P_T^X\Vert P_S^X)$ and $\delta_{B,T}(0)=D_{\phi_T}(P_S^X\Vert P_T^X)$. The second follows from continuity, strict decrease, and $\delta_{B,T}(1)=0$.
Under Assumption~\ref{ass:target-feasibility}, the data-processing inequality for the projection $(x,y)\mapsto x$ gives
\[
    D_{\phi_S}(P_T^X\Vert P_S^X)
    \leq D_{\phi_S}(P_T\Vert P_S)
    \leq\rho_S.
\]
Since $\delta_{B,S}$ is increasing and has endpoint value $\delta_{B,S}(1)=D_{\phi_S}(P_T^X\Vert P_S^X)$, the source constraint holds along the entire path. This proves~\eqref{eqn:source-radius-inactive-under-feasibility} and~\eqref{eqn:feasible-admissible-interval}.

\textbf{(Limit)}
Let $\varepsilon=1-\alpha$.
For KL/KL, boundedness of $\varphi$ makes $\Lambda$ smooth on a neighborhood
of $[0,1]$. Taylor expansion at $\alpha=1$ gives
\[
\begin{aligned}
\Lambda(1) & =0,\\
\Lambda(1-\varepsilon) &=-\varepsilon\Lambda'(1) +\tfrac12\varepsilon^{2}\Lambda''(1)+O(\varepsilon^{3}),\\
\Lambda'(1-\varepsilon) & =\Lambda'(1)-\varepsilon\Lambda''(1)+O(\varepsilon^{2}),
\end{aligned}
\]
and therefore
\[
    \delta_{\mathrm{KL},T}(1-\varepsilon)
    =-\varepsilon\Lambda'(1-\varepsilon)-\Lambda(1-\varepsilon)
    =\tfrac12\varepsilon^{2}\Lambda''(1)+O(\varepsilon^{3}),
\]
and $\Lambda''(1)=\Var_{P_T^X}(\log\LR)$.

For HD/HD, with
\[
u(1-\varepsilon)=1-\varepsilon k,\quad\text{and}\quad
Z(1-\varepsilon)=1-2\varepsilon k+2\varepsilon^{2}k,
\]
a second-order expansion gives $u/\sqrt Z=1-\varepsilon^{2}k(1-k/2)+O(\varepsilon^{3})$, hence
\[
\delta_{\mathrm{HD},T}=2\varepsilon^{2}k(1-k/2)+O(\varepsilon^{3})
=\varepsilon^{2}\delta_H(1-\delta_H/4)+O(\varepsilon^{3}).
\]
\end{proof}

For the two symmetric families used in the interiority result, the decreasing
bridge-to-target-reference component of the target-locality bound is
accompanied by growth in the ratios $C_B/c_B$ and $L_B/c_B$ entering the
statistical-complexity term. The next lemma quantifies these ratios and also
records corresponding constants for the KL/HD bridge.

\begin{lemma}[Bridge constants for bounded ratios] \label{lem:explicit-bridge-constants}
Let $I=[r_-,r_+]$ with $r_-,r_+$ as in
Proposition~\ref{prop:radius-profile}, and define $R=r_+/r_-$. Since
$\E_{P_S^X}[\LR]=1$ and $\LR$ is not almost surely constant, every such
bounding interval satisfies $r_-<1<r_+$, and hence $R>1$. For
$\alpha\in[0,1]$ in the two symmetric families and $\alpha\in[0,1)$ for
KL/HD, the following are valid choices:
\begin{align*}
\text{\textbf{KL/KL:}}\quad
&c_{\mathrm{KL}}(\alpha)=r_-^\alpha,\quad
 C_{\mathrm{KL}}(\alpha)=r_+^\alpha,\quad
 L_{\mathrm{KL}}(\alpha)=\alpha r_-^{\alpha-1},
\\
\text{\textbf{HD/HD:}}\quad
&c_{\mathrm{HD}}(\alpha)=\{(1-\alpha)+\alpha\sqrt{r_-}\}^{2},\quad
 C_{\mathrm{HD}}(\alpha)=\{(1-\alpha)+\alpha\sqrt{r_+}\}^{2},\\
 &L_{\mathrm{HD}}(\alpha) =\alpha\left(\frac{1-\alpha}{\sqrt{r_-}}+\alpha\right),
\\
\text{\textbf{KL/HD:}}\quad
&c_{\mathrm{KH}}(\alpha)=e^{2W_-},\quad
 C_{\mathrm{KH}}(\alpha)=e^{2W_+},
\\
&L_{\mathrm{KH}}(\alpha) =
    \frac{C_{\mathrm{KH}}(\alpha)}{r_-}
    \frac{W_+}{1+W_+}, \\
&\text{with }
 W_\pm=W_0\left(\frac{\alpha}{1 - \alpha}\sqrt{r_\pm}\right).
\end{align*}

Here $c_B(\alpha)$ and $C_B(\alpha)$ are the bridge-map bounds used in
Assumption~\ref{ass:bridge-weight-regularity}, and $L_B(\alpha)$ is a valid
Lipschitz constant for
Proposition~\ref{prop:bridge-weight-error-density-estimation}.
On any KL/HD parameter interval $[0,\alpha_{\mathrm{KH},\max}]$ with
$\alpha_{\mathrm{KH},\max}<1$, these
pointwise choices also yield the uniform finite constants required in
Assumption~\ref{ass:bridge-weight-regularity}. For the mean-value step in
Proposition~\ref{prop:bridge-weight-error-density-estimation}, the displayed
Lipschitz constants apply on any event on which both $\LR$ and $\widehat\LR$
take values in $I$. In Theorem~\ref{thm:radius-indexed-interior-optimum}, this
is ensured on the density-estimation event by the buffered likelihood-ratio
condition together with $e_{n,m}(\delta/2)\leq\zeta$; no clipping of the
plug-in estimator is required.

Moreover, for the two symmetric families the spread $\mathfrak s_B(\alpha) = C_B(\alpha)/c_B(\alpha)$ is strictly increasing in $\alpha$ with $\mathfrak s_B(0) = 1$ and $\mathfrak s_B(1)=R$.
For the KL/HD bridge, the spread $\mathfrak s_{\mathrm{KH}}(\alpha)=C_{\mathrm{KH}}(\alpha)/c_{\mathrm{KH}}(\alpha)$ is strictly increasing in $\alpha$, with $\mathfrak s_{\mathrm{KH}}(0)=1$ and $\lim_{\alpha \uparrow 1}\mathfrak s_{\mathrm{KH}}(\alpha)=R$.
For the two symmetric families the plug-in sensitivity simplifies to
\begin{equation}\label{eqn:plugin-sensitivity-ratios}
    \frac{L_{\mathrm{KL}}(\alpha)}{c_{\mathrm{KL}}(\alpha)}
    =\frac{\alpha}{r_-},
    \quad
    \frac{L_{\mathrm{HD}}(\alpha)}{c_{\mathrm{HD}}(\alpha)}
    =\frac{\alpha}
          {\sqrt{r_-}(1-\alpha)+\alpha r_-},
\end{equation}
and they are strictly increasing from $0$ at $\alpha=0$ to $1/r_-$ at $\alpha=1$.
\end{lemma}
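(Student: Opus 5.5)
The plan is to treat each bridge map $g_\alpha^B$ on the compact interval $I=[r_-,r_+]$ as a scalar function and read off all constants from monotonicity in $u$. First, $r_-<1<r_+$ follows from $\E_{P_S^X}[\LR]=1$ together with non-constancy of $\LR$: if $\LR\geq1$ a.s.\ with mean one, then $\LR=1$ a.s., and symmetrically for $\LR\leq1$. Each map is nondecreasing in $u$ for fixed $\alpha$: $u^\alpha$ trivially, $\{(1-\alpha)+\alpha\sqrt u\}^2$ because $\alpha\geq0$, and $\exp[2W_0\{a_\alpha\sqrt u\}]$ because $W_0$ is increasing on $[0,\infty)$. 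Hence $c_B(\alpha)=g_\alpha^B(r_-)$ and $C_B(\alpha)=g_\alpha^B(r_+)$, which gives the displayed $c,C$; positivity is immediate.

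For the Lipschitz constants, bound $\sup_I|\partial_u g_\alpha^B|$. For KL/KL, $\alpha u^{\alpha-1}$ is maximized at $r_-$. For HD/HD, $\partial_u g=\alpha\{(1-\alpha)u^{-1/2}+\alpha\}$ is decreasing in $u$, so the bound is its value at $r_-$; the displayed $L_{\mathrm{HD}}$ is a valid upper bound as long as it dominates that value. I will check that the factor $\sqrt{r_-}$ is placed correctly; if the displayed form is $\alpha\{(1-\alpha)/\sqrt{r_-}+\alpha\}$, it matches exactly. For KL/HD, use $W'(x)=W/\{x(1+W)\}$ with $x=a_\alpha\sqrt u$ and $dx/du=x/(2u)$. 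This gives $\partial_u g=g\cdot W/\{u(1+W)\}$. Bound $g\leq C_{\mathrm{KH}}$, $1/u\leq1/r_-$, and $W/(1+W)\leq W_+/(1+W_+)$, using that $W/(1+W)$ is increasing. The mean-value step then applies whenever both $\LR$ and $\widehat\LR$ lie in $I$. Uniformity on $[0,\alpha_{\mathrm{KH},\max}]$ follows because $W_\pm$ are continuous in $\alpha$ there and $W_-\geq0$, so $c_{\mathrm{KH}}\geq1$.

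For the spreads, I compute directly. $\mathfrak s_{\mathrm{KL}}=R^\alpha$ is strictly increasing since $R>1$. $\mathfrak s_{\mathrm{HD}}=\{(1-\alpha+\alpha\sqrt{r_+})/(1-\alpha+\alpha\sqrt{r_-})\}^2$, and the inner ratio has derivative proportional to $\sqrt{r_+}-\sqrt{r_-}>0$, by the quotient rule for linear fractional functions. The endpoints at $0$ and $1$ are immediate. For KL/HD, $\mathfrak s=\exp\{2(W_+-W_-)\}$. Write $t=a_\alpha$, which is increasing in $\alpha$, and set $f(t)=W_0(t\sqrt{r_+})-W_0(t\sqrt{r_-})$. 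Its derivative is $\{\psi(t\sqrt{r_+})-\psi(t\sqrt{r_-})\}/t$ with $\psi(x)=W_0(x)/(1+W_0(x))$, which is strictly increasing, so $f'>0$. For the limit, use $W_0(x)=\log x-\log\log x+o(1)$, which gives $W_+-W_-\to\tfrac12\log R$ and hence $\mathfrak s\to R$.

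Finally, the sensitivity ratios simplify algebraically. For KL/KL, $\alpha r_-^{\alpha-1}/r_-^\alpha=\alpha/r_-$. For HD/HD, $L/c=\alpha\{(1-\alpha)+\alpha\sqrt{r_-}\}/[\sqrt{r_-}\{(1-\alpha)+\alpha\sqrt{r_-}\}^2]$ reduces to the displayed expression. Monotonicity comes from writing the reciprocal as $\sqrt{r_-}(1-\alpha)/\alpha+r_-$, which is decreasing in $\alpha$; the value at $\alpha=1$ is $1/r_-$. The main obstacle is the KL/HD spread monotonicity and limit, which need the monotonicity of $\psi$ and the asymptotic expansion of $W_0$. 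The rest is bookkeeping, apart from confirming the exact placement of factors in the HD/HD Lipschitz display.
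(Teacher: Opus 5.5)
Your proposal is correct and follows essentially the same route as the paper. Both use monotonicity in $u$ to read off $c_B,C_B$ at $r_\mp$, and suprema of $\partial_u g_\alpha^B$ at $r_-$ (with the factor $W_+/(1+W_+)$ for KL/HD) for the Lipschitz constants. Both get the symmetric spreads from $R^\alpha$ and a linear-fractional ratio, and the KL/HD spread from the derivative of $W_0(t\sqrt{r_+})-W_0(t\sqrt{r_-})$ together with $W_0(x)=\log x-\log\log x+o(1)$. The differences are cosmetic: you also justify $r_-<1<r_+$, and you get HD/HD sensitivity monotonicity from the reciprocal rather than by differentiating. The displayed $L_{\mathrm{HD}}$ is exactly $\alpha\{(1-\alpha)/\sqrt{r_-}+\alpha\}$, so your check goes through.
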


\begin{proof}
All three maps are strictly increasing in $u$ on $I$ for $\alpha>0$.
For KL/KL and HD/HD this is immediate, and for KL/HD it follows from~\eqref{eqn:appendix-kh-derivative-u} combined with positivity of $W_0$ on $(0,\infty)$.
For $\alpha>0$, the minimum and maximum are attained at $u=r_-$ and
$u=r_+$, respectively; at $\alpha=0$, all three maps are constant. This gives
$c_B$ and $C_B$.
For KL/HD we use the representation
$g_{\alpha}^{\mathrm{KH}}(u)=\exp[2W_0\{\alpha\sqrt u/(1-\alpha)\}]$
from the proof of Lemma~\ref{lem:bridge-maps-lipschitz}.

For the Lipschitz constants, $\partial_ug_\alpha^{\mathrm{KL}} =\alpha u^{\alpha-1}$ is decreasing in $u$ for $\alpha\leq1$, so its supremum on $I$ is $\alpha r_-^{\alpha-1}$.
For HD/HD, $\partial_ug_\alpha^{\mathrm{HD}}=\alpha\{(1-\alpha)u^{-1/2}+\alpha\}$ is decreasing in $u$, with supremum
$\alpha\{(1-\alpha)r_-^{-1/2}+\alpha\}$.
Finally, for KL/HD,~\eqref{eqn:appendix-kh-derivative-u} gives
$\partial_ug_\alpha^{\mathrm{KH}}=(g_\alpha^{\mathrm{KH}}/u)W/(1+W)
\leq\{C_{\mathrm{KH}}(\alpha)/r_-\}W_+/(1+W_+)$.

For the spread of KL/KL, $C_{\mathrm{KL}}/c_{\mathrm{KL}}=R^\alpha$ is strictly increasing from $1$ to $R$.
For HD/HD, we write $a=\sqrt{r_+}-1\geq0\geq b=\sqrt{r_-}-1$, and
\[
    \frac{C_{\mathrm{HD}}(\alpha)}{c_{\mathrm{HD}}(\alpha)}
    =\left\{\frac{1+a\alpha}{1+b\alpha}\right\}^{2},
    \qquad
    \frac{\di}{\di\alpha}\frac{1+a\alpha}{1+b\alpha}
    =\frac{a-b}{(1+b\alpha)^{2}}>0.
\]
Therefore, the spread increases strictly from $1$ to $(\sqrt{r_+}/\sqrt{r_-})^{2}=R$.
For KL/HD,
\[
    \frac{C_{\mathrm{KH}}(\alpha)}{c_{\mathrm{KH}}(\alpha)}
    =\exp\{2(W_+-W_-)\}.
\]
For $\alpha\in(0,1)$, let
$\xi(\alpha)=\frac{\alpha}{1 - \alpha}\sqrt{r_-}$. The identity
$W_0'(t)=W_0(t)/[t\{1+W_0(t)\}]$ \citep{Corless1996} gives
\[
    \frac{\di}{\di\xi(\alpha)}\{W_0(\xi(\alpha)\sqrt R)-W_0(\xi(\alpha))\}
    =\frac1{\xi(\alpha)}
     \left\{
       \frac{W_+}{1+W_+}-\frac{W_-}{1+W_-}
     \right\}>0,
\]
because both $W_0$ and $t\mapsto t/(1+t)$ are increasing.
Since $\xi(\alpha)$ is strictly increasing in $\alpha$, the spread is strictly increasing in $\alpha$, equals $1$ at $\alpha=0$, and tends to $R$ as $\alpha\uparrow1$, using $W_0(t)=\log t-\log\log t+o(1)$.

The identities in~\eqref{eqn:plugin-sensitivity-ratios} follow immediately
from the factors established in this proof. The KL/KL ratio is plainly
strictly increasing, while
\[
\frac{\di}{\di\alpha}
\frac{\alpha}{\sqrt{r_-}(1-\alpha)+\alpha r_-}
=
\frac{\sqrt{r_-}}
{\{\sqrt{r_-}(1-\alpha)+\alpha r_-\}^{2}}
>0.
\]
This also proves the asserted strict increase for HD/HD.
\end{proof}

We now combine Proposition~\ref{prop:radius-profile} and Lemma~\ref{lem:explicit-bridge-constants} to provide a generalization bound for $R_T(\widehat h)$ for the two symmetric bridges with the radii explicit.
We further give conditions under which every minimizer of the deterministic bound lies in $(0,1)$. Such a minimizer may lie at the lower boundary of the admissible interval.
Write $\psi^{\mathrm{KL}}_T(x)=\sqrt{x/2}$ and $\psi^{\mathrm{HD}}_T(x)=\sqrt x$.
Then Proposition~\ref{prop:structural-bridge-target-locality} says $\Delta_B(\bm\theta)\leq M\{\psi^B_T(\delta_{B,T}(\bm\theta))+\psi^B_T(\delta_T^\circ)\}$.
For $B\in\{\mathrm{KL},\mathrm{HD}\}$ define the bound function
\begin{equation}\label{eqn:bound-function}
    \mathfrak B_B(\alpha;\delta)
    = 2M \psi^B_T\{\delta_{B,T}(\alpha)\}
     + 2M\psi^B_T(\delta_T^\circ)
     + 2 \mathfrak c_B(\alpha;\delta),
\end{equation}
where, under the buffered likelihood-ratio and density-estimation conditions
imposed in Theorem~\ref{thm:radius-indexed-interior-optimum},
Propositions~\ref{prop:vc-structural-bridge-complexity}
and~\ref{prop:bridge-weight-error-density-estimation} give the deterministic
complexity term
\begin{equation}\label{eqn:alpha-indexed-complexity}
    \mathfrak c_B(\alpha;\delta)
    =
    M\frac{C_B(\alpha)}{c_B(\alpha)}
    \lambda_{B,n}(\delta)
    +
    2M\frac{L_B(\alpha)}{c_B(\alpha)}
    e_{n,m}(\delta/2).
\end{equation}
Here
\[
e_{n,m}(\delta_0)=2a_{T,m}(\delta_0)/c_S+2c_Ta_{S,n}(\delta_0)/c_S^{2}
\]
is the likelihood-ratio error in~\eqref{eqn:lr-high-probability-bound}.

\begin{theorem}[Radius-indexed bound and finite-sample interiority]
\label{thm:radius-indexed-interior-optimum}
Fix $\delta\in(0,1)$ and let $B\in\{\mathrm{KL},\mathrm{HD}\}$. Suppose Assumptions~\ref{ass:bounded-loss}--\ref{ass:target-feasibility} and the conditions in Proposition~\ref{prop:radius-profile} hold. Take $\Theta_B=[0,1]$ and $\mathcal X_B=\mathcal X_0$, suppose $n\geq16$, and assume that the normalized bridge and loss classes satisfy the measurability and covering conditions of Proposition~\ref{prop:vc-structural-bridge-complexity}.

Suppose further that the density bounds~\eqref{eqn:density-bounds-section4} and the high-probability density-estimation condition~\eqref{eqn:density-high-probability-event} hold on $\mathcal X_0$ at $\delta_0=\delta/2$, with $a_{S,n}(\delta/2)\leq c_S/2$. Assume that the likelihood-ratio bounds in Proposition~\ref{prop:radius-profile} can be chosen so that, for some $\zeta>0$,
\begin{equation}\label{eqn:buffered-likelihood-ratio-range}
    r_-+\zeta\leq\LR(x)\leq r_+-\zeta,
    \qquad x\in\mathcal X_0,
\end{equation}
and that
\begin{equation}\label{eqn:appendix-density-error-condition}
    e_{n,m}(\delta/2)\leq\min\{\zeta,r_-/2\}.
\end{equation}
Let $\widehat\LR$ be the likelihood-ratio estimator formed from these density
estimators as in Section~\ref{sec:nonparametric-density-estimation}, and take
the estimated bridge weights to be
\[
    \widehat w_\alpha^B(x)=g_\alpha^B\{\widehat\LR(x)\}.
\]
Assume that this estimated-weight family is measurable as required by
Assumption~\ref{ass:bridge-weight-regularity}.
Because the bounds in Proposition~\ref{prop:radius-profile} need not be sharp,
they may be chosen with slack; thus the buffer in
\eqref{eqn:buffered-likelihood-ratio-range} does not strengthen that
proposition's bounded-overlap condition. The selected $r_-$ and $r_+$ are held
fixed when defining and comparing the deterministic envelope. Looser choices
remain valid but yield looser constants and may change the sufficient
endpoint-slope condition.
Let $\widehat\alpha$ be any measurable, possibly data-dependent parameter satisfying $\widehat\alpha\in\widetilde\Theta_B(\rho_S,\rho_T)$ almost surely, and let $\widehat h$ be as in Theorem~\ref{thm:empirical-structural-bridge-generalization}.
Then the following hold:
\begin{enumerate}
\item With probability at least $1-\delta$,
\begin{align*}
    R_T(\widehat h)
    &\leq
    \inf_{h\in\hs}R_T(h)
    +\mathfrak B_B(\widehat\alpha;\delta)
    +\xi_n
    \\
    &\leq
    \inf_{h\in\hs}R_T(h)
    +2M\psi^B_T(\rho_T)
    +2M\psi^B_T(\delta_T^\circ)
    +2\sup_{\alpha\in\widetilde\Theta_B(\rho_S,\rho_T)}
      \mathfrak c_B(\alpha;\delta)
    +\xi_n .
\end{align*}
Moreover,
\begin{equation}\label{eqn:admissible-complexity-maximum}
    \sup_{\alpha\in\widetilde\Theta_B(\rho_S,\rho_T)}
      \mathfrak c_B(\alpha;\delta)
    =\mathfrak c_B(1;\delta)
    =M\left\{R\lambda_{B,n}(\delta)
       +\frac{2}{r_-}e_{n,m}(\delta/2)\right\}.
\end{equation}
\item By target feasibility, $\alpha_+(\rho_S)=1$ and
\[
    \widetilde\Theta_B(\rho_S,\rho_T)
    =[\alpha_-(\rho_T),1].
\]
If
\begin{equation}\label{eqn:target-radius-excludes-source-endpoint}
    0<\rho_T<D_{\phi_T}(P_S^X\Vert P_T^X),
\end{equation}
then $\alpha_-(\rho_T)\in(0,1)$.
\item
Let $\varsigma_B$ be as in Proposition~\ref{prop:radius-profile} and write
\[
    \tau_{\mathrm{KL}}
    =\sqrt{\frac{\varsigma_{\mathrm{KL}}}{2}}
    =\frac12\sqrt{\Var_{P_T^X}(\log\LR)},
    \qquad
    \tau_{\mathrm{HD}}
    =\sqrt{\varsigma_{\mathrm{HD}}}
    =\sqrt{\delta_H\left(1-\frac{\delta_H}{4}\right)} .
\]
Then $\alpha\mapsto2M\psi^B_T\{\delta_{B,T}(\alpha)\}$ has left derivative $-2M\tau_B<0$ at $\alpha=1$, and if the complexity term satisfies
\begin{equation}\label{eqn:interior-optimum-condition}
    \mathfrak c'_{B,-}(1;\delta)
    >
    M\tau_B,
\end{equation}
where the prime denotes differentiation with respect to $\alpha$ from the left,
the bounding function $\mathfrak B_B(\cdot;\delta)$ is strictly increasing on a left neighborhood of $\alpha=1$.
For the two symmetric pairings, condition~\eqref{eqn:interior-optimum-condition} is
\begin{align}
    \text{\textbf{KL/KL: }}
    &\lambda_{B,n}(\delta) R\log R
     +\frac{2}{r_-}e_{n,m}(\delta/2)
     >\tau_{\mathrm{KL}},
    \label{eqn:interior-condition-klkl}
    \\
    \text{\textbf{HD/HD: }}
    &\frac{2\lambda_{B,n}(\delta)\sqrt R(\sqrt{r_+}-\sqrt{r_-})}{r_-}+\frac{2}{r_-^{3/2}}e_{n,m}(\delta/2)
      >\tau_{\mathrm{HD}},
    \label{eqn:interior-condition-hdhd}
\end{align}
where $\lambda_{B,n}(\delta)$ is defined in
Proposition~\ref{prop:vc-structural-bridge-complexity}.
If $\rho_T>0$ and~\eqref{eqn:interior-optimum-condition} holds, no minimizer of $\mathfrak B_B(\cdot;\delta)$ over $\widetilde\Theta_B(\rho_S,\rho_T)$ equals $1$. If, in addition,~\eqref{eqn:target-radius-excludes-source-endpoint} holds, then every minimizer belongs to
\[
    [\alpha_-(\rho_T),1)\subset(0,1).
\]
Thus every minimizer is interior relative to the full bridge range $[0,1]$, although it may equal the lower boundary $\alpha_-(\rho_T)$ of the admissible interval.
\end{enumerate}
\end{theorem}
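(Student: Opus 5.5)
We assemble the three parts from earlier results and then do two small calculus checks. For part~1, we first verify the hypotheses of Corollary~\ref{cor:parameter-indexed-structural-bridge-generalization} with $\Theta_B=[0,1]$ and $\mathcal X_B=\mathcal X_0$.

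The weight bounds come from Lemma~\ref{lem:explicit-bridge-constants}. Because of the buffer~\eqref{eqn:buffered-likelihood-ratio-range} and condition~\eqref{eqn:appendix-density-error-condition}, Proposition~\ref{prop:bridge-weight-error-density-estimation} places both $\LR$ and $\widehat\LR$ in $I=[r_-,r_+]$ on the density event. Hence the Lipschitz constants $L_B(\alpha)$ of that lemma are valid there, and $\varepsilon_{n,m}^B(\alpha;\delta/2)=L_B(\alpha)e_{n,m}(\delta/2)$.

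The small-relative-error condition~\eqref{eqn:small-relative-bridge-weight-error} follows from~\eqref{eqn:plugin-sensitivity-ratios}, since $L_B/c_B\leq1/r_-$ and $e_{n,m}\leq r_-/2$. Combining the corollary with~\eqref{eqn:deterministic-structural-bridge-complexity} gives $\mathfrak C^B_{n,m}(\widehat\alpha;\delta)\leq\mathfrak c_B(\widehat\alpha;\delta)$. Proposition~\ref{prop:structural-bridge-target-locality} bounds $\Delta_B(\widehat\alpha)$ by $M\{\psi_T^B(\delta_{B,T}(\widehat\alpha))+\psi^B_T(\delta_T^\circ)\}$, which gives the first inequality.

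For the second inequality, admissibility of $\widehat\alpha$ gives $\delta_{B,T}(\widehat\alpha)\leq\rho_T$. The supremum~\eqref{eqn:admissible-complexity-maximum} is attained at $\alpha=1$ because Lemma~\ref{lem:explicit-bridge-constants} shows that both $\mathfrak s_B$ and $L_B/c_B$ are increasing, with endpoint values $R$ and $1/r_-$. Here $1$ is admissible by~\eqref{eqn:feasible-admissible-interval}.

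Part~2 is read off directly from Proposition~\ref{prop:radius-profile}: use~\eqref{eqn:feasible-admissible-interval} together with the equivalences $\alpha_->0\iff\rho_T<D_{\phi_T}(P_S^X\Vert P_T^X)$ and $\alpha_-<1\iff\rho_T>0$.

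For part~3, the expansion $\delta_{B,T}(1-\varepsilon)=\varsigma_B\varepsilon^2+O(\varepsilon^3)$ gives $\psi^B_T\{\delta_{B,T}(1-\varepsilon)\}=\tau_B\varepsilon+O(\varepsilon^2)$. Here $\tau_{\mathrm{KL}}=\sqrt{\varsigma_{\mathrm{KL}}/2}$ because of the factor $1/\sqrt2$ in $\psi^{\mathrm{KL}}_T$. This yields the left derivative $-2M\tau_B$. The left derivative of $\mathfrak B_B$ at $1$ is therefore $-2M\tau_B+2\mathfrak c'_{B,-}(1;\delta)$, which is positive under~\eqref{eqn:interior-optimum-condition}. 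Since $\mathfrak B_B$ is $C^1$ on a left neighbourhood, it is strictly increasing there. Consequently $\mathfrak B_B(\alpha)<\mathfrak B_B(1)$ for $\alpha$ slightly below $1$, and when $\rho_T>0$ these points are admissible because $\alpha_-<1$. So $1$ is not a minimizer. Adding~\eqref{eqn:target-radius-excludes-source-endpoint} gives $\alpha_->0$, and hence every minimizer lies in $[\alpha_-,1)$. Existence of a minimizer follows from continuity on the compact interval, but it is not needed.

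The main computation is the explicit endpoint derivative, which gives the conditions~\eqref{eqn:interior-condition-klkl}--\eqref{eqn:interior-condition-hdhd}.

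\emph{KL/KL.} Here $\frac{\di}{\di\alpha}R^\alpha=R^\alpha\log R$ and $\frac{\di}{\di\alpha}(\alpha/r_-)=1/r_-$. Evaluating at $\alpha=1$ and dividing by $M$ gives the displayed condition.

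\emph{HD/HD.} Set $a=\sqrt{r_+}-1$ and $b=\sqrt{r_-}-1$. The spread derivative is $2\frac{1+a\alpha}{1+b\alpha}\cdot\frac{a-b}{(1+b\alpha)^2}$, which at $\alpha=1$ equals $2\sqrt R(\sqrt{r_+}-\sqrt{r_-})/r_-$. The derivative of the sensitivity ratio is $\sqrt{r_-}/\{\sqrt{r_-}(1-\alpha)+\alpha r_-\}^2$, which at $\alpha=1$ equals $r_-^{-3/2}$. Multiplying by the factor $2e_{n,m}$ reproduces~\eqref{eqn:interior-condition-hdhd}.

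The main obstacle is bookkeeping rather than a new idea. Part~1 uses the Lipschitz constants of Lemma~\ref{lem:explicit-bridge-constants}, which are stated on the unbuffered interval $I$, so we must check that $\widehat\LR\in I$ holds on the density event. That is exactly what the $\zeta$-buffer supplies. We must also make sure that the same event of probability $1-\delta$ serves both the ESS bound and the density bound, which holds by the union bound already built into Proposition~\ref{prop:vc-structural-bridge-complexity}.
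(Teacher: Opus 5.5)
Your proposal follows the paper's own proof essentially step for step. The matching steps are:
\begin{enumerate}
\item the $\zeta$-buffer together with $e_{n,m}(\delta/2)\leq\zeta$, which keeps $\LR$ and $\widehat\LR$ in $I$;
\item the choice $\varepsilon_{n,m}^B(\alpha;\delta/2)=L_B(\alpha)e_{n,m}(\delta/2)$, with the relative-error check from \eqref{eqn:plugin-sensitivity-ratios};
\item the ESS corollary combined with \eqref{eqn:deterministic-structural-bridge-complexity} and the locality proposition for part~1;
\item monotonicity of $C_B/c_B$ and $L_B/c_B$ for \eqref{eqn:admissible-complexity-maximum};
\item part~2 read off Proposition~\ref{prop:radius-profile};
\item the endpoint derivatives for part~3, whose explicit KL/KL and HD/HD constants you compute correctly.
\end{enumerate}

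One step is asserted rather than proved. To conclude that $\mathfrak B_B(\cdot;\delta)$ is strictly increasing on a left neighbourhood of $1$, its derivative must be continuous up to $\alpha=1$. Existence on the open interval $(1-\varepsilon_0,1)$ is not enough.

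The expansion $\delta_{B,T}(1-\varepsilon)=\varsigma_B\varepsilon^2+O(\varepsilon^3)$ only identifies the one-sided derivative of $\psi_T^B\{\delta_{B,T}\}$ at the endpoint. It gives no control of the derivative at nearby points. Moreover, a square root of a function vanishing at $\alpha=1$ is not automatically $C^1$ there, so your remark that $\mathfrak B_B$ is $C^1$ on a left neighbourhood needs justification at the endpoint.

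The paper supplies the missing line from the closed-form profiles:
\begin{itemize}
\item For KL/KL, smoothness of $\Lambda$ makes $\delta_{B,T}$ smooth near $1$.
\item For HD/HD, the explicit formula with $Z>0$ does the same.
\item In both cases $\delta_{B,T}$ has a zero of exact order two at $1$.
\end{itemize}
Hence $\delta_{B,T}(\alpha)/(1-\alpha)^2$ extends to a positive smooth function at $\alpha=1$. Then $\psi_T^B\{\delta_{B,T}(\alpha)\}$ is $(1-\alpha)$ times a positive smooth function, and therefore $C^1$ up to $1$.

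With that line added, your argument is complete. The conclusion that $1$ is not a minimizer, and hence the interiority claim, already follows from the positive left derivative alone. Only the stated strict monotonicity near $1$ depends on the missing regularity.
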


\begin{proof}
\textbf{(1)}
On the density-estimation event~\eqref{eqn:density-high-probability-event}, Proposition~\ref{prop:bridge-weight-error-density-estimation} gives
\[
    \|\widehat\LR-\LR\|_{\infty,\mathcal X_0}
    \leq e_{n,m}(\delta/2)\leq\zeta.
\]
It follows from~\eqref{eqn:buffered-likelihood-ratio-range} that both $\LR$
and $\widehat\LR$ take values in $I=[r_-,r_+]$. Lemma~\ref{lem:explicit-bridge-constants},
together with the same mean-value argument used in the proof of
Proposition~\ref{prop:bridge-weight-error-density-estimation}, therefore shows
that we may take, simultaneously for every $\alpha\in[0,1]$,
\[
    \varepsilon_{n,m}^B(\alpha;\delta/2)
    =L_B(\alpha)e_{n,m}(\delta/2).
\]
Moreover,~\eqref{eqn:plugin-sensitivity-ratios} and~\eqref{eqn:appendix-density-error-condition} imply
\[
    \sup_{\alpha\in[0,1]}
    \frac{\varepsilon_{n,m}^B(\alpha;\delta/2)}{c_B(\alpha)}
    =\frac{e_{n,m}(\delta/2)}{r_-}
    \leq\frac12.
\]
Thus the density event supplies the plug-in component of Assumption~\ref{ass:bridge-weight-regularity} at level $\delta/2$. Combining it with the simultaneous empirical-Bernstein event at level $\delta/2$ gives the event of Proposition~\ref{prop:vc-structural-bridge-complexity} with probability at least $1-\delta$.

On this event, Corollary~\ref{cor:parameter-indexed-structural-bridge-generalization}, Proposition~\ref{prop:structural-bridge-target-locality}, and~\eqref{eqn:deterministic-structural-bridge-complexity} give
\[
    R_T(\widehat h)
    \leq\inf_{h\in\hs}R_T(h)
      +\mathfrak B_B(\widehat\alpha;\delta)+\xi_n.
\]
The event is simultaneous in $\alpha$, so the conclusion remains valid at the data-dependent $\widehat\alpha$. Its admissibility gives $\delta_{B,T}(\widehat\alpha)\leq\rho_T$, and monotonicity of $\psi_T^B$ gives the second inequality in part~(1).

By~\eqref{eqn:feasible-admissible-interval}, $1$ belongs to the admissible interval. Lemma~\ref{lem:explicit-bridge-constants} shows that both $C_B/c_B$ and $L_B/c_B$ are increasing in $\alpha$. Therefore, the supremum of $\mathfrak c_B$ over the admissible interval is attained at $1$, and the endpoint values in that lemma give~\eqref{eqn:admissible-complexity-maximum}.

\textbf{(2)}
The first conclusion is~\eqref{eqn:source-radius-inactive-under-feasibility}--\eqref{eqn:feasible-admissible-interval}. Under~\eqref{eqn:target-radius-excludes-source-endpoint}, Proposition~\ref{prop:radius-profile} gives both $\alpha_-(\rho_T)>0$ and $\alpha_-(\rho_T)<1$.

\textbf{(3)}
By Proposition~\ref{prop:radius-profile},
\[
\delta_{B,T}(\alpha)=\varsigma_B(1-\alpha)^{2}\{1+O(1-\alpha)\}.
\]
Writing $\varepsilon=1-\alpha$, we have
\[
    \psi_T^B\{\delta_{B,T}(1-\varepsilon)\}
    =\varepsilon\tau_B\{1+O(\varepsilon)\}.
\]
Hence $2M\psi^B_T\{\delta_{B,T}(\alpha)\}$ has left derivative $-2M\tau_B$ at $\alpha=1$.
Moreover, the closed-form profiles in
Proposition~\ref{prop:radius-profile} imply that
\[
    \frac{\delta_{B,T}(\alpha)}{(1-\alpha)^2}
\]
extends to a positive continuously differentiable function at $\alpha=1$.
Consequently, the derivative of the square-root locality term extends
continuously to $1$ from the left.
Since $\mathfrak B_B(\alpha;\delta)$ differs from this term by $2\mathfrak c_B(\alpha;\delta)$ and a constant, its left derivative at $\alpha=1$ is $-2M\tau_B+2\mathfrak c'_{B,-}(1;\delta)$, which is $>0$ under condition~\eqref{eqn:interior-optimum-condition}.
The two explicit conditions follow by differentiating the ratios of Lemma~\ref{lem:explicit-bridge-constants} at $\alpha=1$:
\[
\begin{aligned}
  \left.\frac{\di}{\di\alpha} R^\alpha\right|_{\alpha=1} & = R\log R, \\
  \frac{\di}{\di\alpha} \frac{\alpha}{r_-} & =1/r_-, \\
  \left.\frac{\di}{\di\alpha}
  \left\{\frac{(1-\alpha)+\alpha\sqrt{r_+}}
  {(1-\alpha)+\alpha\sqrt{r_-}}\right\}^{2}\right|_{\alpha=1}
  &=2\sqrt R(\sqrt{r_+}-\sqrt{r_-})/r_-, \\
  \left.\frac{\di}{\di\alpha}
  \frac{\alpha}{\sqrt{r_-}(1-\alpha)+\alpha r_-}\right|_{\alpha=1}
  &=\sqrt{r_-}/r_-^{2}=r_-^{-3/2}.
\end{aligned}
\]
The ratios in Lemma~\ref{lem:explicit-bridge-constants} are smooth near
$\alpha=1$, so the derivative of $\mathfrak B_B$ is continuous on a left
neighborhood of $1$. Hence, under~\eqref{eqn:interior-optimum-condition}, there
is an $\varepsilon_0>0$ such that
\[
    \mathfrak B_B(\alpha;\delta)
    <\mathfrak B_B(1;\delta),
    \qquad \alpha\in[1-\varepsilon_0,1).
\]
If $\rho_T>0$, then $\alpha_-(\rho_T)<1$, so this left neighborhood contains admissible parameters and $1$ cannot minimize the envelope over $[\alpha_-(\rho_T),1]$. If~\eqref{eqn:target-radius-excludes-source-endpoint} also holds, then $\alpha_-(\rho_T)>0$. Continuity of $\mathfrak B_B$ on the compact admissible interval gives existence of a minimizer, and every minimizer lies in $[\alpha_-(\rho_T),1)\subset(0,1)$.

\end{proof}

\section{Asymptotic analysis of loss-agnostic bridge weights}
\label{app:asymptotics-loss-agnostic-bridge-weights}

Throughout this appendix, $p_S$ and $p_T$ denote the marginal densities of
the source and target covariates, respectively, and
\[
    \LR(x)=\frac{p_T(x)}{p_S(x)}.
\]
Thus the bridge weights studied below are the canonical unnormalized
covariate-level structural weights introduced in
Section~\ref{sec:loss-agnostic-density-bridges}. We first establish uniform
stability of the bridge maps and their plug-in estimators, and then derive
pointwise central limit theorems for the estimated likelihood ratio and the
resulting bridge weights. Unless otherwise stated, the two-sample limits in
this appendix are taken along sequences for which $n,m\to\infty$.

The uniform argument has two steps. First, uniform consistency of the two
density estimators keeps the estimated source density bounded away from zero
and yields a uniform bound for $\widehat\LR-\LR$. Second, strong overlap places
both likelihood ratios in a fixed compact subset of $(0,\infty)$, on which the
bridge maps are jointly Lipschitz in the likelihood ratio and the bridge
parameter. Combining these two deterministic steps gives uniform weight
consistency, including at an estimated bridge parameter.

\subsection{Uniform consistency}

\begin{assumption}[Working region and density-estimation input]
\label{ass:kde-input}
The set $\cX_0\subseteq\R^d$ is compact. There exist constants
$0<c_S<\infty$ and $0<c_T<\infty$ such that
\[
    p_S(x)\geq c_S,
    \qquad
    0\leq p_T(x)\leq c_T,
    \qquad x\in\cX_0.
\]
There exist deterministic sequences $a_{S,n}\to0$ and $a_{T,m}\to0$ such
that
\[
    \|\widehat p_S-p_S\|_{\infty,\cX_0}=\Op(a_{S,n}),
    \qquad
    \|\widehat p_T-p_T\|_{\infty,\cX_0}=\Op(a_{T,m}),
\]
where
\[
    \|f\|_{\infty,\cX_0}
    =\sup_{x\in\cX_0}|f(x)|.
\]
The estimators $\widehat p_S$ and $\widehat p_T$ are nonnegative and
measurable on $\cX_0$.
\end{assumption}

Throughout the uniform analysis, define
\[
    \widehat\LR(x)
    =
    \begin{cases}
      \widehat p_T(x)/\widehat p_S(x),&\widehat p_S(x)>0,\\
      0,&\widehat p_S(x)=0.
    \end{cases}
\]
The convention in the second case is immaterial on the events below, on
which $\widehat p_S$ is uniformly bounded away from zero.

\begin{assumption}[Strong overlap for uniform bridge-map stability]
\label{ass:strong-overlap}
There exist constants $0<r_-<r_+<\infty$ such that
\[
    r_-\leq\LR(x)\leq r_+,
    \qquad x\in\cX_0.
\]
\end{assumption}

\begin{remark}[Why strong overlap appears here]
Assumption~\ref{ass:strong-overlap} is stronger than what is needed for an
ordinary Nadaraya--Watson estimator. Pointwise Nadaraya--Watson estimation
only requires the source design density to be positive near the evaluation
point. Here the structural bridge weights are nonlinear functions of
$\LR=p_T/p_S$. Uniform stability of the three bridge families is therefore
most naturally obtained when $\LR$ remains in a compact subset of
$(0,\infty)$. For $0<\tau<1$, one may instead define the trimmed population
and empirical ratios
\[
    \LR_\tau(x)
    =\{\LR(x)\vee\tau\}\wedge\tau^{-1},
    \qquad
    \widehat\LR_\tau(x)
    =\{\widehat\LR(x)\vee\tau\}\wedge\tau^{-1}
\]
and analyze the resulting modified trimmed bridge, or restrict the analysis
to a region of adequate source--target overlap. Fixed trimming changes the
estimand whenever $\LR$ lies outside $[\tau,\tau^{-1}]$; recovering the
untrimmed bridge with a varying threshold requires separate control.
\end{remark}

For $B\in\{\mathrm{KL},\mathrm{HD},\mathrm{KH}\}$, write
\[
    w_{\bm\theta}^B(x)
    =g_{\bm\theta}^B\{\LR(x)\},
    \qquad
    \widehat w_{\bm\theta}^B(x)
    =g_{\bm\theta}^B\{\widehat\LR(x)\}.
\]
The parameter sets are
\[
    \Theta_{\mathrm{KL}}=\Theta_{\mathrm{HD}}=[0,1],
    \qquad
    \Theta_{\mathrm{KH}}
    =[0,\alpha_{\mathrm{KH},\max}],
\]
where $0\leq\alpha_{\mathrm{KH},\max}<1$. Thus
$\bm\theta=(\alpha)$ is scalar for each of
the three structural families in this appendix.

\begin{lemma}[Uniform regularity of the bridge maps]
\label{lem:bridge-maps-lipschitz}
For the three structural bridge geometries,
\begin{align*}
    g_\alpha^{\mathrm{KL}}(u)
    &=u^\alpha,
    &&\alpha\in[0,1],
    \\
    g_\alpha^{\mathrm{HD}}(u)
    &=\{(1-\alpha)+\alpha\sqrt u\}^{2},
    &&\alpha\in[0,1],
    \\
    g_{\alpha}^{\mathrm{KH}}(u)
    &=
    \left[
      \frac{\beta_\alpha(u)}
           {W_0\{\beta_\alpha(u)\}}
    \right]^2,
    &&\alpha\in\Theta_{\mathrm{KH}},
\end{align*}
where
\[
    \beta_\alpha(u)
    =\frac{\alpha}{1-\alpha}\sqrt u,
\]
and the KL/HD map is extended continuously at $\alpha=0$ by
\[
    g_{0}^{\mathrm{KH}}(u)=1.
\]
Let $I=[r_-,r_+]\subset(0,\infty)$ be compact. For every bridge family $B$,
there are finite positive constants
$c_{B,I},C_{B,I},L_{B,u}$, and $L_{B,\theta}$ such that
\begin{equation}\label{eqn:appendix-bridge-map-bounds}
    0<c_{B,I}
    \leq g_{\bm\theta}^B(u)
    \leq C_{B,I}<\infty
\end{equation}
for all $u\in I$ and $\bm\theta\in\Theta_B$, and
\begin{equation}\label{eqn:appendix-joint-bridge-lipschitz}
\left|
  g_{\bm\theta}^B(u)-g_{\bm\vartheta}^B(v)
\right|
\leq
L_{B,u}|u-v|
+L_{B,\theta}\|\bm\theta-\bm\vartheta\|
\end{equation}
for all $u,v\in I$ and
$\bm\theta,\bm\vartheta\in\Theta_B$.
\end{lemma}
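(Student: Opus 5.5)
The plan is to show that each map $(\alpha,u)\mapsto g_\alpha^B(u)$ is continuous and strictly positive, and continuously differentiable with bounded partial derivatives, on the compact set $\Theta_B\times I$. Both displays then follow from compactness. The bounds in \eqref{eqn:appendix-bridge-map-bounds} are the minimum and maximum of a positive continuous function on a compact set. The joint Lipschitz bound \eqref{eqn:appendix-joint-bridge-lipschitz} comes from the triangle inequality
\[
|g_\theta(u)-g_\vartheta(v)|\le|g_\theta(u)-g_\theta(v)|+|g_\theta(v)-g_\vartheta(v)|,
\]
applying the mean-value theorem in each coordinate separately. This is legitimate because $I$ and $\Theta_B$ are intervals, hence convex. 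We then take $L_{B,u}=\sup|\partial_u g|$ and $L_{B,\theta}=\sup|\partial_\alpha g|$ over $\Theta_B\times I$.

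\textbf{Symmetric families.} For KL/KL, $g=u^\alpha=e^{\alpha\log u}$ is smooth on $[0,1]\times I$ because $\log u$ is bounded on $I\subset(0,\infty)$. Its partial derivatives are $\partial_u g=\alpha u^{\alpha-1}$ and $\partial_\alpha g=u^\alpha\log u$, both bounded there, and $g\ge\min(1,r_-)>0$. For HD/HD, $g$ is a polynomial in $(\alpha,\sqrt u)$, and $\sqrt u$ is smooth on $I$. Positivity follows from $(1-\alpha)+\alpha\sqrt u\ge\min(1,\sqrt{r_-})>0$.

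\textbf{KL/HD.} Here I first establish the representation $g_\alpha^{\mathrm{KH}}(u)=\exp[2W_0\{\beta_\alpha(u)\}]$. It follows from $W_0(x)e^{W_0(x)}=x$, which gives $x/W_0(x)=e^{W_0(x)}$ for $x>0$, and it matches the continuous extension $g=1$ at $\beta=0$. On $\Theta_{\mathrm{KH}}=[0,\alpha_{\mathrm{KH},\max}]$, the map $\alpha\mapsto a_\alpha=\alpha/(1-\alpha)$ is smooth and bounded by $a_{\max}<\infty$, so $\beta_\alpha(u)\in[0,a_{\max}\sqrt{r_+}]$, a compact subset of $[0,\infty)$. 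The function $W_0$ is real-analytic on $(-1/e,\infty)\supset[0,\infty)$, with derivative $W_0'(t)=W_0(t)/[t\{1+W_0(t)\}]=1/\{e^{W_0(t)}(1+W_0(t))\}$, which is bounded by $1$ for $t\ge0$. The chain rule then gives
\[
\partial_u g=g\,\frac{W}{u(1+W)},\qquad \partial_\alpha g=2g\,W_0'(\beta)\,\frac{\sqrt u}{(1-\alpha)^2},
\]
both bounded on the compact set, consistent with \eqref{eqn:appendix-kh-derivative-u}. Positivity follows from $g\ge1$, since $W_0\ge0$ on $[0,\infty)$, and $g\le e^{2W_0(a_{\max}\sqrt{r_+})}$.

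\textbf{Main obstacle.} The one delicate point is the behaviour at $\alpha=0$, where $\beta=0$ and the original quotient form $\beta/W_0(\beta)$ is $0/0$. In the quotient form, differentiability at $\alpha=0$ is not transparent. Passing to the exponential representation resolves this, because $W_0$ is smooth at $0$ with $W_0'(0)=1$. The only other requirement is to keep $\alpha$ bounded away from $1$, and the restriction $\alpha_{\mathrm{KH},\max}<1$ ensures this. Everything else is routine compactness.
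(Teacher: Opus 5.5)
Your proposal is correct and follows essentially the same route as the paper. Both arguments compute the partial derivatives of the symmetric maps explicitly, pass to the representation $g_\alpha^{\mathrm{KH}}(u)=\exp[2W_0\{\beta_\alpha(u)\}]$ for KL/HD, and then use compactness for the bounds and the mean-value theorem for the joint Lipschitz estimate. The only difference is at $\alpha=0$: you get $C^1$ regularity there directly from the chain rule and smoothness of $W_0$ at the origin, whereas the paper computes $\partial_\alpha g$ for $\alpha>0$ and then notes its continuous extension $2\sqrt u$ at $\alpha=0$, which is exactly the value your formula gives.
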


\begin{proof}
For the KL/KL map,
\[
    \frac{\partial}{\partial u}g_\alpha^{\mathrm{KL}}(u)
    =\alpha u^{\alpha-1},
    \qquad
    \frac{\partial}{\partial\alpha}g_\alpha^{\mathrm{KL}}(u)
    =u^\alpha\log u.
\]
Both derivatives are uniformly bounded on $I\times[0,1]$ because $I$ is
bounded away from zero. For the HD/HD map,
\begin{align*}
    \frac{\partial}{\partial u}g_\alpha^{\mathrm{HD}}(u)
    &=
    \frac{\alpha\{(1-\alpha)+\alpha\sqrt u\}}{\sqrt u},
    \\
    \frac{\partial}{\partial\alpha}g_\alpha^{\mathrm{HD}}(u)
    &=
    2\{(1-\alpha)+\alpha\sqrt u\}(\sqrt u-1).
\end{align*}
These derivatives are likewise uniformly bounded on the corresponding
compact set.

For the KL/HD map, put
\[
    a_\alpha=\frac{\alpha}{1-\alpha},
    \qquad
    x_{\alpha}(u)
    =a_\alpha\sqrt u.
\]
The identity $x/W_0(x)=e^{W_0(x)}$ gives the equivalent representation
\begin{equation*}
    g_{\alpha}^{\mathrm{KH}}(u)
    =
    \exp\left[
      2W_0\{x_{\alpha}(u)\}
    \right].
\end{equation*}
This formula is well defined at $\alpha=0$ because
$x_{0}(u)=0$ and $W_0(0)=0$, and it gives
$g_{0}^{\mathrm{KH}}(u)=1$.
For $\alpha>0$, writing
$W=W_0\{x_{\alpha}(u)\}$ and using
$W_0'(x)=W_0(x)/[x\{1+W_0(x)\}]$, we obtain
\begin{align}
    \frac{\partial}{\partial u}
    g_{\alpha}^{\mathrm{KH}}(u)
    &=
    \frac{g_{\alpha}^{\mathrm{KH}}(u)}{u}
    \frac{W}{1+W},
    \label{eqn:appendix-kh-derivative-u}
    \\
    \frac{\partial}{\partial\alpha}
    g_{\alpha}^{\mathrm{KH}}(u)
    &=
    \frac{2g_{\alpha}^{\mathrm{KH}}(u)W}
         {(1+W)\alpha(1-\alpha)}.
    \label{eqn:appendix-kh-derivative-alpha}
\end{align}
The derivative with respect to $u$ extends continuously to zero at
$\alpha=0$. Since
$W_0(x)\sim x$ as $x\downarrow0$, the derivative in
\eqref{eqn:appendix-kh-derivative-alpha} has the continuous extension
\[
    \left.
    \frac{\partial}{\partial\alpha}
    g_{\alpha}^{\mathrm{KH}}(u)
    \right|_{\alpha=0}
    =2\sqrt u.
\]
Consequently, the extended KL/HD map is continuously differentiable on the
compact set $I\times\Theta_{\mathrm{KH}}$. Its first derivatives are
uniformly bounded there.

All three maps are continuous and strictly positive on their compact domains,
which proves~\eqref{eqn:appendix-bridge-map-bounds}. The joint Lipschitz bound
\eqref{eqn:appendix-joint-bridge-lipschitz} follows from the mean-value
theorem and the uniform derivative bounds.
\end{proof}

The next lemma records the stochastic-order consequence of the
likelihood-ratio bound in
Proposition~\ref{prop:bridge-weight-error-density-estimation}.

\begin{lemma}[Uniform likelihood-ratio consistency]
\label{lem:lr-uniform}
Under Assumption~\ref{ass:kde-input},
\[
    \|\widehat\LR-\LR\|_{\infty,\cX_0}
    =\Op(a_{S,n}+a_{T,m}).
\]
The corresponding deterministic and high-probability bounds are
\eqref{eqn:lr-high-probability-bound} in
Proposition~\ref{prop:bridge-weight-error-density-estimation}.
\end{lemma}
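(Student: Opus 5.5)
The plan is to reduce the stochastic statement to the deterministic inequality \eqref{eqn:lr-high-probability-bound} of Proposition~\ref{prop:bridge-weight-error-density-estimation}, applied on an event whose probability can be made arbitrarily close to one.

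Fix $\epsilon>0$. By Assumption~\ref{ass:kde-input}, there are finite constants $K_S,K_T$ and an index threshold beyond which
\[
\Pr\{\|\widehat p_S-p_S\|_{\infty,\cX_0}\le K_Sa_{S,n}\}\ge1-\epsilon/2,
\qquad
\Pr\{\|\widehat p_T-p_T\|_{\infty,\cX_0}\le K_Ta_{T,m}\}\ge1-\epsilon/2.
\]
Let $\mathcal E$ denote the intersection of these two events. A union bound gives $\Pr(\mathcal E)\ge1-\epsilon$.

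Since $a_{S,n}\to0$, for all sufficiently large $n$ we also have $K_Sa_{S,n}\le c_S/2$. On $\mathcal E$ this yields $\widehat p_S\ge c_S/2>0$ uniformly on $\cX_0$. Consequently the fallback case in the definition of $\widehat\LR$ never occurs, and $\widehat\LR=\widehat p_T/\widehat p_S$ throughout $\cX_0$.

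The next step repeats the algebra of Proposition~\ref{prop:bridge-weight-error-density-estimation}, with $a_{S,n}(\delta_0)$ and $a_{T,m}(\delta_0)$ replaced by $K_Sa_{S,n}$ and $K_Ta_{T,m}$. It uses the identity
\[
\widehat\LR-\LR=\frac{\widehat p_T-p_T}{\widehat p_S}+\frac{p_T(p_S-\widehat p_S)}{\widehat p_S\,p_S},
\]
together with $p_S\ge c_S$, $\widehat p_S\ge c_S/2$, and $p_T\le c_T$. On $\mathcal E$, and for all large $n,m$, this gives
\[
\|\widehat\LR-\LR\|_{\infty,\cX_0}\le \frac{2K_T}{c_S}a_{T,m}+\frac{2c_TK_S}{c_S^2}a_{S,n}\le K(a_{S,n}+a_{T,m}),
\qquad
K=\max\Bigl\{\frac{2K_T}{c_S},\frac{2c_TK_S}{c_S^2}\Bigr\}.
\]
Because $\epsilon$ was arbitrary, this is exactly the definition of $\Op(a_{S,n}+a_{T,m})$.

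The only delicate point is that the denominator-stability condition $a_{S,n}\le c_S/2$ is now asymptotic rather than imposed. I handle it through the eventual-threshold argument above, which is valid along any sequence with $n,m\to\infty$. Measurability of the supremum is inherited from the measurability assumptions on the estimators. Alternatively, one can work with an outer probability, which does not affect the $\Op$ statement.
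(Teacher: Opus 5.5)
Your proposal is correct and takes essentially the paper's approach: the paper's proof simply cites Assumption~\ref{ass:kde-input} together with the deterministic bound \eqref{eqn:lr-high-probability-bound} of Proposition~\ref{prop:bridge-weight-error-density-estimation}. You supply the details that citation leaves implicit, namely converting the $\Op$ statements into an event of probability at least $1-\epsilon$ and checking that the denominator condition $K_Sa_{S,n}\le c_S/2$ holds for all sufficiently large $n$.
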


\begin{proof}
Assumption~\ref{ass:kde-input} and
Proposition~\ref{prop:bridge-weight-error-density-estimation} give the result.
\end{proof}

\begin{theorem}[Uniform consistency of bridge weights]
\label{thm:unif-consistency-bridge-weights}
Suppose Assumptions~\ref{ass:kde-input} and~\ref{ass:strong-overlap} hold. For
every bridge family $B$,
\begin{equation}\label{eqn:appendix-uniform-weight-op}
    \sup_{\bm\theta\in\Theta_B}
    \sup_{x\in\cX_0}
    \left|
      \widehat w_{\bm\theta}^B(x)-w_{\bm\theta}^B(x)
    \right|
    =\Op(a_{S,n}+a_{T,m}).
\end{equation}
If the high-probability input
\eqref{eqn:density-high-probability-event} holds at level $\delta_0$,
$a_{S,n}(\delta_0)\leq c_S/2$, and
\[
    e_{n,m}(\delta_0)\leq r_-/2,
\]
then, with probability at least $1-\delta_0$,
\begin{equation}\label{eqn:appendix-uniform-weight-high-probability}
\sup_{\bm\theta\in\Theta_B}
\sup_{x\in\cX_0}
\left|
  \widehat w_{\bm\theta}^B(x)-w_{\bm\theta}^B(x)
\right|
\leq
L_{B,u}^\star e_{n,m}(\delta_0),
\end{equation}
where $L_{B,u}^\star$ is a uniform Lipschitz constant for the bridge map on
\[
    I_\star
    =\left[\frac{r_-}{2},r_++\frac{r_-}{2}\right].
\]
In particular, for every fixed $\bm\theta\in\Theta_B$ and
$x\in\cX_0$,
\[
    \widehat w_{\bm\theta}^B(x)
    \longrightarrow
    w_{\bm\theta}^B(x)
    \qquad\text{in probability.}
\]
\end{theorem}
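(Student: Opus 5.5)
The plan is to reduce everything to the two deterministic ingredients already available: the likelihood-ratio bound of Proposition~\ref{prop:bridge-weight-error-density-estimation} (equivalently Lemma~\ref{lem:lr-uniform}) and the joint Lipschitz regularity of the bridge maps in Lemma~\ref{lem:bridge-maps-lipschitz}, applied on the buffered interval $I_\star=[r_-/2,r_++r_-/2]$ rather than on $[r_-,r_+]$. The point of $I_\star$ is that $\widehat\LR$ may stray slightly outside the population range, so the Lipschitz constant must be taken on an interval that contains both ratios with high probability.

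First, the high-probability statement. Work on the event $\mathcal E_{\delta_0}$ of~\eqref{eqn:density-high-probability-event}. Under $a_{S,n}(\delta_0)\leq c_S/2$, the proof of Proposition~\ref{prop:bridge-weight-error-density-estimation} gives $\widehat p_S\geq c_S/2>0$ on $\cX_0$, so the first branch of $\widehat\LR$ applies everywhere. It also gives $\|\widehat\LR-\LR\|_{\infty,\cX_0}\leq e_{n,m}(\delta_0)\leq r_-/2$. Combined with Assumption~\ref{ass:strong-overlap}, this places both $\LR(x)$ and $\widehat\LR(x)$ in $I_\star$ for every $x\in\cX_0$. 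Lemma~\ref{lem:bridge-maps-lipschitz} applied with $I=I_\star$, which is a compact subset of $(0,\infty)$, supplies a constant $L_{B,u}^\star$ that is uniform over $\bm\theta\in\Theta_B$; for KL/HD this uses $\Theta_{\mathrm{KH}}=[0,\alpha_{\mathrm{KH},\max}]$. With $\bm\theta=\bm\vartheta$ in~\eqref{eqn:appendix-joint-bridge-lipschitz}, we get $|\widehat w_{\bm\theta}^B(x)-w_{\bm\theta}^B(x)|\leq L_{B,u}^\star|\widehat\LR(x)-\LR(x)|$. Taking suprema over $x$ and $\bm\theta$ on the single event yields~\eqref{eqn:appendix-uniform-weight-high-probability}, with no union bound needed.

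Second, the $\Op$ statement, which I expect to be the only mildly delicate step because Assumption~\ref{ass:kde-input} gives only stochastic orders, not explicit high-probability rates. Fix $\varepsilon>0$. Choose $K$ so that, for all large $n,m$, both $\|\widehat p_S-p_S\|_{\infty,\cX_0}\leq Ka_{S,n}$ and $\|\widehat p_T-p_T\|_{\infty,\cX_0}\leq Ka_{T,m}$ hold with probability at least $1-\varepsilon$. Because $a_{S,n},a_{T,m}\to0$, eventually $Ka_{S,n}\leq c_S/2$. On this event the same algebra as above gives
\[
\|\widehat\LR-\LR\|_{\infty,\cX_0}\leq \frac{2K}{c_S}a_{T,m}+\frac{2c_TK}{c_S^2}a_{S,n},
\]
and this quantity is eventually at most $r_-/2$. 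The Lipschitz step then bounds the double supremum by $L_{B,u}^\star$ times the displayed quantity, which is $O(a_{S,n}+a_{T,m})$. This proves~\eqref{eqn:appendix-uniform-weight-op}.

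The pointwise convergence in probability follows immediately, since the uniform bound dominates each fixed $(\bm\theta,x)$ and $a_{S,n}+a_{T,m}\to0$. The measurability of the suprema is inherited from the joint continuity of the maps in $(\bm\theta,u)$, so they may be computed over a countable dense subset of $\Theta_B$.
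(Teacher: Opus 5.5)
Your proposal is correct and follows essentially the same route as the paper. Both arguments obtain a uniform likelihood-ratio bound, place both $\LR$ and $\widehat\LR$ in the buffered interval $I_\star$, and apply the joint Lipschitz bound of Lemma~\ref{lem:bridge-maps-lipschitz} on $I_\star\times\Theta_B$ on a single event. The only difference is that you re-derive the $\Op$ likelihood-ratio bound inline, by choosing $K$ and using $Ka_{S,n}\leq c_S/2$ eventually, instead of citing Lemma~\ref{lem:lr-uniform}; this spells out the passage from stochastic orders to the denominator-stability event, which the paper's one-line proof of that lemma leaves implicit.
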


\begin{proof}
By Lemma~\ref{lem:lr-uniform},
$\|\widehat\LR-\LR\|_{\infty,\cX_0}
=\Op(a_{S,n}+a_{T,m})$. Since the latter rate tends to zero, the event
\[
    \left\{
      \|\widehat\LR-\LR\|_{\infty,\cX_0}\leq r_-/2
    \right\}
\]
has probability tending to one. On this event, both $\LR(x)$ and
$\widehat\LR(x)$ belong to $I_\star$ uniformly over $x\in\cX_0$.
Lemma~\ref{lem:bridge-maps-lipschitz} therefore yields
\[
\sup_{\bm\theta\in\Theta_B}
\sup_{x\in\cX_0}
\left|
  g_{\bm\theta}^B\{\widehat\LR(x)\}
  -g_{\bm\theta}^B\{\LR(x)\}
\right|
\leq
L_{B,u}^\star
\|\widehat\LR-\LR\|_{\infty,\cX_0}.
\]
This proves~\eqref{eqn:appendix-uniform-weight-op}. On the event in
\eqref{eqn:density-high-probability-event},
Proposition~\ref{prop:bridge-weight-error-density-estimation} gives
\eqref{eqn:lr-high-probability-bound}; the same Lipschitz inequality then
proves~\eqref{eqn:appendix-uniform-weight-high-probability}.
\end{proof}

\begin{remark}[Uniform weight bounds]
Under Assumption~\ref{ass:strong-overlap},
Lemma~\ref{lem:bridge-maps-lipschitz} gives constants
$0<c_B\leq C_B<\infty$ such that
\[
    c_B\leq w_{\bm\theta}^B(x)\leq C_B
\]
uniformly over $x\in\cX_0$ and $\bm\theta\in\Theta_B$. On the event
$\|\widehat\LR-\LR\|_{\infty,\cX_0}\leq r_-/2$, the same conclusion holds
for the estimated weights with constants obtained from the enlarged interval
$I_\star$. If $P_S^X(\mathcal X_0)=1$, these bounds verify the bridge-weight
boundedness used in the self-normalized empirical-process analysis of
Section~\ref{sec:bridge-risk-bounds}. Without this full-mass condition,
they establish only the corresponding local bridge-weight regularity on
$\mathcal X_0$.
\end{remark}

The joint parameter regularity in Lemma~\ref{lem:bridge-maps-lipschitz} also
controls bridge weights evaluated at estimated parameters.

\begin{corollary}[Estimated bridge-weight control]
\label{cor:estimated-parameter-bridge-weight-rate}
Suppose the assumptions of
Theorem~\ref{thm:unif-consistency-bridge-weights} hold. Let
$\bm\theta_0\in\Theta_B$ and let
$\widehat{\bm\theta}\in\Theta_B$ satisfy
\[
    \|\widehat{\bm\theta}-\bm\theta_0\|
    =\Op(r_{\bm\theta,n,m})
\]
for a deterministic sequence $r_{\bm\theta,n,m}\to0$. Then
\begin{equation*}
    \sup_{x\in\cX_0}
    \left|
      \widehat w_{\widehat{\bm\theta}}^B(x)
      -w_{\bm\theta_0}^B(x)
    \right|
    =
    \Op\left(
      a_{S,n}+a_{T,m}+r_{\bm\theta,n,m}
    \right).
\end{equation*}
\end{corollary}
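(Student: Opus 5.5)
The plan is a triangle inequality that separates likelihood-ratio estimation from parameter estimation:
\[
    \left|\widehat w_{\widehat{\bm\theta}}^B(x)-w_{\bm\theta_0}^B(x)\right|
    \leq
    \left|g_{\widehat{\bm\theta}}^B\{\widehat\LR(x)\}-g_{\widehat{\bm\theta}}^B\{\LR(x)\}\right|
    +
    \left|g_{\widehat{\bm\theta}}^B\{\LR(x)\}-g_{\bm\theta_0}^B\{\LR(x)\}\right|.
\]
The first term is bounded by
\[
    \sup_{\bm\theta\in\Theta_B}\sup_{x\in\cX_0}
    \left|\widehat w_{\bm\theta}^B(x)-w_{\bm\theta}^B(x)\right|.
\]
Because $\widehat{\bm\theta}\in\Theta_B$ and the supremum in \eqref{eqn:appendix-uniform-weight-op} is taken over all of $\Theta_B$, this bound holds however $\widehat{\bm\theta}$ depends on the data. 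Theorem~\ref{thm:unif-consistency-bridge-weights} then makes the first term $\Op(a_{S,n}+a_{T,m})$, uniformly in $x\in\cX_0$.

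For the second term, Assumption~\ref{ass:strong-overlap} places $\LR(x)$ in $I=[r_-,r_+]$ for every $x\in\cX_0$. Applying the joint Lipschitz bound \eqref{eqn:appendix-joint-bridge-lipschitz} of Lemma~\ref{lem:bridge-maps-lipschitz} with $u=v=\LR(x)$ gives
\[
    \left|g_{\widehat{\bm\theta}}^B\{\LR(x)\}-g_{\bm\theta_0}^B\{\LR(x)\}\right|
    \leq L_{B,\theta}\|\widehat{\bm\theta}-\bm\theta_0\|.
\]
This bound is deterministic and uniform in $x$, so it is $\Op(r_{\bm\theta,n,m})$ by hypothesis. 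Adding the two stochastic orders yields the claimed rate $\Op(a_{S,n}+a_{T,m}+r_{\bm\theta,n,m})$.

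The step needing the most care is the parameter-Lipschitz constant for the KL/HD family. It relies on $\Theta_{\mathrm{KH}}=[0,\alpha_{\mathrm{KH},\max}]$ with $\alpha_{\mathrm{KH},\max}<1$, which keeps the $\alpha$-derivative \eqref{eqn:appendix-kh-derivative-alpha} bounded; the continuous extension of that derivative at $\alpha=0$ is already handled in Lemma~\ref{lem:bridge-maps-lipschitz}. Measurability of the supremum at a random parameter is not an issue, since the argument only bounds it by the measurable uniform supremum together with a deterministic multiple of $\|\widehat{\bm\theta}-\bm\theta_0\|$.
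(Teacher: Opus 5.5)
Your proof is correct and follows essentially the paper's argument: you add and subtract $g_{\widehat{\bm\theta}}^B\{\LR(x)\}$, control the likelihood-ratio part uniformly over $\Theta_B$, and control the parameter part with the joint Lipschitz bound of Lemma~\ref{lem:bridge-maps-lipschitz}. The only difference is cosmetic: the paper applies the $u$-Lipschitz constant on $I_\star$ directly on the event $\|\widehat\LR-\LR\|_{\infty,\cX_0}\leq r_-/2$, whereas you cite the uniform-in-$\bm\theta$ conclusion of Theorem~\ref{thm:unif-consistency-bridge-weights}, which is proved by exactly that argument.
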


\begin{proof}
On the event
\[
    \|\widehat\LR-\LR\|_{\infty,\cX_0}\leq r_-/2,
\]
whose probability tends to one, both $\LR$ and $\widehat\LR$ take values in
$I_\star$. Let $L_{B,u}^\star$ and $L_{B,\theta}^\star$ be joint Lipschitz
constants from Lemma~\ref{lem:bridge-maps-lipschitz} on
$I_\star\times\Theta_B$. On this event, adding and subtracting
$g_{\widehat{\bm\theta}}^B\{\LR(x)\}$ and applying
\eqref{eqn:appendix-joint-bridge-lipschitz} gives
\[
\begin{aligned}
&\sup_{x\in\cX_0}
\left|
  g_{\widehat{\bm\theta}}^B\{\widehat\LR(x)\}
  -g_{\bm\theta_0}^B\{\LR(x)\}
\right|
\\
&\qquad\leq
L_{B,u}^\star
\|\widehat\LR-\LR\|_{\infty,\cX_0}
+
L_{B,\theta}^\star
\|\widehat{\bm\theta}-\bm\theta_0\|.
\end{aligned}
\]
The conclusion follows from Lemma~\ref{lem:lr-uniform}.
\end{proof}

We next give a concrete support-adapted density estimator that supplies the
uniform input above when the compact working region carries full mass.  An
ordinary convolution KDE is not used for this purpose: if a density is
positive at the boundary of a compact support, its uncorrected KDE generally
has a nonvanishing boundary bias in the supremum norm.

The construction uses boundary scaling spaces to remove this boundary bias.
Bernstein's inequality for the empirical projection coefficients, together
with the bounded local overlap of the scaling functions, gives the stochastic
term. Taking the positive part and renormalizing preserves the same uniform
rate.

Fix $\beta>0$.

\begin{assumption}[Boundary-corrected projection system]
\label{ass:boundary-projection-system}
After a fixed affine change of coordinates,
$\cX_0=[0,1]^d$. For $j\geq j_0$, let $V_j^{\mathrm{bc}}$ be the tensor-product
boundary scaling space generated by compactly supported
Cohen--Daubechies--Vial scaling functions whose regularity and approximation
order exceed $\beta$, and let
\[
    K_j^{\mathrm{bc}}(x,y)
    =
    \sum_{k\in\mathcal I_j}
       \varphi_{j,k}^{\mathrm{bc}}(x)
       \varphi_{j,k}^{\mathrm{bc}}(y)
\]
be its orthogonal projection kernel. The spaces reproduce constants, so
$1\in V_j^{\mathrm{bc}}$.
\end{assumption}

For $D\in\{S,T\}$, put $N_S=n$, $N_T=m$, let $h_D=2^{-j_D}$, and define the
raw projection estimator
\begin{equation}\label{eqn:raw-boundary-projection-density}
    \widetilde p_D(x)
    =
    \frac1{N_D}\sum_{i=1}^{N_D}
       K_{j_D}^{\mathrm{bc}}(x,X_i^D).
\end{equation}
It integrates to one but need not be nonnegative. Define the density estimator
used in the likelihood ratio by
\begin{equation}\label{eqn:positive-normalized-boundary-projection-density}
    \widehat p_D(x)
    =
    \frac{\{\widetilde p_D(x)\}_+}
         {\int_{\cX_0}\{\widetilde p_D(u)\}_+\d u}.
\end{equation}
The denominator is at least one and hence is always positive.

Use the boundary-projection rate function $\omega_{N,h}(\delta_0)$ defined
in Section~\ref{sec:nonparametric-density-estimation}.

\begin{corollary}[Boundary-projection rates]
\label{cor:unif-consistency-bridge-weights-boundary-projection}
\mbox{}\par\noindent
Assume the boundary projection system above, and suppose both covariate laws
are supported on $\cX_0$. Assume the two densities belong to
bounded intrinsic $\beta$-H\"older--Zygmund balls on $\cX_0$ and satisfy
\[
    p_S\geq c_S>0,
    \qquad
    p_T\leq c_T<\infty.
\]
Suppose also that Assumption~\ref{ass:strong-overlap} holds. Then there are finite
constants $A_S,A_T$, independent of $n,m,j_S,j_T$, and $\delta_0$, such that the
estimators in~\eqref{eqn:positive-normalized-boundary-projection-density}
satisfy~\eqref{eqn:density-high-probability-event} with
\begin{equation}\label{eqn:appendix-boundary-projection-high-probability-rate}
\begin{split}
    a_{S,n}(\delta_0)
    &=A_S\{h_S^\beta+\omega_{n,h_S}(\delta_0)\},\\
    a_{T,m}(\delta_0)
    &=A_T\{h_T^\beta+\omega_{m,h_T}(\delta_0)\}.
\end{split}
\end{equation}

If $h_S,h_T\to0$ and
\[
    \frac{nh_S^d}{\log(1/h_S)}\to\infty,
    \qquad
    \frac{mh_T^d}{\log(1/h_T)}\to\infty,
\]
then Assumption~\ref{ass:kde-input} holds with
\begin{align*}
    a_{S,n}
    &=h_S^\beta+\sqrt{\frac{\log(1/h_S)}{nh_S^d}},\\
    a_{T,m}
    &=h_T^\beta+\sqrt{\frac{\log(1/h_T)}{mh_T^d}}.
\end{align*}
Consequently, for every bridge family $B$,
\begin{align}
&\sup_{\bm\theta\in\Theta_B}
 \sup_{x\in\cX_0}
 \left|
   \widehat w_{\bm\theta}^B(x)-w_{\bm\theta}^B(x)
 \right|
\notag\\
&\quad=
\Op\left(
  h_S^\beta+h_T^\beta
  +\sqrt{\frac{\log(1/h_S)}{nh_S^d}}
  +\sqrt{\frac{\log(1/h_T)}{mh_T^d}}
\right).
\label{eqn:appendix-explicit-boundary-projection-op-rate}
\end{align}
If, at a fixed confidence level,
$a_{S,n}(\delta_0)\leq c_S/2$ and $e_{n,m}(\delta_0)\leq r_-/2$, substitution of
\eqref{eqn:appendix-boundary-projection-high-probability-rate} into
\eqref{eqn:appendix-uniform-weight-high-probability} gives the corresponding
simultaneous high-probability bridge-weight bound.
\end{corollary}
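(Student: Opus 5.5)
The argument splits into three parts. First comes a high-probability uniform bound for the raw projection estimator. Next we check that the positive-part renormalization does not change the rate. Finally the density-level bounds are fed into the bridge-weight results already established: Theorem~\ref{thm:unif-consistency-bridge-weights} and Proposition~\ref{prop:bridge-weight-error-density-estimation}.

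\emph{Step 1: raw estimator.} For $D\in\{S,T\}$, write
\[
    \widetilde p_D-p_D=(\E\widetilde p_D-p_D)+(\widetilde p_D-\E\widetilde p_D).
\]
Because $K_j^{\mathrm{bc}}$ is an orthogonal projection, $\E\widetilde p_D=K_{j_D}^{\mathrm{bc}}p_D$ is the projection of $p_D$ onto $V_{j_D}^{\mathrm{bc}}$. The Cohen--Daubechies--Vial approximation order exceeds $\beta$, so the standard Jackson-type estimate for intrinsic H\"older--Zygmund balls gives
\[
    \|K_{j}^{\mathrm{bc}}p_D-p_D\|_{\infty,\cX_0}\lesssim 2^{-j\beta}=h_D^\beta,
\]
uniformly over the ball and with no boundary bias; this is the reason for using boundary-adapted scaling spaces. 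For the stochastic term, write
\[
    \widetilde p_D-\E\widetilde p_D=\sum_k(\hat c_k-c_k)\varphi_{j,k}^{\mathrm{bc}}.
\]
Two facts control it. First, at each $x$ only a bounded number of the $\varphi_{j,k}^{\mathrm{bc}}$ are nonzero, and each has sup norm at most $\lesssim2^{jd/2}$. Second, the index set satisfies $|\mathcal I_j|\asymp 2^{jd}$. Bernstein's inequality applies to each coefficient deviation: the summands $\varphi_{j,k}^{\mathrm{bc}}(X)$ are bounded by $\lesssim2^{jd/2}$ and have variance $\lesssim\|p_D\|_\infty$. A union bound over the $2^{jd}$ indices then gives, with probability at least $1-\delta_0/4$,
\[
    \max_k|\hat c_k-c_k|\lesssim\sqrt{\frac{\log(2^{jd}/\delta_0)}{N}}+\frac{2^{jd/2}\log(2^{jd}/\delta_0)}{N}.
\]
Multiplying by the local-overlap factor $2^{jd/2}$ yields $\omega_{N,h}(\delta_0)$, since $\log(2^{jd})\asymp\log(1/h)$. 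Finally, a union bound over $D=S,T$ combines the two samples; its constants are absorbed into $A_S,A_T$.

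\emph{Step 2: positive part and renormalization.} Since $p_D\geq0$, we have $|\{\widetilde p_D\}_+-p_D|\leq|\widetilde p_D-p_D|$ pointwise. Let $Z=\int\{\widetilde p_D\}_+$. Because $\widetilde p_D$ integrates to one (constants lie in $V_j^{\mathrm{bc}}$), $Z\geq1$, and
\[
    Z-1=\int\{\widetilde p_D\}_-\leq\operatorname{vol}(\cX_0)\,\|\widetilde p_D-p_D\|_\infty .
\]
Consequently
\[
    \|\widehat p_D-p_D\|_\infty\leq\|\{\widetilde p_D\}_+-p_D\|_\infty+(Z-1)\|p_D\|_\infty,
\]
which is at most a constant multiple of the raw error. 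This establishes \eqref{eqn:appendix-boundary-projection-high-probability-rate}.

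\emph{Step 3: consequences.} Under the stated bandwidth conditions, $\omega_{N,h}(\delta_0)$ is dominated by its square-root term and tends to zero. For each fixed $\delta_0$, the high-probability bound therefore becomes an $\Op$ statement, which verifies Assumption~\ref{ass:kde-input} with the displayed $a_{S,n}$ and $a_{T,m}$. Theorem~\ref{thm:unif-consistency-bridge-weights}, which is where Assumption~\ref{ass:strong-overlap} enters, then gives \eqref{eqn:appendix-explicit-boundary-projection-op-rate}. The final high-probability claim follows by direct substitution into \eqref{eqn:appendix-uniform-weight-high-probability}.

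The main obstacle is Step 1, specifically the uniform bias bound up to the boundary. It rests on the known approximation and stability properties of CDV boundary scaling spaces, and I would cite \citet{Cohen1993} and \citet[Sections~4.3.5--4.3.6, 5.1]{Gine2015} for it rather than reprove it.
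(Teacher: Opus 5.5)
Your proposal is correct and follows essentially the same route as the paper: a Jackson-type bound $\|\Pi_{j_D}^{\mathrm{bc}}p_D-p_D\|_{\infty,\cX_0}\lesssim h_D^\beta$ for the bias, coefficientwise Bernstein with a union bound over $\mathcal I_{j_D}$ and bounded local overlap for the stochastic term, the same positive-part/renormalization estimate via $b_D-1=\int(\widetilde p_D)_-\leq\|\widetilde p_D-p_D\|_{\infty,\cX_0}$, and then Theorem~\ref{thm:unif-consistency-bridge-weights} for the bridge weights. The differences are cosmetic and absorbed into $A_S,A_T$: you split the confidence level as $\delta_0/4$ per sample rather than $\delta_0/2$, and you spell out explicitly how the fixed-$\delta_0$ bound yields the $\Op$ rate.
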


\begin{proof}
Let $\Pi_j^{\mathrm{bc}}$ denote the orthogonal projector with kernel
$K_j^{\mathrm{bc}}$. Since the projection space reproduces constants,
\[
    \int_{\cX_0}K_j^{\mathrm{bc}}(x,y)\d x
    =(\Pi_j^{\mathrm{bc}}1)(y)=1,
\]
and hence $\int\widetilde p_D=1$. The boundary-wavelet approximation property
gives
\[
    \|\Pi_{j_D}^{\mathrm{bc}}p_D-p_D\|_{\infty,\cX_0}
    \leq C_Dh_D^\beta.
\]
Compact support implies that only a fixed number of tensor-product scaling
functions are nonzero at any $x$, while
$\|\varphi_{j,k}^{\mathrm{bc}}\|_\infty\lesssim2^{jd/2}$ and
$|\mathcal I_j|\lesssim2^{jd}$. Each coefficient summand
$\varphi_{j,k}^{\mathrm{bc}}(X_i^D)$ has variance bounded by
$\|p_D\|_\infty$, by $L_2$ normalization, and its centered envelope is of
order $2^{jd/2}$. Applying Bernstein's inequality coefficientwise, taking a
union bound over $\mathcal I_{j_D}$, and then using the bounded local overlap
therefore gives, with probability at least $1-\delta_0/2$,
\[
\|\widetilde p_D-\E\widetilde p_D\|_{\infty,\cX_0}
\leq C_D'
\left[
  \sqrt{\frac{\log(1/h_D)+\log(4/\delta_0)}{N_Dh_D^d}}
  +\frac{\log(1/h_D)+\log(4/\delta_0)}{N_Dh_D^d}
\right].
\]
The required boundary-basis and approximation properties are developed in
\citet{Cohen1993} and
\citet[Sections~4.3.5--4.3.6]{Gine2015}; the preceding stochastic argument is
the standard tensor-product extension of the projection-kernel bounds in
\citet[Section~5.1]{Gine2015}. A further union bound makes the density bounds
simultaneous for $D=S,T$ with probability at least $1-\delta_0$.

It remains to verify that the positive-part normalization does not change the
rate. Put
$b_D=\int_{\cX_0}(\widetilde p_D)_+$. Since
$\int\widetilde p_D=1$, we have $b_D\geq1$ and
\[
    0\leq b_D-1
    =\int_{\cX_0}(\widetilde p_D)_-
    \leq
    \|\widetilde p_D-p_D\|_{\infty,\cX_0},
\]
where the final inequality uses that $\cX_0$ has unit volume. Moreover,
$|a_+-b|\leq|a-b|$ for $b\geq0$. Therefore,
\begin{align*}
\|\widehat p_D-p_D\|_{\infty,\cX_0}
&\leq
\|\widetilde p_D-p_D\|_{\infty,\cX_0}
+\|p_D\|_{\infty,\cX_0}(b_D-1)\\
&\leq
\{1+\|p_D\|_{\infty,\cX_0}\}
\|\widetilde p_D-p_D\|_{\infty,\cX_0}.
\end{align*}
This proves~\eqref{eqn:appendix-boundary-projection-high-probability-rate}.
The stochastic-order statements follow under the bandwidth conditions, and
the bridge-weight conclusions follow from
Theorem~\ref{thm:unif-consistency-bridge-weights}.
\end{proof}

The construction is a boundary-corrected analogue of kernel mollification.
It is used only to give one concrete sufficient input for the uniform theory;
other density estimators can be substituted when they satisfy
\eqref{eqn:density-high-probability-event}. A direct likelihood-ratio
estimator may instead be used when it satisfies the corresponding uniform
ratio-error bound.
An ordinary mollifier can be handled in fixed-$p$ $L_p$ loss by isolating a
shrinking boundary layer, but that argument alone does not give the
full-support $L_\infty$ input used above.

For the pointwise limit theory below, we return to ordinary convolution KDEs.
These calculations are made at interior points, away from an uncorrected
support boundary. The density-estimation kernels are distinct from the
regression kernel used in Appendix~\ref{app:asymptotics-nw}.

\begin{assumption}[Kernels for pointwise density estimation]
\label{ass:kde-kernel}
For $D\in\{S,T\}$, the kernel $K_D:\R^d\to\R$ is bounded, measurable, and
integrable, integrates to one, has finite squared integral
\[
    R(K_D)=\int_{\R^d}K_D(u)^2\d u<\infty.
\]
The kernels may be signed, as is possible for higher-order density kernels.
When the $O(h_D^\beta)$ pointwise bias bound below is invoked, $K_D$ is
additionally assumed to be compactly supported and of integer order
$q>\beta$, so that
\[
    \int_{\R^d}u^{\bm k}K_D(u)\d u=0,
    \qquad 1\leq|\bm k|\leq q-1,
\]
for every multi-index $\bm k$ in the stated range.
\end{assumption}

Classical treatments of KDE rates include \citet{Silverman1986} and
\citet{Wand1994}; uniform empirical-process treatments for ordinary KDEs away
from uncorrected support boundaries include \citet{Einmahl2005},
\citet{Gine2002}, \citet{Tsybakov2009}, and \citet{Gine2015}.

\subsection{Asymptotic normality}

We now derive pointwise central limit theorems for the estimated covariate
likelihood ratio and the resulting fixed-parameter bridge weights. Let
$(X_1^S,\ldots,X_n^S)$ be an i.i.d. sample from $P_S^X$, and let
$(X_1^T,\ldots,X_m^T)$ be an independent i.i.d. sample from $P_T^X$. For
possibly different kernels and bandwidths, define
\begin{align*}
    \widehat p_S(x)
    &=
    \frac{1}{nh_S^d}
    \sum_{i=1}^n
    K_S\left(\frac{x-X_i^S}{h_S}\right),
    \\
    \widehat p_T(x)
    &=
    \frac{1}{mh_T^d}
    \sum_{j=1}^m
    K_T\left(\frac{x-X_j^T}{h_T}\right),
\end{align*}
and put
\[
    \overline p_S(x)=\E\{\widehat p_S(x)\},
    \qquad
    \overline p_T(x)=\E\{\widehat p_T(x)\},
    \qquad
    \overline\LR_{n,m}(x)
    =\frac{\overline p_T(x)}{\overline p_S(x)}.
\]
For this pointwise subsection, define
\[
    \widehat\LR(x)
    =
    \begin{cases}
      \widehat p_T(x)/\widehat p_S(x),
        &\widehat p_S(x)>0\ \text{and}\ \widehat p_T(x)>0,\\
      1,&\text{otherwise}.
    \end{cases}
\]
The fallback value is arbitrary. The assumptions below make the exceptional
event have probability tending to zero, while this convention permits signed
higher-order kernels without evaluating a bridge map at a negative argument.

\begin{assumption}[Pointwise KDE CLT input]
\label{ass:pointwise-kde-clt}
For a fixed $x$ in the interior of the common density-estimation region, the
following conditions hold:
\begin{enumerate}
\item $p_S(x)>0$ and $p_T(x)>0$.
\item The expected density estimators satisfy
\[
    \overline p_S(x)\to p_S(x),
    \qquad
    \overline p_T(x)\to p_T(x).
\]
\item The bandwidths satisfy
\[
    h_S\to0,
    \qquad h_T\to0,
    \qquad nh_S^d\to\infty,
    \qquad mh_T^d\to\infty.
\]
\item The standard pointwise KDE central limit theorems hold:
\[
    \sqrt{nh_S^d}
    \{\widehat p_S(x)-\overline p_S(x)\}
    \dto
    N\{0,p_S(x)R(K_S)\},
\]
and
\[
    \sqrt{mh_T^d}
    \{\widehat p_T(x)-\overline p_T(x)\}
    \dto
    N\{0,p_T(x)R(K_T)\}.
\]
\end{enumerate}
\end{assumption}

Assumptions~\ref{ass:kde-kernel} and~\ref{ass:pointwise-kde-clt} are standard
for pointwise KDE asymptotics. They hold, for example, under continuity of the
densities at $x$, compact support of the kernels, the stated kernel conditions,
and divergence of the local sample sizes, with $x$ kept away from an
uncorrected support boundary. The
centered CLTs and the expectation limits give
$\widehat p_D(x)\xrightarrow{P}p_D(x)>0$, so the fallback event in the definition of
$\widehat\LR(x)$ has probability tending to zero. Centering at $p_S(x)$ and
$p_T(x)$ rather than at the expected KDEs requires the usual bias correction
or undersmoothing condition.

Define
\begin{equation*}
\begin{aligned}
    s_{n,m}^2(x)
    ={}&
    R(K_T)
    \frac{p_T(x)}{mh_T^d p_S(x)^2}
    \\
    &+
    R(K_S)
    \frac{p_T(x)^2}{nh_S^d p_S(x)^3}.
\end{aligned}
\end{equation*}
Under Assumption~\ref{ass:pointwise-kde-clt}, $s_{n,m}(x)>0$ for all
sufficiently large $n,m$ and $s_{n,m}(x)\to0$.

At the fixed interior point $x$, the source and target KDE fluctuations are
asymptotically independent. A first-order expansion of the ratio map
$(u,v)\mapsto v/u$ yields the likelihood-ratio variance and bias, while the
quadratic remainder is negligible relative to $s_{n,m}(x)$. A second
delta-method step transfers this limit through the bridge map. When the bridge
parameter is estimated, its first-order contribution enters through the
parameter derivative of that map, so a complete limiting distribution
requires a joint limit for $\widehat\LR(x)$ and
$\widehat{\bm\theta}$.

\begin{lemma}[Likelihood-ratio CLT]
\label{lem:pointwise-lr-clt}
Suppose Assumptions~\ref{ass:kde-kernel} and~\ref{ass:pointwise-kde-clt}
hold. Then
\begin{equation}\label{eqn:appendix-smoothed-lr-clt}
    \frac{
      \widehat\LR(x)-\overline\LR_{n,m}(x)
    }{
      s_{n,m}(x)
    }
    \dto N(0,1).
\end{equation}
Let
\[
    B_S(x)=\overline p_S(x)-p_S(x),
    \qquad
    B_T(x)=\overline p_T(x)-p_T(x),
\]
and define the first-order likelihood-ratio bias
\begin{equation}\label{eqn:appendix-first-order-lr-bias}
    B_{\LR}^{(1)}(x)
    =
    \frac{B_T(x)}{p_S(x)}
    -
    \frac{p_T(x)}{p_S(x)^2}B_S(x).
\end{equation}
If
\begin{equation}\label{eqn:appendix-lr-second-order-bias-condition}
    \left|
      \overline\LR_{n,m}(x)-\LR(x)-B_{\LR}^{(1)}(x)
    \right|
    =o\{s_{n,m}(x)\},
\end{equation}
then
\begin{equation}\label{eqn:appendix-bias-corrected-lr-clt}
    \frac{
      \widehat\LR(x)-\LR(x)-B_{\LR}^{(1)}(x)
    }{
      s_{n,m}(x)
    }
    \dto N(0,1).
\end{equation}
If, in addition,
$B_{\LR}^{(1)}(x)=o\{s_{n,m}(x)\}$, then
\begin{equation}\label{eqn:appendix-uncentered-lr-clt}
    \frac{\widehat\LR(x)-\LR(x)}{s_{n,m}(x)}
    \dto N(0,1).
\end{equation}
\end{lemma}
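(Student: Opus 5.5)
The argument is a two-sample delta method for the ratio map $(u,v)\mapsto v/u$, expanded around the expected KDEs $(\overline p_S(x),\overline p_T(x))$. Write
\[
U_n=\widehat p_S(x)-\overline p_S(x),\qquad V_m=\widehat p_T(x)-\overline p_T(x).
\]
Assumption~\ref{ass:pointwise-kde-clt}(4) gives $U_n=\Op\{(nh_S^d)^{-1/2}\}$ and $V_m=\Op\{(mh_T^d)^{-1/2}\}$. Part~(2) gives $\overline p_D(x)\to p_D(x)>0$ for $D\in\{S,T\}$. Hence $\widehat p_D(x)\to p_D(x)$ in probability, and the fallback event in the definition of $\widehat\LR(x)$ has probability tending to zero, so it can be ignored.

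On the complement of the fallback event we use the exact identity
\[
\widehat\LR(x)-\overline\LR_{n,m}(x)
=\frac{V_m}{\overline p_S(x)}-\frac{\overline p_T(x)\,U_n}{\overline p_S(x)^2}+\mathcal R_{n,m},
\qquad
\mathcal R_{n,m}=\frac{U_n\{\overline p_T(x)U_n-\overline p_S(x)V_m\}}{\overline p_S(x)^2\,\widehat p_S(x)}.
\]
The remainder is of order $\Op(U_n^2+|U_nV_m|)$. Since $s_{n,m}(x)\asymp (mh_T^d)^{-1/2}+(nh_S^d)^{-1/2}$, both $U_n^2$ and $|U_nV_m|$ are $\op\{s_{n,m}(x)\}$. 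The rate comparison with $s_{n,m}$ is done case-free by bounding by $s_{n,m}^2$. Replacing $\overline p_D(x)$ by $p_D(x)$ in the linear coefficients costs a relative $o(1)$ factor, so that change is also negligible after division by $s_{n,m}$.

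For the linear term, independence of the two samples lets me combine the two marginal CLTs. Set
\[
a_{n,m}^2=\frac{R(K_T)p_T(x)}{mh_T^d\,p_S(x)^2},\qquad b_{n,m}^2=\frac{R(K_S)p_T(x)^2}{nh_S^d\,p_S(x)^3},
\]
so that $s_{n,m}^2=a_{n,m}^2+b_{n,m}^2$. The normalized linear term is
\[
\frac{a_{n,m}}{s_{n,m}}\,Z_T-\frac{b_{n,m}}{s_{n,m}}\,Z_S+\op(1),
\]
where $Z_T$ and $Z_S$ are independent and each converges in distribution to $N(0,1)$. The weights $a_{n,m}/s_{n,m}$ and $b_{n,m}/s_{n,m}$ need not converge. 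To handle this I pass to subsequences along which they converge; along each such subsequence, characteristic functions (independent components, squared weights summing to one) give the limit $N(0,1)$. Since every subsequence has a further subsequence with the same limit, the full sequence converges, and Slutsky's lemma yields \eqref{eqn:appendix-smoothed-lr-clt}. This mixing step with non-convergent weights is the only delicate point.

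For \eqref{eqn:appendix-bias-corrected-lr-clt}, write
\[
\widehat\LR(x)-\LR(x)-B_{\LR}^{(1)}(x)=\{\widehat\LR(x)-\overline\LR_{n,m}(x)\}+\{\overline\LR_{n,m}(x)-\LR(x)-B_{\LR}^{(1)}(x)\}.
\]
The second bracket is $o\{s_{n,m}(x)\}$ by \eqref{eqn:appendix-lr-second-order-bias-condition}, so Slutsky's lemma applies. If in addition $B_{\LR}^{(1)}(x)=o\{s_{n,m}(x)\}$, a further Slutsky step gives \eqref{eqn:appendix-uncentered-lr-clt}.
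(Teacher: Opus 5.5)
Your proposal is correct and follows the paper's proof essentially step for step. Both arguments linearize $(u,v)\mapsto v/u$ at $(\overline p_S(x),\overline p_T(x))$, show the remainder of order $\Delta_S^2+|\Delta_S\Delta_T|$ is $o_P\{s_{n,m}(x)\}$, handle the possibly non-convergent variance shares by a subsequence argument, and finish the bias-corrected and uncentered versions with Slutsky. Your exact algebraic remainder and your direct replacement of $\overline p_D(x)$ by $p_D(x)$ in the linear coefficients, in place of the paper's check that $\overline s_{n,m}(x)/s_{n,m}(x)\to1$, are only cosmetic differences.
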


\begin{proof}
By Assumption~\ref{ass:pointwise-kde-clt}, the event
$\{\widehat p_S(x)>0,\widehat p_T(x)>0\}$ has probability tending to one.
We work on this event; the arbitrary fallback convention has no effect on
the limit.
Put
\[
    \Delta_S(x)=\widehat p_S(x)-\overline p_S(x),
    \qquad
    \Delta_T(x)=\widehat p_T(x)-\overline p_T(x).
\]
Because $p_S(x)>0$ and $\overline p_S(x)\to p_S(x)$,
$\overline p_S(x)$ is bounded away from zero for all sufficiently large $n$.
A Taylor expansion of $(u,v)\mapsto v/u$ at
$(\overline p_S(x),\overline p_T(x))$ yields
\begin{equation}\label{eqn:appendix-stochastic-ratio-expansion}
\begin{aligned}
    \widehat\LR(x)-\overline\LR_{n,m}(x)
    ={}&
    \frac{\Delta_T(x)}{\overline p_S(x)}
    -
    \frac{\overline p_T(x)}{\overline p_S(x)^2}\Delta_S(x)
    \\
    &+R_{n,m}(x),
\end{aligned}
\end{equation}
where, on an event whose probability tends to one,
\[
    |R_{n,m}(x)|
    \leq
    C\{\Delta_S(x)^2+|\Delta_S(x)\Delta_T(x)|\}.
\]
Assumption~\ref{ass:pointwise-kde-clt} gives
\[
    \Delta_S(x)=\Op\{(nh_S^d)^{-1/2}\},
    \qquad
    \Delta_T(x)=\Op\{(mh_T^d)^{-1/2}\}.
\]
Consequently,
\[
    R_{n,m}(x)
    =
    \Op\left
    \{
      (nh_S^d)^{-1}
      +(nh_S^d mh_T^d)^{-1/2}
    \right\}.
\]
Since
\[
    s_{n,m}(x)
    \asymp
    \left
    \{
      (nh_S^d)^{-1}+(mh_T^d)^{-1}
    \right\}^{1/2},
\]
it follows that $R_{n,m}(x)/s_{n,m}(x)=\op(1)$.

It remains to analyze the linear part of
\eqref{eqn:appendix-stochastic-ratio-expansion}. By the pointwise KDE CLTs
and independence of the source and target samples, the two standardized
terms converge jointly to independent standard normal variables. The
corresponding asymptotic variance scale for their linear combination, with
$\overline p_S$ and $\overline p_T$ in the coefficients, is
\[
\begin{aligned}
    \overline s_{n,m}^2(x)
    ={}&
    R(K_T)
    \frac{p_T(x)}{mh_T^d\overline p_S(x)^2}
    \\
    &+
    R(K_S)
    \frac{\overline p_T(x)^2p_S(x)}
         {nh_S^d\overline p_S(x)^4}.
\end{aligned}
\]
Since $\overline p_S(x)\to p_S(x)$ and
$\overline p_T(x)\to p_T(x)$,
$\overline s_{n,m}(x)/s_{n,m}(x)\to1$. The relative contributions of the
source and target terms need not converge along the original sequence. Along
every subsequence, however, one may select a further subsequence on which
their variance shares converge. The joint Gaussian limit then gives a
standard normal limit on that further subsequence because the limiting shares
sum to one. This subsequence argument proves that the normalized linear term,
and hence the left-hand side of
\eqref{eqn:appendix-smoothed-lr-clt}, converges to $N(0,1)$ along the full
sequence.

For the deterministic bias, a Taylor expansion of
$\overline p_T/\overline p_S$ at $(p_S,p_T)$ gives
\[
    \overline\LR_{n,m}(x)-\LR(x)
    =
    B_{\LR}^{(1)}(x)+\rho_{B,n,m}(x),
\]
where
\[
    \rho_{B,n,m}(x)
    =
    \overline\LR_{n,m}(x)-\LR(x)-B_{\LR}^{(1)}(x).
\]
Under~\eqref{eqn:appendix-lr-second-order-bias-condition}, adding this
deterministic expansion to
\eqref{eqn:appendix-smoothed-lr-clt} proves
\eqref{eqn:appendix-bias-corrected-lr-clt}. If
$B_{\LR}^{(1)}(x)=o\{s_{n,m}(x)\}$ as well,
\eqref{eqn:appendix-uncentered-lr-clt} follows.
\end{proof}

\begin{remark}[Bias and undersmoothing]
If $p_S$ and $p_T$ are $\beta$-H\"older smooth near $x$ and the kernels are
of sufficient order, then
\[
    B_D(x)=O(h_D^\beta),
    \qquad D\in\{S,T\}.
\]
The condition
\[
    h_S^\beta+h_T^\beta=o\{s_{n,m}(x)\}
\]
therefore makes both the first-order ratio bias and its deterministic
second-order remainder negligible. Without this undersmoothing condition,
the CLT remains valid after centering at the smoothed ratio or after
subtracting the first-order bias in
\eqref{eqn:appendix-first-order-lr-bias} under
\eqref{eqn:appendix-lr-second-order-bias-condition}.
\end{remark}

We now transfer the likelihood-ratio limit through the structural bridge maps.
For fixed $\bm\theta\in\Theta_B$, define
\[
    \overline w_{\bm\theta,n,m}^B(x)
    =g_{\bm\theta}^B\{\overline\LR_{n,m}(x)\},
    \qquad
    w_{\bm\theta}^B(x)
    =g_{\bm\theta}^B\{\LR(x)\}.
\]
The derivatives with respect to the likelihood-ratio argument are
\begin{align}
    \dot g_\alpha^{\mathrm{KL}}(u)
    &=\alpha u^{\alpha-1},
    \label{eqn:appendix-kl-weight-derivative}
    \\
    \dot g_\alpha^{\mathrm{HD}}(u)
    &=
    \frac{\alpha\{(1-\alpha)+\alpha\sqrt u\}}{\sqrt u},
    \label{eqn:appendix-hd-weight-derivative}
\end{align}
and, for the KL/HD bridge,
\begin{equation*}
    \dot g_{\alpha}^{\mathrm{KH}}(u)
    =
    \frac{g_{\alpha}^{\mathrm{KH}}(u)}{u}
    \frac{W_0\{\beta_{\alpha}(u)\}}
         {1+W_0\{\beta_{\alpha}(u)\}}.
\end{equation*}
For every $\alpha>0$ in the stated parameter range, these derivatives are
strictly positive. At $\alpha=0$, they vanish.

\begin{theorem}[Fixed-parameter bridge-weight CLT]
\label{thm:pw-clt-bridge-weights}
Suppose Assumptions~\ref{ass:kde-kernel} and~\ref{ass:pointwise-kde-clt}
hold. Let $g_{\bm\theta}^B$ be one of the structural bridge maps, and suppose
it is continuously differentiable in a neighborhood of
$u_0=\LR(x)$ with
$\dot g_{\bm\theta}^B(u_0)\neq0$. Then
\begin{equation*}
    \frac{
      \widehat w_{\bm\theta}^B(x)
      -\overline w_{\bm\theta,n,m}^B(x)
    }{
      |\dot g_{\bm\theta}^B(u_0)|s_{n,m}(x)
    }
    \dto N(0,1).
\end{equation*}
If
\begin{equation}\label{eqn:appendix-weight-second-order-bias-condition}
\left|
  \overline w_{\bm\theta,n,m}^B(x)
  -w_{\bm\theta}^B(x)
  -\dot g_{\bm\theta}^B(u_0)B_{\LR}^{(1)}(x)
\right|
=o\{s_{n,m}(x)\},
\end{equation}
then
\begin{equation*}
    \frac{
      \widehat w_{\bm\theta}^B(x)
      -w_{\bm\theta}^B(x)
      -\dot g_{\bm\theta}^B(u_0)B_{\LR}^{(1)}(x)
    }{
      |\dot g_{\bm\theta}^B(u_0)|s_{n,m}(x)
    }
    \dto N(0,1).
\end{equation*}
If, in addition,
$B_{\LR}^{(1)}(x)=o\{s_{n,m}(x)\}$, then
\begin{equation*}
    \frac{
      \widehat w_{\bm\theta}^B(x)-w_{\bm\theta}^B(x)
    }{
      |\dot g_{\bm\theta}^B(u_0)|s_{n,m}(x)
    }
    \dto N(0,1).
\end{equation*}
\end{theorem}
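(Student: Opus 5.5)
The plan is a delta-method argument that transfers Lemma~\ref{lem:pointwise-lr-clt} through the fixed bridge map $g_{\bm\theta}^B$, with the linearisation taken around the smoothed ratio $\overline\LR_{n,m}(x)$ rather than around $u_0$. Because $p_S(x)>0$ and the expected densities converge, we have $\overline\LR_{n,m}(x)\to u_0=\LR(x)$. The centred CLT \eqref{eqn:appendix-smoothed-lr-clt}, together with $s_{n,m}(x)\to0$, gives $\widehat\LR(x)-\overline\LR_{n,m}(x)=\Op\{s_{n,m}(x)\}=\op(1)$. Fix a closed neighbourhood $U$ of $u_0$ on which $\dot g_{\bm\theta}^B$ is continuous. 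The event that both $\widehat\LR(x)$ and $\overline\LR_{n,m}(x)$ lie in $U$ then has probability tending to one, and we work on it. The fallback convention for $\widehat\LR$ can be ignored there, since it only changes the estimator on an event of vanishing probability.

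On this event, the mean-value theorem gives
\[
\widehat w_{\bm\theta}^B(x)-\overline w_{\bm\theta,n,m}^B(x)
=\dot g_{\bm\theta}^B(\tilde u_{n,m})\{\widehat\LR(x)-\overline\LR_{n,m}(x)\},
\]
where $\tilde u_{n,m}$ lies between $\widehat\LR(x)$ and $\overline\LR_{n,m}(x)$. Hence $\tilde u_{n,m}\to u_0$ in probability. Continuity of $\dot g_{\bm\theta}^B$ and the assumption $\dot g_{\bm\theta}^B(u_0)\neq0$ then give $\dot g_{\bm\theta}^B(\tilde u_{n,m})/|\dot g_{\bm\theta}^B(u_0)|\to\operatorname{sign}\{\dot g_{\bm\theta}^B(u_0)\}$ in probability. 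We divide both sides by $|\dot g_{\bm\theta}^B(u_0)|s_{n,m}(x)$ and apply Slutsky's lemma to \eqref{eqn:appendix-smoothed-lr-clt}. The limit is $\pm N(0,1)$, which equals $N(0,1)$ in distribution by symmetry, and this gives the first display.

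For the second display, write
\[
\widehat w_{\bm\theta}^B(x)-w_{\bm\theta}^B(x)-\dot g_{\bm\theta}^B(u_0)B_{\LR}^{(1)}(x)
=\{\widehat w_{\bm\theta}^B(x)-\overline w_{\bm\theta,n,m}^B(x)\}
+\{\overline w_{\bm\theta,n,m}^B(x)-w_{\bm\theta}^B(x)-\dot g_{\bm\theta}^B(u_0)B_{\LR}^{(1)}(x)\}.
\]
The first bracket has the limit just established. The second bracket is deterministic and is $o\{s_{n,m}(x)\}$ by \eqref{eqn:appendix-weight-second-order-bias-condition}, so Slutsky's lemma applies again. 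The third display follows the same way: under $B_{\LR}^{(1)}(x)=o\{s_{n,m}(x)\}$ the term $\dot g_{\bm\theta}^B(u_0)B_{\LR}^{(1)}(x)$ is also $o\{s_{n,m}(x)\}$, since $\dot g_{\bm\theta}^B(u_0)$ is a fixed constant.

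The main point needing care is the choice of linearisation point. If we expanded around $u_0$ instead, the deterministic error $\overline\LR_{n,m}(x)-u_0$ would enter, and it need not be $o\{s_{n,m}(x)\}$. Expanding around the moving centre $\overline\LR_{n,m}(x)$ avoids this, and the only requirement is local uniform continuity of $\dot g_{\bm\theta}^B$ near $u_0$, which holds on the compact neighbourhood $U$. For the three bridge families, the needed smoothness and nonvanishing derivative at $u_0>0$ for $\alpha>0$ follow from \eqref{eqn:appendix-kl-weight-derivative}, \eqref{eqn:appendix-hd-weight-derivative} and \eqref{eqn:appendix-kh-derivative-u}.
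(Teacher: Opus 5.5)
Your proposal is correct and follows the paper's proof. Both arguments linearise $g_{\bm\theta}^B$ around the moving centre $\overline\LR_{n,m}(x)$, use $\overline\LR_{n,m}(x)\to u_0$ and continuity of $\dot g_{\bm\theta}^B$ to replace the slope by $\dot g_{\bm\theta}^B(u_0)$, apply Lemma~\ref{lem:pointwise-lr-clt} with Slutsky's theorem, and then add the deterministic bias controlled by \eqref{eqn:appendix-weight-second-order-bias-condition}. Your mean-value form and your explicit appeal to the symmetry of $N(0,1)$ for the sign of $\dot g_{\bm\theta}^B(u_0)$ only spell out details that the paper leaves implicit in its remainder condition $r_{n,m}(x)/s_{n,m}(x)=\op(1)$.
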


\begin{proof}
Lemma~\ref{lem:pointwise-lr-clt} gives
\[
    \widehat\LR(x)-\overline\LR_{n,m}(x)
    =\Op\{s_{n,m}(x)\}
    =\op(1).
\]
A Taylor expansion at $\overline\LR_{n,m}(x)$ yields
\[
\begin{aligned}
    \widehat w_{\bm\theta}^B(x)
    -\overline w_{\bm\theta,n,m}^B(x)
    ={}&
    \dot g_{\bm\theta}^B\{\overline\LR_{n,m}(x)\}
    \{\widehat\LR(x)-\overline\LR_{n,m}(x)\}
    \\
    &+r_{n,m}(x),
\end{aligned}
\]
where $r_{n,m}(x)/s_{n,m}(x)=\op(1)$. Since
$\overline\LR_{n,m}(x)\to u_0$,
\[
    \dot g_{\bm\theta}^B\{\overline\LR_{n,m}(x)\}
    \longrightarrow
    \dot g_{\bm\theta}^B(u_0).
\]
The first claim follows from
Lemma~\ref{lem:pointwise-lr-clt} and Slutsky's theorem.

For the second claim, decompose
\[
\begin{aligned}
    \widehat w_{\bm\theta}^B(x)-w_{\bm\theta}^B(x)
    ={}&
    \{\widehat w_{\bm\theta}^B(x)
      -\overline w_{\bm\theta,n,m}^B(x)\}
    \\
    &+
    \{\overline w_{\bm\theta,n,m}^B(x)
      -w_{\bm\theta}^B(x)\}.
\end{aligned}
\]
Condition~\eqref{eqn:appendix-weight-second-order-bias-condition} and the
first part give the bias-corrected CLT. The uncentered version follows when
the first-order bias is also negligible.
\end{proof}

\begin{remark}[Endpoint and degeneracy]
At $\alpha=0$, all three unnormalized structural weights are identically one.
In all three cases,
the normalized structural bridge law is the source law and the derivative
with respect to the likelihood ratio is zero. The first-order weight CLT is
therefore degenerate at the source-only endpoint. This is not a pathology: the
bridge does not use the estimated likelihood ratio at that endpoint.
\end{remark}

The preceding theorem holds for fixed bridge parameters. The next expansion
records how parameter estimation enters the bridge weight at first order.

\begin{proposition}[Estimated-parameter bridge-weight expansion]
\label{prop:estimated-parameter-weight-expansion}
Fix $x\in\xspace$, and suppose
$\widehat\LR(x)\to\LR(x)$ and
$\widehat{\bm\theta}\to\bm\theta_0$ in probability. Assume that
$(u,\bm\theta)\mapsto g_{\bm\theta}^B(u)$ admits a continuously
differentiable extension to a neighborhood of
$(\LR(x),\bm\theta_0)$.
Then
\begin{align}
&\widehat w_{\widehat{\bm\theta}}^B(x)
-w_{\bm\theta_0}^B(x)
\notag\\
&\quad=
\dot g_{\bm\theta_0}^B\{\LR(x)\}
\{\widehat\LR(x)-\LR(x)\}
+
\left[
  \left.
  \nabla_{\bm\theta}g_{\bm\theta}^B\{\LR(x)\}
  \right|_{\bm\theta=\bm\theta_0}
\right]^{\mathsf T}
(\widehat{\bm\theta}-\bm\theta_0)
\notag\\
&\qquad+
\op\left(
  |\widehat\LR(x)-\LR(x)|
  +\|\widehat{\bm\theta}-\bm\theta_0\|
\right).
\label{eqn:appendix-estimated-parameter-weight-expansion}
\end{align}
\end{proposition}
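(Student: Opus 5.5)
This is a first-order Taylor expansion of the jointly $C^1$ map $G(u,\bm\theta)=g_{\bm\theta}^B(u)$ at $(u_0,\bm\theta_0)=(\LR(x),\bm\theta_0)$, combined with consistency of $(\widehat\LR(x),\widehat{\bm\theta})$.

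\emph{Step 1: localize.} Let $N$ be a convex closed ball around $(u_0,\bm\theta_0)$ contained in the neighborhood where the continuously differentiable extension exists. By the assumed convergence in probability, the event $E_n=\{(\widehat\LR(x),\widehat{\bm\theta})\in N\}$ has probability tending to one. On $E_n$, $\widehat w_{\widehat{\bm\theta}}^B(x)=G(\widehat\LR(x),\widehat{\bm\theta})$ and $w_{\bm\theta_0}^B(x)=G(u_0,\bm\theta_0)$. Since $\widehat\LR(x)>0$ near $u_0>0$, no fallback conventions interfere. Off $E_n$ nothing needs to be shown, because an $\op$ statement only constrains behavior on events of probability tending to one.

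\emph{Step 2: mean-value form.} Write $\Delta=(\widehat\LR(x)-u_0,\widehat{\bm\theta}-\bm\theta_0)$. On $E_n$, the fundamental theorem of calculus along the segment gives
\[
G(u_0+\Delta_u,\bm\theta_0+\Delta_\theta)-G(u_0,\bm\theta_0)=\int_0^1\nabla G\{(u_0,\bm\theta_0)+t\Delta\}^{\mathsf T}\Delta\,\di t .
\]
Subtracting $\nabla G(u_0,\bm\theta_0)^{\mathsf T}\Delta$ leaves a remainder bounded by
\[
\omega(\|\Delta\|)\,\|\Delta\|,\qquad \omega(r)=\sup\{\|\nabla G(p)-\nabla G(u_0,\bm\theta_0)\|:p\in N,\ \|p-(u_0,\bm\theta_0)\|\le r\}.
\]
The components of $\nabla G(u_0,\bm\theta_0)$ are exactly $\dot g_{\bm\theta_0}^B\{\LR(x)\}$ and $\nabla_{\bm\theta}g_{\bm\theta}^B\{\LR(x)\}|_{\bm\theta_0}$, which reproduces the two linear terms in~\eqref{eqn:appendix-estimated-parameter-weight-expansion}.

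\emph{Step 3: the remainder is $\op$.} Continuity of $\nabla G$ at the center gives $\omega(r)\to0$ as $r\downarrow0$. Since $\|\Delta\|\to0$ in probability, $\omega(\|\Delta\|)=\op(1)$. Moreover $\|\Delta\|\le|\widehat\LR(x)-\LR(x)|+\|\widehat{\bm\theta}-\bm\theta_0\|$, so the remainder is $\op(1)$ times this sum, which is the stated form.

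The only delicate point is making precise that the random remainder, divided by the random rate, tends to zero in probability. I would handle this through the deterministic modulus $\omega$, which avoids any need for independence or joint limits. For KL/HD near $\alpha=0$, continuous differentiability of the extension is given by Lemma~\ref{lem:bridge-maps-lipschitz}, though the proposition simply assumes it.
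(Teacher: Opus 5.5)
Your proposal is correct and takes the same route as the paper, whose proof simply invokes the first-order multivariate Taylor expansion of $(u,\bm\theta)\mapsto g_{\bm\theta}^B(u)$ at $(\LR(x),\bm\theta_0)$, evaluated at $(\widehat\LR(x),\widehat{\bm\theta})$. Your localization to a convex ball, the integral-form remainder, and the deterministic modulus $\omega$ spell out the step the paper leaves implicit, namely how the Taylor remainder becomes an $\op$ term.
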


\begin{proof}
This is the first-order multivariate Taylor expansion of
$(u,\bm\theta)\mapsto g_{\bm\theta}^B(u)$ at
$(\LR(x),\bm\theta_0)$, evaluated at
$(\widehat\LR(x),\widehat{\bm\theta})$.
\end{proof}

A joint central limit theorem for an estimated-parameter bridge requires the
joint limit of $\widehat\LR(x)$ and $\widehat{\bm\theta}$. The expansion
\eqref{eqn:appendix-estimated-parameter-weight-expansion} identifies the two
first-order contributions without imposing a particular bridge-parameter
estimation method.

\section{Technical details for bridge-weighted Nadaraya--Watson}
\label{app:asymptotics-nw}

\subsection{Notation and assumptions}

We first introduce notation used throughout the proofs. The response $Y$ is
real valued. We fix a regular conditional law $P_S^{Y\mid X}$ and use the
resulting Borel versions of the conditional moment functions on the local
neighborhood under consideration. For a regression kernel $K$ and bandwidth
$b=b_n\to0$, write
\[
    K_b(x-X)
    =b^{-d}K\left(\frac{x-X}{b}\right).
\]
Within this appendix, $p_S$ denotes a fixed Borel version of the marginal
density of the source covariates, and
\[
    m_S(x)=\E_{P_S}[Y\mid X=x],
    \qquad
    \sigma_S^2(x)=\Var_{P_S}[Y\mid X=x].
\]
For a $P_S$-integrable measurable function $f=f(X,Y)$, put
\[
    P_nf=\frac1n\sum_{i=1}^n f(X_i^S,Y_i^S),
    \qquad
    Pf=\E_{P_S}[f(X,Y)].
\]

Fix a bridge family $B\in\{\mathrm{KL},\mathrm{HD},\mathrm{KH}\}$ and a
population bridge parameter $\bm\theta_0\in\Theta_B$. Let
$\widehat{\bm\theta}$ be a measurable $\Theta_B$-valued estimator. We take
$\widehat\LR$ and the resulting random functions to be jointly measurable and
assume that the displayed suprema below are measurable. We abbreviate
\[
    w_0(x)=w_{\bm\theta_0}^B(x),
    \qquad
    \widehat w_{\widehat{\bm\theta}}^B(x)
    =g_{\widehat{\bm\theta}}^B\{\widehat\LR(x)\}.
\]
Because every estimator below is self-normalized, multiplying all bridge
weights by the same positive constant does not change the estimator.

Define the oracle bridge-weighted Nadaraya--Watson numerator and denominator
by
\[
    \widehat N_0(x)
    =P_n\{w_0(X)K_b(x-X)Y\},
    \qquad
    \widehat D_0(x)
    =P_n\{w_0(X)K_b(x-X)\},
\]
and let
\[
    \widehat m_0(x)
    =
    \begin{cases}
      \widehat N_0(x)/\widehat D_0(x),&\widehat D_0(x)>0,\\
      0,&\widehat D_0(x)=0.
    \end{cases}
\]
The corresponding population-smoothed numerator and denominator are
\[
    N_b(x)=P\{w_0(X)K_b(x-X)Y\},
    \qquad
    D_b(x)=P\{w_0(X)K_b(x-X)\},
\]
and the smoothed oracle regression target is
\[
    m_b(x)
    =
    \begin{cases}
      N_b(x)/D_b(x),&D_b(x)>0,\\
      0,&D_b(x)=0.
    \end{cases}
\]
The plug-in bridge-weighted estimator is
\[
    \widehat m_{\widehat w,\widehat{\bm\theta}}^B(x)
    =
    \begin{cases}
      \widehat N_{\widehat w}^B(x)/
      \widehat D_{\widehat w}^B(x),
      &\widehat D_{\widehat w}^B(x)>0,\\
      0,&\widehat D_{\widehat w}^B(x)=0,
    \end{cases}
\]
where
\[
    \widehat N_{\widehat w}^B(x)
    =P_n\{\widehat w_{\widehat{\bm\theta}}^B(X)K_b(x-X)Y\},
    \qquad
    \widehat D_{\widehat w}^B(x)
    =P_n\{\widehat w_{\widehat{\bm\theta}}^B(X)K_b(x-X)\}.
\]
The population quantities above are used only at the fixed point or compact
evaluation set specified below and for all sufficiently small $b$. Compact
kernel support and the local moment assumptions then ensure their finiteness.
The denominator-control results below show that the zero-denominator
conventions are asymptotically immaterial at the points and sets under
consideration.

\begin{assumption}[Regression kernel]
\label{ass:nw-kernel}
The regression kernel $K:\R^d\to[0,\infty)$ is bounded and measurable, has
compact support, integrates to one, and satisfies
\[
    R(K)=\int_{\R^d}K(u)^2\d u<\infty.
\]
For some $0<s\leq2$, if $1<s\leq2$, assume additionally that
\[
    \int_{\R^d}uK(u)\d u=0,
\]
as holds, for example, when $K$ is symmetric.
\end{assumption}

\begin{assumption}[Local regularity]
\label{ass:nw-regularity}
Fix either a point $x_0\in\operatorname{int}(\cX)$ or a compact set
$\mathcal K\Subset\operatorname{int}(\cX)$. There is a bounded open
neighborhood $\cN$ of the point or of $\mathcal K$ such that
$\overline{\cN}$ is compact and contained in
$\operatorname{int}(\cX)$, and:
\begin{enumerate}
\item The source design density is bounded away from zero and infinity on
      $\cN$:
      \[
          0<c_p\leq p_S(u)\leq C_p<\infty.
      \]
\item The oracle bridge weight is bounded away from zero and infinity on
      $\cN$:
      \[
          0<c_w\leq w_0(u)\leq C_w<\infty.
      \]
\item The functions $p_S$, $w_0$, and $m_S$ are locally $s$-H\"older on
      $\cN$: for $0<s\leq1$ they are H\"older of order $s$, while for
      $1<s\leq2$ they are continuously differentiable with
      $(s-1)$-H\"older gradients. The corresponding constants are uniform
      when the compact set $\mathcal K$ is considered. In particular, the
      products $w_0p_S$ and $w_0m_Sp_S$ have the same local smoothness.
\item For the pointwise CLT, $\sigma_S^2$ is continuous at $x_0$; for the
      consistency results it is locally bounded on $\cN$.
\end{enumerate}
\end{assumption}

\begin{assumption}[Moment condition]
\label{ass:nw-moments}
There exists $q_0>0$ such that
\[
    \sup_{u\in\cN}
    \E\left[|Y|^{2+q_0}\mid X=u\right]<\infty.
\]
Equivalently, one may impose the same condition on the regression error
$\varepsilon=Y-m_S(X)$ together with
$\sup_{u\in\cN}|m_S(u)|<\infty$. Thus,
boundedness of $Y$ is not required.
\end{assumption}

\begin{assumption}[Regression bandwidth]
\label{ass:nw-bandwidth}
The deterministic regression bandwidth $b=b_n>0$ satisfies
\[
    b\to0,
    \qquad
    nb^d\to\infty.
\]
For uniform-in-$x$ rates, assume additionally that
\[
    \frac{nb^d}{\log n}\to\infty.
\]
\end{assumption}

\begin{assumption}[Plug-in bridge-weight error]
\label{ass:nw-weight-error}
Define the local uniform weight error
\[
    \delta_{n,m}
    =
    \left\|
      \widehat w_{\widehat{\bm\theta}}^B
      -w_{\bm\theta_0}^B
    \right\|_{\infty,\cN}.
\]
Assume
\[
    \delta_{n,m}=\Op(\bar\delta_{n,m}),
    \qquad
    \bar\delta_{n,m}\to0.
\]
For density-estimated bridge weights,
Corollary~\ref{cor:estimated-parameter-bridge-weight-rate} gives the following
sufficient specification when all its hypotheses hold on a compact
density-estimation region containing $\overline{\cN}$; in particular, the
strong-overlap condition holds there. If, for a deterministic sequence
$r_{\bm\theta,n,m}\to0$,
\[
    \|\widehat{\bm\theta}-\bm\theta_0\|
    =\Op(r_{\bm\theta,n,m}),
\]
then one may take
\[
    \bar\delta_{n,m}
    =
    a_{S,n}+a_{T,m}+r_{\bm\theta,n,m}.
\]
For a fixed bridge parameter, $r_{\bm\theta,n,m}=0$.
No particular construction of $\bm\theta_0$ or
$\widehat{\bm\theta}$ is imposed. The parameter rate is an additional input
and is not implied by the radius-profile criterion in
\eqref{eqn:radius-calibration-estimate-population}.
\end{assumption}

\begin{assumption}[Uniform process input]
\label{ass:nw-uniform-ep}
For the uniform-in-$x$ version of
Theorem~\ref{thm:nw-consistency-rate}, assume that, for some deterministic
sequence $\zeta_{n,b}\to0$,
\[
    \sup_{x\in\mathcal K}
    \left|
      (P_n-P)\{w_0(X)K_b(x-X)Y\}
    \right|
    =\Op(\zeta_{n,b}),
\]
\[
    \sup_{x\in\mathcal K}
    \left|
      (P_n-P)\{w_0(X)K_b(x-X)\}
    \right|
    =\Op(\zeta_{n,b}),
\]
and
\[
    \sup_{x\in\mathcal K}
    P_n\{|K_b(x-X)|(1+|Y|)\}
    =\Op(1).
\]
For a compact interior set $\mathcal K$, VC-type or Lipschitz kernel
translates, and a locally bounded response envelope, one may take
\[
    \zeta_{n,b}
    \asymp
    \sqrt{\frac{\log n}{nb^d}},
\]
up to constants; see, for example, \citet{Gine2002,Gine2015}. Under the moment condition in
Assumption~\ref{ass:nw-moments} without a bounded envelope, we retain the
abstract sequence $\zeta_{n,b}$; a truncation argument may add a
moment-dependent tail term.
\end{assumption}

\begin{remark}[Localization by the regression kernel]
Because $K$ has compact support, Assumption~\ref{ass:nw-bandwidth} implies
that, for all sufficiently large $n$, every observation contributing to a
kernel average at the point or set under consideration has its covariate in
$\cN$. The local sup-norm error in Assumption~\ref{ass:nw-weight-error}
therefore controls every plug-in weight entering the corresponding estimator.
Ordinary convolution density estimators may be used when
$\overline{\cN}$ lies away from a support boundary. Support adaptation is
needed only when uniform control is required on a region reaching that
boundary, as in Section~\ref{sec:nonparametric-density-estimation}.
\end{remark}

\begin{remark}[What enters through $\widehat{\bm\theta}$]
The error $\delta_{n,m}$ contains both the density-ratio estimation error and the
bridge-parameter estimation error. The results below first treat the regime
in which this full plug-in error is negligible at the local NW scale. The
condition $\sqrt{nb^d}\,\bar\delta_{n,m}\to0$ used below is sufficient for
oracle equivalence and is not asserted to be necessary. If the plug-in error
is not negligible, Proposition~\ref{prop:estimated-parameter-weight-expansion}
identifies its pointwise first-order density-ratio and parameter contributions.
A limit theorem for the resulting NW estimator would additionally require a
joint local-process analysis.
\end{remark}

\subsection{Technical lemmas}

\begin{lemma}[Population denominator and smoothing bias]
\label{lem:denom-bias}
Under Assumptions~\ref{ass:nw-kernel} and~\ref{ass:nw-regularity}, for every
fixed interior point $x_0$,
\[
    D_b(x_0)
    =w_0(x_0)p_S(x_0)+O(b^s),
\]
and hence, for all sufficiently small $b$,
\[
    D_b(x_0)
    \geq\frac12 w_0(x_0)p_S(x_0)>0.
\]
Moreover,
\[
    m_b(x_0)-m_S(x_0)=O(b^s).
\]
Uniformly over a compact interior set $\mathcal K$,
\[
    \sup_{x\in\mathcal K}
    |D_b(x)-w_0(x)p_S(x)|
    =O(b^s),
\]
and
\[
    \sup_{x\in\mathcal K}
    |m_b(x)-m_S(x)|
    =O(b^s).
\]
\end{lemma}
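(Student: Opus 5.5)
The plan is to compute $D_b$ and the smoothed numerator $N_b$ by a change of variables. Then I will expand the product functions $f_0=w_0p_S$ and $f_1=w_0m_Sp_S$ around $x$ using the local $s$-H\"older regularity in Assumption~\ref{ass:nw-regularity}.

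First, conditioning on $X$ and substituting $u=(x-v)/b$ gives
\[
    D_b(x)=\int K(u)\,f_0(x-bu)\,\mathrm{d}u,
    \qquad
    N_b(x)=\int K(u)\,f_1(x-bu)\,\mathrm{d}u.
\]
The second identity uses $\E[Y\mid X]=m_S(X)$ and the finiteness implied by Assumption~\ref{ass:nw-moments}. Since $K$ has compact support, say in a ball of radius $R_K$, the point $x-bu$ lies in $\cN$ for all $x\in\mathcal K$ (or $x=x_0$) once $bR_K$ is below the distance from $\mathcal K$ to $\cN^c$. This distance is positive because $\mathcal K$ is compact and $\cN$ is open.

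Next, I will bound the deviation from $f_j(x)$, using $\int K=1$.
\begin{itemize}
\item For $0<s\le1$, $|f_j(x-bu)-f_j(x)|\le L|bu|^s$, so the deviation is at most $Lb^s\int|u|^sK(u)\,\mathrm{d}u=O(b^s)$.
\item For $1<s\le2$, I will write $f_j(x-bu)-f_j(x)=-b\,\nabla f_j(x)^{\mathsf T}u+r$ with $|r|\le L|bu|^s$, by the mean-value form and the $(s-1)$-H\"older gradient. The linear term vanishes because $\int uK=0$.
\end{itemize}
The H\"older constants of $f_j$ follow from the product rule together with the local boundedness of $p_S$, $w_0$, $m_S$ (continuous on the compact $\overline{\cN}$), so they are uniform over $\mathcal K$. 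This gives $\sup_x|D_b-f_0|=O(b^s)$ and $\sup_x|N_b-f_1|=O(b^s)$. The lower bound $D_b(x_0)\ge\tfrac12 w_0(x_0)p_S(x_0)$ for small $b$ follows because $f_0\ge c_wc_p>0$; uniformly over $\mathcal K$ we likewise get $D_b\ge c_wc_p/2$.

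Finally, for the regression bias I will use $m_S=f_1/f_0$ and write
\[
    m_b-m_S=\frac{(N_b-f_1)-m_S(D_b-f_0)}{D_b}.
\]
The numerator is $O(b^s)$ uniformly, since $m_S$ is bounded on $\overline{\cN}$. The denominator is bounded below by $c_wc_p/2$, which gives the $O(b^s)$ claims, pointwise and uniformly.

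The main obstacle is making the H\"older products and the containment $x-bu\in\cN$ uniform over $\mathcal K$. I expect this to be routine once the compact-support localization is fixed.
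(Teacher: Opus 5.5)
Your proposal is correct and follows essentially the same route as the paper: the change of variables, the H\"older/Taylor expansion of $w_0p_S$ and $w_0m_Sp_S$ (with the linear term removed by $\int uK=0$), and then the ratio step, which you make more explicit through $m_b-m_S=\{(N_b-f_1)-m_S(D_b-f_0)\}/D_b$. One cosmetic point: finiteness of $N_b$ does not require Assumption~\ref{ass:nw-moments}, which is not a hypothesis of the lemma, because the local boundedness of $m_S$ and $\sigma_S^2$ from Assumption~\ref{ass:nw-regularity} already bounds $\E(|Y|\mid X=u)$ near the evaluation point.
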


\begin{proof}
We prove the pointwise statement; the uniform statement follows from the same
argument using the corresponding uniform smoothness on $\cN$. First,
\[
    D_b(x_0)
    =\int w_0(u)K_b(x_0-u)p_S(u)\d u.
\]
With the change of variables $u=x_0-bt$,
\[
    D_b(x_0)
    =\int K(t)w_0(x_0-bt)p_S(x_0-bt)\d t.
\]
Put $f=w_0p_S$. If $0<s\leq1$, the H\"older condition gives
$|f(x_0-bt)-f(x_0)|\leq Cb^s\|t\|^s$. If $1<s\leq2$, the first-order
Taylor expansion has a remainder bounded by $Cb^s\|t\|^s$, and the linear
term vanishes after integration because $\int tK(t)\d t=0$. Compact support
of $K$ therefore gives
\[
    D_b(x_0)
    =w_0(x_0)p_S(x_0)+O(b^s).
\]
The lower bound follows because $w_0(x_0)p_S(x_0)>0$.

The same argument applied to $w_0m_Sp_S$ gives
\[
    N_b(x_0)
    =\int w_0(u)K_b(x_0-u)m_S(u)p_S(u)\d u
\]
and therefore
\[
    N_b(x_0)
    =w_0(x_0)m_S(x_0)p_S(x_0)+O(b^s).
\]
Consequently,
\[
\begin{aligned}
    m_b(x_0)
    &=\frac{N_b(x_0)}{D_b(x_0)}\\
    &=\frac{
      w_0(x_0)m_S(x_0)p_S(x_0)+O(b^s)
    }{
      w_0(x_0)p_S(x_0)+O(b^s)
    }
    =m_S(x_0)+O(b^s).
\end{aligned}
\]
\end{proof}

\begin{lemma}[Oracle denominator control]
\label{lem:oracle-denom}
Under Assumptions~\ref{ass:nw-kernel}, \ref{ass:nw-regularity},
and~\ref{ass:nw-bandwidth}, for every fixed interior point $x_0$,
\[
    \widehat D_0(x_0)-D_b(x_0)
    =\Op\left((nb^d)^{-1/2}\right).
\]
Consequently, $\widehat D_0(x_0)\geq c>0$ with probability tending to one.
Under Assumption~\ref{ass:nw-uniform-ep},
\[
    \sup_{x\in\mathcal K}
    |\widehat D_0(x)-D_b(x)|
    =\Op(\zeta_{n,b}),
\]
and
\[
    \inf_{x\in\mathcal K}\widehat D_0(x)\geq c>0
\]
with probability tending to one.
\end{lemma}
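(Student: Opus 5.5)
The pointwise statement follows from a variance calculation plus Chebyshev. The uniform statement is read off directly from Assumption~\ref{ass:nw-uniform-ep}. The positivity conclusions then come from Lemma~\ref{lem:denom-bias}.

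For the pointwise part, write $\widehat D_0(x_0)-D_b(x_0)=(P_n-P)f_b$ with $f_b(X)=w_0(X)K_b(x_0-X)$. The expectation is exactly zero by definition of $D_b$, so it suffices to bound the variance. By independence,
\[
\Var\{\widehat D_0(x_0)\}\le n^{-1}P f_b^2 .
\]
We have
\[
P f_b^2=\int w_0(u)^2K_b(x_0-u)^2p_S(u)\d u=b^{-d}\int K(t)^2w_0(x_0-bt)^2p_S(x_0-bt)\d t .
\]
Compact support of $K$ and $b\to0$ place $x_0-bt$ in $\cN$ for all sufficiently small $b$. Assumption~\ref{ass:nw-regularity} then bounds $w_0^2p_S\le C_w^2C_p$ there, so $Pf_b^2\le C_w^2C_pR(K)b^{-d}$. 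Chebyshev's inequality gives $\widehat D_0(x_0)-D_b(x_0)=\Op((nb^d)^{-1/2})$, and this tends to zero by Assumption~\ref{ass:nw-bandwidth}.

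For the lower bound, Lemma~\ref{lem:denom-bias} gives $D_b(x_0)\ge\tfrac12w_0(x_0)p_S(x_0)\ge\tfrac12c_wc_p$ for small $b$. Take $c=\tfrac14c_wc_p$. On the event $|\widehat D_0(x_0)-D_b(x_0)|\le c$, whose probability tends to one, we get $\widehat D_0(x_0)\ge c$.

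For the uniform part, the second display of Assumption~\ref{ass:nw-uniform-ep} is exactly $\sup_{\mathcal K}|\widehat D_0-D_b|=\Op(\zeta_{n,b})$. Lemma~\ref{lem:denom-bias} gives $\sup_{\mathcal K}|D_b-w_0p_S|=O(b^s)$ together with $w_0p_S\ge c_wc_p$ on $\mathcal K\subset\cN$. Hence $\inf_{\mathcal K}D_b\ge\tfrac12c_wc_p$ for small $b$. Since $\zeta_{n,b}\to0$, the same choice $c=\tfrac14c_wc_p$ works uniformly with probability tending to one.

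The only delicate point is ensuring that every point the kernel touches lies in $\cN$, uniformly over $x\in\mathcal K$. This holds because $\overline{\cN}$ is an open neighborhood of the compact set $\mathcal K$, so it contains a $\varepsilon$-enlargement of $\mathcal K$, and $b\,\mathrm{supp}(K)$ eventually fits inside that enlargement. The remaining steps are routine.
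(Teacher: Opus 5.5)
Your proposal is correct and follows the paper's proof. The paper uses the same ingredients: a variance bound of order $b^{-d}$, localized to $\cN$ by compact kernel support, with Chebyshev for the pointwise rate; Lemma~\ref{lem:denom-bias} for the positive lower bound; and Assumption~\ref{ass:nw-uniform-ep} with the uniform part of Lemma~\ref{lem:denom-bias} for the statements over $\mathcal K$. One trivial slip: the open set containing an $\varepsilon$-enlargement of $\mathcal K$ is $\cN$ itself, not $\overline{\cN}$, and for the uniform part this localization is in any case already handled by Lemma~\ref{lem:denom-bias} and Assumption~\ref{ass:nw-uniform-ep}.
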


\begin{proof}
For fixed $x_0$,
\[
    \widehat D_0(x_0)-D_b(x_0)
    =(P_n-P)\{w_0(X)K_b(x_0-X)\}.
\]
The summands are i.i.d. and centered, and
\[
\begin{aligned}
    \Var\{w_0(X)K_b(x_0-X)\}
    &\leq
    \E\{w_0(X)^2K_b(x_0-X)^2\}\\
    &=O(b^{-d}),
\end{aligned}
\]
where compact support localizes the integral to $\cN$ for all sufficiently
small $b$. Hence
\[
    \widehat D_0(x_0)-D_b(x_0)
    =\Op\left((nb^d)^{-1/2}\right)
    =\op(1).
\]
Lemma~\ref{lem:denom-bias} gives a positive lower bound for $D_b(x_0)$, so
$\widehat D_0(x_0)$ is bounded away from zero with probability tending to
one. The uniform statements follow from
Assumption~\ref{ass:nw-uniform-ep} and the uniform lower bound in
Lemma~\ref{lem:denom-bias}.
\end{proof}

\begin{lemma}[Oracle numerator order]
\label{lem:oracle-numer-order}
Under Assumptions~\ref{ass:nw-kernel}, \ref{ass:nw-regularity},
\ref{ass:nw-moments}, and~\ref{ass:nw-bandwidth}, for every fixed interior
point $x_0$,
\[
    (P_n-P)
    \left[
      w_0(X)K_b(x_0-X)\{Y-m_b(x_0)\}
    \right]
    =\Op\left((nb^d)^{-1/2}\right).
\]
Under Assumption~\ref{ass:nw-uniform-ep},
\[
\begin{aligned}
&\sup_{x\in\mathcal K}
\left|
  (P_n-P)
  \left[
    w_0(X)K_b(x-X)\{Y-m_b(x)\}
  \right]
\right|\\
&\qquad=\Op(\zeta_{n,b}).
\end{aligned}
\]
\end{lemma}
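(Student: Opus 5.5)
The pointwise statement follows the template of Lemma~\ref{lem:oracle-denom}, now with an unbounded response controlled by Assumption~\ref{ass:nw-moments}. Write
\[
    \xi_i=w_0(X_i^S)K_b(x_0-X_i^S)\{Y_i^S-m_b(x_0)\}.
\]
These are i.i.d.\ summands, so $(P_n-P)\xi$ is a centered average. It then suffices to show $\Var(\xi_1)=O(b^{-d})$ and apply Chebyshev's inequality, which gives $(P_n-P)\xi=\Op\{(n\cdot b^{d})^{-1/2}\}$.

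For the variance, bound it by the second moment and condition on $X$:
\[
    \E\xi_1^2=\int w_0(u)^2K_b(x_0-u)^2\,\E\bigl[\{Y-m_b(x_0)\}^2\mid X=u\bigr]p_S(u)\d u .
\]
The argument then runs in three steps.
\begin{itemize}
\item For all sufficiently small $b$, compact support of $K$ localizes the integral to $\cN$.
\item On $\cN$, the weight and density are bounded: $w_0\le C_w$ and $p_S\le C_p$.
\item Also on $\cN$, the conditional second moment of $Y$ is uniformly bounded. This follows from Assumption~\ref{ass:nw-moments} via Lyapunov's inequality, since the $(2+q_0)$-th conditional moment is uniformly bounded. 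Moreover $|m_b(x_0)|$ is bounded, because Lemma~\ref{lem:denom-bias} gives $m_b(x_0)\to m_S(x_0)$ and $m_S$ is locally bounded.
\end{itemize}
Changing variables $u=x_0-bt$ leaves $b^{-d}\int K(t)^2\d t=b^{-d}R(K)$ times these bounded constants, so $\Var(\xi_1)=O(b^{-d})$. The only mildly delicate point is that $m_b(x_0)$ is a deterministic but $n$-dependent centering; its uniform boundedness for large $n$, supplied by Lemma~\ref{lem:denom-bias}, handles this.

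For the uniform statement over $\mathcal K$, split
\[
    (P_n-P)\bigl[w_0K_b(x-\cdot)\{Y-m_b(x)\}\bigr]
    =(P_n-P)\{w_0K_b(x-\cdot)Y\}-m_b(x)(P_n-P)\{w_0K_b(x-\cdot)\}.
\]
Assumption~\ref{ass:nw-uniform-ep} bounds the first term and the second process by $\Op(\zeta_{n,b})$ uniformly in $x\in\mathcal K$. It remains to note that $\sup_{x\in\mathcal K}|m_b(x)|$ is bounded for small $b$. This follows from the uniform part of Lemma~\ref{lem:denom-bias}: $m_b\to m_S$ uniformly on $\mathcal K$, and $m_S$ is continuous, hence bounded, on the compact set. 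Combining the two pieces gives the $\Op(\zeta_{n,b})$ rate, with no real obstacle beyond this bookkeeping.
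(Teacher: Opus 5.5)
Your proposal is correct and follows the paper's proof essentially step for step. The pointwise part bounds the second moment of the summand by $O(b^{-d})$ using kernel localization to $\cN$ and the conditional moment bound. The uniform part uses the same split $w_0K_bY-m_b\,w_0K_b$, Assumption~\ref{ass:nw-uniform-ep}, and uniform boundedness of $m_b$ on $\mathcal K$ from Lemma~\ref{lem:denom-bias}. Your extra justifications (Lyapunov's inequality for the conditional second moment, and boundedness of the $n$-dependent centering $m_b(x_0)$) just fill in details the paper leaves implicit.
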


\begin{proof}
For fixed $x_0$, define
\[
    \xi_n(X,Y)
    =w_0(X)K_b(x_0-X)\{Y-m_b(x_0)\}.
\]
By the definition of $m_b$,
\[
    P\xi_n
    =N_b(x_0)-m_b(x_0)D_b(x_0)=0.
\]
The local conditional second-moment bound and the compact support of $K$
give
\[
\begin{aligned}
    \E\xi_n^2
    &=\int w_0(u)^2K_b(x_0-u)^2
      \E[\{Y-m_b(x_0)\}^2\mid X=u]p_S(u)\d u\\
    &=O(b^{-d}).
\end{aligned}
\]
Therefore,
\[
    (P_n-P)\xi_n
    =\Op\left((nb^d)^{-1/2}\right).
\]
For the uniform statement, write
\[
    w_0K_b(Y-m_b)=w_0K_bY-m_bw_0K_b
\]
and apply Assumption~\ref{ass:nw-uniform-ep},
Lemma~\ref{lem:denom-bias}, and local boundedness of $m_b$ on $\mathcal K$.
\end{proof}

\begin{lemma}[Local kernel-average control]
\label{lem:local-avg-y}
Under Assumptions~\ref{ass:nw-kernel}, \ref{ass:nw-regularity},
\ref{ass:nw-moments}, and~\ref{ass:nw-bandwidth}, for every fixed interior
point $x_0$,
\[
    P_n\{|K_b(x_0-X)|(1+|Y|)\}
    =\Op(1).
\]
Under Assumption~\ref{ass:nw-uniform-ep},
\[
    \sup_{x\in\mathcal K}
    P_n\{|K_b(x-X)|(1+|Y|)\}
    =\Op(1).
\]
\end{lemma}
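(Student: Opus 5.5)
For the fixed point $x_0$, I will split $P_n\{|K_b(x_0-X)|(1+|Y|)\}$ into its expectation and a centered fluctuation, and show that the first is $O(1)$ and the second is $\op(1)$. The constant and the $|Y|$ parts can be treated together, since $1+|Y|$ has bounded conditional second moment on $\cN$ by Assumption~\ref{ass:nw-moments}.

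\textbf{Expectation.} Because $K$ has compact support and $b\to0$, for all sufficiently large $n$ the kernel $K_b(x_0-\cdot)$ is supported inside $\cN$. Conditioning on $X$ gives
\[
P\{|K_b(x_0-X)|(1+|Y|)\}=\int |K_b(x_0-u)|\,\E[1+|Y|\mid X=u]\,p_S(u)\d u .
\]
By Assumption~\ref{ass:nw-regularity}, $p_S\le C_p$ on $\cN$. Jensen's inequality together with Assumption~\ref{ass:nw-moments} gives $\sup_{u\in\cN}\E[1+|Y|\mid X=u]<\infty$. The change of variables $u=x_0-bt$ turns $\int|K_b(x_0-u)|\d u$ into $\int|K|=1$, so the expectation is $O(1)$.

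\textbf{Fluctuation.} The summands are i.i.d., and their second moment satisfies
\[
\E\{K_b(x_0-X)^2(1+|Y|)^2\}\le 2\int K_b(x_0-u)^2\,\E[1+Y^2\mid X=u]\,p_S(u)\d u = O(b^{-d}),
\]
using $\int K_b^2=b^{-d}R(K)$ and the same local bounds as above. Chebyshev's inequality then makes the centered average $\Op((nb^d)^{-1/2})$, which is $\op(1)$ because $nb^d\to\infty$ (Assumption~\ref{ass:nw-bandwidth}). Adding the two parts proves the pointwise claim.

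\textbf{Uniform statement.} This is exactly the third display in Assumption~\ref{ass:nw-uniform-ep}, so nothing more needs to be proved. If one wanted to derive it rather than assume it, one would bound the expectations uniformly over $\mathcal K$ by the same argument, which goes through because $\cN$ is a common neighborhood of $\mathcal K$. The centered process would then be controlled as in the first two displays of that assumption, which requires a VC-type argument. That step is the only real obstacle, and the assumption is stated precisely to supply it.
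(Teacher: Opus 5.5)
Your proof is correct, and it follows the paper: the expectation bound is the same argument (localization to $\cN$, $p_S\le C_p$, $\sup_{u\in\cN}\E(|Y|\mid X=u)<\infty$, and $\int b^{-d}K\{(x_0-u)/b\}\d u=1$), and, like the paper, you treat the uniform claim as the third display of Assumption~\ref{ass:nw-uniform-ep}. The one difference is that your fluctuation step is unnecessary: because the summands are nonnegative, the paper applies Markov's inequality to the average and its $O(1)$ mean to get $\Op(1)$ at once, so neither the second-moment bound nor $nb^d\to\infty$ is needed (your Chebyshev step is valid and shows concentration around the mean, but boundedness in probability does not require it).
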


\begin{proof}
For fixed $x_0$, Markov's inequality gives the result once the expectation is
shown to be bounded. Compact support ensures that, for sufficiently small
$b$, only $u\in\cN$ contribute. Hence
\[
\begin{aligned}
&\E\{|K_b(x_0-X)|(1+|Y|)\}\\
&\quad=
\int |K_b(x_0-u)|
\{1+\E(|Y|\mid X=u)\}p_S(u)\d u\\
&\quad\leq
C\int b^{-d}
\left|K\left(\frac{x_0-u}{b}\right)\right|\d u
=C\int|K(t)|\d t<\infty.
\end{aligned}
\]
The uniform statement is part of
Assumption~\ref{ass:nw-uniform-ep}.
\end{proof}

\subsection{Proofs of the main NW theorems}

\paragraph{Idea of proof for Theorem~\ref{thm:nw-consistency-rate}.}
The oracle error is split at the population-smoothed target $m_b$.
Lemma~\ref{lem:denom-bias} controls the smoothing bias, while
Lemmas~\ref{lem:oracle-denom} and~\ref{lem:oracle-numer-order} control the
centered empirical ratio. The plug-in estimator is then compared pathwise
with the oracle estimator through the exact centered identity
\eqref{eqn:appendix-nw-plugin-identity}; the same argument after taking
suprema gives the compact-set result.

\begin{proof}\textbf{of Theorem~\ref{thm:nw-consistency-rate}.}
We first prove the oracle pointwise rate. By
Lemma~\ref{lem:oracle-denom}, $\widehat D_0(x_0)>0$ on an event whose
probability tends to one. On this event, the identity
\[
    \widehat m_0(x_0)-m_b(x_0)
    =
    \frac{
      (P_n-P)
      [w_0(X)K_b(x_0-X)\{Y-m_b(x_0)\}]
    }{
      \widehat D_0(x_0)
    }
\]
is exact. The arbitrary zero-denominator convention is immaterial.
Lemma~\ref{lem:oracle-numer-order} gives a numerator of order
$\Op((nb^d)^{-1/2})$, while
Lemma~\ref{lem:oracle-denom} gives a denominator bounded away from zero with
probability tending to one. Therefore,
\[
    \widehat m_0(x_0)-m_b(x_0)
    =\Op\left((nb^d)^{-1/2}\right).
\]
Lemma~\ref{lem:denom-bias} then yields
\[
    \widehat m_0(x_0)-m_S(x_0)
    =\Op\left(b^s+(nb^d)^{-1/2}\right).
\]

We next control the plug-in perturbation. On the event, established below,
that both empirical denominators are positive, the exact centered identity is
\begin{equation}
\label{eqn:appendix-nw-plugin-identity}
\begin{aligned}
&\widehat m_{\widehat w,\widehat{\bm\theta}}^B(x_0)
-\widehat m_0(x_0)\\
&\quad=
\frac{
P_n\left[
  \{\widehat w_{\widehat{\bm\theta}}^B-w_0\}(X)
  K_b(x_0-X)
  \{Y-\widehat m_0(x_0)\}
\right]
}{
\widehat D_{\widehat w}^B(x_0)
}.
\end{aligned}
\end{equation}
Indeed,
\[
    P_n\left[
      w_0(X)K_b(x_0-X)
      \{Y-\widehat m_0(x_0)\}
    \right]=0.
\]
For all sufficiently small $b$, the kernel localization remark allows us to
use the norm in Assumption~\ref{ass:nw-weight-error}. On an event whose
probability tends to one,
\[
\begin{aligned}
    \widehat D_{\widehat w}^B(x_0)
    &\geq
    \widehat D_0(x_0)
    -\delta_{n,m}P_nK_b(x_0-X)\\
    &\geq c>0,
\end{aligned}
\]
by Lemmas~\ref{lem:oracle-denom} and~\ref{lem:local-avg-y} and the fact that
$\delta_{n,m}=\op(1)$. Moreover,
\[
\begin{aligned}
&\left|
P_n\left[
  \{\widehat w_{\widehat{\bm\theta}}^B-w_0\}(X)
  K_b(x_0-X)
  \{Y-\widehat m_0(x_0)\}
\right]
\right|\\
&\quad\leq
\delta_{n,m}
P_n\left[
  |K_b(x_0-X)|
  \{|Y|+|\widehat m_0(x_0)|\}
\right].
\end{aligned}
\]
Lemma~\ref{lem:local-avg-y} gives
$P_n\{|K_b(x_0-X)|(1+|Y|)\}=\Op(1)$, and the already established oracle
rate gives $\widehat m_0(x_0)=\Op(1)$. Thus
\[
    \widehat m_{\widehat w,\widehat{\bm\theta}}^B(x_0)
    -\widehat m_0(x_0)
    =\Op(\bar\delta_{n,m}).
\]
Combining the oracle and plug-in bounds proves the pointwise assertions.
The plug-in comparison is pathwise on the weight-error event and does not
require the estimated weights to be independent of the labeled source sample.

For the uniform assertions, the oracle numerator and denominator bounds
follow from Assumption~\ref{ass:nw-uniform-ep}. With probability tending to
one, both denominators are then bounded away from zero uniformly over
$\mathcal K$, and the same centered identity
gives
\[
\begin{aligned}
&\sup_{x\in\mathcal K}
\left|
  \widehat m_{\widehat w,\widehat{\bm\theta}}^B(x)
  -\widehat m_0(x)
\right|\\
&\quad\leq
C\delta_{n,m}
\sup_{x\in\mathcal K}
P_n\left[
  |K_b(x-X)|
  \{1+|Y|+|\widehat m_0(x)|\}
\right]
=\Op(\bar\delta_{n,m}),
\end{aligned}
\]
where the last step uses Assumption~\ref{ass:nw-uniform-ep}, the uniform
oracle rate, and local boundedness of $m_S$. This completes the proof.
\end{proof}

\paragraph{Idea of proof for Theorem~\ref{thm:nw-oracle-clt}.}
Centering at $m_b(x_0)$ produces an exactly centered triangular array whose
variance is of order $b^{-d}$. The conditional moment assumption verifies
Lyapunov's condition, the empirical denominator converges to
$w_0(x_0)p_S(x_0)$, and Slutsky's theorem yields the stated limit. The bridge
weight cancels from the limiting variance; undersmoothing then permits
centering at $m_S(x_0)$.

\begin{proof}\textbf{of Theorem~\ref{thm:nw-oracle-clt}.}
Define the triangular-array function and variables
\[
    \xi_n(x,y)
    =w_0(x)K_b(x_0-x)\{y-m_b(x_0)\},
    \qquad
    \xi_{i,n}=\xi_n(X_i^S,Y_i^S).
\]
Then $P\xi_n=\E\xi_{i,n}=0$, and, on the event
$\{\widehat D_0(x_0)>0\}$,
\[
    \widehat m_0(x_0)-m_b(x_0)
    =\frac{P_n\xi_n}{\widehat D_0(x_0)}.
\]
Lemma~\ref{lem:oracle-denom} shows that this event has probability tending
to one.
Because $m_b(x_0)\to m_S(x_0)$ and the relevant functions are continuous at
$x_0$, the identity
$\E[\{Y-m_b(x_0)\}^2\mid X=u]
=\sigma_S^2(u)+\{m_S(u)-m_b(x_0)\}^2$
gives
\[
\begin{aligned}
    b^d\E[\xi_{i,n}^2]
    &=\int w_0(x_0-bt)^2K(t)^2\\
    &\qquad\times
    \E[\{Y-m_b(x_0)\}^2\mid X=x_0-bt]
    p_S(x_0-bt)\d t\\
    &\longrightarrow
    w_0(x_0)^2p_S(x_0)\sigma_S^2(x_0)R(K).
\end{aligned}
\]
Hence
\[
\Var\left(\sqrt{nb^d}P_n\xi_n\right)
\longrightarrow
w_0(x_0)^2p_S(x_0)\sigma_S^2(x_0)R(K).
\]

We next verify Lyapunov's condition. By
Assumption~\ref{ass:nw-moments}, for its $q_0>0$,
\[
    \E|\xi_{i,n}|^{2+q_0}
    =O\left(b^{-d(1+q_0)}\right).
\]
If
\[
    s_n^2=n\Var(\xi_{i,n})\asymp nb^{-d},
\]
then
\[
    \frac{n\E|\xi_{i,n}|^{2+q_0}}{s_n^{2+q_0}}
    =O\left((nb^d)^{-q_0/2}\right)
    \longrightarrow0.
\]
Lyapunov's theorem therefore gives
\[
    \sqrt{nb^d}P_n\xi_n
    \dto
    N\left(
      0,
      w_0(x_0)^2p_S(x_0)\sigma_S^2(x_0)R(K)
    \right).
\]
By Lemmas~\ref{lem:denom-bias} and~\ref{lem:oracle-denom},
\[
    \widehat D_0(x_0)
    \xrightarrow{P}
    w_0(x_0)p_S(x_0).
\]
Slutsky's theorem yields
\[
    \sqrt{nb^d}\{\widehat m_0(x_0)-m_b(x_0)\}
    \dto
    N\left(
      0,
      \frac{R(K)\sigma_S^2(x_0)}{p_S(x_0)}
    \right).
\]
Finally, if $\sqrt{nb^d}\,b^s\to0$, Lemma~\ref{lem:denom-bias} gives
\[
    \sqrt{nb^d}\{m_b(x_0)-m_S(x_0)\}\to0,
\]
and the undersmoothed conclusion follows.
\end{proof}

\paragraph{Idea of proof for Theorem~\ref{thm:nw-plugin-clt}.}
The pathwise comparison used for Theorem~\ref{thm:nw-consistency-rate}
makes the plug-in estimator differ from its oracle counterpart by
$\Op(\bar\delta_{n,m})$. The assumed rate makes this difference negligible
on the $\sqrt{nb^d}$ scale, so the oracle limit transfers by Slutsky's theorem;
the additional undersmoothing condition removes the deterministic bias.

\begin{proof}\textbf{of Theorem~\ref{thm:nw-plugin-clt}.}
Equation~\eqref{eqn:appendix-nw-plugin-identity} and the argument in the proof
of Theorem~\ref{thm:nw-consistency-rate} give
\[
    \widehat m_{\widehat w,\widehat{\bm\theta}}^B(x_0)
    -\widehat m_0(x_0)
    =\Op(\bar\delta_{n,m}).
\]
Therefore, if $\sqrt{nb^d}\,\bar\delta_{n,m}\to0$,
\[
\begin{aligned}
&\sqrt{nb^d}
\left\{
  \widehat m_{\widehat w,\widehat{\bm\theta}}^B(x_0)
  -\widehat m_0(x_0)
\right\}
=\op(1).
\end{aligned}
\]
It follows that
\[
\begin{aligned}
&\sqrt{nb^d}
\left\{
  \widehat m_{\widehat w,\widehat{\bm\theta}}^B(x_0)
  -m_b(x_0)
\right\}\\
&\quad=
\sqrt{nb^d}\{\widehat m_0(x_0)-m_b(x_0)\}
+\op(1).
\end{aligned}
\]
The first conclusion follows from
Theorem~\ref{thm:nw-oracle-clt}. If additionally
$\sqrt{nb^d}\,b^s\to0$, then
\[
    \sqrt{nb^d}\{m_b(x_0)-m_S(x_0)\}\to0,
\]
and the second conclusion follows by Slutsky's theorem.
\end{proof}

\section{Implementation details for the Fashion-MNIST application}
\label{app:fmnist-implementation}

\paragraph{Target shift.}
For each target image $j\in\{1,\ldots,m\}$ and pixel
$k\in\{1,\ldots,784\}$, let $\widetilde\xi_{jk}^T\in[0,1]$ denote the
original pixel intensity.  We draw independent perturbations
\[
    \varepsilon_{jk}\sim\operatorname{Laplace}(0,b_\varepsilon),
    \qquad b_\varepsilon=0.15/\sqrt2,
\]
where the second parameter is the Laplace scale, and set
\[
    \xi_{jk}^T
    =\max\{0,\min\{1,\widetilde\xi_{jk}^T+\varepsilon_{jk}\}\}.
\]
Thus the noise has standard deviation $0.15$ before the perturbed intensity
is clipped to $[0,1]$.  The source--target split is held fixed, and the
Laplace perturbations are redrawn independently for each of the $100$
replications.

\paragraph{Autoencoder and latent representation.}
We fit an autoencoder to all $n=12,000$ source training images to project the
$784$-dimensional image space onto a $10$-dimensional latent space.  The
encoder is a fully connected network with layer widths
$784\to256\to64\to32\to10$, ReLU hidden activations, and a linear latent
layer.  The decoder reverses the widths, uses ReLU hidden activations, and has
a sigmoid output layer.  We minimize the sum of the reconstruction
mean-squared error and the cross-entropy loss from a logistic classification
head applied to the latent representation.  The network is trained for $100$
epochs using full-batch Adam with learning rate $10^{-3}$.  Because only the
source images and labels are used in this fit, the same trained encoder is
applied in all target-shift replications.

Let $U_i^S,U_j^T\in\R^d$ be the encoder outputs for the source and target
images.  The standardization is fitted to the source outputs.  For
$k\in\{1,\ldots,d\}$, put
\[
    \overline U_k=\frac1n\sum_{i=1}^nU_{ik}^S,
    \qquad
    \widehat s_k
    =\left\{\frac1n\sum_{i=1}^n
      (U_{ik}^S-\overline U_k)^2\right\}^{1/2},
\]
and define
\[
    X_{ik}^S=\frac{U_{ik}^S-\overline U_k}{\widehat s_k},
    \qquad
    X_{jk}^T=\frac{U_{jk}^T-\overline U_k}{\widehat s_k}.
\]
This is the divisor-$n$ convention used by the fitted standardizer.

\paragraph{Likelihood-ratio estimation and weighted NW regression.}
Let $\widehat\Sigma_S$ and $\widehat\Sigma_T$ be the empirical covariance
matrices of the standardized source and target latent coordinates.  We use
multivariate Gaussian KDEs with covariance matrices
\[
\begin{aligned}
    H_S&=h_S^2\widehat\Sigma_S,
    & H_T&=h_T^2\widehat\Sigma_T,\\
    h_S&=\left\{\frac{n(d+2)}4\right\}^{-1/(d+4)},
    & h_T&=\left\{\frac{m(d+2)}4\right\}^{-1/(d+4)}.
\end{aligned}
\]
This is the multivariate normal-reference rule \citep{Silverman1986}.  Thus,
writing
\[
    K_H(u)
    =(2\pi)^{-d/2}|H|^{-1/2}
      \exp\{-u^\intercal H^{-1}u/2\},
\]
with $\tau=10^{-8}$, the clipped likelihood-ratio estimate is
\[
\widehat\LR_\tau(x)
=
\min\left\{\tau^{-1},
\max\left\{\tau,
\frac{n\sum_{j=1}^mK_{H_T}(x-X_j^T)}
     {m\sum_{i=1}^nK_{H_S}(x-X_i^S)}
\right\}\right\}.
\]
The weighted NW estimator uses a multivariate Epanechnikov kernel with
bandwidth $b=10$.  Structural and loss-aware weight vectors are normalized
to sum to $n$.  This common empirical-mean-one scale does not change the NW
fit and makes the numerical iteration comparable across configurations.

\paragraph{Loss-aware iterative refinement.}
Fix a bridge geometry and numerical configuration
$\vartheta=(\alpha,\nu)$, with $\gamma=0$ for KL/HD.  Let
$\bm r^{B,(0)}_\vartheta$ be the loss-agnostic structural weight vector,
normalized to sum to $n$.  At iteration $t\geq1$, fit the weighted NW
predictor $\widehat h^{(t-1)}$ using $\bm r^{B,(t-1)}_\vartheta$, compute the
conditional-loss smoother $\widehat G_{\widehat h^{(t-1)}}$ from
Section~\ref{sec:loss-adversarial-refinement}, and evaluate the corresponding
plug-in response map from
Proposition~\ref{prop:loss-adversarial-bridges} at the source covariates.  Let
$\widetilde{\bm r}^{B,(t)}_\vartheta$ denote the resulting proposed weight
vector after normalization to sum to $n$.  The damped update is
\[
    \bm r^{B,(t)}_\vartheta
    =(1-a_t)\bm r^{B,(t-1)}_\vartheta
      +a_t\widetilde{\bm r}^{B,(t)}_\vartheta,
    \qquad a_1=0.2.
\]
We stop when
\[
    \|\bm r^{B,(t)}_\vartheta
       -\bm r^{B,(t-1)}_\vartheta\|_2\leq a_t,
\]
which, for the displayed update, is equivalent to requiring the undamped
response residual
$\|\widetilde{\bm r}^{B,(t)}_\vartheta
   -\bm r^{B,(t-1)}_\vartheta\|_2$ to be at most one.  If the stopping rule
has not been met and, for $t\geq2$,
\[
    \|\bm r^{B,(t)}_\vartheta
       -\bm r^{B,(t-2)}_\vartheta\|_2\leq a_t,
\]
we set $a_{t+1}=a_t/2$; otherwise $a_{t+1}=a_t$.  We allow at most $100$
iterations and retain the last iterate if the stopping rule is not met.  All
reported runs met the rule within this cap. The retained solution is the
last weight iterate, and the reported predictor is the weighted NW fit
associated with that weight vector.

The response weights are evaluated on the log scale. Put
\[
    \widehat G_i^{(t-1)}
    =\widehat G_{\widehat h^{(t-1)}}(X_i^S),
    \qquad
    \widehat\LR_i=\widehat\LR_\tau(X_i^S).
\]
For the three geometries, we use the canonical pre-normalization log responses
\[
\ell_i^{B,(t)}
=
\begin{cases}
\displaystyle
\widehat G_i^{(t-1)}/\nu+\alpha\log\widehat\LR_i,
& B=\mathrm{KL},\\[6pt]
\displaystyle
2\log\{(1-\alpha)+\alpha\sqrt{\widehat\LR_i}\}
-2\log\{\nu-\widehat G_i^{(t-1)}\},
& B=\mathrm{HD},\\[6pt]
\displaystyle
2\widehat G_i^{(t-1)}/\nu
+2W_0\!\left(
  \frac{\alpha}{1-\alpha}\sqrt{\widehat\LR_i}
  \exp\{-\widehat G_i^{(t-1)}/\nu\}
\right),
& B=\mathrm{KH}.
\end{cases}
\]
The KL/HD expression is understood through its continuous value
$2\widehat G_i^{(t-1)}/\nu$ at $\alpha=0$. We then set
\[
    q_i^{B,(t)}
    =\min\{\ell_i^{B,(t)},\log 10^8\}
\]
and normalize by log-sum-exp:
\[
    \log\widetilde r_i^{B,(t)}
    =q_i^{B,(t)}+\log n
      -\log\left\{\sum_{k=1}^n\exp(q_k^{B,(t)})\right\}.
\]
This defines the numerically regularized empirical-mean-one response used in
all reported loss-aware iterations; the damping is then performed on the
ordinary weight scale.

\paragraph{Choice of $\nu$.}
For KL/KL and KL/HD, the candidate grid is
$\{10^{-2+0.2k}:k=0,\ldots,10\}$.  For HD/HD, it is
$\{4/k+10^{-8}:k=1,\ldots,6\}$, and a configuration is discarded if
\[
    \nu\leq
    \max_{i,t}\widehat G_{\widehat h^{(t)}}(X_i^S).
\]
Because the binary cross-entropy loss is clipped to $[0,4]$,
$\nu=4+10^{-8}$ is always admissible under this check.
Figure~\ref{fig:fmnist-sensitivity-nu} shows the corresponding diagnostics for
KL/KL, HD/HD, and the canonical KL/HD configuration with $\gamma=0$.  The
exploratory sensitivity analysis retained $\nu\approx0.016$ for KL/KL,
$\nu=1+10^{-8}$ for HD/HD, and $\nu=0.1$ for canonical KL/HD.  The KL/KL
diagnostics for $\nu=0.01$ and $\nu\approx0.016$ nearly coincide.  For HD/HD,
the two smallest candidates fail the interior-response check for part of the
grid. 
For canonical KL/HD, $\nu=0.1$ gives high effective sample size and
comparatively stable entropy and predicted class proportion.
The retained configurations are guided primarily by effective sample size, predictive entropy, and predicted class proportion and are reported as exploratory sensitivity choices.

\begin{figure}[p]
    \centering
    \includegraphics[width=0.92\textwidth]{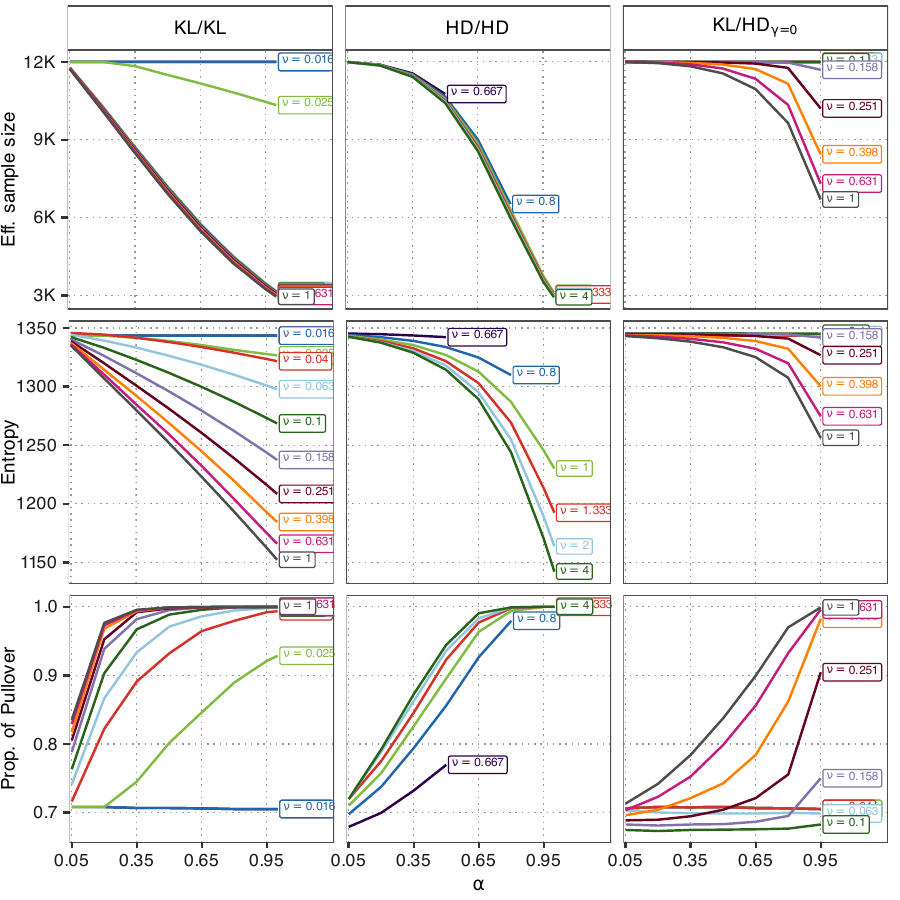}
    \caption{Exploratory sensitivity of the loss-aware refinement to $\nu$,
    plotted against $\alpha$ for KL/KL (left), HD/HD (center), and KL/HD with
    $\gamma=0$ (right), using the single target-shift replication used for
    tuning.  Rows report the plug-in effective sample size of the normalized source
    weights, the summed binary entropy of the fitted target probabilities, and
    the proportion of target images classified as Pullover.  The HD/HD curves
    terminate when the interior-response check
    $\nu>\max_{i,t}\widehat G_{\widehat h^{(t)}}(X_i^S)$ fails.  The retained
    values in these panels were $\nu\approx0.016$, $\nu=1+10^{-8}$, and
    $\nu=0.1$, respectively.}
    \label{fig:fmnist-sensitivity-nu}
\end{figure}

Finally, the theoretical analysis treats the source and target feature
representation as fixed.  It does not cover estimation of the source-trained
encoder from the same sample.

\clearpage
\bibliography{references}

\end{document}